\documentclass[twoside,11pt]{article}

\usepackage[preprint,nohyperref]{jmlr2e}

\usepackage[T1]{fontenc}
\usepackage[utf8]{inputenc}
\usepackage{microtype}
\usepackage{amsmath,amssymb,mathtools,bm,mathrsfs}
\usepackage{graphicx}
\usepackage{xcolor}
\usepackage{booktabs,array,multirow,tabularx,longtable,makecell}
\newcolumntype{Y}[1]{>{\raggedright\arraybackslash}p{#1}}
\newcolumntype{C}[1]{>{\centering\arraybackslash}p{#1}}
\usepackage{tikz}
\usetikzlibrary{arrows.meta,positioning,fit,backgrounds,calc,shapes.geometric,matrix}
\usepackage{enumitem}
\usepackage{siunitx}
\usepackage{pifont}
\usepackage{rotating}
\usepackage{afterpage}
\usepackage{placeins}
\usepackage{float}
\usepackage{xurl}
\usepackage{lastpage}
\usepackage{comment}
\usepackage[hidelinks]{hyperref}
\usepackage[capitalise,noabbrev]{cleveref}
\allowdisplaybreaks

\hypersetup{
  pdftitle={When Can Depth Replace Precision? A Resource Theory of Quantized Neural Computation},
  pdfauthor={Mojtaba Soltanalian},
  pdfsubject={Structural floors, finite arithmetic, learned codebooks, route-changing networks, and certification for quantized machine learning},
  pdfkeywords={quantized neural networks, depth--precision tradeoffs, residual networks, attention, reachability}}

\definecolor{DeepBlue}{HTML}{173A5E}
\definecolor{MidBlue}{HTML}{2F5D8C}
\definecolor{WarmRed}{HTML}{B6484A}
\definecolor{WarmOrange}{HTML}{D97732}
\definecolor{SeaGreen}{HTML}{3D8B74}
\definecolor{SoftGray}{HTML}{F2F3F4}
\definecolor{Gold}{HTML}{C49A3A}
\definecolor{SoftBlue}{HTML}{EAF2F8}
\definecolor{SoftGreen}{HTML}{EAF6F1}
\definecolor{SoftOrange}{HTML}{FDF2E9}
\definecolor{Purple}{HTML}{6C4A8E}

\newcommand{\glancebox}[3]{%
  \noindent\begingroup
  \fcolorbox{#1}{#2}{%
    \begin{minipage}{\dimexpr\linewidth-2\fboxsep-2\fboxrule\relax}
    #3
    \end{minipage}}%
  \endgroup}

\newtheorem{assumption}[theorem]{Assumption}
\renewenvironment{proof}[1][Proof]{\par\noindent\textbf{#1.}\ }{\hfill\BlackBox\par\medskip}
\crefname{theorem}{theorem}{theorems}
\Crefname{theorem}{Theorem}{Theorems}
\crefname{assumption}{assumption}{assumptions}
\Crefname{assumption}{Assumption}{Assumptions}
\crefname{lemma}{lemma}{lemmas}
\Crefname{lemma}{Lemma}{Lemmas}
\crefname{proposition}{proposition}{propositions}
\Crefname{proposition}{Proposition}{Propositions}
\crefname{corollary}{corollary}{corollaries}
\Crefname{corollary}{Corollary}{Corollaries}
\crefname{definition}{definition}{definitions}
\Crefname{definition}{Definition}{Definitions}
\crefname{remark}{remark}{remarks}
\Crefname{remark}{Remark}{Remarks}

\newcommand{\R}{\mathbb{R}}

\newcommand{\Z}{\mathcal{Z}}

\newcommand{\Rel}{\mathrm{rel}}
\newcommand{\dist}{\operatorname{dist}}
\newcommand{\conv}{\operatorname{conv}}

\newcommand{\Lip}{\operatorname{Lip}}
\newcommand{\soft}{\mathcal{S}}
\newcommand{\norm}[1]{\left\lVert #1\right\rVert}
\newcommand{\abs}[1]{\left\lvert #1\right\rvert}
\newcommand{\inner}[2]{\left\langle #1,#2\right\rangle}
\newcommand{\eps}{\varepsilon}
\newcommand{\one}{\mathbf{1}}
\newcommand{\RR}{\mathscr{R}}

\newcommand{\softmax}{\operatorname{softmax}}
\newcommand{\TopK}{\operatorname{TopK}}

\newcommand{\dH}{d_{\mathrm H}}

\jmlrheading{0}{2026}{1--\pageref{LastPage}}{7/26}{}{}{Mojtaba Soltanalian}
\ShortHeadings{A Resource Theory of Quantized Neural Computation}{Soltanalian}
\firstpageno{1}

\title{When Can Depth Replace Precision?\\
A Resource Theory of Quantized Neural Computation}
\author{\name Mojtaba Soltanalian \email msol@uic.edu \\
       \addr Department of Electrical and Computer Engineering\\
       University of Illinois Chicago\\
       851 S. Morgan St., MC 154\\
       Chicago, IL 60607, USA}
\editor{}

\begin{document}
\maketitle

\begin{abstract}
When can additional low-bit residual computation replace missing numerical
precision for a fixed input--output map? We model a quantized residual system
over a fixed horizon as a pure schedule selecting fields from a declared
low-bit operation library, and use relaxed controls to characterize its
infinite-depth limit. The distance from the target to the closed relaxed
reachable set is the exact structural floor: no increase in depth can remove
it for that library. Pure schedules approach the relaxed class at rate
$O(D^{-1})$ under bounded-variation time dependence and
$O(D^{-\vartheta}+D^{-1})$ under $\vartheta$-H\"older dependence. Execution
arithmetic can reverse this conclusion: full-state write-back introduces a
$D\rho_z$ penalty and can freeze residual updates, whereas increment error
feedback replaces this growth by a bounded carry term and obeys an exact
common-lattice conservation law. A fixed-teacher converse makes this rate sharp: for coherent depth-$L$ first-order
high-precision comparators, accuracy matching requires $D=\Theta(L)$. Learned codebooks add a metadata
resource, while state-dependent routing introduces hybrid event conditions.
Verified primal and dual bounds yield feasible, impossible, or unresolved
decisions before training. Companion software implements the workflow, and
Lean~4 machine-checks the exact discrete core. Depth replaces precision only
relative to a declared library, horizon, execution semantics, and routing
model.
\end{abstract}
\vspace{3mm}

\begin{keywords}
quantized neural networks, depth--precision tradeoffs, residual networks,
attention, reachability
\end{keywords}

\clearpage
\thispagestyle{plain}
\begingroup
\setlength{\parindent}{0pt}
\setlength{\parskip}{0pt}
\setlength{\fboxsep}{4.5pt}
\setlength{\fboxrule}{0.7pt}
\setlength{\abovedisplayskip}{2pt}
\setlength{\belowdisplayskip}{2pt}
\footnotesize

\begin{center}
{\Large\bfseries\color{DeepBlue} At a Glance: Master Law, Central Claims, and Use}\par
\vspace{1pt}
{\small The theorem spine, deployment workflow, and reading paths on one page.}
\end{center}
\vspace{3pt}

\glancebox{DeepBlue}{SoftBlue}{%
\textbf{Master law.}
Under the integrated theorem's assumptions,
\begin{equation}
\boxed{\left|E^{\mathrm{impl}}_{D,s}(F^\star)-E_{\Omega,\infty}(F^\star)\right|
\le
\underbrace{\frac{C_{\mathrm{syn}}}{D}}_{\text{pure synthesis}}
+
\underbrace{C_{\mathrm{meta}}2^{-s/m}}_{\text{metadata}}
+
\underbrace{\mathcal A_D}_{\text{execution arithmetic}}}
\label{eq:integrated-law-intro}
\end{equation}
where $E_{\Omega,\infty}(F^\star)=\dist(F^\star,\mathscr R_{\Omega,\mathrm{rel}})$ is the
structural floor for the declared dictionary family.  The three radii quantify
finite pure scheduling, finite description of a learned dictionary, and the
declared numerical execution semantics.  For $\vartheta$-H\"older fields, the
synthesis term becomes $C_\vartheta D^{-\vartheta}+C_1D^{-1}$; state-dependent
hard routing contributes a separate event-window radius closed by a small-gain
condition.
}

\vspace{4pt}
\noindent
\begin{minipage}[t]{0.495\linewidth}
\glancebox{SeaGreen}{SoftGreen}{%
\textbf{Five central claims.}\vspace{2pt}

\textbf{1. Exact structural floor.}
For the declared family, finite-depth error converges to the distance from the
target map to the closed relaxed reachable set.  Depth and optimization cannot
cross this floor.  (\Cref{thm:floor-rate-main,thm:learned-codebook-main})

\vspace{3pt}
\textbf{2. Constructive pure synthesis.}
Pure schedules approximate relaxed computation at $O(D^{-1})$ under bounded
variation and at $O(D^{-\vartheta}+D^{-1})$ under $\vartheta$-H\"older time
dependence.  (\Cref{thm:floor-rate-main,thm:holder-time-main})

\vspace{3pt}
\textbf{3. Finite arithmetic changes the phase.}
Full-state write-back contributes $D\rho_z$ and can freeze residual updates.
Increment error feedback replaces this growth by a bounded carry and an exact
common-lattice conservation law.
(\Cref{thm:error-feedback-main}; \Cref{prop:fixed-grid-freeze-main,prop:bit-exact-error-feedback-main})

\vspace{3pt}
\textbf{4. The first-order price is necessary.}
A fixed teacher has exact $\Theta(D^{-1})$ optimal error, with residual-ReLU and
nonuniform two-token attention realizations.  Thus, for coherent high-precision
comparators with $\Theta(L^{-1})$ error, accuracy matching requires
$D_{\mathrm{match}}=\Theta(L)$.
(\Cref{thm:fixed-teacher-main,thm:attention-fixed-target-main}; \Cref{prop:relu-mlp-fixed-target-main}; \Cref{cor:matching-depth-two-sided-main})

\vspace{2pt}
\textbf{5. Certification precedes training.}
Verified primal and dual bounds yield certified feasible, certified impossible,
or unresolved decisions; unresolved is an intentional outcome.
(\Cref{cor:three-way-decision-main})
}
\end{minipage}\hfill
\begin{minipage}[t]{0.485\linewidth}
\glancebox{WarmOrange}{SoftOrange}{%
\textbf{Practitioners' guide.}\vspace{4pt}
\begin{enumerate}[leftmargin=1.3em,label=\arabic*.,itemsep=5.5pt,topsep=0pt,parsep=0pt]
\item \textbf{Declare the problem.}
Specify the target, domain, norm, low-bit library, shared horizon, schedule,
metadata budget, arithmetic, and routing model.

\item \textbf{Bracket the floor.}
Use a feasible relaxed trajectory for an upper bound and a verified HJB,
support, affine, or SOS witness for a lower bound.

\item \textbf{Allocate resources.}
Search candidate depth--metadata pairs $(D,s)$ using the synthesis and codebook
terms, rather than maximizing depth or minimizing precision by default.

\item \textbf{Audit execution.}
Check write-back, accumulator and carry ranges, saturation or wrapping,
physical increment visibility, route isolation, transversality, and small gain.

\item \textbf{Decide before training.}
For a floor bracket $[L,U]$ and finite-resource radius $R_{D,s}$,
\[
U+R_{D,s}\le\varepsilon_H\Rightarrow\text{feasible},
\]
\[
L-R_{D,s}>\varepsilon_H\Rightarrow\text{impossible}.
\]
Otherwise report unresolved, identify the next bound or measurement needed,
and train only configurations that survive the screen.
\end{enumerate}
}
\end{minipage}

\vspace{4pt}
\glancebox{Gold}{Gold!9}{%
\textbf{Reading paths.}\par\vspace{2pt}
\begin{minipage}[t]{0.49\linewidth}
\textbf{Core argument.}
\Cref{sec:formal-model,sec:finite-arithmetic,sec:necessity-main}, followed by the
decision rule in \Cref{sec:certification-main}.

\vspace{3pt}
\textbf{Quantization practitioner.}
\Cref{sec:finite-arithmetic,sec:architecture-specializations-main}, then
\Cref{sec:certified-evidence-main,sec:design} and the QReplace supplement.

\vspace{3pt}
\textbf{Certification and software.}
\Cref{sec:certification-main}, then Appendices~\ref{app:qreplace-software} and~\ref{app:lean-verification} for QReplace and the
machine-checked discrete core.
\end{minipage}\hfill
\begin{minipage}[t]{0.48\linewidth}
\textbf{Theory reader.}
\Cref{sec:formal-model,sec:necessity-main,sec:learned-dictionaries-main-section},
then the complete theorem appendices.

\vspace{3pt}
\textbf{MoE and routing reader.}
\Cref{sec:hybrid-routing-main}, followed by the routed-architecture
specialization and event diagnostics in
\Cref{sec:architecture-specializations-main,sec:certified-evidence-main}.
\end{minipage}
}
\endgroup
\clearpage

\section{Introduction}
\label{sec:introduction}

Quantization is usually posed as a local approximation problem: replace a
weight tensor, activation, or matrix product by a nearby low-precision object
and control the induced distortion.  This viewpoint has enabled binary and
ternary networks \citep{CourbariauxBengioDavid2015,HubaraEtAl2018,RastegariEtAl2016},
strong post-training reconstruction methods
\citep{NagelEtAl2020AdaRound,FrantarEtAl2023GPTQ,XiaoEtAl2023SmoothQuant,LinEtAl2024AWQ,TsengEtAl2024QuIPSharp,ZhangEtAl2026Qronos},
and native very-low-bit language models
\citep{WangEtAl2025BitNet,MaEtAl2024b158,MaEtAl2025b158,LiuEtAl2025ParetoQ}.
It does not, by itself, answer an architectural question that arises whenever
computation can be refined or reused:
\begin{quote}
\emph{If a high-precision computation is replaced by more low-bit residual
steps, what is the infinite-depth limit, how quickly is it approached, and
which numerical or routing resources must scale for that limit to survive
actual execution?}
\end{quote}

Residual networks and neural differential equations suggest a common
finite-horizon evolution \citep{HeEtAl2016,ChenRubanovaBettencourtDuvenaud2018,HaberRuthotto2018};
unfolded optimization turns iteration count into network depth
\citep{GregorLeCun2010,ChenLiuWangYin2018,MongaLiEldar2021}; looped
Transformers repeatedly apply shared blocks \citep{FangEtAl2026LoopQ}; and
sparse mixture-of-experts (MoE) models route each token through a changing
sequence of expert fields \citep{ShazeerEtAl2017,FedusZophShazeer2022}.  In
these settings, increasing depth is naturally interpreted as \emph{unrolling}
a restricted computation library over a fixed horizon.  The interpretation is
meaningful only when the family across depths is coherent: arbitrarily
subdividing an unrelated block can move away from its original map even in
high precision.

We model a depth-$D$ low-bit student by a \emph{pure schedule}: at each of $D$
microsteps it selects one residual field from a finite declared library.
Allowing a probability vector over the same fields at each instant gives a
\emph{relaxed control}.  The relaxed system is not a parameter-space convex
hull.  Its fields are state dependent, need not commute, and one shared
schedule acts on the complete input-indexed state.  This paper asks what the
best system in that declared family can represent before asking whether a
particular optimizer can find it.

The resulting framework is a target-specific \emph{resource theory of
quantized neural computation}.  It separates the operation library, executed
depth, global codebook metadata, activation and accumulator semantics, route
event geometry, and proof-producing certificates.  The separation is the main
organizing idea of the paper: nominal weight bit width alone does not determine
whether depth can replace precision.

\subsection{Three notions of replacement}
\label{sec:replacement-notions}

The word \emph{replace} is often used for three different goals.  Keeping them
separate is necessary for both theorems and experiments.  Let $F^\star$ be a
desired computation, $F_L^{\mathrm H}$ a depth-$L$ high-precision comparator,
and $F_{D,\sigma}^{\mathrm L}$ a depth-$D$ low-bit student with schedule
$\sigma$.

\paragraph{Direct map emulation.}
The student approximates the comparator itself:
\begin{equation}
 E_b^{\mathrm{emul}}(D;F_L^{\mathrm H})
 :=\inf_\sigma \norm{F_{D,\sigma}^{\mathrm L}-F_L^{\mathrm H}}.
 \label{eq:direct-emulation-intro}
\end{equation}
This is the literal input--output meaning of replacement.

\paragraph{Accuracy matching relative to a common target.}
The student need not reproduce the comparator pointwise; it must be at least
as accurate relative to $F^\star$:
\begin{equation}
 D_{\mathrm{match}}(L,b;F^\star)
 :=\min\left\{D:
 E_b(D;F^\star)\le \norm{F_L^{\mathrm H}-F^\star}\right\}.
 \label{eq:matching-depth-intro}
\end{equation}
This is the natural notion when high-precision and low-bit networks are two
discretizations or implementations of a more accurate common residual
computation.  A central consequence of the theory is the conditional linear
exchange law:
\begin{equation}
\boxed{\begin{gathered}
\varepsilon_H(L)=\Theta(L^{-1}),\qquad
E_b(D;F^\star)-E_{\infty,b}(F^\star)=\Theta(D^{-1})\\
\Longrightarrow\quad D_{\mathrm{match}}(L,b;F^\star)=\Theta(L)
\end{gathered}}
\label{eq:linear-matching-law-intro}
\end{equation}
Here both model families must be coherent refinements relative to the same
target.  Structural feasibility and two upper bounds imply only the sufficient
scaling $D=O(L)$; the lower $D^{-1}$ law is what makes linear depth necessary.

\paragraph{Behavioral replacement.}
For a deployed decision rule $a$, one may ask only that
\begin{equation}
 a\left(F_{D,\sigma}^{\mathrm L}(x)\right)=a\left(F_L^{\mathrm H}(x)\right)
 \quad\text{on a declared domain.}
 \label{eq:behavioral-intro}
\end{equation}
Classification agreement, identical decoded tokens, and identical top-$k$
routes are examples.  Uniform state or logit bounds imply behavioral agreement
when they fit inside a decision margin, but behavioral agreement is weaker
than map equality.  Our structural theorems address the first two notions;
route and task margins connect them to the third.

\subsection{From structural reachability to an implemented decision}
\label{sec:resource-law-intro}

The preceding synopsis states the paper's master envelope and five principal
results.  Its central quantity is the structural floor
$E_{\Omega,\infty}(F^\star)$: the distance from the target to the closed
relaxed reachable class generated by the declared dictionary family.  Pure
depth prices a finite schedule relative to that ideal class; metadata prices a
finite description of a learned dictionary; and the arithmetic radius records
what the execution system loses through activation, scale, accumulator,
saturation, wrapping, or carry semantics.  These are different resources and
must not be collapsed into a nominal weight-bit count.

The envelope separates three questions that are frequently conflated:
\begin{enumerate}[leftmargin=1.65em,itemsep=.25em,topsep=.25em]
\item \textbf{Structural replacement:} is $E_{\Omega,\infty}(F^\star)$ zero for
      the declared dictionary family?
\item \textbf{Synthesis replacement:} how much pure depth is required to
      approach the corresponding relaxed computation?
\item \textbf{Implementation replacement:} does the declared finite arithmetic
      preserve the residual increments used by that synthesis?
\end{enumerate}
The same target may therefore receive different answers under different
libraries, activation grids, accumulator widths, metadata budgets, or route
semantics.  Figure~\ref{fig:resource-overview} summarizes this flow from a
declared comparison problem to a certified decision.

\begin{figure}[t]
\centering
\resizebox{.98\linewidth}{!}{%
\begin{tikzpicture}[
    >=Latex,
    font=\small,
    node distance=9mm and 10mm
]
\tikzset{
  target/.style={
    rounded corners=2.5pt,
    draw=WarmRed,
    very thick,
    fill=SoftOrange,
    text width=3.0cm,
    minimum height=1.15cm,
    align=center,
    inner sep=4pt
  },
  reach/.style={
    rounded corners=2.5pt,
    draw=SeaGreen,
    very thick,
    fill=SoftGreen,
    text width=3.15cm,
    minimum height=1.15cm,
    align=center,
    inner sep=4pt
  },
  resource/.style={
    rounded corners=2.5pt,
    draw=DeepBlue,
    very thick,
    fill=SoftBlue,
    text width=3.2cm,
    minimum height=1.05cm,
    align=center,
    inner sep=4pt
  },
  cert/.style={
    rounded corners=2.5pt,
    draw=Purple,
    very thick,
    fill=Purple!8,
    text width=3.25cm,
    minimum height=1.15cm,
    align=center,
    inner sep=4pt
  },
  arr/.style={->, very thick, draw=DeepBlue},
  note/.style={
    align=center,
    font=\scriptsize,
    text=black!82,
    text width=3.2cm
  }
}

\node[target] (teacher)
  {declared target or\\high-precision comparator};

\node[reach, right=10mm of teacher] (relaxed)
  {closed relaxed\\reachable class};

\node[reach, right=10mm of relaxed] (implemented)
  {implemented finite-depth\\reachable class};

\node[cert, right=10mm of implemented] (certificate)
  {certified decision\\feasible / impossible / unresolved};

\node[resource, above=9mm of relaxed] (dict)
  {low-bit field library\\and $s$ metadata bits};

\node[resource, above=9mm of implemented] (execution)
  {depth $D$, execution arithmetic,\\router margins and event semantics};

\draw[arr] (teacher) -- (relaxed);
\draw[arr] (dict) -- (relaxed);
\draw[arr] (relaxed) -- (implemented);
\draw[arr] (execution) -- (implemented);
\draw[arr] (implemented) -- (certificate);

\node[note, below=2.5mm of teacher] {declared comparison problem};
\node[note, below=2.5mm of relaxed] {$E_{\Omega,\infty}$: structural floor};
\node[note, below=2.5mm of implemented]
  {$C_{\rm syn}/D + C_{\rm meta}2^{-s/m} + \mathcal A_D$};
\node[note, below=2.5mm of certificate]
  {verified interval versus $\varepsilon$};

\end{tikzpicture}}
\caption{\textnormal{The resource-theoretic view.}  The low-bit library
determines the relaxed geometry and therefore the structural floor.  Depth
synthesizes that geometry with a pure schedule, metadata selects or describes
the library, and the execution semantics determine whether the microsteps
remain numerically visible.  Hard routing couples state and event-time errors.
Primal and dual certificates compare the resulting achievable interval with
the requested tolerance.}
\label{fig:resource-overview}
\end{figure}

\subsection{Why finite arithmetic changes the qualitative answer}
\label{sec:arithmetic-preview}

The phrase ``depth replaces precision'' is false without an execution model.
If every microstep writes the complete state to a fixed activation grid of
radius $\rho_z$, the generic envelope becomes
\begin{equation}
 E_D^{\mathrm{writeback}}
 \lesssim E_\infty+\frac{C}{D}+D\rho_z.
 \label{eq:writeback-preview}
\end{equation}
It has a finite optimum $D_{\mathrm{opt}}\asymp\sqrt{C/\rho_z}$; deeper
computation can be worse.  In the scalar recursion
\[
 x_{k+1}=Q_\Delta(x_k+D^{-1}),\qquad x_0=0,
\]
every update rounds away whenever $D>2/\Delta$, so the implemented state
remains zero although the ideal endpoint is one.  The same depth refinement
that enriches the ideal schedule can therefore make every physical residual
update numerically invisible.

Increment quantization with an error-feedback carry instead yields
\begin{equation}
 E_D^{\mathrm{EF}}
 \lesssim E_\infty+\frac{C}{D}+\rho e^{L_zT}+\eta_G\Phi_{L_z}(T),
 \label{eq:error-feedback-preview}
\end{equation}
where $\rho$ bounds the carry and $\eta_G$ bounds field-computation error.  The
quantizer residual telescopes rather than appearing as $D$ unrelated state
perturbations.  On a common integer lattice, an exact conservation identity
and sufficient two's-complement widths guarantee overflow-free residual
execution.  Precision is relocated into a persistent carry or compensated
accumulator; it is not eliminated.  Figure~\ref{fig:arithmetic-resource}
visualizes the two regimes.

\begin{figure}[t]
\centering
\includegraphics[width=.99\linewidth]{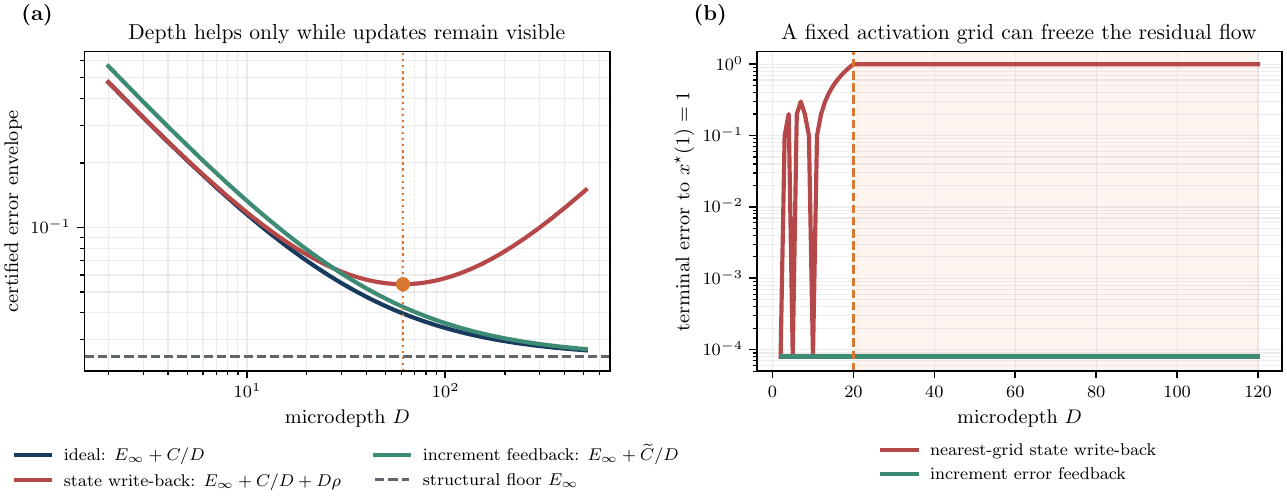}
\caption{\textnormal{Execution arithmetic creates a genuine phase diagram.}
\textnormal{(a)} Ideal synthesis approaches the structural floor, whereas
fixed-grid state write-back has a U-shaped envelope and a finite optimal depth.
Increment error feedback preserves the first-order law when its physical
residual unit scales with the microstep.  \textnormal{(b)} In an exact scalar
diagnostic, a fixed activation grid eventually erases every residual update;
the error-feedback execution does not. }
\label{fig:arithmetic-resource}
\end{figure}

\paragraph{Operational use.}
The five-step workflow on the synopsis page is implemented directly by the
companion recommender: declare the comparison problem, bracket the structural
floor, allocate finite resources, audit execution and routing, and return a
feasible, impossible, or unresolved decision.  Section~\ref{sec:design}
revisits the workflow after all constants and trust boundaries have been
developed.

\subsection{Companion software: recommendation and formal verification}
\label{sec:software-intro}

Two interoperable software supplements turn this workflow into a reusable
artifact.

\begin{itemize}
    \item \textbf{QReplace} accepts a versioned specification of the target,
low-bit dictionary, floor bracket, synthesis constants, metadata budget,
execution arithmetic, route events, and deployment costs.  It searches a
depth--metadata grid, reports the certified or estimated interval
\[
 \max\{0,L_\infty-R_{D,s}\}
 \le E_{D,s}^{\mathrm{impl}}(F^\star)
 \le U_\infty+R_{D,s},
\]
ranks the cost--error Pareto frontier, and returns certified feasible,
certified impossible, conditionally feasible, diagnostically promising, or
unresolved.  Every output includes an assumption ledger and the evidence
provenance of each nonzero radius.  The Python package, command-line and
browser interfaces, JSON schema, and four executable Colab notebooks are
specified in Appendix~\ref{app:qreplace-software}; \cref{tab:qreplace-schema} summarizes the input and trust contract.

    \item \textbf{QReplaceLean} is a pinned Lean~4 companion for seven exact
result groups at the center of the paper: fixed-grid freezing,
error-feedback conservation, common-lattice recovery and sufficient register
ranges, the QReplace feasible/impossible implications, metadata--depth
counting, fixed-teacher residual-recursion algebra, and two-token attention
mode algebra.  Its Colab verifies these groups in numbered stages and exports
a hash-locked result archive containing the exact sources, kernel-build logs,
claim-specific axiom audits, theorem crosswalk, and machine-readable claim
manifest.  The archived run accompanying this paper marks all twelve mapped
claim-level statements \texttt{kernel\_checked}. 
Appendix~\ref{app:lean-verification} gives the theorem crosswalk, faithfulness
audit, build protocol, runtime guidance, evidence identity, and trust boundary;
\cref{tab:lean-crosswalk} states the scope of the principal declarations.
The two artifacts communicate through certificate manifests, so a formal
result may support a QReplace premise without silently certifying unrelated
checkpoint-specific assumptions.
\end{itemize}

\subsection{Novelty boundary and evidence hierarchy}
\label{sec:novelty-boundary-intro}

Several ingredients have mature histories.  Quantized approximation theory
proves class-level depth--precision tradeoffs
\citep{BlumenfeldGilboaSoudry2019,OuSchenkelBolcskei2024}; sigma--delta and
error-feedback methods shape accumulated rounding error
\citep{DaubechiesDeVore2003,Gunturk2003,KrahmerSaabWard2012,OConnorWelling2017};
PTQ analyses study reconstruction, propagation, and accumulator constraints
\citep{ZhangZhouSaab2023PTQ,ColbertEtAl2024AXE,ZhangEtAl2025ProvablePTQ,ZhangEtAl2026Qronos};
learned quantizers construct implicit or adaptive codebooks
\citep{HuijbenEtAl2024QINCo,ZhongEtAl2026RQMoE}; hybrid-system theory treats
transverse events \citep{BurdenEtAl2016EventSelected,KongEtAl2024Saltation};
and formal verification certifies fixed quantized networks
\citep{TeuberEtAl2021Equivalence,HuangEtAl2024QNNVerify}.  The contribution
here is their target-specific synthesis around the closed relaxed reachable
set, with matching necessity mechanisms and a common certification interface.
Section~\ref{sec:related-work} states this boundary in detail.

The paper also separates levels of evidence.  An analytic formula or rational
certificate supports an exact claim over its declared domain.  Exhaustive
finite verification is exact only for the enumerated instance.  A controlled
mechanistic simulation tests one theorem mechanism.  A pretrained-model
diagnostic shows that the mechanism occurs in one checkpoint.  Uncertified
training curves or local searches are evidence about optimization, not proofs
of reachability.  Section~\ref{sec:evidence} maintains this language throughout.

Table~\ref{tab:appendix-map} records where each proof, experiment, and reproducibility component enters the main narrative.

\begin{table}[t]
\centering
\caption{Appendix map.  Appendices are introduced from the main text and
contain complete proofs, experiment specifications, or reproducibility
material.}
\label{tab:appendix-map}
\footnotesize
\renewcommand{\arraystretch}{0.96}
\begin{tabularx}{\linewidth}{Y{.18\linewidth}Y{.34\linewidth}X}
\toprule
Appendix & Contents & Main-text entry point\\
\midrule
Appendix~\ref{app:full-resource-proofs} & Core proofs for finite arithmetic, fixed targets, learned codebooks, BV synthesis, hybrid routing, certification, Transformers, and MoE & Sections~\ref{sec:formal-model}--\ref{sec:certification}\\
Appendix~\ref{app:merged-exact-proofs} & H\"older synthesis, prescribed routing, common-lattice arithmetic, neural converses, matching depth, and compositional HJB & Sections~\ref{sec:formal-model}--\ref{sec:certification}\\
Appendix~\ref{app:soft-details} & Soft-threshold invariance, signed-ray laws, computable floor brackets, and rational PWA methodology & Section~\ref{sec:soft-threshold-main}\\
Appendix~\ref{app:experiments} & Search domains, resource accounting, numerical references, and figure conventions & Section~\ref{sec:certified-evidence-main}\\
Appendix~\ref{app:transformer-experiments} & Attention, scaling, stability, and routed-system settings & Figures~\ref{fig:attention-structural}--\ref{fig:attention-scale}\\
Appendix~\ref{app:pretrained-experiment} & DistilBERT split, QAT, evaluation, and pure-compilation details & Section~\ref{sec:pretrained-causal}\\
Appendix~\ref{app:multiteacher} & Eight-teacher breadth study and dictionary expansion & Figure~\ref{fig:broad-matrix}\\
Appendix~\ref{app:reproducibility} & Claim-to-artifact map, scripts, data, and verification entry points & All numerical sections\\
Appendix~\ref{app:qreplace-software} & QReplace schema, resource algorithm, evidence levels, interfaces, and tests & Section~\ref{sec:software-intro} and the design workflow\\
Appendix~\ref{app:lean-verification} & Lean theorem crosswalk, staged verifier, faithfulness audit, evidence archive, and status policy & Sections~\ref{sec:arithmetic-preview}, \ref{sec:necessity-main}, and~\ref{sec:certification-main}\\
\bottomrule
\end{tabularx}
\end{table}

\paragraph{Scope and trust boundary.}
Depth replaces precision only relative to a declared dictionary, horizon,
execution semantics, and routing model.  The paper does not claim that
arbitrary insertion of unscaled independent layers preserves a trained map,
that parameter-bit savings imply latency or energy savings, or that a failed
optimizer proves nonrepresentability.  An ideal invariant tube does not
certify an implemented trajectory; finite arithmetic requires its own range
and overflow audit.  Hard routing over a continuum of inputs is treated in
$L^p$ or on route cells because the routed map need not belong to $C(\Xi)$.
These are theorem boundaries, not after-the-fact caveats.

\paragraph{Road map.}
Section~\ref{sec:related-work} positions the contribution.  Section~\ref{sec:formal-model}
defines the lifted full-map dynamics and proves the structural-floor and
finite-depth laws.  Section~\ref{sec:finite-arithmetic} studies digital
execution.  Sections~\ref{sec:necessity-main} and
\ref{sec:learned-dictionaries-main-section} establish fixed-target necessity
and metadata resource laws.  Section~\ref{sec:hybrid-routing-main} treats
route-changing systems, and Section~\ref{sec:certification-main} develops the
certificate stack.  Section~\ref{sec:architectures} instantiates the abstract
terms for unfolded networks, Transformers, attention, and MoE blocks.
Section~\ref{sec:evidence} presents exact, controlled, and pretrained evidence;
Sections~\ref{sec:design} and~\ref{sec:discussion} translate the theory into
design guidance and state the remaining limitations.  Appendices~\ref{app:qreplace-software}
and~\ref{app:lean-verification} specify the recommender and formal-verification
companions, respectively.  Table~\ref{tab:reading-guide} gives audience-specific paths through the paper.

\begin{table}[t]
\centering
\caption{Audience-specific reading paths. }
\label{tab:reading-guide}
\footnotesize
\renewcommand{\arraystretch}{0.96}
\setlength{\tabcolsep}{4pt}
\begin{tabularx}{\linewidth}{@{}Y{.23\linewidth}Y{.38\linewidth}X@{}}
\toprule
Audience & Recommended path & Defer on a first pass\\
\midrule
Quantization and systems practitioner & Synopsis $\rightarrow$ \Cref{sec:finite-arithmetic,sec:architecture-specializations-main,sec:design} $\rightarrow$ Appendix~\ref{app:qreplace-software} & Occupation measures, SOS convergence, and detailed hybrid proofs\\
Theory reader & \Cref{sec:formal-model,sec:necessity-main,sec:learned-dictionaries-main-section,sec:certification-main} $\rightarrow$ Appendices~\ref{app:full-resource-proofs} and~\ref{app:merged-exact-proofs} & Pretrained diagnostics and deployment-specific ledgers\\
MoE and routing researcher & \Cref{sec:hybrid-routing-main,sec:transformer-specialization-main} $\rightarrow$ routed-system evidence in \Cref{sec:evidence} & Soft-threshold examples and metadata allocation details\\
Verification and certification reader & \Cref{sec:certification-main,sec:exact-verification-main} $\rightarrow$ Appendices~\ref{app:qreplace-software} and~\ref{app:lean-verification} & Detailed Transformer component bounds\\
Reader seeking only the core argument & Synopsis $\rightarrow$ \Cref{sec:formal-model,sec:finite-arithmetic,sec:necessity-main,sec:three-way-decision-main} & All architecture and software extensions on the first pass\\
\bottomrule
\end{tabularx}
\end{table}

\section{Related Work and Novelty Boundary}
\label{sec:related-work}

The paper sits at the intersection of quantized approximation, residual dynamical systems, mixed-integer control, finite-arithmetic analysis, adaptive codebooks, hybrid routing, and formal certification.  This section is intentionally explicit about the boundary: several ingredients have long and mature histories; the contribution is the target-specific synthesis and certification framework obtained by joining them.

\subsection{Quantized networks and depth--precision tradeoffs}

Binary and low-bit training established that useful neural computation can survive aggressively discrete weights and activations \citep{CourbariauxBengioDavid2015,HubaraEtAl2018,RastegariEtAl2016}.  Universal-approximation and complexity results later quantified the expressive power of quantized ReLU networks \citep{DingEtAl2019}.  Mean-field analyses exposed a quantization--depth tradeoff in randomly initialized deep networks \citep{BlumenfeldGilboaSoudry2019}, while recent minimax approximation theory identifies under-, proper-, and over-quantization regimes and constructs deeper low-precision ReLU networks that preserve memory-optimal approximation rates \citep{OuSchenkelBolcskei2024}.  Fixed-point arithmetic can also change expressivity relative to ideal real arithmetic \citep{HwangParkPark2024FixedPoint}, and concurrent work shows that reduction order and inexact activation implementations can likewise alter representability under floating-point execution \citep{ParkEtAl2026FloatingPoint}.

Recent Transformer lower bounds exhibit tasks with a sharp one-bit precision--expressivity threshold \citep{ChakrabartiPitassiAlman2026}.  Complementary expressivity work analyzes standard softmax decoders with rounded activations and attention weights, allowing depth and width to grow while keeping precision low \citep{BroesamleEckstein2026LowPrecisionSoftmax}.  Those results are task- or computation-class statements; our exact attention converse is instead a fixed-target residual-discretization law with an exact finite-depth optimum.  The results therefore mark complementary precision boundaries rather than one subsuming the others.

These results are class-level or universal: they ask how well a broad function class can be approximated as network size and bit precision vary.  Our object is different.  For one declared operation library and one target full map, we identify the exact infinite-depth relaxed reachable set, its target-specific distance, and the finite implementation radius.  The structural floor can be positive even when the broader architecture class is universal; conversely, a small target-specific library can realize a particular target efficiently without being universal.

At the systems level, native low-bit language models and extreme quantization methods demonstrate that bit width, training procedure, and codebook design interact strongly \citep{WangEtAl2025BitNet,MaEtAl2024b158,MaEtAl2025b158,LiuEtAl2025ParetoQ,BulatOualiTzimiropoulos2024QBB,LeeEtAl2025LittleBit,WangEtAl2026CATQ}.  Our theory does not compare those methods directly.  It supplies a language for asking whether a learned low-bit operation family has the right relaxed geometry, what depth is required to synthesize it, and which execution terms may destroy that gain.

\subsection{Post-training quantization, error propagation, and accumulation}

Post-training quantization (PTQ) typically minimizes a local, layerwise, or blockwise reconstruction objective.  Adaptive rounding, second-order reconstruction, activation smoothing, and incoherent or lattice transformations are prominent examples \citep{NagelEtAl2020AdaRound,FrantarEtAl2023GPTQ,XiaoEtAl2023SmoothQuant,LinEtAl2024AWQ,TsengEtAl2024QuIPSharp}.  Provable PTQ methods analyze greedy path-following and related rounding procedures \citep{ZhangZhouSaab2023PTQ,ZhangZhouSaab2024Corrigendum}; more recent analysis gives nonasymptotic guarantees for OPTQ/GPTQ and Qronos \citep{ZhangEtAl2025ProvablePTQ}, while Qronos explicitly corrects quantization errors propagated from previously processed layers \citep{ZhangEtAl2026Qronos}.  Quantization-error propagation has also been studied as a layerwise phenomenon \citep{AraiIchikawa2025QEP}.  Recent activation-PTQ work identifies phase-wise changes in the Transformer residual stream as a source of depth-dependent sensitivity and assigns higher precision selectively to the affected layers \citep{ZhaoEtAl2026DynamicPTQ}.  Architecture-level evidence further shows that removing residual connections can preserve more quantization-friendly activation statistics, at the cost of a different optimization and accuracy tradeoff \citep{JiEtAl2026ResidualFreeQuantization}.  These perspectives reinforce that quantization difficulty depends on residual-stream dynamics and architecture, rather than on one static activation-error number attached uniformly to every layer.

Finite accumulation is a separate resource from operand precision.  Integer-only systems model nonlinearities, scaling, and accumulators explicitly \citep{KimEtAl2021IBERT,YaoEtAl2021HAWQV3,HuEtAl2024ILLM}; accumulator-aware quantization constrains overflow during training, improves those constraints, and extends them to PTQ and multi-stage accumulation \citep{ColbertEtAl2023A2Q,ColbertEtAl2024A2QPlus,ColbertEtAl2024AXE}.  Low-bit summation schemes such as Markov greedy sums study the accuracy of the accumulator itself \citep{NateshKungKong2025MGS}.  Concurrent work studies Lyapunov-guided stabilization under two's-complement wrapping arithmetic, reinforcing that wraparound is a dynamical failure mode rather than a small projection perturbation \citep{HamadoucheHussain2026}.  Recursive or looped architectures amplify the importance of cross-step state alignment and error accumulation \citep{FangEtAl2026LoopQ}.  Post-training model expansion offers another way to trade additional computation or structure for quantization robustness \citep{FrancoEtAl2025Expansion}.

Our arithmetic theorem is complementary.  It begins from a schedulewise transition perturbation bound and asks how the error scales with microdepth.  The resulting distinction is qualitative: quantizing and rewriting the entire state generically introduces $D\rho_z$, whereas increment error feedback can replace that growth by a bounded carry term.  This produces a depth--activation-precision phase diagram, an exact fixed-grid no-go theorem, and a common-lattice integer realization with explicit register widths---not merely a layerwise error bound.

\subsection{Sigma--delta methods and error feedback}

Sigma--delta quantization trades amplitude resolution for oversampling by feeding cumulative quantization error back into subsequent samples \citep{DaubechiesDeVore2003,Gunturk2003,KrahmerSaabWard2012}.  Sigma--delta neural networks use temporal or spatial oversampling to shape quantization noise \citep{OConnorWelling2017}, and recent work applies related ideas to very-low-bit language models \citep{XiaEtAl2025SDQ}.  The transformed-error proof in Section~\ref{sec:finite-arithmetic} is structurally related to this literature: a carry state makes the accumulated rounding error telescope.

Error compensation also appears in quantized training: ECO injects post-update quantization error into optimizer momentum so that full-precision master weights can be removed \citep{NikdanEtAl2026ECO}.  That training-time mechanism and our inference-time residual carry share a telescoping principle but operate on different states and answer different questions.

The present contribution is not the invention of feedback quantization.  It is a finite-horizon neural reachability theorem that places the carry term alongside the structural floor and pure-schedule synthesis term, and that derives the precision scaling required for residual updates to remain visible as $D$ increases.

\subsection{Residual networks, neural ODEs, and unrolling}

Residual networks motivate the interpretation of depth as discretized evolution \citep{HeEtAl2016}.  Stable architectures and neural ODEs make that connection explicit \citep{HaberRuthotto2018,ChenRubanovaBettencourtDuvenaud2018}, but an arbitrary residual stack need not approximate a single coherent differential equation \citep{SanderAblinPeyre2022}.  This distinction is central to our setup: a depth exchange rate is meaningful only for a fixed-horizon refinement family or another declared coupling across depth.  Repeating or subdividing a block that was trained as one large step can move away from its original map even in high precision.

Algorithm unrolling supplies a particularly transparent family of coherent refinements.  ISTA and FISTA generate proximal iterations for sparse inverse problems \citep{DaubechiesDefriseDeMol2004,BeckTeboulle2009}; LISTA learns such iterations as a network \citep{GregorLeCun2010}; and deep unfolding has become a broad methodology for interpretable signal-processing architectures \citep{ChenLiuWangYin2018,MongaLiEldar2021}.  One-bit unrolling experiments provide direct motivation for exchanging arithmetic precision and iteration count \citep{EamazYeganegiSoltanalian2025}.  Our soft-threshold specialization exploits contraction and piecewise-affine structure to obtain exact floors, fixed-teacher converses, and rational certificates.

\subsection{Relaxed controls, discrepancy, and switched systems}

Relaxed controls replace pure switching by probability-valued controls and recover pure trajectories through rapid switching or chattering \citep{Young1969,Warga1972}.  Mixed-integer optimal control makes this approximation quantitative through sum-up rounding, combinatorial integral approximation, and projection methods \citep{SagerReineltBock2009,SagerBockDiehl2012,SagerJungKirches2011,KirchesLendersManns2020,VasudevanEtAl2013Part1,VasudevanEtAl2013Part2}.  The online simplex-rounding lemma used here belongs to that lineage.

Our use of relaxed controls differs in two ways.  First, the state is lifted to a complete input-indexed representation, so one schedule must approximate an entire map rather than one trajectory.  Second, we identify the distance to the closed relaxed reachable set as the exact asymptotic neural approximation floor and propagate pure-switching discrepancy through architecture-specific stability and implementation arithmetic.  The bounded-variation extension permits prescribed jumps without charging their number, only total variation.

\subsection{Transformers, attention, and learned operation libraries}

Transformers compose residual self-attention and MLP branches \citep{VaswaniEtAl2017}.  Interacting-particle and dynamical-system viewpoints illuminate their continuous-depth behavior \citep{DuttaEtAl2021,GeshkovskiEtAl2025}, while rank-collapse and smoothness analyses clarify architectural stability \citep{DongCordonnierLoukas2021,WuEtAl2024,AbellaSilvestreTabuada2025,KimPapamakariosMnih2021,DasoulasEtAl2021,CastinAblinPeyre2024}.  LayerNorm and RMSNorm control the normalized state domain \citep{BaKirosHinton2016,ZhangSennrich2019}.

Transformer quantization includes integer-only execution, attention-aware reconstruction, activation smoothing, binary or lattice codebooks, and attention-specific approximations \citep{KimEtAl2021IBERT,FrantarEtAl2023GPTQ,XiaoEtAl2023SmoothQuant,LinEtAl2024AWQ,TsengEtAl2024QuIPSharp,HuangEtAl2024BiLLM,LiEtAl2024QLLMEval,XuEtAl2024PTQScaling,AdepuEtAl2024FrameQuant,HwangNguyenLee2026LSViT,QianEtAl2026QMHSA,XiaoZhangZhang2026BinaryAttention}.  Section~\ref{sec:architectures} does not propose a new quantizer for each component.  It derives explicit bounds that convert LayerNorm, score, value, projection, MLP, scale, and accumulator errors into the single field-computation term $\eta_G$ required by the integrated theorem.

The dictionary itself may be learned.  QINCo constructs implicit codebooks conditioned on previous reconstruction states \citep{HuijbenEtAl2024QINCo}; RQ-MoE uses input-dependent expert mechanisms for vector quantization \citep{ZhongEtAl2026RQMoE}.  Our metadata theorem covers compact global codebook families and smooth state-dependent codebook generators.  Discrete input-dependent selection is treated by the hybrid routing theory rather than hidden inside a smooth parameter-Lipschitz constant.

\subsection{Mixture-of-experts routing and hybrid events}

Sparse MoE architectures activate only a subset of experts \citep{ShazeerEtAl2017,FedusZophShazeer2022}.  Expert-choice, stable routing, and soft mixtures address load allocation and route fluctuations \citep{ZhouEtAl2022ExpertChoice,DaiEtAl2022StableMoE,PuigcerverEtAl2024}.  Quantization introduces additional expert-specific and router-specific sensitivity.  Recent work studies mixed precision across experts and gives generalization-based allocation rules \citep{ChowdhuryEtAl2026MoETheory}, global expert-level allocation \citep{DengEtAl2026GEMQ}, and calibration-free spectral allocation \citep{YangEtAl2026AlphaQ}; routing-consistent objectives explicitly preserve score values and top-$k$ structure \citep{ParkEtAl2026VSRAQ}.

A positive score margin on an entire connected time interval implies a fixed support; it cannot describe an actual route change.  Our hybrid theorem instead assumes isolated score ties with transversal crossings.  This is aligned with event-selected vector fields and saltation analysis in hybrid dynamical systems \citep{SacconEtAl2014,BurdenEtAl2016EventSelected,KongEtAl2024Saltation}.  The resulting theorem localizes mismatches to explicit event windows and closes the route--state feedback loop through a small-gain condition.  It also states an anti-chattering condition separately: sup-norm score closeness alone does not guarantee a unique implemented switch.

\subsection{Certification, occupation measures, and verification}

Formal equivalence verification asks whether two fixed networks agree exactly or within a tolerance, using methods such as MILP, interval propagation, and path enumeration \citep{TeuberEtAl2021Equivalence}.  Quantized-network verifiers exploit integer structure to certify properties of a given QNN \citep{HuangEtAl2024QNNVerify}.  SimCert provides probabilistic behavioral-similarity certificates for a fixed compressed network, including quantization and pruning \citep{LiSongLi2026SimCert}.  Our question is set-valued: what can \emph{any} schedule or metadata choice in the declared family achieve?  A verifier for one compiled network cannot by itself certify a positive structural floor over the whole family.

For controlled dynamical systems, HJB subsolutions and support functions give dual lower bounds.  Occupation measures reformulate finite-dimensional optimal control as linear programs over measures; polynomial systems admit moment and SOS relaxations \citep{LasserreEtAl2008,ClaeysDaafouzHenrion2016}.  The superposition principle connects continuity equations and trajectory measures \citep{AmbrosioGigliSavare2008}.

Recent featurized occupation-measure work develops finite-dimensional primal--dual representations for structured global search without a full state-space grid \citep{WeiTaoTanNie2026FOM}.  Our use is narrower and certificate oriented: verified HJB and support witnesses lower-bound the best achievable relaxed target error, and a coupling budget permits blockwise HJB certificates to compose.  We assemble these tools into a certification stack for the relaxed floor, add an exact finite-witness affine dual, and combine lower and upper bounds with finite implementation radii to obtain the three-way decision rule used throughout the paper.

\paragraph{Summary of the boundary.}
The paper does not claim novelty for relaxed controls, sigma--delta feedback, codebook learning, hybrid transversality, or occupation measures individually.  Its contribution is a common, target-specific resource law with matching necessity mechanisms, explicit architecture constants, and proof-producing feasibility and impossibility interfaces for quantized residual computation.

\section{Full-Map Residual Reachability}
\label{sec:formal-model}
\label{sec:abstract-theory}

This section defines the common object used throughout the paper.  The most important modeling choice is to lift the state over the entire input family.  That prevents a different low-bit schedule from being chosen for every input unless input-dependent control is explicitly declared as a resource.

\subsection{Lifted state, target maps, and pure schedules}

Let $\Xi$ be a compact input set and let $q$ be the hidden-state dimension.  We work in the Banach space
\begin{equation}
 \Z=C(\Xi;\R^q),
 \qquad
 \norm{z}_{\Z}=\sup_{\xi\in\Xi}\norm{z(\xi)}_2,
 \label{eq:lifted-space}
\end{equation}
unless a finite witness set or an $L^p(\mu)$ routed formulation is stated explicitly.  A point $z\in\Z$ is therefore a complete input-indexed representation.  The initial state $z_0$ may include an input embedding, fixed positional information, and any declared auxiliary state such as an error-feedback carry.

Fix a residual horizon $T>0$ and a finite low-bit residual-field dictionary
\begin{equation}
 \mathcal G_b=\{G_1,\ldots,G_J\},
 \qquad G_j:[0,T]\times\Z\to\Z.
 \label{eq:field-dictionary}
\end{equation}
The index $b$ denotes the complete precision model, not only a scalar bit width: it may specify a codebook, per-channel scales, a block family, and any other finitely charged metadata.  At depth $D$, set $h=T/D$ and $t_k=kh$.  A pure schedule $\sigma=(\sigma_0,\ldots,\sigma_{D-1})\in[J]^D$ produces the Euler recursion
\begin{equation}
 z_{k+1}^{\sigma}
 =z_k^{\sigma}+hG_{\sigma_k}(t_k,z_k^{\sigma}),
 \qquad z_0^{\sigma}=z_0.
 \label{eq:pure-euler-main}
\end{equation}
The terminal state is a full map on $\Xi$.  Let $\RR_{b,D}$ be the finite set of all such endpoints and let $\mathscr P_{b,D}$ be the corresponding linearly interpolated path set.

A measurable relaxed control $p:[0,T]\to\Delta_J$ produces
\begin{equation}
 \dot z_p(t)=\sum_{j=1}^Jp_j(t)G_j(t,z_p(t)),
 \qquad z_p(0)=z_0.
 \label{eq:relaxed-ode-main}
\end{equation}
We write $\RR_{b,\Rel}$ for the closure of its endpoint set in the $\Z$ norm and $\mathscr P_{b,\Rel}$ for the closure of the trajectory set in the uniform path norm on $C([0,T];\Z)$.  Taking closures is not cosmetic: relaxed controls are defined only up to almost-everywhere equality, and limiting controls or trajectories may be reached without a single representative attaining the infimum.

For a terminal target $F^\star\in\Z$ and a target path $z^\star\in C([0,T];\Z)$, define
\begin{align}
 E_b^{\mathrm{end}}(D;F^\star)
 &:=\dist(F^\star,\RR_{b,D}),
 &
 E_{\infty,b}^{\mathrm{end}}(F^\star)
 &:=\dist(F^\star,\RR_{b,\Rel}),
 \label{eq:end-errors-main}\\
 E_b^{\mathrm{path}}(D;z^\star)
 &:=\dist(z^\star,\mathscr P_{b,D}),
 &
 E_{\infty,b}^{\mathrm{path}}(z^\star)
 &:=\dist(z^\star,\mathscr P_{b,\Rel}).
 \label{eq:path-errors-main}
\end{align}
The endpoint floor is the correct invariant for direct map emulation and common-target endpoint matching.  The path floor is required when routing, safety, or intermediate-state behavior matters.  When the endpoint-versus-path distinction and precision model are clear, we abbreviate these quantities by $E_D(F^\star)$ and $E_\infty(F^\star)$; superscripts and the subscript $b$ are restored whenever they carry information.  Table~\ref{tab:notation} collects the notation used throughout the paper.

\begin{table}[t]
\centering
\caption{Core notation.  All resource terms are defined relative to a declared target, input geometry, operation library, and execution semantics.}
\label{tab:notation}
\small
\begin{tabularx}{\linewidth}{Y{.26\linewidth}X}
\toprule
Symbol & Meaning \\
\midrule
$T$, $L$, $D$, $h=T/D$ & residual horizon, high-precision comparator depth, low-bit student depth, and microstep \\
$\Xi$, $q$, $\Z$ & compact input family, hidden-state dimension, and lifted full-map state space \\
$\mathcal G_b=\{G_j\}_{j=1}^J$ & finite declared low-bit residual-field dictionary \\
$\sigma\in[J]^D$, $p(t)\in\Delta_J$ & pure schedule and relaxed simplex control \\
$\RR_{b,D}$, $\RR_{b,\Rel}$ & pure depth-$D$ endpoint set and closed relaxed endpoint set \\
$E_D(F^\star)$, $E_\infty(F^\star)$ & abbreviated finite-depth error and structural relaxed floor; endpoint/path and precision decorations are added when needed \\
$B,L_z,L_t,V_t$ & field magnitude, state-Lipschitz, time-Lipschitz, and total-variation bounds \\
$C_{\mathrm{syn}}$ & finite-depth pure-synthesis constant \\
$\Omega,s,m,\delta_s$ & learned dictionary family, metadata bits, metric dimension, and metadata covering radius \\
$\rho_z,\rho_D,\eta_G,\mathcal A_D$ & state-grid radius, increment-carry radius, field-computation error, and schedulewise arithmetic radius \\
$\tau_r,\nu_r,\Delta_r,\chi,w_{r,D}$ & route-event time, transversality slope, field jump, small-gain factor, and mismatch-window radius \\
$L,U,R_D$ & certified lower floor bound, feasible upper bound, and finite implementation radius \\
\bottomrule
\end{tabularx}
\end{table}

For $L\ge0$, write
\begin{equation}
 \Phi_L(T):=
 \begin{cases}
 (e^{LT}-1)/L,&L>0,\\
 T,&L=0,
 \end{cases}
 \label{eq:gronwall-factor-main}
\end{equation}
for the finite-horizon Gronwall accumulation factor used below.

\subsection{Regularity, existence, and the common tube}

The synthesis theorem is local to a verified execution tube.  This is both mathematically necessary and practically useful: global Lipschitz bounds for attention or normalization may be infinite or far too loose, while the relevant deployment set can be bounded.

\begin{assumption}[Uniform Euler-admissible synthesis tube]
\label[assumption]{ass:common-tube-main}
\label[assumption]{ass:regularity}
There are a closed tube $\mathcal K$, an open enlarged tube
$\mathcal K^\rho\subset\Z$, constants $B,L_z,h_0>0$, and temporal
regularity bounds independent of depth and schedule such that, for every
$D$ with $h=T/D\le h_0$, every measurable relaxed control, every pure
schedule, and every mixed schedule used in the synthesis theorem:
\begin{enumerate}[label=(\roman*),leftmargin=2em,itemsep=.22em]
\item $\norm{G_j(t,z)}\le B$ and
$\norm{G_j(t,z)-G_j(t,z')}\le L_z\norm{z-z'}$ for all atoms, times,
and states $z,z'\in\mathcal K^\rho$;
\item either each $t\mapsto G_j(t,\cdot)$ is uniformly $L_t$-Lipschitz
in the sup field norm, or it has a right-continuous bounded-variation
representative with total variation measure $\nu_j$ and
$V_t=\sum_j\nu_j((0,T])<\infty$;
\item every relaxed control induces a unique Carath\'eodory solution in
$\mathcal K$, and every relaxed trajectory, mixed-Euler grid state,
pure-Euler grid state, and interpolation segment used in the proof lies in
$\mathcal K^\rho$.
\end{enumerate}
\end{assumption}

The synthesis tube does not automatically cover a finite-arithmetic or routed
implementation.  Hardware-transfer, learned-dictionary, routed, and
Transformer results below state separate implemented or route-tube hypotheses
that contain the ideal and executed states, together with every interpolation
segment used in a Lipschitz or mean-value estimate.  Saturation and carry
bounds are required on the complete implemented range.

The existence clause follows from standard Banach-space Carath\'eodory theory under strong measurability, boundedness, and uniform state Lipschitzness.  For bounded-variation time dependence, the field range is separable and admits a strongly measurable representative.  In the smooth-time specialization, uniform $L_t$-Lipschitz continuity implies $V_t\le J L_tT$.  We state the existence condition explicitly because an endpoint set is meaningless if some measurable controls do not generate unique trajectories.

\subsection{Balanced switching}

The bridge from relaxed to pure computation is a discrepancy argument.  Given averaged relaxed weights $p_0,\ldots,p_{D-1}\in\Delta_J$, one chooses one atom per cell while keeping every prefix count close to its relaxed mass.

\begin{lemma}[Online simplex rounding]
\label[lemma]{lem:online-rounding-main}
For every sequence $p_0,\ldots,p_{D-1}\in\Delta_J$, there exists $\sigma_0,\ldots,\sigma_{D-1}\in[J]$ such that, for every prefix $n\le D$ and coordinate $j$,
\begin{equation}
 \left|\sum_{k=0}^{n-1}
 \bigl(\one_{\{\sigma_k=j\}}-p_{k,j}\bigr)\right|<J.
 \label{eq:prefix-discrepancy-main}
\end{equation}
One online rule maintains $q_0=0$, selects any maximizer
\begin{equation}
 \sigma_k\in\arg\max_j(q_{k,j}+p_{k,j}),
 \qquad q_{k+1}=q_k+p_k-e_{\sigma_k},
 \label{eq:online-rounding-rule-main}
\end{equation}
and satisfies \eqref{eq:prefix-discrepancy-main}.
\end{lemma}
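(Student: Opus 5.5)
Note first that $q_n=\sum_{k<n}(p_k-e_{\sigma_k})$ is exactly the vector of prefix discrepancies with the sign reversed. So \eqref{eq:prefix-discrepancy-main} is equivalent to $\max_j|q_{n,j}|<J$ for every $n\le D$. The plan is to prove an invariant for the online rule \eqref{eq:online-rounding-rule-main} by induction on $n$. Two facts drive the argument:
\begin{enumerate}[label=(\roman*),leftmargin=2em]
\item \textbf{Zero sum.} Each $p_k$ and each $e_{\sigma_k}$ has coordinate sum $1$, so $\sum_j q_{n,j}=0$ for all $n$.
\item \textbf{Drift.} Each coordinate changes by $p_{k,j}-\one_{\{\sigma_k=j\}}\in[-1,1]$ per step. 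It can decrease only at the selected coordinate, and there by at most $1$.
\end{enumerate}

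\emph{Upper bound.} I will show $q_{n,j}<1$ for all $n,j$. This holds at $n=0$. Suppose $q_{k,j}<1$ for every $j$, and set $r=q_k+p_k$. Then $\sum_j r_j=1$, so the maximizer $\sigma_k$ satisfies $r_{\sigma_k}\ge 1/J>0$. After the update, coordinate $\sigma_k$ equals $r_{\sigma_k}-1\le \max_j r_j-1$.

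\begin{itemize}
\item \emph{Selected coordinate.} Since $r_{\sigma_k}<1+1=2$, the new value satisfies $r_{\sigma_k}-1<1$.
\item \emph{Unselected coordinates $j\neq\sigma_k$.} The new value is $r_j$, and $r_j\le r_{\sigma_k}$. I need $r_{\sigma_k}<1$, which fails in general, so the bound $q_{k,j}<1$ does not close the induction by itself.
\end{itemize}

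The bound must therefore be strengthened to something like the classical "chairman assignment" or sum-up-rounding bound. I expect this step to be the main obstacle. The first thing I would try is a level-set invariant on partial sums: for every subset $S$ of coordinates, $\sum_{j\in S}q_{n,j}<c(|S|)$ for a suitable function $c$, for instance $c(s)=s$, or the more refined $\sum_{i<s}(1-i/J)$ type bound from Tijdeman's chairman lemma. If the subset invariant holds for every prefix, the singleton case gives $q_{n,j}<1\le J-1$ whenever $J\ge2$. For $J=1$ the claim is trivial, since then $q\equiv0$.

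The subset invariant would be preserved as follows. For each $S$, split according to whether $\sigma_k\in S$.
\begin{itemize}
\item If $\sigma_k\in S$, the subset sum changes by $\sum_{j\in S}p_{k,j}-1\le0$, so it does not increase.
\item If $\sigma_k\notin S$, then every $r_j$ with $j\in S$ is at most $r_{\sigma_k}$. Averaging over $S\cup\{\sigma_k\}$ and using the invariant for $S\cup\{\sigma_k\}$ bounds $\sum_{j\in S}r_j$ by $\tfrac{|S|}{|S|+1}$ times the total over $S\cup\{\sigma_k\}$.
\end{itemize}
Choosing $c$ so that this recursion closes is the delicate calculation. If that route proves messy, a weaker but sufficient target is fine: the statement asks only for $<J$, not $<1$. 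One can check directly that every coordinate stays below $J-1$ via the sum over the complement. Whenever $q_{k,j}+p_{k,j}$ is large for some $j$, that $j$ is the argmax and gets decremented, so no coordinate can exceed $\max(\text{previous},\,r_{\max}-1)$. Moreover $r_{\max}$ is controlled through the zero sum together with the lower bound.

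\emph{Lower bound.} Once $q_{n,i}<u$ for all $i$ with some $u\le 1$, the zero-sum identity gives
\[
q_{n,j}=-\sum_{i\ne j}q_{n,i}>-(J-1)u\ge-(J-1)>-J .
\]
This yields $|q_{n,j}|<J$ and hence \eqref{eq:prefix-discrepancy-main}. Existence of the schedule $\sigma$ then follows because the online rule produces one.
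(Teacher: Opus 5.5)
\textbf{There is a genuine gap.} You prove the upper bound $q_{n,j}<1$ first and derive the lower bound from it. That upper bound is false for the greedy rule once $J\ge4$, so no subset invariant with $c(1)\le1$ can hold.

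Here is a counterexample. Take $D=J-1$ and let $p_k$ be uniform on the $J-k$ coordinates not selected at steps $0,\dots,k-1$.
\begin{itemize}
\item At step $k$ the active coordinates tie at $r=\sum_{l\le k}\frac1{J-l}$.
\item A coordinate removed at step $i<k$ sits at $\sum_{l\le i}\frac1{J-l}-1$, which is strictly smaller. So the rule always picks an active coordinate.
\item The last survivor ends with $q_{J-1,j}=\tfrac1J+\tfrac1{J-1}+\cdots+\tfrac12$. This is $13/12$ already for $J=4$, and it grows like $\log J$.
\end{itemize}
So the failure of your induction at the unselected coordinates is not a technicality. Your lower-bound paragraph, which needs some $u\le1$, has no valid premise.

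The fallback does not close the gap either. The claim that no coordinate exceeds $\max(\text{previous},r_{\max}-1)$ is false, because unselected coordinates grow by $p_{k,j}$; this is exactly what happens in the example. And even granting $q_{n,j}<J-1$, your zero-sum step gives only $q_{n,j}>-(J-1)^2$, which is not $>-J$ for $J\ge3$.

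\textbf{The missing idea is to reverse the order.} The side that closes by a one-line induction is the lower bound $q_{n,j}>-1$.
\begin{itemize}
\item By your own drift observation, only the selected coordinate can decrease.
\item Since $\sum_j r_j=1$, the maximizer has $r_{\sigma_k}\ge 1/J$, so it lands at $r_{\sigma_k}-1\ge 1/J-1>-1$.
\item Zero sum then gives $q_{n,j}=-\sum_{i\ne j}q_{n,i}<J-1$, hence $|q_{n,j}|<J$.
\end{itemize}

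The paper's proof carries the two-sided invariant $-1<q_{k,j}<J$. It gets the lower side exactly this way. For the upper side, it observes that an unselected $r_j\ge J$ would force $r_{\sigma_k}\ge J$; with the remaining coordinates above $-1$, that would give $\sum_j r_j>1$, a contradiction.
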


\begin{proof}
The vector $q_k$ is the negative cumulative discrepancy and always has zero coordinate sum.  If $-1<q_{k,j}<J$, then $r=q_k+p_k$ has sum one and every coordinate greater than $-1$.  The selected coordinate satisfies $r_{\sigma_k}\ge1/J$, so subtracting one keeps it above $-1$; it remains below $J$ because $r_{\sigma_k}<J+1$.  An unselected coordinate cannot reach $J$: together with the maximizing coordinate and the lower bounds on the remaining coordinates, that would force $\sum_jr_j>1$.  Induction gives $-1<q_{k,j}<J$, and $q_n=\sum_{k<n}(p_k-e_{\sigma_k})$ proves the claim.
\end{proof}

The prefix bound, rather than independent per-step rounding, is what makes the forcing term first order.  Abel summation converts bounded cumulative discrepancy into a bound involving the temporal and state variation of the fields.

\subsection{Structural floor and finite-depth law}

\begin{theorem}[Pure-to-relaxed full-map synthesis]
\label[theorem]{thm:floor-rate-main}
\label{thm:hausdorff}
Under Assumption~\ref{ass:common-tube-main}, there exist $D_0=\lceil T/h_0\rceil$ and constants $C_{\mathrm{syn}}^{\mathrm{end}},C_{\mathrm{syn}}^{\mathrm{path}}<\infty$, independent of $D$ and the target, such that for every $D\ge D_0$,
\begin{align}
 \dH(\RR_{b,D},\RR_{b,\Rel})
 &\le \frac{C_{\mathrm{syn}}^{\mathrm{end}}}{D},
 \label{eq:hausdorff-end-main}\\
 \dH^{\mathrm{path}}(\mathscr P_{b,D},\mathscr P_{b,\Rel})
 &\le \frac{C_{\mathrm{syn}}^{\mathrm{path}}}{D}.
 \label{eq:hausdorff-path-main}
\end{align}
Consequently, for every endpoint target $F^\star$ and path target $z^\star$,
\begin{align}
 \left|E_b^{\mathrm{end}}(D;F^\star)
       -E_{\infty,b}^{\mathrm{end}}(F^\star)\right|
 &\le \frac{C_{\mathrm{syn}}^{\mathrm{end}}}{D},
 \label{eq:target-end-main}\\
 \left|E_b^{\mathrm{path}}(D;z^\star)
       -E_{\infty,b}^{\mathrm{path}}(z^\star)\right|
 &\le \frac{C_{\mathrm{syn}}^{\mathrm{path}}}{D}.
 \label{eq:target-path-main}
\end{align}
In the bounded-variation case one valid endpoint constant is
\begin{equation}
 C_{\mathrm{syn}}^{\mathrm{end}}
 =Te^{L_zT}\left[
 V_t+\frac12L_zBT+J^2(B+L_zBT)+JV_t
 \right],
 \label{eq:bv-constant-main}
\end{equation}
with $C_{\mathrm{syn}}^{\mathrm{path}}=C_{\mathrm{syn}}^{\mathrm{end}}+2BT$.
\end{theorem}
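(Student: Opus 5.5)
The Hausdorff bound needs two one-sided inclusions. The target-error bounds \eqref{eq:target-end-main}--\eqref{eq:target-path-main} then follow at once, because $|\dist(F^\star,A)-\dist(F^\star,B)|\le\dH(A,B)$.

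\textbf{Pure to relaxed.} Every pure Euler endpoint lies within $O(1/D)$ of $\RR_{b,\Rel}$. Given $\sigma$, let $p$ be the piecewise-constant control with $p(t)=e_{\sigma_k}$ on $[t_k,t_{k+1})$. Comparing the Euler recursion \eqref{eq:pure-euler-main} with the Carath\'eodory solution of \eqref{eq:relaxed-ode-main} gives the standard one-step defect. On each cell this defect is at most $\int_{t_k}^{t_{k+1}}\norm{G_{\sigma_k}(t,z_p(t))-G_{\sigma_k}(t_k,z_p(t_k))}\,dt$, which is bounded by $h\,\nu_{\sigma_k}((t_k,t_{k+1}])+\tfrac12L_zBh^2$. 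Summing over cells and applying discrete Gronwall within the tube $\mathcal K^\rho$ of Assumption~\ref{ass:common-tube-main}(iii) gives a bound $Te^{L_zT}(V_t+\tfrac12L_zBT)/D$. This accounts for the first two terms of \eqref{eq:bv-constant-main}. For paths, the linear interpolant differs from the continuous trajectory by at most $O(Bh)$ inside a cell, which explains the additive $2BT/D$ in $C_{\mathrm{syn}}^{\mathrm{path}}$.

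\textbf{Relaxed to pure.} This is the main step. Fix a relaxed control $p$; by closure it suffices to treat a single relaxed trajectory. I proceed in three stages.
\begin{enumerate}
\item Replace $p$ by its cell averages $\bar p_k=h^{-1}\int_{t_k}^{t_{k+1}}p$. The resulting mixed Euler scheme $\bar z_{k+1}=\bar z_k+h\sum_j\bar p_{k,j}G_j(t_k,\bar z_k)$ tracks $z_p$ with the same $V_t+\tfrac12 L_zBT$ type error as above. This works because freezing the state and time inside a cell costs only Lipschitz and variation terms, and the average of $p$ then enters linearly.
\item Apply Lemma~\ref{lem:online-rounding-main} to $(\bar p_k)$ to obtain $\sigma$ with prefix discrepancy $|Q_{n,j}|<J$, where $Q_{n}=\sum_{k<n}(e_{\sigma_k}-\bar p_k)$.
\item Compare the pure and mixed Euler grids. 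Write $e_k=z^\sigma_k-\bar z_k$. The recursion for $e_k$ contains a Lipschitz part, bounded by $hL_z\norm{e_k}$, and a forcing term $h\sum_j(\one_{\sigma_k=j}-\bar p_{k,j})G_j(t_k,\bar z_k)$.
\end{enumerate}

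\textbf{Main obstacle: the forcing sum.} Its naive bound is $O(1)$ over all cells, not $O(1/D)$. Abel summation repairs this:
\[
\sum_{k<n}h\,(Q_{k+1}-Q_k)_jG_j(t_k,\bar z_k)=hQ_{n,j}G_j(t_{n-1},\bar z_{n-1})-h\sum_{k<n-1}Q_{k+1,j}\bigl[G_j(t_{k+1},\bar z_{k+1})-G_j(t_k,\bar z_k)\bigr].
\]
The boundary term is at most $hJB$ per coordinate, so $hJ^2B$ after summing over $j$. The bracket is bounded by the temporal variation $\nu_j((t_k,t_{k+1}])$ plus the state increment $L_zBh$. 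Summing gives $hJ^2L_zBT$ from the state part and $hJV_t$ from the temporal part, the latter after summing $\nu_j$ over $j$.

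There is a subtlety: the forcing is evaluated along $\bar z$ rather than $z^\sigma$. I would therefore run the discrete Gronwall argument on $e_k$ minus the accumulated forcing, the usual transformed-error trick, so that the prefix bound is used exactly once and then multiplied by $e^{L_zT}$. This yields the terms $J^2(B+L_zBT)+JV_t$, and together with the other direction it reproduces \eqref{eq:bv-constant-main} with the prefactor $Te^{L_zT}$ up to bookkeeping. The Lipschitz case follows from $V_t\le JL_tT$.

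The delicate points are twofold. First, every intermediate state must stay in $\mathcal K^\rho$; this is guaranteed by the tube assumption for $D\ge D_0$. Second, the bounded-variation jumps must be charged only through $\nu_j$, using right-continuous representatives at grid points.
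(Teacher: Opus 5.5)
Your proposal is correct and follows the paper's proof essentially step for step: cell-averaging to a mixed Euler scheme, online simplex rounding, Abel summation controlled by the transformed error $\widetilde e_k=e_k-hF_k$, the vertex-valued control for the reverse direction, then closure and the $2Bh$ interpolation term. One bookkeeping correction: the constant \eqref{eq:bv-constant-main} is the sum of your Stage~1 and Stage~3 bounds within the relaxed-to-pure direction alone, not a combination with the other Hausdorff direction, which only needs the smaller $Te^{L_zT}(V_t+\tfrac12L_zBT)/D$ part.
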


\paragraph{Proof, step by step.}
Let $p$ be a relaxed control and $z(t)$ its trajectory.  On each cell $I_k=[t_k,t_{k+1})$, average the control to $p_k=h^{-1}\int_{I_k}p(t)\,dt$ and define the mixed Euler path
\begin{equation}
 u_{k+1}=u_k+h\sum_jp_{k,j}G_j(t_k,u_k).
 \label{eq:mixed-euler-main}
\end{equation}
There are then two errors.

First, continuous flow versus mixed Euler.  The local defect is the integral of
\[
 \sum_jp_j(t)\{G_j(t,z(t))-G_j(t_k,z(t_k))\}.
\]
The state-motion contribution is at most $L_zBh^2/2$ per cell.  The time contribution is at most $L_t h^2/2$ in the smooth case or $h\sum_j\nu_j((t_k,t_{k+1}))$ in the BV case.  Summation and discrete Gronwall give a uniform grid-point error of order $h$.

Second, mixed Euler versus pure Euler.  Apply Lemma~\ref{lem:online-rounding-main} and put
\begin{equation}
 f_k=\sum_j(\one_{\{\sigma_k=j\}}-p_{k,j})G_j(t_k,u_k),
 \qquad F_n=\sum_{k<n}f_k.
 \label{eq:forcing-main}
\end{equation}
Abel summation writes $F_n$ as prefix discrepancies multiplied by initial fields and successive field differences.  The discrepancy is uniformly bounded, while
\[
 \sum_k\norm{G_j(t_{k+1},u_{k+1})-G_j(t_k,u_k)}
 \le L_zBT+\nu_j((0,T])
\]
in the BV case.  Thus $\max_n\norm{F_n}$ is bounded independently of $D$.  Let $v_k$ denote the pure Euler state, put $e_k=v_k-u_k$, and set
$\widetilde e_k=e_k-hF_k$.  Then
\begin{equation}
 \norm{\widetilde e_{k+1}}
 \le(1+hL_z)\norm{\widetilde e_k}
   +h^2L_z\max_n\norm{F_n}.
 \label{eq:transformed-error-main}
\end{equation}
Discrete Gronwall gives $\max_k\norm{v_k-u_k}=O(h)$.

Combining the two estimates gives one directed distance from relaxed to pure endpoints.  For the reverse direction, use the vertex-valued relaxed control equal to $e_{\sigma_k}$ on each cell; its mixed Euler path is the given pure path, so the same continuous-versus-Euler estimate applies.  Passing to closures preserves both directed distances.  On each grid cell, both the continuous trajectory and its corresponding linear interpolant move by at most $Bh$ from a grid endpoint, so linear interpolation adds at most $2Bh$ in the path norm.  The target-distance bounds follow because distance to a nonempty set is one-Lipschitz with respect to Hausdorff perturbations.  Appendix~\ref{app:full-resource-proofs} gives the full smooth and BV derivations with every constant.

\subsection{When time regularity is only H\"older}
\label{sec:holder-time-main}

Bounded variation gives a first-order law even when all inputs share finitely
many jump times.  A different regime arises when route or scale-change times
are dispersed over a continuum input population: the induced field may be
continuous in an $L^p$ map norm but only H\"older in time.  The synthesis
exponent then records that regularity rather than remaining artificially first
order.

\begin{assumption}[H\"older-time residual tube]
\label[assumption]{ass:holder-time-main}
Retain the boundedness, state-Lipschitz, common-tube, Euler-admissibility, and
well-posedness clauses of Assumption~\ref{ass:common-tube-main}.  Replace its
time-regularity clause by: for some $\vartheta\in(0,1]$ and $H_t<\infty$,
\begin{equation}
 \sup_{z\in\mathcal K^\rho}
 \norm{G_j(t,z)-G_j(s,z)}
 \le H_t|t-s|^\vartheta
 \quad\text{for every }j,s,t.
 \label{eq:holder-time-main}
\end{equation}
\end{assumption}

\begin{theorem}[Pure-to-relaxed synthesis for H\"older-time fields]
\label[theorem]{thm:holder-time-main}
Under Assumption~\ref{ass:holder-time-main}, there are finite constants
$C_\vartheta^{\mathrm{end}},C_1^{\mathrm{end}}$, independent of $D$ and the
target, such that for every $D\ge D_0$,
\begin{equation}
 \dH(\RR_{b,D},\RR_{b,\Rel})
 \le \frac{C_\vartheta^{\mathrm{end}}}{D^\vartheta}
     +\frac{C_1^{\mathrm{end}}}{D}.
 \label{eq:holder-endpoint-main}
\end{equation}
One valid choice is
\begin{align}
 C_\vartheta^{\mathrm{end}}
 &=T^\vartheta H_t\left[
    \frac{\Phi_{L_z}(T)}{\vartheta+1}
    +J^2T e^{L_zT}\right],
 \label{eq:holder-Ctheta-main}\\
 C_1^{\mathrm{end}}
 &=T\left[
   \frac12L_zB\Phi_{L_z}(T)
   +J^2(B+L_zBT)e^{L_zT}\right].
 \label{eq:holder-Cone-main}
\end{align}
The linearly interpolated path sets obey the same bound with an additional
$2BT/D$.  The corresponding endpoint- and path-target errors differ from
their relaxed floors by at most these radii.
\end{theorem}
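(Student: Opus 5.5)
We follow the two-stage architecture of the proof of \Cref{thm:floor-rate-main}. The only change is how time regularity enters each stage. Fix a relaxed control $p$ with trajectory $z(t)$, average it on cells $I_k=[t_k,t_{k+1})$ to obtain $p_k$, and form the mixed Euler path $u_k$ of \eqref{eq:mixed-euler-main}. We then bound, in turn, $\norm{z(t_k)-u_k}$ (flow versus mixed Euler) and $\norm{v_k-u_k}$ (mixed versus pure Euler, with $\sigma$ from \Cref{lem:online-rounding-main}). The reverse directed distance, from pure to relaxed, uses the vertex-valued control $e_{\sigma_k}$ on each cell. Its mixed Euler path is the pure path itself, so only the first stage is needed and its constant is dominated. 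Closures and the $2Bh$ interpolation term are handled verbatim as before, and the target-error corollaries follow from the one-Lipschitz property of distance to a set under Hausdorff perturbation.

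\textbf{Stage 1 (flow versus mixed Euler).} On $I_k$ write the local defect as $\int_{I_k}\sum_j p_j(t)\{G_j(t,z(t))-G_j(t_k,z(t_k))\}dt$. Since the cell average is used, the defect reduces to this integrand. The state part is bounded by $L_zB(t-t_k)$, which integrates to $L_zBh^2/2$. The Hölder assumption \eqref{eq:holder-time-main} bounds the time part by $H_t(t-t_k)^\vartheta$, which integrates to $H_th^{1+\vartheta}/(1+\vartheta)$. The recursion $\norm{e_{k+1}}\le(1+hL_z)\norm{e_k}+\text{defect}_k$ and discrete Gronwall, using $\sum_k h(1+hL_z)^{D-1-k}\le\Phi_{L_z}(T)$, give
\[
\max_k\norm{z(t_k)-u_k}\le \Phi_{L_z}(T)\Bigl[\tfrac{H_t h^\vartheta}{1+\vartheta}+\tfrac12L_zBh\Bigr].
\]
With $h=T/D$, this produces the $\Phi_{L_z}(T)/(\vartheta+1)$ term of $C_\vartheta^{\mathrm{end}}$ and the $\tfrac12L_zB\Phi_{L_z}(T)$ term of $C_1^{\mathrm{end}}$.

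\textbf{Stage 2 (mixed versus pure).} Define $f_k,F_n$ as in \eqref{eq:forcing-main} and apply Abel summation. Each $F_n$ splits into a boundary term, bounded by $J\cdot J\cdot B$ from the prefix discrepancy $<J$ over $J$ atoms, plus discrepancy-weighted sums of increments $\norm{G_j(t_{k+1},u_{k+1})-G_j(t_k,u_k)}$. We split each increment into a state part $\le L_zBh$, whose sum over $k$ is at most $L_zBT$, and a time part $\le H_th^\vartheta$. The time part is the main obstacle. The BV argument summed time increments to a finite total variation, but Hölder increments summed over $D$ cells give $DH_th^\vartheta=TH_th^{\vartheta-1}$, which is unbounded. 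I would accept this growth, obtaining $\max_n\norm{F_n}\le J^2(B+L_zBT)+J^2TH_th^{\vartheta-1}$. I would then check that the transformed-error recursion \eqref{eq:transformed-error-main} multiplies $\max_n\norm{F_n}$ by $h$. This comes from the $h^2L_z\max\norm{F_n}$ forcing summed over $D$ steps, together with the correction $e_k=\widetilde e_k+hF_k$. After the factor $e^{L_zT}$, the unbounded term becomes $J^2TH_th^{\vartheta}e^{L_zT}$, which is order $D^{-\vartheta}$ and matches the second summand of $C_\vartheta^{\mathrm{end}}$. The bounded part yields $J^2(B+L_zBT)Te^{L_zT}/D$ in $C_1^{\mathrm{end}}$.

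The delicate bookkeeping is to ensure that only one factor of $h$ multiplies $F$, both through the Gronwall sum $\sum h^2L_z\,\cdot\le hL_zT$ and through the direct $hF_k$ correction. One must also absorb the $L_zT$ versus $1$ factors consistently into $Te^{L_zT}$; for example, bound $(1+L_zT)$ by $e^{L_zT}$, so that the stated constants emerge. If the $L_z$ factor in the Gronwall term leaves a stray $L_zT$, I would fold it into $e^{L_zT}$ via $1+L_zT\le e^{L_zT}$.
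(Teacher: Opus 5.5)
Your proposal is correct and follows the paper's proof essentially step for step. The shared structure is the cell-averaged mixed Euler path with the H\"older local defect, then Abel summation with the growing prefix bound $J^2\{B+H_tTh^{\vartheta-1}+L_zBT\}$, which the transformed error $\widetilde e_k=e_k-hF_k$ multiplies by a single factor $h$, and finally the vertex-valued control $e_{\sigma_k}$ for the reverse direction. The one adjustment is in the bookkeeping you flagged: the factor $e^{L_zT}$ comes out exactly from $hL_z\sum_{r<k}(1+hL_z)^r=(1+hL_z)^k-1$, i.e.\ $\norm{e_k}\le hM_D\{1+L_z\Phi_{L_z}(T)\}=hM_De^{L_zT}$, and not from folding $1+L_zT\le e^{L_zT}$. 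Summing the forcing to $hL_zT$ without the Gronwall amplification is invalid, and bounding each amplification crudely by $e^{L_zT}$ gives $1+L_zTe^{L_zT}$, which exceeds $e^{L_zT}$.
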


\paragraph{Why the exponent changes.}
Averaging the relaxed control on one cell creates a time quadrature defect
$H_th^{1+\vartheta}/(1+\vartheta)$ and a state-motion defect
$L_zBh^2/2$.  Balanced switching still controls cumulative symbol discrepancy,
but the sampled field variation over the grid is now
$O(h^{\vartheta-1})$.  Abel summation multiplies that variation by the
microstep $h$, producing $O(h^\vartheta)$; the state contribution remains
$O(h)$.  Appendix~\ref{app:merged-exact-proofs} gives both Hausdorff
directions, the closure step, the path interpolation bound, and the constants
in \eqref{eq:holder-Ctheta-main}--\eqref{eq:holder-Cone-main}.

\subsection{Interpretation: feasibility, depth, and coherence}

Theorem~\ref{thm:floor-rate-main} separates two questions that are often conflated:
\begin{equation}
 \underbrace{E_{\infty,b}(F^\star)}_{\text{what the library can express}}
 \quad\text{and}\quad
 \underbrace{C_{\mathrm{syn}}/D}_{\text{what a finite pure schedule costs}}.
 \label{eq:floor-synthesis-split}
\end{equation}
A positive floor is a structural incompatibility; no optimizer or asymptotic depth refinement can remove it.  A zero floor says only that the target can be approximated by increasingly fast switching.  It does not guarantee that the arithmetic, router, or optimization procedure preserves that representation.

The theorem also clarifies when a depth exchange rate is meaningful.  Suppose a high-precision comparator belongs to a coherent family with target error $\varepsilon_H(L)\asymp L^{-1}$.  If the low-bit structural floor is zero, the theorem supplies the sufficient scaling $D=O(L)$.  A matching class-uniform lower bound, or the fixed-target converse of Section~\ref{sec:necessity-main}, is required before concluding $D=\Theta(L)$.  Without a shared residual horizon or another coupling across depth, subdividing an arbitrary trained block need not approximate its original map.  The DistilBERT control experiment in Section~\ref{sec:pretrained-causal} makes this distinction visible before quantization is introduced.

\begin{figure}[t]
\centering
\includegraphics[width=.96\linewidth]{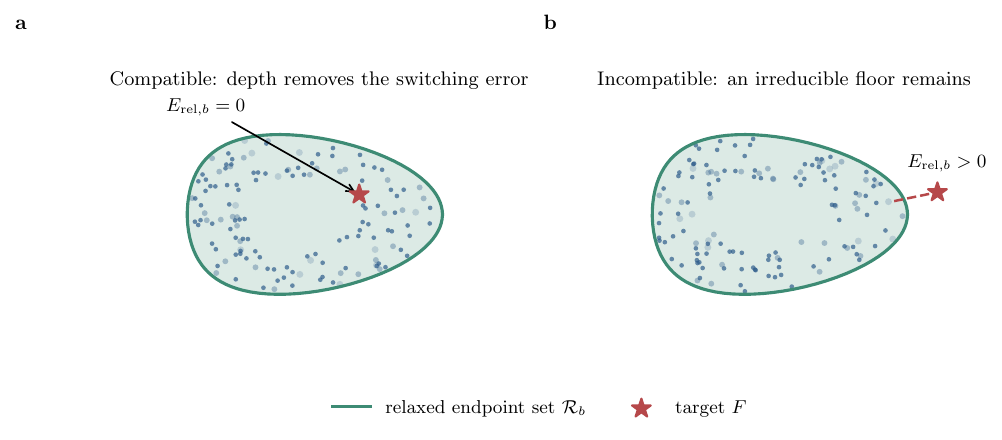}
\caption{\textnormal{Reachability is a geometric property of the declared operation library.}  Pure depth-$D$ schedules form a discrete cloud.  Balanced switching drives that cloud toward the relaxed reachable set.  A compatible target has zero structural floor and only a finite-depth remainder; an incompatible target remains separated even as $D\to\infty$.}
\label{fig:reachability-concept}
\end{figure}

Figure~\ref{fig:reachability-concept} gives the geometric picture.  The following sections add the resources omitted by ideal relaxed reachability: finite execution arithmetic, fixed-target necessity, learned metadata, discontinuous routes, and proof-producing certificates.

\section{Finite Execution Arithmetic}
\label{sec:finite-arithmetic}
\label{sec:finite-arithmetic-main}

The ideal theorem treats hidden states and residual accumulation in real arithmetic.  A useful quantized-computation theory must say when this idealization is benign and when it changes the answer.  We therefore separate the \emph{operation library} from the \emph{transition semantics}.  The same low-bit field may be executed with floating-point accumulation, fixed-grid activation write-back, saturating integer arithmetic, two's-complement wrapping, or an auxiliary error-feedback register.  These are different computational resources and need not have the same depth scaling.

\subsection{A schedulewise transfer theorem}

For a fixed depth $D$, write
\begin{equation}
 \Psi_{k,j}(z)=z+hG_j(t_k,z)
 \label{eq:ideal-transition-main}
\end{equation}
for the ideal transition, and let $\widehat\Psi_{k,j}$ be the actually executed transition.  It may contain quantized weights and activations, an approximate step size, integer multiply--accumulate operations, requantization, clipping, or an auxiliary carry.  The only requirement is that the ideal and implemented transitions be compared on one common tube.  This is a separate verification obligation: an invariant tube for the ideal flow does not by itself imply that a finite-arithmetic implementation remains inside it.

\begin{theorem}[Schedulewise hardware transfer]
\label[theorem]{thm:hardware-transfer-main}
Suppose every ideal and implemented prefix generated by an admissible schedule remains in the common tube.  Assume uniformly over the atom index that $\operatorname{Lip}(\Psi_{k,j})\le\kappa_k$ for every $j$, and
\begin{equation}
 \sup_{z\ \mathrm{in\ the\ tube}}
 \norm{\widehat\Psi_{k,j}(z)-\Psi_{k,j}(z)}
 \le \varepsilon_{k,j}.
 \label{eq:local-hardware-main}
\end{equation}
Then, for every schedule $\sigma\in[J]^D$,
\begin{equation}
 \norm{\widehat z_D^\sigma-z_D^\sigma}
 \le
 \mathcal A_D(\sigma)
 :=\sum_{k=0}^{D-1}\varepsilon_{k,\sigma_k}
      \prod_{\ell=k+1}^{D-1}\kappa_\ell.
 \label{eq:hardware-radius-main}
\end{equation}
If $\mathcal A_D=\sup_\sigma\mathcal A_D(\sigma)$, then
\begin{align}
 \dH(\widehat\RR_{b,D},\RR_{b,D})&\le\mathcal A_D,
 \label{eq:implemented-ideal-set-main}\\
 \dH(\widehat\RR_{b,D},\RR_{b,\Rel})
 &\le\frac{C_{\mathrm{syn}}^{\mathrm{end}}}{D}+\mathcal A_D,
 \label{eq:implemented-relaxed-set-main}
\end{align}
and the same radius bounds the difference between implemented and relaxed target errors.
\end{theorem}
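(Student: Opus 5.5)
The plan is the standard discrete Gronwall / telescoping argument for perturbed compositions, followed by two set-level consequences. Fix a schedule $\sigma\in[J]^D$. Write $z_k=z_k^\sigma$ for the ideal prefix and $\widehat z_k=\widehat z_k^\sigma$ for the implemented prefix. Both start from the same initial state, so $\widehat z_0=z_0$. If the implemented initial state differed, we would add a term $\norm{\widehat z_0-z_0}\prod_\ell\kappa_\ell$, but the statement takes them equal.

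Set $e_k=\norm{\widehat z_k-z_k}$ and split one step as
\[
\widehat z_{k+1}-z_{k+1}
=\bigl[\widehat\Psi_{k,\sigma_k}(\widehat z_k)-\Psi_{k,\sigma_k}(\widehat z_k)\bigr]
+\bigl[\Psi_{k,\sigma_k}(\widehat z_k)-\Psi_{k,\sigma_k}(z_k)\bigr].
\]
\begin{itemize}
\item The first bracket is at most $\varepsilon_{k,\sigma_k}$ by \eqref{eq:local-hardware-main}, because $\widehat z_k$ lies in the common tube.
\item The second bracket is at most $\kappa_k e_k$ by the Lipschitz bound.
\end{itemize}
This gives the recursion $e_{k+1}\le\kappa_k e_k+\varepsilon_{k,\sigma_k}$ with $e_0=0$. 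Unrolling it by induction on $k$ yields $e_D\le\sum_{k}\varepsilon_{k,\sigma_k}\prod_{\ell=k+1}^{D-1}\kappa_\ell$, which is \eqref{eq:hardware-radius-main}.

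The one genuine subtlety is the Lipschitz step. It needs the Lipschitz constant of $\Psi_{k,j}$ to be valid on a set containing both $z_k$ and $\widehat z_k$, and possibly the segment joining them if the constant is obtained from a mean-value estimate. This is exactly why the hypothesis requires both the ideal and the implemented prefixes to stay in the common tube. I will state that $\kappa_k$ is interpreted as the Lipschitz constant on that tube. If the tube is not convex, I will use the convention of the preceding remark that interpolation segments are also contained in it.

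For the set statements, note that $\widehat\RR_{b,D}$ and $\RR_{b,D}$ are both indexed by the same finite set $[J]^D$. Each implemented endpoint $\widehat z_D^\sigma$ is therefore within $\mathcal A_D(\sigma)\le\mathcal A_D$ of the ideal endpoint $z_D^\sigma$, and conversely. Both directed distances are thus at most $\mathcal A_D$, which gives \eqref{eq:implemented-ideal-set-main}. The triangle inequality for the Hausdorff distance, combined with \eqref{eq:hausdorff-end-main} from Theorem~\ref{thm:floor-rate-main}, then gives \eqref{eq:implemented-relaxed-set-main}. This step is valid for $D\ge D_0$ under Assumption~\ref{ass:common-tube-main}.

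Finally, $\dist(F^\star,\cdot)$ is $1$-Lipschitz with respect to Hausdorff perturbations of nonempty sets. Hence $|\dist(F^\star,\widehat\RR_{b,D})-\dist(F^\star,\RR_{b,\Rel})|$ is bounded by the same radius, which is the claim about target errors.
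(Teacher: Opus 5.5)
Your proposal is correct and follows the paper's proof essentially verbatim: you add and subtract $\Psi_{k,\sigma_k}(\widehat z_k)$ to obtain $e_{k+1}\le\kappa_k e_k+\varepsilon_{k,\sigma_k}$, unroll from $e_0=0$, and pair equal schedules to bound both directed distances. You then finish with the Hausdorff triangle inequality against Theorem~\ref{thm:floor-rate-main} (for $D\ge D_0$) and the one-Lipschitz property of $\dist(F^\star,\cdot)$, exactly as the paper does.
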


\begin{proof}
Let $e_k=\widehat z_k-z_k$.  Adding and subtracting $\Psi_{k,\sigma_k}(\widehat z_k)$ gives
\[
 \norm{e_{k+1}}
 \le\varepsilon_{k,\sigma_k}+\kappa_k\norm{e_k}.
\]
Iterating from $e_0=0$ proves \eqref{eq:hardware-radius-main}.  Every ideal endpoint is paired with the implemented endpoint generated by the same schedule, and conversely, so both directed set distances are bounded by $\mathcal A_D$.  The relaxed-set claim follows from Theorem~\ref{thm:floor-rate-main} and the Hausdorff triangle inequality.
\end{proof}

A concrete local defect decomposition for a finite-dimensional saturating transition
\begin{equation}
 \widehat\Psi_{k,j}(z)
 =Q^{\mathrm{sat}}_{\alpha_k,M_k}
 \left(z+\widehat h_k\widehat G_{k,j}(z)+e^{\mathrm{acc}}_{k,j}(z)\right)
 \label{eq:saturating-transition-main}
\end{equation}
is
\begin{equation}
 \varepsilon_{k,j}
 \le B|\widehat h_k-h|
 +|\widehat h_k|\eta^G_{k,j}
 +\eta^{\mathrm{acc}}_{k,j}
 +\frac{\alpha_k\sqrt q}{2}
 +\rho^{\mathrm{sat}}_{k,j},
 \label{eq:local-decomposition-main}
\end{equation}
where $\eta^G_{k,j}$ is the field-computation error, $\eta^{\mathrm{acc}}_{k,j}$ is the accumulator remainder, $\alpha_k$ is the activation-grid spacing, and $\rho^{\mathrm{sat}}_{k,j}$ is the supremum, over the verified tube, of the norm of the displacement introduced by the final clipping or projection operation.  Equation~\eqref{eq:local-decomposition-main} is additive: weight/codebook error, scale metadata, accumulation, write-back, and saturation are not collapsed into an uninterpretable ``quantization error.''  Wrapping arithmetic is excluded from this projection-defect model unless a separate invariant proves that no wrap can occur.

\subsection{Why full-state write-back can make deeper networks worse}

Suppose every microstep writes the complete state to a grid with covering radius $\rho_z$, and suppose every downstream transition product in Theorem~\ref{thm:hardware-transfer-main} is bounded by $\overline{\mathcal S}$.  Then
\begin{equation}
 \mathcal A_D\le D\rho_z\overline{\mathcal S},
 \qquad
 E_D^{\mathrm{writeback}}
 \le E_\infty+\frac{C_{\mathrm{syn}}}{D}
     +D\rho_z\overline{\mathcal S}.
 \label{eq:writeback-envelope-main}
\end{equation}
For stable residual transitions one may take $\overline{\mathcal S}\le e^{L_zT}$.  The upper envelope is minimized at
\begin{equation}
 D_{\mathrm{opt}}
 \asymp\sqrt{\frac{C_{\mathrm{syn}}}
 {\rho_z\overline{\mathcal S}}},
 \label{eq:writeback-optimal-depth-main}
\end{equation}
and maintaining an overall $O(D^{-1})$ certificate requires $\rho_z=O(D^{-2})$.  This is a worst-case statement, not a claim that every architecture realizes the upper envelope.  It nevertheless reveals the correct generic scaling: one state-rounding defect is incurred at every microstep and can be amplified by all subsequent transitions.

The qualitative failure is exact in one dimension.

\begin{proposition}[Fixed-grid freeze]
\label[proposition]{prop:fixed-grid-freeze-main}
Let $Q_\Delta$ be nearest rounding to $\Delta\mathbb Z$ and consider
\begin{equation}
 \widehat x_{k+1}=Q_\Delta\!\left(\widehat x_k+\frac1D\right),
 \qquad \widehat x_0=0.
 \label{eq:fixed-grid-recurrence-main}
\end{equation}
The ideal terminal target is $x^\star(1)=1$.  If $D>2/\Delta$, then $\widehat x_k=0$ for every $k$ and the terminal error is one.
\end{proposition}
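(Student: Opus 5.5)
The plan is a one-step induction on $k$ showing that the fixed point $0$ is preserved by the recursion. The base case $\widehat x_0=0$ holds by definition. For the inductive step, assume $\widehat x_k=0$. Then
\[
\widehat x_{k+1}=Q_\Delta(1/D).
\]
The hypothesis $D>2/\Delta$ is equivalent to $0<1/D<\Delta/2$. So the argument $1/D$ lies strictly inside the open rounding cell $(-\Delta/2,\Delta/2)$ of the lattice point $0$. The grid $\Delta\mathbb Z$ has no other point within distance $\Delta/2$ of $1/D$, because the nearest competitor $\Delta$ is at distance $\Delta-1/D>\Delta/2$. Nearest rounding therefore returns $0$ with no tie, and hence $\widehat x_{k+1}=0$.

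The induction gives $\widehat x_k=0$ for all $k=0,\dots,D$, and in particular $\widehat x_D=0$. The ideal recursion $x_{k+1}=x_k+1/D$ with $x_0=0$ gives $x_D=D\cdot(1/D)=1$, which equals the terminal target $x^\star(1)=1$ of the flow $\dot x=1$ on $[0,1]$. The terminal error is therefore
\[
|\widehat x_D-x^\star(1)|=1.
\]

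The only point needing care is the tie-breaking convention of $Q_\Delta$. The strict inequality $D>2/\Delta$ rules out the boundary case $1/D=\Delta/2$, where a round-half-up rule would move the state to $\Delta$. So no convention for ties needs to be fixed, and the statement holds for any nearest-rounding rule. This is the main (minor) obstacle. The remark will note that the bound is sharp in this sense: at $D=2/\Delta$ the conclusion can fail under round-half-up.

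The result also sits naturally beside Theorem~\ref{thm:hardware-transfer-main}. Here each local defect is $\varepsilon_{k}=1/D$, since the whole increment is erased at every step. The transitions are isometries, so $\kappa_k=1$, and the transfer radius $\mathcal A_D=D\cdot(1/D)=1$ is attained exactly. The proposition is thus an exact instance of the $D\rho_z$ write-back envelope, with $\rho_z$ replaced by the erased increment.
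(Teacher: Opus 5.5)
Your induction is the paper's proof: $D>2/\Delta$ places the increment $1/D$ strictly inside the rounding half-cell of the grid point $0$, so $Q_\Delta$ returns $0$ under any tie rule, the state never moves, and the terminal error is one. One caution, about your closing aside rather than the proof: Theorem~\ref{thm:hardware-transfer-main} takes $\varepsilon_k$ as a supremum over a tube that must contain the ideal prefixes $k/D$, so on, e.g., $[0,1]$ with $\Delta\le 2$ it equals $\Delta/2$ and gives $\mathcal A_D=D\Delta/2>1$; the value $\mathcal A_D=1$ is attained only if you measure the defect along the implemented trajectory, where it is $1/D$ per step.
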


\begin{proof}
The strict inequality $D>2/\Delta$ makes the proposed increment smaller than half a grid cell and avoids any dependence on the tie rule.  Starting at a grid point, nearest rounding returns the same point.  Induction proves the claim.
\end{proof}

Thus, the same depth refinement that makes the ideal schedule more expressive can make every physical residual update numerically invisible.  Operand precision, activation precision, and residual scaling cannot be treated as interchangeable labels.

\subsection{Increment error feedback}

The obstruction is avoided when the quantization residual is stored and injected into the next \emph{increment}.  For a fixed schedule, let
\begin{align}
 q_k&=Q\bigl(h\widehat G_{k,\sigma_k}(\widehat z_k)+r_k\bigr),
 \label{eq:ef-q-main}\\
 \widehat z_{k+1}&=\widehat z_k+q_k,
 \label{eq:ef-state-main}\\
 r_{k+1}&=h\widehat G_{k,\sigma_k}(\widehat z_k)+r_k-q_k,
 \qquad r_0=0.
 \label{eq:ef-carry-main}
\end{align}
Assume the quantizer has residual radius at most $\rho$ on the entire declared carry range, the carry register does not overflow, and hence $\norm{r_k}\le\rho$ throughout execution.  The carry is an explicit resource: if it is stored in a finite register, its precision and overflow semantics must be counted.

\begin{theorem}[Error feedback with inexact fields]
\label[theorem]{thm:error-feedback-main}
Suppose the ideal and implemented paths remain in a common tube, every ideal field is $L_z$-Lipschitz, and
\begin{equation}
 \sup_{k,j,z}
 \norm{\widehat G_{k,j}(z)-G_j(t_k,z)}\le\eta_G.
 \label{eq:field-error-main}
\end{equation}
Let $z_k$ be the ideal pure Euler path with the same schedule.  Then
\begin{equation}
 \max_{0\le k\le D}\norm{\widehat z_k-z_k}
 \le \rho e^{L_zT}+\eta_G\Phi_{L_z}(T)
 \label{eq:error-feedback-bound-main}
\end{equation}
Consequently,
\begin{equation}
 \dH(\widehat\RR^{\mathrm{EF}}_{b,D},\RR_{b,\Rel})
 \le\frac{C_{\mathrm{syn}}^{\mathrm{end}}}{D}
 +\rho e^{L_zT}+\eta_G\Phi_{L_z}(T).
 \label{eq:error-feedback-reach-main}
\end{equation}
If the physical increment lattice has residual radius $\rho_D=O(D^{-1})$ and $\eta_{G,D}=O(D^{-1})$, the implemented system retains a first-order total law.
\end{theorem}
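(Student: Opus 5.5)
The plan is a transformed-error argument of the kind used in the proof of Theorem~\ref{thm:floor-rate-main}: the carry is absorbed into a shifted state so that quantizer residuals telescope instead of accumulating. Fix a schedule $\sigma$ and introduce the \emph{virtual} state $w_k:=\widehat z_k+r_k$. Adding \eqref{eq:ef-state-main} and \eqref{eq:ef-carry-main} cancels $q_k$ exactly, so that
\[
 w_{k+1}=w_k+h\widehat G_{k,\sigma_k}(\widehat z_k),\qquad w_0=z_0 .
\]
This is the conservation identity behind the error-feedback mechanism. The virtual state runs an unquantized Euler recursion, but its field is evaluated at the physical state $\widehat z_k=w_k-r_k$, which differs from $w_k$ by at most $\rho$.

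Next compare $w_k$ with the ideal path $z_k$. Put $\widetilde e_k:=w_k-z_k$, so that
\[
 \widetilde e_{k+1}=\widetilde e_k+h\bigl[\widehat G_{k,\sigma_k}(\widehat z_k)-G_{\sigma_k}(t_k,\widehat z_k)\bigr]+h\bigl[G_{\sigma_k}(t_k,\widehat z_k)-G_{\sigma_k}(t_k,z_k)\bigr].
\]
The first bracket is bounded by $\eta_G$ via \eqref{eq:field-error-main}. The second is at most $L_z\norm{\widehat z_k-z_k}\le L_z(\norm{\widetilde e_k}+\rho)$; here we use the common-tube hypothesis, which places both $\widehat z_k$ and $z_k$ (and the segment joining them) in the region where the Lipschitz bound holds. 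Hence
\[
 \norm{\widetilde e_{k+1}}\le(1+hL_z)\norm{\widetilde e_k}+h(\eta_G+L_z\rho),\qquad \widetilde e_0=0 .
\]
Discrete Gronwall, using $(1+hL_z)^k\le e^{L_zt_k}$ and $h\sum_{i<k}(1+hL_z)^{i}\le\Phi_{L_z}(t_k)$, gives $\norm{\widetilde e_k}\le(\eta_G+L_z\rho)\Phi_{L_z}(T)$. Finally
\[
 \norm{\widehat z_k-z_k}\le\norm{\widetilde e_k}+\norm{r_k}\le\rho+L_z\rho\,\Phi_{L_z}(T)+\eta_G\Phi_{L_z}(T)=\rho e^{L_zT}+\eta_G\Phi_{L_z}(T),
\]
since $1+L_z\Phi_{L_z}(T)=e^{L_zT}$ (and the case $L_z=0$ is immediate). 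This is exactly \eqref{eq:error-feedback-bound-main}.

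For \eqref{eq:error-feedback-reach-main}, pair each implemented endpoint with the ideal pure endpoint of the same schedule, in both directions, exactly as in Theorem~\ref{thm:hardware-transfer-main}. This bounds $\dH(\widehat\RR^{\mathrm{EF}}_{b,D},\RR_{b,D})$ by the uniform radius just derived. The Hausdorff triangle inequality with Theorem~\ref{thm:floor-rate-main} then adds $C^{\mathrm{end}}_{\mathrm{syn}}/D$. The first-order corollary is immediate by substituting $\rho=\rho_D=O(D^{-1})$ and $\eta_G=\eta_{G,D}=O(D^{-1})$.

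The main obstacle is bookkeeping rather than estimation. The carry must be kept in the transformed variable and not treated as a per-step perturbation; otherwise one recovers the $D\rho$ growth of \eqref{eq:writeback-envelope-main}. One must also confirm that the hypotheses, namely carry bound $\norm{r_k}\le\rho$, no overflow, and tube membership of $\widehat z_k$, are exactly what license the Lipschitz step.
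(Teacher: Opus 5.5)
Your proposal is correct and follows essentially the paper's route. Your virtual-state error $\widetilde e_k=w_k-z_k=(\widehat z_k-z_k)+r_k$ is exactly the paper's transformed error $s_k=e_k+r_k$, and you use the same recursion $\norm{s_{k+1}}\le(1+hL_z)\norm{s_k}+h(L_z\rho+\eta_G)$, the same discrete Gronwall step, and the same schedule pairing followed by the Hausdorff triangle inequality for the reachable-set bound. The only cosmetic difference is that you handle inexact fields in one pass, whereas the paper first proves the case $\widehat G=G$ and then adds the $\eta_G$ forcing.
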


\begin{proof}
First suppose $\widehat G=G$.  Let $e_k=\widehat z_k-z_k$ and define the transformed error $s_k=e_k+r_k$.  Since
\[
 q_k=hG_{\sigma_k}(t_k,\widehat z_k)+r_k-r_{k+1},
\]
we obtain
\begin{equation}
 s_{k+1}
 =s_k+h\{G_{\sigma_k}(t_k,\widehat z_k)
          -G_{\sigma_k}(t_k,z_k)\}.
 \label{eq:ef-transformed-main}
\end{equation}
Because $e_k=s_k-r_k$ and $\norm{r_k}\le\rho$,
\begin{equation}
 \norm{s_{k+1}}
 \le(1+hL_z)\norm{s_k}+hL_z\rho.
 \label{eq:ef-recursion-main}
\end{equation}
With $s_0=0$, discrete Gronwall gives
\[
 \norm{s_k}\le\rho\{(1+hL_z)^k-1\}.
\]
Therefore $\norm{e_k}\le\norm{s_k}+\norm{r_k}\le\rho(1+hL_z)^k\le\rho e^{L_zT}$.

For inexact fields, insert and subtract $G_{\sigma_k}(t_k,\widehat z_k)$ in \eqref{eq:ef-transformed-main}.  The transformed recursion becomes
\begin{equation}
 \norm{s_{k+1}}
 \le(1+hL_z)\norm{s_k}+hL_z\rho+h\eta_G.
 \label{eq:ef-recursion-inexact-main}
\end{equation}
The additional per-step forcing has discrete Gronwall sum $\eta_G\Phi_{L_z}(T)$.  The reachable-set result follows by pairing equal schedules and applying Theorem~\ref{thm:floor-rate-main}.
\end{proof}

The difference between \eqref{eq:writeback-envelope-main} and \eqref{eq:error-feedback-reach-main} is the finite-arithmetic analogue of noise shaping: the carry makes the cumulative rounding residual telescope rather than appear as $D$ unrelated state perturbations.  The theorem does not require stochastic rounding and does not assume zero-mean errors.

\subsection{An overflow-free residual-path common-lattice realization}
\label{sec:common-lattice-main}

The preceding theorem isolates the carry radius but does not yet say how to
store the carry exactly.  The next result gives a complete integer
realization for the residual/requantization path.  It is intentionally a
narrow theorem: matrix products and nonlinear approximations must separately
establish that their emitted increments lie on the declared fine lattice.

\begin{proposition}[Bit-exact common-lattice error feedback]
\label[proposition]{prop:bit-exact-error-feedback-main}
Let $D\ge1$, $\delta>0$, $m\in\{1,2,\ldots\}$, and $\Delta=m\delta$.
Suppose
\begin{equation}
 v_k\in\delta\mathbb Z^q,\qquad
 \norm{v_k}_\infty\le V,\qquad k=0,\ldots,D-1,
 \label{eq:lattice-increments-main}
\end{equation}
and $\widehat z_0\in\Delta\mathbb Z^q$.  Let $Q_\Delta$ be coordinatewise
nearest rounding to $\Delta\mathbb Z$ with a fixed tie rule and execute
\begin{align}
 q_k&=Q_\Delta(v_k+r_k),&
 \widehat z_{k+1}&=\widehat z_k+q_k,\nonumber\\
 r_{k+1}&=v_k+r_k-q_k,&r_0&=0.
 \label{eq:lattice-ef-main}
\end{align}
Define
\begin{align}
 M_r&=\left\lceil\frac m2\right\rceil,&
 M_a&=\left\lceil\frac V\delta\right\rceil+M_r,\nonumber\\
 M_q&=\left\lceil\frac V\Delta+1\right\rceil,&
 M_{z,0}&=\norm{\widehat z_0/\Delta}_\infty.
 \label{eq:lattice-magnitudes-main}
\end{align}
A signed two's-complement implementation is overflow free whenever
\begin{align}
 b_r&\ge1+\left\lceil\log_2(M_r+1)\right\rceil,\nonumber\\
 b_a&\ge1+\left\lceil\log_2(M_a+1)\right\rceil,\nonumber\\
 b_z&\ge1+\left\lceil\log_2(M_{z,0}+DM_q+1)\right\rceil.
 \label{eq:lattice-widths-main}
\end{align}
Here $b_r$ stores $r_k/\delta$, $b_a$ stores $(v_k+r_k)/\delta$ before
rounding by $m$, and $b_z$ stores $\widehat z_k/\Delta$.  Under these bounds,
\begin{equation}
 \norm{r_k}_\infty\le\frac\Delta2,
 \qquad
 \widehat z_k+r_k
 =\widehat z_0+\sum_{i=0}^{k-1}v_i
 \quad(0\le k\le D).
 \label{eq:lattice-conservation-main}
\end{equation}
If the ideal accumulated endpoint belongs to $\Delta\mathbb Z^q$, then
$r_D=0$ and the implemented terminal state equals that endpoint exactly.
\end{proposition}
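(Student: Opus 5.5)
The plan is to work entirely in integer coordinates and run one induction over $k$ that proves the lattice memberships, the register ranges, and the conservation identity at the same time. Set $\tilde v_k=v_k/\delta\in\mathbb Z^q$, $\tilde r_k=r_k/\delta$, $\tilde q_k=q_k/\Delta$ and $\tilde z_k=\widehat z_k/\Delta$. The induction hypothesis at step $k$ is the following list:
\begin{itemize}
\item $r_k\in\delta\mathbb Z^q$ with $\norm{r_k}_\infty\le\Delta/2$;
\item $\widehat z_k\in\Delta\mathbb Z^q$;
\item $\norm{\tilde z_k}_\infty\le M_{z,0}+kM_q$;
\item $\widehat z_k+r_k=\widehat z_0+\sum_{i<k}v_i$.
\end{itemize}
The base case $k=0$ is immediate, since $r_0=0$ and $\widehat z_0\in\Delta\mathbb Z^q$.

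For the inductive step, first note that $a_k=v_k+r_k\in\delta\mathbb Z^q$. Rounding to the coarser lattice $\Delta\mathbb Z=m\delta\mathbb Z$ keeps $q_k\in\Delta\mathbb Z^q\subset\delta\mathbb Z^q$, so $r_{k+1}=a_k-q_k\in\delta\mathbb Z^q$. Nearest rounding gives $\norm{r_{k+1}}_\infty\le\Delta/2$ for any fixed tie rule.

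In integer units this reads $|\tilde r_{k+1,i}|\le m/2$, hence $|\tilde r_{k+1,i}|\le\lfloor m/2\rfloor\le M_r$. The accumulator satisfies $|a_{k,i}/\delta|\le V/\delta+m/2$. Because it is an integer, it is bounded by $\lfloor V/\delta\rfloor+\lfloor m/2\rfloor\le M_a$; here I use that an integer bounded by a real $x$ is bounded by $\lfloor x\rfloor\le\lceil x\rceil$.

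For the quantized increment, $|q_{k,i}|\le|a_{k,i}|+\Delta/2\le V+\Delta$, so $|\tilde q_{k,i}|\le V/\Delta+1\le M_q$. Then $\widehat z_{k+1}=\widehat z_k+q_k$ stays on $\Delta\mathbb Z^q$ and its magnitude grows by at most $M_q$ per step. This gives $\norm{\tilde z_k}_\infty\le M_{z,0}+DM_q$ for all $k\le D$.

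Conservation is a one-line telescoping step: $\widehat z_{k+1}+r_{k+1}=\widehat z_k+q_k+v_k+r_k-q_k=(\widehat z_k+r_k)+v_k$.

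Overflow freedom then follows from the standard fact that a $b$-bit two's-complement register represents every integer $n$ with $|n|\le2^{b-1}-1$. If $b\ge1+\lceil\log_2(M+1)\rceil$, then $2^{b-1}\ge M+1$, so every integer of magnitude at most $M$ fits. Applying this with $M=M_r,M_a,M_{z,0}+DM_q$ covers the three registers.

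One point needs care here: the induction must be phrased so that the exact integer semantics coincides with real arithmetic. Since every intermediate quantity fits in its register, no wrap ever occurs, so the bounds above, derived in $\mathbb Z$, are exactly the bounds on the executed registers. I expect this bookkeeping, namely showing that the rounding by $m$ in integer units equals $Q_\Delta$ and that every intermediate, including $a_k$ before rounding, is covered, to be the main (if modest) obstacle. The other delicate point is the integer floor argument that yields $M_r=\lceil m/2\rceil$ and $M_a$ without off-by-one slack.

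Finally, suppose $\widehat z_0+\sum_{i<D}v_i\in\Delta\mathbb Z^q$. Conservation gives $r_D=(\widehat z_0+\sum_i v_i)-\widehat z_D\in\Delta\mathbb Z^q$. Together with $\norm{r_D}_\infty\le\Delta/2$, each coordinate of $r_D/\Delta$ is an integer of magnitude at most $1/2$, hence zero. So $r_D=0$ and $\widehat z_D$ equals the ideal accumulated endpoint exactly.
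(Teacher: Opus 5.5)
Your proposal is correct and follows the paper's proof essentially step for step: lattice invariance of $r_k$ and $\widehat z_k$, the half-cell residual bound, integer register bounds (including $\abs{q_k}\le\abs{v_k+r_k}+\Delta/2$ and per-step growth of $\widehat z_k/\Delta$ by at most $M_q$), one-step telescoping conservation, and the argument that a multiple of $\Delta$ of magnitude at most $\Delta/2$ must vanish. Your floor refinements are unnecessary, since the paper simply uses $V/\delta+m/2\le\lceil V/\delta\rceil+\lceil m/2\rceil=M_a$. Also, $\lfloor V/\delta+m/2\rfloor$ can exceed $\lfloor V/\delta\rfloor+\lfloor m/2\rfloor$, so your floor step for $a_k$ is valid only because $v_{k,i}/\delta$ and $r_{k,i}/\delta$ are each integers, not merely because their sum is.
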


\paragraph{Interpretation.}
The state register grows logarithmically with the largest accumulated integer
magnitude, while the residual register depends only on the lattice ratio
$m=\Delta/\delta$.  The theorem locates where precision must persist: in a
small carry or an equivalent compensated accumulator.  It does not certify a
complete integer Transformer kernel; LayerNorm, softmax, matrix products,
heterogeneous scales, and overflow semantics enter through their own field
and range certificates in Section~\ref{sec:transformer-specialization-main}.
Appendix~\ref{app:merged-exact-proofs} proves the register induction and exact
terminal recovery.  The released verifier exhaustively checks all scalar
sequences in its registered grid and randomized multidimensional cases.

\subsection{How many bits must grow with depth?}

Suppose the state range is fixed and the activation lattice has $b_z$ fractional bits, so $\rho_z\asymp2^{-b_z}$.  Naive write-back preserves a first-order certificate only if
\begin{equation}
 2^{-b_z}=O(D^{-2}),
 \qquad\text{equivalently}\qquad
 b_z\ge 2\log_2D+O(1).
 \label{eq:writeback-bit-growth-main}
\end{equation}
For increment error feedback with physical residual unit proportional to $h$, fixed \emph{normalized} increment precision suffices.  If one instead fixes an absolute increment grid over a fixed range, then $\rho_D=O(D^{-1})$ requires only
\begin{equation}
 b_{\mathrm{inc}}\ge\log_2D+O(1).
 \label{eq:ef-bit-growth-main}
\end{equation}
The carry register must be accurate enough that its own write-back error does not reintroduce a $D\rho_r$ term.  One can therefore move precision from the full activation path into a smaller persistent error state, but cannot make it disappear from the resource accounting.  Table~\ref{tab:arithmetic-scaling} contrasts the resulting depth scalings.

\begin{table}[t]
\centering
\caption{Depth scaling under representative execution semantics.  The entries are sufficient worst-case conditions for preserving an $O(D^{-1})$ total law, not universal lower bounds for every architecture.}
\label{tab:arithmetic-scaling}
\small
\begin{tabularx}{\linewidth}{Y{.25\linewidth}Y{.28\linewidth}Y{.20\linewidth}X}
\toprule
Execution semantics & Generic arithmetic contribution & Sufficient grid scaling & Qualitative consequence \\
\midrule
Exact or high-precision state & $0$ or separately bounded $\eta_G\Phi_{L_z}(T)$ & $\eta_G=O(D^{-1})$ & Ideal synthesis law survives \\
Full-state write-back & $D\rho_z\overline{\mathcal S}$ & $\rho_z=O(D^{-2})$ & Finite optimal depth; possible freezing \\
Increment quantization, no carry & Persistent bias can remain $O(1)$ & problem dependent & Repeated small bias need not vanish \\
Increment error feedback & $\rho_De^{L_zT}+\eta_G\Phi_{L_z}(T)$ & $\rho_D,\eta_G=O(D^{-1})$ & First-order law retained \\
Finite carry write-back & previous row plus $O(D\rho_r)$ & $\rho_r=O(D^{-2})$, unless carry is otherwise compensated & Precision is relocated, not eliminated \\
\bottomrule
\end{tabularx}
\end{table}

\subsection{Saturation and wrapping are not the same perturbation}

Let $\mathcal B$ be the representable state box and suppose every ideal prefix lies at distance at least $m>0$ from its boundary.  If a prefix-uniform implementation bound is strictly smaller than $m$, saturation never activates.  This gives a verifiable \emph{tube margin} rather than an informal hope that overflow is rare.

If saturation does activate, its displacement can be included as $\rho^{\mathrm{sat}}$ in \eqref{eq:local-decomposition-main}, but the resulting bound may be large and state dependent.  Two's-complement wrapping is more severe: it can change sign and magnitude discontinuously and is not a nearby projection.  A wrapping implementation therefore requires an explicit no-overflow invariant for every pre-write value.  Treating wraparound as ordinary rounding is mathematically invalid.

\paragraph{Engineering consequence.}
Depth substitutes for parameter or operation precision only when the entire execution stack is co-designed so that residual increments remain visible.  Before increasing depth, one should determine whether the arithmetic radius decreases, remains bounded, or grows with the number of microsteps.  Full-state write-back and increment error feedback can therefore place the same low-bit field library in qualitatively different depth regimes.

\section{Necessity of the First-Order Depth Price}
\label{sec:necessity-main}

The structural theorem gives a target-independent upper rate, but an upper rate alone does not establish that a first-order depth price is intrinsic.  A lower bound whose target changes with $D$ proves a minimax statement; it does not rule out the possibility that every fixed target of practical interest is approximated faster.  We therefore construct one teacher, independent of $D$, for which the globally optimal low-bit error can be computed at every depth, and then lift the obstruction to residual ReLU and attention systems as well as approximately exposed channels of a general neural block.

\subsection{One fixed teacher forces a first-order depth price}
\label{sec:fixed-teacher-main}

Fix $a>0$ and consider the binary residual recursion
\begin{equation}
 x_{k+1}=\left(1-\frac1D\right)x_k+\frac{q_k}{D},
 \qquad q_k\in\{0,a\},\qquad x_0=0.
 \label{eq:binary-recursion-main}
\end{equation}
The comparator is not chosen adversarially after $D$ is known.  It is the single time-one endpoint
\begin{equation}
 x^\star=a(1-e^{-1})
 \label{eq:fixed-teacher-target-main}
\end{equation}
of the continuous teacher $\dot x=a-x$, $x(0)=0$.

\begin{theorem}[Exact best error for one fixed binary teacher]
\label[theorem]{thm:fixed-teacher-main}
For every integer $D\ge2$, the all-$a$ schedule is a globally closest depth-$D$ endpoint of \eqref{eq:binary-recursion-main} to the fixed target \eqref{eq:fixed-teacher-target-main}.  The exact optimum is
\begin{equation}
 E_D^{\mathrm{bin}}(x^\star)
 =a\left[e^{-1}-\left(1-\frac1D\right)^D\right].
 \label{eq:fixed-teacher-exact-main}
\end{equation}
Moreover,
\begin{equation}
 \frac{a}{4eD}
 \le E_D^{\mathrm{bin}}(x^\star)
 \le \frac{a}{2e(D-1)},
 \label{eq:fixed-teacher-bounds-main}
\end{equation}
and
\begin{equation}
 E_D^{\mathrm{bin}}(x^\star)
 =\frac{a}{2eD}+O(D^{-2}).
 \label{eq:fixed-teacher-asymptotic-main}
\end{equation}
\end{theorem}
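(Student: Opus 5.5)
The proof rests on one observation: the endpoint set of \eqref{eq:binary-recursion-main} is a finite set of weighted subset sums. The all-$a$ endpoint lies just \emph{above} $x^\star$, and every other endpoint lies below it by at least as much. Write $q_k=a s_k$ with $s_k\in\{0,1\}$ and $\beta=1-1/D$. Unrolling the recursion from $x_0=0$ gives
\[
 x_D=\frac aD\sum_{k=0}^{D-1}\beta^{D-1-k}s_k ,
\]
a sum with positive weights $w_k=\tfrac aD\beta^{D-1-k}$.

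\textbf{Step 1: the largest endpoint overshoots.} The largest endpoint is the all-$a$ one, $x_D^{\max}=a(1-\beta^D)$. Since $\beta^D=(1-1/D)^D<e^{-1}$, it exceeds $x^\star=a(1-e^{-1})$ by $a[e^{-1}-\beta^D]>0$. This overshoot is exactly \eqref{eq:fixed-teacher-exact-main}.

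\textbf{Step 2: every other endpoint is no closer.} Any schedule other than all-$a$ omits at least one weight, and the smallest weight is $w_0=\tfrac aD\beta^{D-1}$. Hence every other endpoint is at most
\[
 x_D^{\max}-w_0=a\bigl(1-\beta^{D-1}\bigr)<x^\star .
\]
So it suffices to show $x^\star-a(1-\beta^{D-1})\ge a(e^{-1}-\beta^D)$. Rearranging, this is $\beta^{D-1}(2-1/D)\ge 2e^{-1}$, or equivalently
\[
 (D-1)\log(1-u)+\log\!\left(1-\tfrac u2\right)\ge-1,\qquad u=1/D\in(0,1/2].
\]
This elementary inequality is the main obstacle, since it needs a clean argument that holds for all $D\ge2$ and not only asymptotically. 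My first attempt is to expand both logarithms in power series. The left side becomes $-1+\sum_{n\ge2}c_nu^n$ with
\[
 c_n=\frac1{n(n-1)}\cdot\frac{n-1}{1}\cdot\frac1{n}\text{-type terms},
 \qquad\text{precisely}\qquad c_n=\frac1{n-1}-\frac1n-\frac1{n2^n}.
\]
Here the first two terms come from $(1/u-1)\log(1-u)$ and the last from $\log(1-u/2)$. Each $c_n=\frac{1}{n(n-1)}-\frac1{n2^n}$ is nonnegative, with $c_2=1/2-1/8>0$. The inequality then holds termwise for every $u\in(0,1)$. Ties are harmless, because the claim is only that all-$a$ is \emph{a} closest endpoint.

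\textbf{Step 3: the two-sided bounds and the asymptotic.} Write $\beta^D=e^{-1}e^{-x}$ with
\[
 x=-D\log(1-1/D)-1=\sum_{n\ge1}\frac{1}{(n+1)D^n},
\]
so that $E_D^{\mathrm{bin}}=ae^{-1}(1-e^{-x})$.
\begin{itemize}
\item For the upper bound, use $1-e^{-x}\le x$. Compare $x$ termwise with $\tfrac12\sum_{n\ge1}D^{-n}=\tfrac1{2(D-1)}$; this gives the bound $a/(2e(D-1))$.
\item For the lower bound, note $x\ge1/(2D)$ and $x\le 1/(2(D-1))\le1$ for $D\ge2$. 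Then $1-e^{-x}\ge x/2\ge1/(4D)$, giving $a/(4eD)$.
\item For \eqref{eq:fixed-teacher-asymptotic-main}, use $x=1/(2D)+O(D^{-2})$ together with $1-e^{-x}=x+O(x^2)$. This gives $E_D^{\mathrm{bin}}=a/(2eD)+O(D^{-2})$.
\end{itemize}
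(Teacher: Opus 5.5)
Your argument is the paper's own. The top-gap (Voronoi) comparison reduces optimality to $e^{-1}\le(1-\frac1D)^{D-1}(1-\frac1{2D})$, you then attack this with the same power-series expansion, and your Step~3 is the paper's $\theta_D$ argument almost verbatim.

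One computation in Step~2 is wrong and needs fixing: your series coefficients are shifted by one index. The $u^n$ coefficient of $(\frac1u-1)\log(1-u)$ is $\frac1n-\frac1{n+1}$, not $\frac1{n-1}-\frac1n$. The correct coefficient is therefore $c_n=\frac1{n(n+1)}-\frac1{n2^n}$, which agrees with the paper's $\frac{1}{n}\bigl(\frac1{n+1}-\frac1{2^n}\bigr)$; in particular $c_2=\frac1{24}$, not $\frac38$.

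Your identity as written is false. At $D=2$ the quantity $1+(D-1)\log(1-u)+\log(1-\frac u2)$ equals $1+\log\frac38\approx0.019$, whereas your series already exceeds $\frac{3}{32}$ from its first term.

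After the correction, nonnegativity of $c_n$ requires $2^n\ge n+1$ rather than $2^n\ge n-1$. This still holds for every $n\ge1$, so the termwise argument and your conclusion survive, exactly as in the paper.
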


\begin{proof}
Write $r_D=1-D^{-1}$.  Unrolling the recursion gives
\begin{equation}
 x_D=\frac1D\sum_{k=0}^{D-1}r_D^{D-1-k}q_k.
 \label{eq:binary-endpoint-expansion-main}
\end{equation}
The largest attainable endpoint is
\begin{equation}
 M_D=a(1-r_D^D),
 \label{eq:binary-largest-main}
\end{equation}
obtained by choosing $q_k=a$ at every step.  Because the coefficients in \eqref{eq:binary-endpoint-expansion-main} are positive and increase with $k$, the second-largest endpoint is obtained by replacing the earliest, least heavily weighted symbol by zero.  Hence the gap from the largest to the second largest endpoint is
\begin{equation}
 g_D=\frac{a}{D}r_D^{D-1}.
 \label{eq:binary-top-gap-main}
\end{equation}
It is therefore enough to prove that the fixed teacher belongs to the Voronoi cell of $M_D$, namely
\begin{equation}
 M_D-x^\star\le \frac{g_D}{2}.
 \label{eq:binary-voronoi-condition-main}
\end{equation}
Put $u=D^{-1}$.  For $0<u\le1/2$, the convergent series identity
\begin{align}
 1+\left(\frac1u-1\right)\log(1-u)+\log(1-u/2)
 &=\sum_{n=2}^{\infty}\frac{u^n}{n}
   \left(\frac1{n+1}-\frac1{2^n}\right)
 \ge0
 \label{eq:binary-series-main}
\end{align}
uses only $2^n\ge n+1$.  Exponentiating \eqref{eq:binary-series-main} yields
\begin{equation}
 e^{-1}\le r_D^{D-1}\left(1-\frac1{2D}\right).
 \label{eq:binary-exponential-ineq-main}
\end{equation}
Consequently,
\begin{align}
 M_D-x^\star
 &=a(e^{-1}-r_D^D) \\
 &\le \frac{a}{2D}r_D^{D-1}
 =\frac{g_D}{2},
 \label{eq:binary-voronoi-proof-main}
\end{align}
which proves global optimality and the exact formula.

For the quantitative envelope, define
\begin{equation}
 \theta_D=-1-D\log(1-D^{-1})
 =\sum_{n=2}^{\infty}\frac1{nD^{n-1}}.
 \label{eq:theta-main}
\end{equation}
Then
\begin{equation}
 \frac1{2D}\le\theta_D\le\frac1{2(D-1)}\le\frac12
 \label{eq:theta-bounds-main}
\end{equation}
and
\begin{equation}
 e^{-1}-r_D^D=e^{-1}(1-e^{-\theta_D}).
 \label{eq:theta-error-main}
\end{equation}
For $0\le\theta\le1/2$, $\theta/2\le1-e^{-\theta}\le\theta$.  Substitution gives \eqref{eq:fixed-teacher-bounds-main}.  Expanding \eqref{eq:theta-main} and then $1-e^{-\theta_D}$ gives \eqref{eq:fixed-teacher-asymptotic-main}.
\end{proof}

The exact formula is useful for two reasons.  It removes the dependence of the target on $D$, and it identifies the finite-depth obstruction as a genuine endpoint-lattice effect rather than an artifact of a loose proof.  Figure~\ref{fig:fixed-teacher-necessity-main} shows that the exact curve rapidly enters the asymptotic regime while remaining inside the nonasymptotic envelope \eqref{eq:fixed-teacher-bounds-main}.

\begin{figure}[t]
\centering
\includegraphics[width=.96\linewidth]{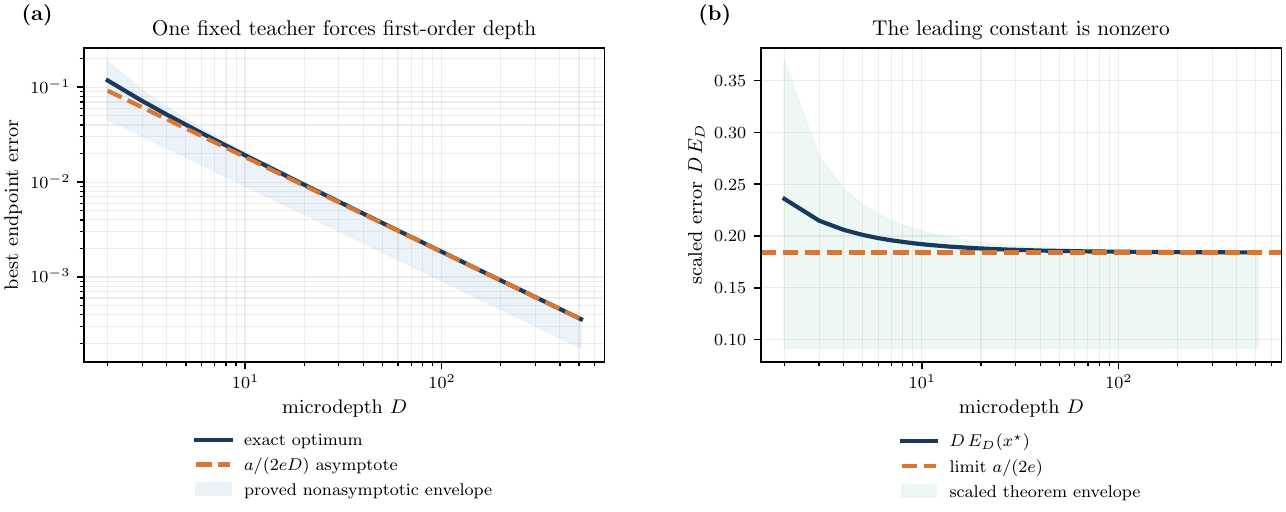}
\caption{\textnormal{A fixed target can require a first-order depth price.}
\textnormal{(a)} The exact optimum in Theorem~\ref{thm:fixed-teacher-main}
tracks its $a/(2eD)$ asymptote and remains inside the proved nonasymptotic
envelope.  \textnormal{(b)} Multiplication by $D$ exposes the nonzero limiting
constant $a/(2e)$; the lower law is not generated by changing the target with
$D$.  All curves are analytic consequences of the theorem, not fitted slopes.}
\label{fig:fixed-teacher-necessity-main}
\end{figure}

Figure~\ref{fig:fixed-teacher-necessity-main} makes the distinction between
minimax and fixed-target sharpness visible before we transfer the obstruction
to neural architectures.

\subsection{From a scalar obstruction to an architecture-level certificate}
\label{sec:exposed-mode-main}

A full network need not be one-dimensional.  It is enough that one observable channel behaves approximately like the binary system above.

\begin{corollary}[Exact exposed binary mode]
\label[corollary]{cor:exact-exposed-mode-main}
Normalize $T=1$.  Let $\ell\in\Z^*$ satisfy $\norm{\ell}_*\le1$ and $\ell(z_0)=0$.  Suppose that each atom has a label $q_j\in\{0,a\}$ and, on the common execution tube,
\begin{equation}
 \ell(G_j(t,z))=q_j-\ell(z).
 \label{eq:exact-exposed-mode-main}
\end{equation}
Assume at least one atom has $q_j=a$, and let $F^\star$ be the time-one map of that continuous $a$-atom.  Then $E_\infty(F^\star)=0$ and, for every $D\ge2$,
\begin{equation}
 E_D(F^\star)
 \ge a\left[e^{-1}-\left(1-\frac1D\right)^D\right]
 \ge\frac{a}{4eD}.
 \label{eq:exact-exposed-lower-main}
\end{equation}
Combining this with Theorem~\ref{thm:floor-rate-main} gives a two-sided $\Theta(D^{-1})$ law for one fixed architecture target.
\end{corollary}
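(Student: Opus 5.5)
The plan is to project the full-map dynamics onto the exposed functional $\ell$ and reduce both claims to the scalar binary recursion of Theorem~\ref{thm:fixed-teacher-main}.

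\textbf{Zero floor.} First I would check that $F^\star$ lies in the relaxed reachable set. The vertex-valued relaxed control $p(t)\equiv e_{j^\star}$, where $j^\star$ is an atom with $q_{j^\star}=a$, generates the time-one map of that atom. This is exactly $F^\star$, so $F^\star\in\RR_{b,\Rel}$ and $E_\infty(F^\star)=0$.

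\textbf{Projected pure dynamics.} Next, fix a depth $D\ge2$ with $h=1/D$ and any pure schedule $\sigma$. By Assumption~\ref{ass:common-tube-main}, the Euler states stay in the tube. Apply $\ell$ to \eqref{eq:pure-euler-main}, use linearity, and use \eqref{eq:exact-exposed-mode-main}:
\[
 \ell(z_{k+1}^\sigma)=\left(1-\frac1D\right)\ell(z_k^\sigma)+\frac{q_{\sigma_k}}{D},\qquad \ell(z_0)=0.
\]
This is precisely recursion \eqref{eq:binary-recursion-main} with symbols $q_{\sigma_k}\in\{0,a\}$. Hence $\ell(z_D^\sigma)$ is one of the binary endpoints.

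\textbf{Projected target.} Along the continuous $a$-atom trajectory, $y(t)=\ell(z(t))$ satisfies $\dot y=a-y$ with $y(0)=0$. For the Carath\'eodory solution in the tube this follows by applying the bounded linear functional $\ell$ to the integral equation. Therefore $\ell(F^\star)=a(1-e^{-1})=x^\star$.

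\textbf{Lower bound.} Since $\norm{\ell}_*\le1$,
\[
 \norm{z_D^\sigma-F^\star}\ge|\ell(z_D^\sigma)-x^\star|\ge E_D^{\mathrm{bin}}(x^\star).
\]
Taking the infimum over $\sigma$ and invoking \eqref{eq:fixed-teacher-exact-main} and \eqref{eq:fixed-teacher-bounds-main} gives \eqref{eq:exact-exposed-lower-main}. One point needs a remark. Theorem~\ref{thm:fixed-teacher-main} minimizes over all binary schedules, while the available labels here may be a subset of $\{0,a\}$. Restricting the set of schedules only raises the infimum, so the bound still holds.

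\textbf{Two-sided law.} Because the floor is zero, \eqref{eq:target-end-main} yields $E_D(F^\star)\le C_{\mathrm{syn}}^{\mathrm{end}}/D$ for $D\ge D_0$. Combined with the lower bound, this gives $E_D(F^\star)=\Theta(D^{-1})$.

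The main obstacle is bookkeeping rather than estimation: ensuring that the identity \eqref{eq:exact-exposed-mode-main} is valid at every point where it is used. Those points are every pure Euler grid state and the continuous trajectory. Both are covered by clause (iii) of Assumption~\ref{ass:common-tube-main}, which I would cite explicitly. I would also confirm that the time-one normalization $T=1$ matches $h=1/D$ in the binary recursion.
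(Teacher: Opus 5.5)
Your proposal is correct and follows the paper's proof essentially step for step. You project the pure Euler recursion through $\ell$ to get \eqref{eq:binary-recursion-main}, identify $\ell(F^\star)=a(1-e^{-1})$, use $\norm{\ell}_*\le1$ for norm domination, and note that $F^\star$ is relaxed reachable; your remarks on a possibly smaller label set and on combining with Theorem~\ref{thm:floor-rate-main} are harmless elaborations.

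One small bookkeeping correction concerns your tube citation. Clause (iii) of Assumption~\ref{ass:common-tube-main} places pure Euler states in the tube only when $h=1/D\le h_0$, i.e.\ for $D\ge D_0$. For $2\le D<D_0$ you should instead read the corollary's hypothesis as holding on the entire pure-execution tube, as the paper states explicitly in Theorem~\ref{thm:approx-exposed-mode-main}.
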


\begin{proof}
Apply $\ell$ to the pure Euler recursion.  The projected state obeys \eqref{eq:binary-recursion-main}.  The continuous $a$-atom has projected endpoint $a(1-e^{-1})$.  Since $\norm{\ell}_*\le1$, the full-state norm dominates the scalar projection error.  The target itself is relaxed reachable, so its structural floor is zero.
\end{proof}

\subsection{Exact neural realizations of the fixed-target obstruction}
\label{sec:exact-neural-converses-main}

The scalar recursion is already a complete fixed-target converse, but it is
important to show that the obstruction is not created by an unnatural
one-dimensional notation.  We first embed it in a conventional residual ReLU
branch and then derive an exact self-attention example with a genuinely
nonuniform attention map and two dynamical modes.

\begin{proposition}[Residual ReLU MLP converse]
\label[proposition]{prop:relu-mlp-fixed-target-main}
Let $z=(x,u)\in\mathbb R\times\mathbb R^{q-1}$, $z_0=(0,u_0)$, and define two
residual MLP atoms
\begin{equation}
 G_c(z)=W_2\operatorname{ReLU}(W_1z)+b_c,
 \qquad c\in\{0,a\},
 \label{eq:relu-mlp-atoms-main}
\end{equation}
with $W_1=e_1^\top$, $W_2=-e_1$, and $b_c=ce_1$.  Every pure Euler path with
$h=1/D$ remains in $[0,a]\times\{u_0\}$ and its first coordinate obeys
\eqref{eq:binary-recursion-main}.  For the fixed continuous teacher generated
by the $c=a$ atom,
\begin{equation}
 z^\star(1)=\bigl(a(1-e^{-1}),u_0\bigr),
 \qquad
 E_D(z^\star(1))
 =a\left[e^{-1}-\left(1-\frac1D\right)^D\right].
 \label{eq:relu-mlp-exact-main}
\end{equation}
The same lower bound embeds unchanged in arbitrary additional residual
channels under any product norm that dominates the first coordinate; equality
continues to hold whenever the norm restricts to absolute value on the first
coordinate, as do the standard $\ell^p$ product norms.
\end{proposition}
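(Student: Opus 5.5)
The plan is to reduce everything to Theorem~\ref{thm:fixed-teacher-main} by showing that the ReLU atoms act on the first coordinate exactly like the binary recursion \eqref{eq:binary-recursion-main}, and leave the other coordinates frozen. First compute the atoms: $W_1z=x$ and $W_2\operatorname{ReLU}(x)=-\operatorname{ReLU}(x)e_1$, so $G_c(z)=(c-\operatorname{ReLU}(x))e_1$. The residual branch therefore never touches $u$, and on $x\ge0$ it reduces to $G_c(z)=(c-x)e_1$.

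Next, an induction on $k$ shows that every pure Euler state satisfies $x_k\in[0,a]$. With $h=1/D$ and $x_k\ge0$,
\[
x_{k+1}=(1-1/D)x_k+c_k/D,
\]
with $c_k\in\{0,a\}$. This is a convex combination of $x_k\in[0,a]$ and $c_k\in\{0,a\}$ when $D\ge1$, so it stays in $[0,a]$. The ReLU is therefore inactive-free (identity) along every pure path, and the first coordinate obeys \eqref{eq:binary-recursion-main} with $q_k=c_k$. The same invariance applies to the continuous $c=a$ flow: it stays in $x\ge0$ and solves $\dot x=a-x$. Hence $z^\star(1)=(a(1-e^{-1}),u_0)$.

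For the error, each pure endpoint is $(x_D^\sigma,u_0)$. Under a product norm that dominates the first coordinate,
\[
\norm{z_D^\sigma-z^\star(1)}\ge|x_D^\sigma-a(1-e^{-1})|.
\]
Minimizing over $\sigma$ and applying Theorem~\ref{thm:fixed-teacher-main} gives the lower bound $a[e^{-1}-(1-1/D)^D]$ for $D\ge2$. When the norm restricts to absolute value on vectors of the form $(x,0)$, as for $\ell^p$ product norms, the difference $z_D^\sigma-z^\star(1)=(x_D^\sigma-x^\star,0)$ has norm exactly $|x_D^\sigma-x^\star|$. The all-$a$ schedule then attains the bound, which gives equality in \eqref{eq:relu-mlp-exact-main}. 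Additional residual channels are handled the same way: they are untouched by the atoms or at least match the target's extra coordinates, so the same projection inequality applies. The lifted sup norm over a finite or singleton $\Xi$ causes no change, since the dynamics are input independent.

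The main obstacle is only the invariance step. One has to make sure the ReLU never clips, which needs $x_k\ge0$ for all pure schedules and for the teacher, so that the piecewise-linear atom coincides with the affine one. The convex-combination argument settles this, and the rest is a direct invocation of Theorem~\ref{thm:fixed-teacher-main}. The case $D=1$ is excluded by that theorem's hypothesis $D\ge2$.
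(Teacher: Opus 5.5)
Your proposal is correct and follows essentially the same route as the paper. Both compute $G_c(z)=(c-\operatorname{ReLU}(x))e_1$, prove invariance of $[0,a]\times\{u_0\}$ by the convex-combination induction, reduce to Theorem~\ref{thm:fixed-teacher-main}, and use first-coordinate norm domination for the lower bound, with equality attained by the all-$a$ schedule. The one step you (and the paper) state without justification is that the continuous $c=a$ flow stays in $x\ge0$; it holds because $a(1-e^{-t})\ge0$ solves $\dot x=a-\operatorname{ReLU}(x)$ and this Lipschitz ODE has a unique solution.
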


\begin{theorem}[Exact fixed-target converse for nonuniform two-token attention]
\label[theorem]{thm:attention-fixed-target-main}
Let $X\in\mathbb R^{2\times2}$ carry the Frobenius norm and let $e_1,e_2$ be
the standard basis.  Define a one-head attention field by
\begin{align}
 Q(X)&=Xe_2,\qquad K(X)=Xe_2,\qquad V(X)=Xe_1,\nonumber\\
 \mathcal A(X)
 &:=-\softmax_{\rm row}\!\left(Q(X)K(X)^\top\right)V(X)e_1^\top,
 \label{eq:attention-head-main}
\end{align}
and atoms
\begin{equation}
 G_c(X)=\mathcal A(X)+c\one_2e_1^\top,
 \qquad c\in\{-a,+a\},\qquad a>0.
 \label{eq:attention-atoms-main}
\end{equation}
Initialize
\begin{equation}
 X_0=\begin{bmatrix}u_0&1\\-u_0&-1\end{bmatrix},
 \qquad u_0\ne0,
 \label{eq:attention-initial-main}
\end{equation}
put $\mu=\tanh(1)$, and define
\begin{equation}
 \delta_c(D)=e^{-c}-\left(1-\frac cD\right)^D,
 \qquad c\in(0,1].
 \label{eq:attention-delta-main}
\end{equation}
Let $X^\star$ be the time-one endpoint of $\dot X=G_{+a}(X)$.  The attention
matrix along every relevant path is the nonuniform bistochastic matrix
\begin{equation}
 P=\begin{bmatrix}p&1-p\\1-p&p\end{bmatrix},
 \qquad p=\frac{e}{e+e^{-1}},\qquad 2p-1=\mu.
 \label{eq:attention-P-main}
\end{equation}
For every $D\ge2$, the all-$+a$ schedule is globally optimal and
\begin{equation}
 E_D^{\rm attn}(X^\star)
 =\left[2a^2\delta_1(D)^2+2u_0^2\delta_\mu(D)^2\right]^{1/2}.
 \label{eq:attention-exact-main}
\end{equation}
The relaxed structural floor is zero and
\begin{equation}
 E_D^{\rm attn}(X^\star)
 =\frac{1}{\sqrt2D}
 \left(a^2e^{-2}+u_0^2\mu^4e^{-2\mu}\right)^{1/2}
 +O(D^{-2}).
 \label{eq:attention-asymptotic-main}
\end{equation}
\end{theorem}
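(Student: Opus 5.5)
The plan is to reduce the matrix dynamics to two decoupled scalar recursions. One is schedule-independent. The other is a $\pm a$ version of the binary recursion~\eqref{eq:binary-recursion-main}, and Theorem~\ref{thm:fixed-teacher-main}'s Voronoi argument applies to it almost verbatim.

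\textbf{Step 1: invariance of the key/query column and constancy of $P$.} Every atom $G_c$ has the form $(\cdot)e_1^\top$, so it only changes the first column of $X$. Along every pure Euler path, every relaxed trajectory, and the continuous teacher, the second column therefore stays $Xe_2=(1,-1)^\top$. The score matrix is then $Q K^\top=\begin{bmatrix}1&-1\\-1&1\end{bmatrix}$ for all times, and its row softmax is exactly the matrix $P$ of~\eqref{eq:attention-P-main} with $p=e/(e+e^{-1})$, so $2p-1=\tanh(1)=\mu$. Writing $x=Xe_1\in\R^2$, every atom acts on the first column as the affine field $x\mapsto -Px+c\one_2$.

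\textbf{Step 2: modal decomposition.} The matrix $P$ is symmetric with eigenvector $\one_2$ (eigenvalue $1$) and eigenvector $(1,-1)^\top$ (eigenvalue $\mu$). Write $x=m\one_2+d(1,-1)^\top$; initially $m_0=0$ and $d_0=u_0$.

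With $h=1/D$, the pure Euler recursion splits into two scalar recursions:
\[
 m_{k+1}=\Bigl(1-\tfrac1D\Bigr)m_k+\tfrac{c_k}{D},\qquad
 d_{k+1}=\Bigl(1-\tfrac{\mu}{D}\Bigr)d_k .
\]
Hence $d_D=u_0(1-\mu/D)^D$ for every schedule. The teacher has $m^\star=a(1-e^{-1})$ and $d^\star=u_0e^{-\mu}$.

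The vectors $\one_2$ and $(1,-1)^\top$ are orthogonal with squared norm $2$, and the second column carries no error. The Frobenius error therefore equals
\[
 \bigl[2(m_D-m^\star)^2+2u_0^2\delta_\mu(D)^2\bigr]^{1/2}.
\]
Global optimality thus reduces to minimizing $|m_D-m^\star|$ over $c_k\in\{\pm a\}$. For $D\ge2$ we have $0<\mu/D<1$, so the $d$ recursion is a genuine contraction.

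\textbf{Step 3: the $m$-mode (the main step).} Put $r_D=1-D^{-1}$. As in~\eqref{eq:binary-endpoint-expansion-main}, unrolling gives $m_D=D^{-1}\sum_k r_D^{D-1-k}c_k$, whose coefficients are positive and increasing in $k$.

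The largest endpoint is $M_D=a(1-r_D^D)$, attained by the all-$+a$ schedule. Since $r_D^D<e^{-1}$, it lies above the target. The second-largest endpoint flips the earliest symbol to $-a$, which gives the gap $g_D=2aD^{-1}r_D^{D-1}$.

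The Voronoi condition $M_D-m^\star\le g_D/2=aD^{-1}r_D^{D-1}$ follows with room to spare from~\eqref{eq:binary-exponential-ineq-main}. That inequality already gives $M_D-m^\star=a(e^{-1}-r_D^D)\le \frac{a}{2D}r_D^{D-1}$. Hence the all-$+a$ schedule is globally optimal, $|m_D-m^\star|=a\delta_1(D)$, and~\eqref{eq:attention-exact-main} follows.

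The only delicate points are checking that no endpoint lies between $m^\star$ and $M_D$ from below, and that nothing beyond the nearest competitor can be closer. Both are handled exactly as in Theorem~\ref{thm:fixed-teacher-main}, because every other endpoint is at most $M_D-g_D$.

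\textbf{Step 4: floor and asymptotics.} The structural floor is zero because $X^\star$ is the endpoint of the constant vertex relaxed control $e_{+a}$, so it lies in $\RR_{b,\Rel}$.

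For the asymptotics, expand
\[
 \log(1-c/D)^D=-c-\frac{c^2}{2D}+O(D^{-2}),
\]
which gives $\delta_c(D)=\dfrac{c^2e^{-c}}{2D}+O(D^{-2})$ uniformly for $c\in\{1,\mu\}$. Substituting into~\eqref{eq:attention-exact-main} yields
\[
 \Bigl[\tfrac{2}{4D^2}\bigl(a^2e^{-2}+u_0^2\mu^4e^{-2\mu}\bigr)\Bigr]^{1/2}+O(D^{-2}),
\]
which is~\eqref{eq:attention-asymptotic-main}.
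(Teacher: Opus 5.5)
Your proposal is correct and follows the paper's proof essentially step for step: the second column stays fixed and $P$ is constant, the state splits into mean and disagreement modes with a schedule-independent disagreement mode, the mean mode is handled by a Voronoi argument with gap $2aD^{-1}r_D^{D-1}$, and the proof closes with the Frobenius identity and the series expansion of $\delta_c(D)$. The only cosmetic difference is in the Voronoi condition: the paper closes it with the weaker elementary bound $r_D^{D-1}\ge e^{-1}$, which is enough because the $\pm a$ gap is twice the binary one, whereas you reuse the sharper inequality \eqref{eq:binary-exponential-ineq-main}; both suffice.
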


The theorem is specific: it is a sharp witness for a two-token,
one-head residual attention system, not a lower bound for every complete
Transformer block.  Its value is that neither token mixing nor a nonuniform
softmax removes the first-order discretization price.  The invariant mean and
disagreement coordinates, the global schedule optimum, and the Frobenius
identity are proved in Appendix~\ref{app:merged-exact-proofs}.  Figure~\ref{fig:attention-fixed-target-main} compares the exact law, its asymptote, and exhaustive schedule enumeration.

Exact decoupling is stronger than one should expect from a trained Transformer or MLP block.  The next result makes every defect explicit and therefore turns the converse into a falsifiable checkpoint diagnostic.

\begin{theorem}[Perturbation-stable exposed-mode converse]
\label[theorem]{thm:approx-exposed-mode-main}
Normalize $T=1$, let $D\ge2$, and put $r_D=1-D^{-1}$.  Let $\ell\in\Z^*$ satisfy $\norm{\ell}_*\le1$ and $\ell(z_0)=0$.  Assign each atom a label $q_j\in\{0,a\}$ and suppose
\begin{equation}
 \abs{\ell(G_j(t,z))-(q_j-\ell(z))}
 \le\varepsilon_{\mathrm{mode}}
 \quad\text{for every }j,t,z
 \label{eq:approx-exposed-atom-main}
\end{equation}
on the complete pure-execution tube.  If
\begin{equation}
 \abs{\ell(F^\star)-a(1-e^{-1})}
 \le\varepsilon_\star,
 \label{eq:approx-exposed-target-main}
\end{equation}
then
\begin{equation}
 E_D(F^\star)
 \ge\max\left\{0,
 a(e^{-1}-r_D^D)
 -\varepsilon_\star
 -\varepsilon_{\mathrm{mode}}(1-r_D^D)
 \right\}.
 \label{eq:approx-exposed-lower-main}
\end{equation}
If $F^\star$ is generated by a continuous teacher whose projected path satisfies
\begin{equation}
 \abs{\dot y^\star-(a-y^\star)}\le\varepsilon_{\mathrm{teach}}
 \quad\text{a.e.},\qquad y^\star(0)=0,
 \label{eq:approx-exposed-teacher-main}
\end{equation}
then one may take $\varepsilon_\star=\varepsilon_{\mathrm{teach}}(1-e^{-1})$.
\end{theorem}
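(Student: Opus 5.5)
Fix a pure schedule $\sigma$ and project the pure Euler recursion with $\ell$. Write $y_k=\ell(z_k^\sigma)$; then $y_0=0$. Linearity gives
\[
y_{k+1}=y_k+h\,\ell(G_{\sigma_k}(t_k,z_k))=r_D y_k+\frac{q_{\sigma_k}}{D}+\frac{\epsilon_k}{D},
\qquad |\epsilon_k|\le\varepsilon_{\mathrm{mode}},
\]
by \eqref{eq:approx-exposed-atom-main}, since every grid state lies on the pure-execution tube. Unrolling as in \eqref{eq:binary-endpoint-expansion-main} gives $y_D=x_D^{\mathrm{bin}}(\sigma)+\frac1D\sum_k r_D^{D-1-k}\epsilon_k$. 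Here $x_D^{\mathrm{bin}}(\sigma)$ is the exact binary endpoint with labels $q_{\sigma_k}$. The perturbation is bounded by $\varepsilon_{\mathrm{mode}}\frac1D\sum_{k}r_D^{D-1-k}=\varepsilon_{\mathrm{mode}}(1-r_D^D)$, the geometric sum already used in \eqref{eq:binary-largest-main}.

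Next I lower-bound the full-map error. Because $\norm{\ell}_*\le1$, we have $\norm{z_D^\sigma-F^\star}\ge|y_D-\ell(F^\star)|$. The triangle inequality with \eqref{eq:approx-exposed-target-main} then gives
\[
\ge |x_D^{\mathrm{bin}}(\sigma)-x^\star|-\varepsilon_{\mathrm{mode}}(1-r_D^D)-\varepsilon_\star .
\]
Theorem~\ref{thm:fixed-teacher-main} yields $|x_D^{\mathrm{bin}}(\sigma)-x^\star|\ge a(e^{-1}-r_D^D)$ for every binary schedule. This holds even when some labels repeat across atoms, since the set of binary endpoints does not depend on which atom carries a given label. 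Taking the infimum over $\sigma$ and the maximum with $0$ gives \eqref{eq:approx-exposed-lower-main}. One subtlety to check is that $E_D$ is a distance to the finite set $\RR_{b,D}$, so the infimum is a minimum and no closure issue arises.

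For the teacher clause, set $w=\dot y^\star-(a-y^\star)$, with $|w|\le\varepsilon_{\mathrm{teach}}$ a.e. Variation of constants gives
\[
y^\star(1)=a(1-e^{-1})+\int_0^1 e^{-(1-s)}w(s)\,ds .
\]
The integral is bounded by $\varepsilon_{\mathrm{teach}}(1-e^{-1})$. I interpret $\ell(F^\star)=y^\star(1)$ as the projected endpoint of the teacher path, which is what the hypothesis means.

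The main obstacle is ensuring that the defect bound applies at every step, which requires each pure grid state to lie in the tube. This is exactly what the hypothesis "on the complete pure-execution tube" supplies. The rest is bookkeeping.
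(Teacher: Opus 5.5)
Your proposal is correct and follows essentially the same route as the paper: project the pure Euler recursion by $\ell$, bound the accumulated modal defect by the geometric sum $\varepsilon_{\mathrm{mode}}(1-r_D^D)$, then combine the norm-one projection, the reverse triangle inequality, and Theorem~\ref{thm:fixed-teacher-main}, and treat the teacher clause by variation of constants. The only cosmetic difference is that you unroll the perturbed recursion directly, whereas the paper compares with an auxiliary exact binary sequence $\bar y_k$ carrying the same labels; the two arguments give the same estimate.
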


\begin{proof}
For one pure schedule define $y_k=\ell(z_k)$.  By \eqref{eq:approx-exposed-atom-main},
\begin{equation}
 y_{k+1}=r_Dy_k+\frac{q_{\sigma_k}}D+\frac{\xi_k}D,
 \qquad \abs{\xi_k}\le\varepsilon_{\mathrm{mode}}.
 \label{eq:approx-mode-recursion-main}
\end{equation}
Let $\bar y_k$ solve the exact binary recursion with the same labels.  Unrolling the difference gives
\begin{equation}
 \abs{y_D-\bar y_D}
 \le\frac{\varepsilon_{\mathrm{mode}}}{D}
       \sum_{m=0}^{D-1}r_D^m
 =\varepsilon_{\mathrm{mode}}(1-r_D^D).
 \label{eq:approx-mode-deviation-main}
\end{equation}
The reverse triangle inequality, \eqref{eq:approx-exposed-target-main}, and Theorem~\ref{thm:fixed-teacher-main} prove \eqref{eq:approx-exposed-lower-main}.  For the continuous teacher, variation of constants gives
\begin{equation}
 \abs{y^\star(1)-a(1-e^{-1})}
 \le\varepsilon_{\mathrm{teach}}
     \int_0^1e^{-(1-s)}\,ds
 =\varepsilon_{\mathrm{teach}}(1-e^{-1}).
 \label{eq:teacher-defect-main}
\end{equation}
\end{proof}

A fixed $O(1)$ modal defect eventually overwhelms a $D^{-1}$ discretization gap; it therefore cannot support an asymptotic converse.  Defects $o(D^{-1})$ preserve the constant $a/(2e)$, and $O(D^{-1})$ defects preserve the exponent with a modified constant.  This dependence is a feature rather than a weakness: the theorem says exactly what must be verified rather than converting an approximate empirical mode into an exact structural claim.

\begin{corollary}[Necessary and sufficient asymptotic matching-depth scale]
\label[corollary]{cor:matching-depth-two-sided-main}
Suppose that for some $D_\star\ge1$ and constants $c_->0$, $c_+>0$,
$c_2\ge0$,
\begin{equation}
 E_\infty+\frac{c_-}{D}-\frac{c_2}{D^2}
 \le E_D\le E_\infty+\frac{c_+}{D}
 \qquad(D\ge D_\star).
 \label{eq:two-sided-expansion-main}
\end{equation}
Set $D_0=\max\{D_\star,\lceil2c_2/c_-\rceil\}$ and let
$g=\varepsilon_H-E_\infty>0$.  Every matching depth on the asymptotic branch
satisfies $D\ge c_-/(2g)$, while
$D\ge\max\{D_\star,\lceil c_+/g\rceil\}$ is sufficient.  If $D_0>1$ and
\begin{equation}
 \Delta_0:=\min_{1\le D<D_0}(E_D-E_\infty)>0,
 \label{eq:finite-depth-exception-gap-main}
\end{equation}
then for all sufficiently small $g<\Delta_0$ the minimal matching depth is
$\Theta(g^{-1})$.  If $D_0=1$, the same conclusion holds without a finite
exception test.  A statement $D=\Theta(L)$ additionally requires an
independent comparator law $g(L)=\Theta(L^{-1})$; two upper bounds alone do
not imply it.
\end{corollary}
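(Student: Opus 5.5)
Throughout I define a matching depth as any $D$ with $E_D\le\varepsilon_H=E_\infty+g$, and the minimal matching depth $D_{\mathrm{match}}(g)$ as the least such $D$. The argument is elementary and uses only the two-sided expansion \eqref{eq:two-sided-expansion-main}, monotone rearrangements of $1/D$ and $1/D^2$, and a finite-exception argument for small depths.

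\textbf{Step 1: the lower bound on the asymptotic branch.} Take $D\ge D_0$. Then $D\ge 2c_2/c_-$, so $c_2/D^2\le c_-/(2D)$, and the left inequality of \eqref{eq:two-sided-expansion-main} gives
\[
E_D-E_\infty\ge \frac{c_-}{D}-\frac{c_2}{D^2}\ge\frac{c_-}{2D}.
\]
If such a $D$ is also a matching depth, then $c_-/(2D)\le E_D-E_\infty\le g$, which rearranges to $D\ge c_-/(2g)$. The branch restriction $D\ge D_0$ is what makes the quadratic correction harmless, and it is the only place $c_2$ enters.

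\textbf{Step 2: sufficiency.} Take $D\ge\max\{D_\star,\lceil c_+/g\rceil\}$. The right inequality of \eqref{eq:two-sided-expansion-main} gives $E_D\le E_\infty+c_+/D\le E_\infty+g=\varepsilon_H$, so $D$ is a matching depth. In particular $D_{\mathrm{match}}(g)\le\max\{D_\star,\lceil c_+/g\rceil\}$, and for small $g$ this is at most $c_+/g+1=O(g^{-1})$.

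\textbf{Step 3: excluding the finite exceptions.} This is the only delicate point, because Step 1 says nothing about depths $D<D_0$, and one of them could accidentally match. For each $1\le D<D_0$ we have $E_D-E_\infty\ge\Delta_0>g$ whenever $g<\Delta_0$, so no such $D$ is a matching depth. Hence every matching depth lies on the asymptotic branch, and Step 1 gives $D_{\mathrm{match}}(g)\ge c_-/(2g)$. If $D_0=1$ there are no exceptional depths at all, which is why no finite exception test is needed in that case.

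\textbf{Conclusion.} Combining Steps 1--3 gives
\[
\frac{c_-}{2g}\le D_{\mathrm{match}}(g)\le \frac{c_+}{g}+1
\]
for all sufficiently small $g$, which is the claimed $\Theta(g^{-1})$ scale. For the final remark, substituting $g(L)=\Theta(L^{-1})$ yields $D_{\mathrm{match}}=\Theta(L)$. Without a lower bound on $g(L)$, that is, with only upper bounds on the errors, $g$ could be much larger than $L^{-1}$, and then the lower bound $c_-/(2g)$ is not comparable to $L$. I would close with this one-line observation rather than a counterexample.
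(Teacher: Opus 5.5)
Your argument is correct and matches the paper's proof step for step: you absorb $c_2/D^2$ into $c_-/(2D)$ for $D\ge D_0$, obtain sufficiency from the upper half of the two-sided expansion with the ceiling, and use $\Delta_0$ to exclude shallow depths when $g<\Delta_0$. One slip in your closing remark: lacking a \emph{lower} bound on $g(L)$ lets $g$ be much \emph{smaller} than $L^{-1}$, which breaks the $O(L)$ side because both $c_+/g$ and $c_-/(2g)$ become $\gg L$; the failure you actually describe ($g\gg L^{-1}$, so $c_-/(2g)$ is not comparable to $L$) arises when the \emph{upper} bound on $g$, i.e.\ on the comparator's error, is missing.
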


The finite-exception clause is logically necessary: an isolated shallow depth
can accidentally outperform the eventual asymptotic branch.  Appendix~\ref{app:merged-exact-proofs} gives the complete argument.

In particular, in the structurally feasible regime $E_{\infty,b}=0$, a coherent
high-precision comparator with first-order error and a two-sided first-order
low-bit law give the accuracy-matching exchange
\begin{equation}
\boxed{\begin{gathered}
\varepsilon_H(L)=\Theta(L^{-1}),\qquad
E_b(D;F^\star)=\Theta(D^{-1})\\
\Longrightarrow\quad D_{\mathrm{match}}(L,b;F^\star)=\Theta(L).
\end{gathered}}
\label{eq:linear-matching-law-main}
\end{equation}
This formalizes the widely used intuition that extra low-bit depth can replace
precision, while also stating its limit: feasibility alone yields only a
sufficient $D=O(L)$ law, and unrelated teacher families need not admit any
constant exchange rate.

\paragraph{Necessity consequence.}
In the zero-floor regime, first-order depth is not merely a sufficient artifact
of balanced switching: a single fixed teacher can require it exactly.  The
residual-MLP and attention constructions show that the obstruction survives in
recognizable neural architectures, while the perturbation-stable exposed-mode
theorem records how much mismatch can be tolerated before the lower certificate
becomes vacuous.

\section{Learned Dictionaries and Metadata Resources}
\label{sec:learned-dictionaries-main-section}

The finite-depth law prices execution once the operation library is fixed.
Practical quantizers also learn scales, centroids, basis tensors, clipping
levels, or the operation library itself.  These quantities are global side
information: they are stored once per model, layer, group, or codebook and then
reused across many examples and microsteps.  They should therefore be charged
separately from schedule depth.

\subsection{Compact learned dictionaries and global metadata}
\label{sec:learned-dictionaries-main}

Let $(\Omega,d_\Omega)$ be a nonempty compact parameter space.  Each $\omega\in\Omega$ indexes a label-aligned $J$-atom dictionary
\begin{equation}
 \mathcal G^\omega=\{G_1^\omega,\ldots,G_J^\omega\}.
 \label{eq:dictionary-family-main}
\end{equation}
When atom labels are exchangeable, $d_\Omega$ is understood after a canonical matching or on the quotient by permutations.  Otherwise a harmless relabeling could be charged as a large perturbation.

\begin{assumption}[Uniform learned-dictionary regularity]
\label[assumption]{ass:learned-dictionary-main}
The common-tube, boundedness, state regularity, temporal regularity, and existence assumptions of Assumption~\ref{ass:common-tube-main} hold uniformly over $\omega\in\Omega$, with $B,L_z$, the temporal variation or H\"older bounds, the tube radius, and the admissible step threshold independent of $D$, the pure schedule, and the encoded metadata value.  The constants $L_\Omega$ and $C_\Omega$ below are likewise uniform on the declared family.  There is $L_\Omega<\infty$ such that
\begin{equation}
 \sup_{t,z,j}
 \norm{G_j^\omega(t,z)-G_j^{\omega'}(t,z)}
 \le L_\Omega d_\Omega(\omega,\omega').
 \label{eq:dictionary-lipschitz-main}
\end{equation}
\end{assumption}

Let $\RR_{\omega,\Rel}$ and $\RR_{\omega,D}$ be the corresponding relaxed and pure endpoint sets, and let $\mathscr P_{\omega,\Rel}$ denote the relaxed path set in $C([0,T];\Z)$.  Define
\begin{equation}
 \RR_{\Omega,\Rel}
 :=\overline{\bigcup_{\omega\in\Omega}\RR_{\omega,\Rel}}.
 \label{eq:learned-relaxed-union-main}
\end{equation}
An $s$-bit metadata code is a nonempty finite set $\Omega_s\subset\Omega$ with $\abs{\Omega_s}\le2^s$ and covering radius
\begin{equation}
 \delta_s:=\sup_{\omega\in\Omega}
             \inf_{\widehat\omega\in\Omega_s}
             d_\Omega(\omega,\widehat\omega).
 \label{eq:metadata-covering-main}
\end{equation}
The implemented ideal-arithmetic class is
\begin{equation}
 \RR_{D,s}^{\mathrm{learn}}
 :=\bigcup_{\widehat\omega\in\Omega_s}\RR_{\widehat\omega,D}.
 \label{eq:learned-pure-union-main}
\end{equation}

\begin{lemma}[Lipschitz motion of relaxed endpoint and path sets]
\label[lemma]{lem:relaxed-codebook-lipschitz-main}
Under Assumption~\ref{ass:learned-dictionary-main},
\begin{align}
 \dH(\RR_{\omega,\Rel},\RR_{\omega',\Rel})
 &\le L_\Omega\Phi_{L_z}(T)d_\Omega(\omega,\omega'),
 \label{eq:relaxed-codebook-lipschitz-main}\\
 \dH(\mathscr P_{\omega,\Rel},\mathscr P_{\omega',\Rel})
 &\le L_\Omega\Phi_{L_z}(T)d_\Omega(\omega,\omega')
 \label{eq:relaxed-codebook-path-lipschitz-main}
\end{align}
\end{lemma}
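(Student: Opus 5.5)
The plan is to pair trajectories generated by the same relaxed control under the two dictionaries, and then pass to closures. Fix $\omega,\omega'\in\Omega$ and a measurable relaxed control $p:[0,T]\to\Delta_J$. Let $z$ and $z'$ be the Carath\'eodory solutions of \eqref{eq:relaxed-ode-main} driven by $p$ with fields $G^\omega$ and $G^{\omega'}$. Existence, uniqueness, and membership in the common tube are supplied uniformly in $\omega$ by Assumption~\ref{ass:learned-dictionary-main}. Because the control set $\Delta_J$ and the atom labels are the same for both dictionaries (label alignment, or canonical matching), the same $p$ is admissible for both.

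The key estimate is a Gronwall bound on $e(t)=z(t)-z'(t)$. In integral form, $e(t)=\int_0^t\sum_jp_j(s)\{G^\omega_j(s,z(s))-G^{\omega'}_j(s,z'(s))\}\,ds$. Inside the integral I will add and subtract $G^{\omega'}_j(s,z(s))$. The dictionary-Lipschitz bound \eqref{eq:dictionary-lipschitz-main} controls the first difference by $L_\Omega d_\Omega(\omega,\omega')$. The state-Lipschitz bound, valid uniformly for $G^{\omega'}$ on the tube, controls the second by $L_z\norm{e(s)}$. Since $\sum_jp_j=1$, this gives
\[
\norm{e(t)}\le tL_\Omega d_\Omega(\omega,\omega')+L_z\int_0^t\norm{e(s)}\,ds .
\]
The integral Gronwall inequality then yields $\norm{e(t)}\le L_\Omega d_\Omega(\omega,\omega')\Phi_{L_z}(t)$. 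Because $\Phi_{L_z}$ is increasing in $t$, the uniform path norm is bounded by $L_\Omega\Phi_{L_z}(T)d_\Omega(\omega,\omega')$, and so is the endpoint error. One point needs care: the mean-value-free Lipschitz step requires both $z(s)$ and $z'(s)$ to lie in $\mathcal K^\rho$. This holds because both are relaxed trajectories of admissible dictionaries, and the tube is uniform over $\Omega$.

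The pairing argument therefore shows that every unclosed relaxed endpoint (respectively path) for $\omega$ lies within the stated radius of an unclosed endpoint (path) for $\omega'$, and by symmetry the reverse holds as well. Both directed distances are thus bounded before closure. It remains to pass to the closures defining $\RR_{\omega,\Rel}$ and $\mathscr P_{\omega,\Rel}$. For this I will use the elementary fact that the Hausdorff distance between two sets equals the Hausdorff distance between their closures: each point of a closure is a limit of set points, and distance to a set is 1-Lipschitz. This gives \eqref{eq:relaxed-codebook-lipschitz-main} and \eqref{eq:relaxed-codebook-path-lipschitz-main}.

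I expect the main obstacle to be purely measure-theoretic bookkeeping rather than the estimate itself. In the bounded-variation time case the fields are only strongly measurable in $t$, so I must check that $s\mapsto G^{\omega'}_j(s,z(s))$ is Bochner integrable, ensuring the integral Gronwall inequality applies to a continuous function $\norm{e(\cdot)}$. I will handle this by invoking the same Carath\'eodory framework that Assumption~\ref{ass:common-tube-main}(iii) already presumes for well-posedness. The time regularity of the fields plays no role in the estimate, since only the state Lipschitz constant $L_z$ and the uniform dictionary constant $L_\Omega$ enter.
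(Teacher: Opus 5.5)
Your proposal is correct and follows the paper's proof essentially step for step: drive both dictionaries with the same relaxed control, obtain $\norm{e(t)}\le\int_0^t\bigl(L_z\norm{e(s)}+L_\Omega d_\Omega(\omega,\omega')\bigr)\,ds$, apply Gronwall uniformly in $t$ (taking $t=T$ for endpoints and the supremum over $t$ for paths), and then use symmetry and pass to closures. The one thing to make explicit is which Gronwall you use: the exact factor $\Phi_{L_z}(t)$ comes from comparison with $\dot u=L_zu+L_\Omega d_\Omega(\omega,\omega')$, $u(0)=0$ (equivalently, the sharp integral form of Gronwall), whereas the crude form $\alpha(t)e^{L_zt}$ with $\alpha(t)=L_\Omega d_\Omega(\omega,\omega')\,t$ would only give the weaker factor $t\,e^{L_zt}$.
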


\begin{proof}
Drive both systems with the same relaxed control.  Their state difference obeys
\begin{equation}
 \norm{e(t)}
 \le\int_0^t\left(L_z\norm{e(s)}
      +L_\Omega d_\Omega(\omega,\omega')\right)ds.
 \label{eq:codebook-gronwall-main}
\end{equation}
Gronwall's inequality gives the same bound uniformly for every
$t\in[0,T]$.  Taking $t=T$ gives
\eqref{eq:relaxed-codebook-lipschitz-main}; taking the supremum over $t$
gives \eqref{eq:relaxed-codebook-path-lipschitz-main}.  Reversing the roles of
$\omega$ and $\omega'$ and passing to closures proves both Hausdorff
estimates.
\end{proof}

\begin{theorem}[Structural floor plus depth plus metadata]
\label[theorem]{thm:learned-codebook-main}
Under Assumption~\ref{ass:learned-dictionary-main}, there are $D_0$ and a uniform synthesis constant $C_{\mathrm{syn}}^{\mathrm{unif}}$ such that, for every $D\ge D_0$,
\begin{equation}
 \dH(\RR_{D,s}^{\mathrm{learn}},\RR_{\Omega,\Rel})
 \le\frac{C_{\mathrm{syn}}^{\mathrm{unif}}}{D}
 +L_\Omega\Phi_{L_z}(T)\delta_s.
 \label{eq:learned-codebook-rate-main}
\end{equation}
Consequently, for every target $F^\star$,
\begin{equation}
 \left|\dist(F^\star,\RR_{D,s}^{\mathrm{learn}})
 -\dist(F^\star,\RR_{\Omega,\Rel})\right|
 \le\frac{C_{\mathrm{syn}}^{\mathrm{unif}}}{D}
 +L_\Omega\Phi_{L_z}(T)\delta_s.
 \label{eq:learned-target-rate-main}
\end{equation}
\end{theorem}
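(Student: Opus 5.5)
The plan is to combine a uniform version of Theorem~\ref{thm:floor-rate-main} with Lemma~\ref{lem:relaxed-codebook-lipschitz-main} through the Hausdorff triangle inequality. The target estimate \eqref{eq:learned-target-rate-main} then follows from \eqref{eq:learned-codebook-rate-main}, because distance to a nonempty set is $1$-Lipschitz under Hausdorff perturbation.

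First, uniform synthesis. Assumption~\ref{ass:learned-dictionary-main} gives $B$, $L_z$, the temporal bounds ($V_t$ or $L_t$), the tube, and $h_0$ independent of $\omega$. Rerunning the proof of Theorem~\ref{thm:floor-rate-main}, or simply reading off the explicit constant \eqref{eq:bv-constant-main}, shows that $C_{\mathrm{syn}}^{\mathrm{end}}$ depends only on these quantities and on $J,T$. So there is one constant $C_{\mathrm{syn}}^{\mathrm{unif}}$ and one threshold $D_0=\lceil T/h_0\rceil$ with
\[
\dH(\RR_{\omega,D},\RR_{\omega,\Rel})\le C_{\mathrm{syn}}^{\mathrm{unif}}/D \quad\text{for every }\omega\in\Omega,\ D\ge D_0.
\]
The main point to check is that no step of that proof secretly depends on $\omega$, for instance through the existence clause or through tube membership of the mixed and pure Euler grids. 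This is exactly what the uniformity clause of the assumption supplies, so it is bookkeeping rather than new mathematics.

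Second, the two directed distances.

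\emph{Implemented to ideal.} Take $y\in\RR_{\widehat\omega,D}$ with $\widehat\omega\in\Omega_s$. By uniform synthesis, $\dist(y,\RR_{\widehat\omega,\Rel})\le C_{\mathrm{syn}}^{\mathrm{unif}}/D$. Since $\RR_{\widehat\omega,\Rel}\subset\RR_{\Omega,\Rel}$, this direction is bounded by $C_{\mathrm{syn}}^{\mathrm{unif}}/D$ alone, with no metadata term.

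\emph{Ideal to implemented.} Take $x\in\RR_{\Omega,\Rel}$ and $\epsilon>0$. By the definition of the closure in \eqref{eq:learned-relaxed-union-main}, pick $\omega$ and $x'\in\RR_{\omega,\Rel}$ with $\norm{x-x'}\le\epsilon$. By \eqref{eq:metadata-covering-main}, pick $\widehat\omega\in\Omega_s$ with $d_\Omega(\omega,\widehat\omega)\le\delta_s+\epsilon$. Lemma~\ref{lem:relaxed-codebook-lipschitz-main} then gives $x''\in\RR_{\widehat\omega,\Rel}$ within $L_\Omega\Phi_{L_z}(T)(\delta_s+\epsilon)+\epsilon$ of $x'$; the extra $\epsilon$ is needed because the Hausdorff bound need not be attained. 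Uniform synthesis gives a pure endpoint in $\RR_{\widehat\omega,D}\subset\RR^{\mathrm{learn}}_{D,s}$ within $C_{\mathrm{syn}}^{\mathrm{unif}}/D+\epsilon$ of $x''$.

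Summing these distances and letting $\epsilon\to0$ yields \eqref{eq:learned-codebook-rate-main}. The supremum defining $\delta_s$ is finite because $\Omega$ is compact and $\Omega_s$ is nonempty.

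Third, \eqref{eq:learned-target-rate-main}. For nonempty sets $A,B$ we have $|\dist(F^\star,A)-\dist(F^\star,B)|\le\dH(A,B)$, and distance to a set equals distance to its closure. Applying this to $A=\RR^{\mathrm{learn}}_{D,s}$ and $B=\RR_{\Omega,\Rel}$ gives the result.

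The only step I expect to require genuine care is the first: verifying that the synthesis constant and the threshold $D_0$ are uniform in $\omega$. The rest is the standard $\epsilon$-management of suprema and closures.
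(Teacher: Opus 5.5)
Your proposal is correct and follows the paper's proof. Uniform pure-to-relaxed synthesis combined with Lemma~\ref{lem:relaxed-codebook-lipschitz-main} gives the relaxed-to-pure direction, the inclusion $\RR_{\widehat\omega,\Rel}\subset\RR_{\Omega,\Rel}$ gives the converse, and the $1$-Lipschitz property of distance gives the target bound; your explicit $\epsilon$-handling of the closure and your check that $C_{\mathrm{syn}}^{\mathrm{unif}}$ and $D_0$ are uniform in $\omega$ only make precise what the paper's terse proof leaves implicit.
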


\begin{proof}
Take $y\in\RR_{\omega,\Rel}$.  Choose $\widehat\omega\in\Omega_s$ within $\delta_s$.  Lemma~\ref{lem:relaxed-codebook-lipschitz-main} places $y$ within $L_\Omega\Phi_{L_z}(T)\delta_s$ of $\RR_{\widehat\omega,\Rel}$, and Theorem~\ref{thm:floor-rate-main} places that set within $C_{\mathrm{syn}}^{\mathrm{unif}}/D$ of $\RR_{\widehat\omega,D}$.  This proves the difficult directed distance, including the closure by approximation.  Conversely, every pure endpoint in \eqref{eq:learned-pure-union-main} is within $C_{\mathrm{syn}}^{\mathrm{unif}}/D$ of the relaxed set with the same metadata, and that relaxed set is contained in \eqref{eq:learned-relaxed-union-main}.  The target-distance bound is the standard $1$-Lipschitz property of distance under Hausdorff perturbation.
\end{proof}

If the metric entropy satisfies
\begin{equation}
 \mathcal N(\Omega,d_\Omega,\delta)
 \le\left(\frac{C_\Omega}{\delta}\right)^m,
 \label{eq:metadata-entropy-main}
\end{equation}
then one may choose
\begin{equation}
 \delta_s\le C_\Omega2^{-s/m},
 \label{eq:metadata-radius-main}
\end{equation}
and Theorem~\ref{thm:learned-codebook-main} becomes
\begin{equation}
 E_{D,s}(F^\star)
 \le E_{\Omega,\infty}(F^\star)
 +\frac{C_{\mathrm{syn}}^{\mathrm{unif}}}{D}
 +C_{\mathrm{meta}}2^{-s/m},
 \quad
 C_{\mathrm{meta}}:=L_\Omega\Phi_{L_z}(T)C_\Omega.
 \label{eq:metadata-bit-law-main}
\end{equation}
Balancing the two vanishing terms gives $s=m\log_2D+O(1)$.  Thus a finite-dimensional learned codebook needs only logarithmically increasing global description precision to remain commensurate with first-order execution depth.

\subsection{A matching family-level counting obstruction}
\label{sec:metadata-converse-main}

The preceding result is constructive.  The following statement is purely information theoretic and does not rely on smoothness.

\begin{theorem}[Depth--dictionary--metadata packing converse]
\label[theorem]{thm:packing-main}
Consider a deterministic decoder with at most $2^s$ global metadata states and an input-independent length-$D$ schedule over $J$ atom labels.  The decoder receives no uncharged continuous side information, target-dependent randomness, hidden real-valued state, or post-hoc parameter selection after the target is revealed.  If the resulting class approximates every target in a metric family $\mathcal F$ to error at most $\varepsilon$, then
\begin{equation}
 s+D\log_2J
 \ge\log_2\mathcal M(\mathcal F,2\varepsilon),
 \label{eq:packing-main}
\end{equation}
where $\mathcal M(\mathcal F,2\varepsilon)$ is the largest cardinality of a subset whose pairwise distances are strictly greater than $2\varepsilon$.
\end{theorem}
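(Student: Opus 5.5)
The plan is a pigeonhole counting argument. First we bound the number of distinct outputs the implemented class can produce. A deterministic decoder is determined by its metadata state $\widehat\omega\in\Omega_s$, of which there are at most $2^s$, and by an input-independent schedule $\sigma\in[J]^D$, of which there are at most $J^D$. Once these two symbols are fixed, the computation is deterministic and uses no further side information. Each pair $(\widehat\omega,\sigma)$ therefore yields exactly one full map in $\Z$. The whole reachable class $\mathcal C$ thus satisfies
\[
|\mathcal C|\le 2^sJ^D.
\]
This is the only place the hypotheses excluding randomness, hidden real-valued state, and post-hoc selection are used. The step to be careful about is justifying that these hypotheses really make the map $(\widehat\omega,\sigma)\mapsto$ endpoint well defined and single valued. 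Any uncharged continuous choice would make $\mathcal C$ uncountable and break the count.

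Next, let $\{F_1,\dots,F_M\}\subset\mathcal F$ be a maximal $2\varepsilon$-separated set, so $M=\mathcal M(\mathcal F,2\varepsilon)$ and $\|F_i-F_k\|>2\varepsilon$ for $i\ne k$. By the approximation hypothesis, for each $i$ we can pick some $c_i\in\mathcal C$ with $\|c_i-F_i\|\le\varepsilon$. If the infimum is only approached rather than attained, note that $\mathcal C$ is finite, so the infimum is a minimum. We then claim the assignment $i\mapsto c_i$ is injective. If $c_i=c_k$ with $i\ne k$, the triangle inequality gives
\[
\|F_i-F_k\|\le\|F_i-c_i\|+\|c_k-F_k\|\le2\varepsilon,
\]
which contradicts strict separation. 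Hence
\[
M\le|\mathcal C|\le2^sJ^D.
\]
Taking $\log_2$ gives the stated inequality $s+D\log_2J\ge\log_2\mathcal M(\mathcal F,2\varepsilon)$.

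The strict inequality in the packing definition is exactly what makes the non-strict $\le\varepsilon$ approximation suffice. The argument uses no smoothness, Lipschitz bounds, or tube hypotheses, consistent with the statement's information-theoretic framing.
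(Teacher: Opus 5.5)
Your proof is correct and takes the paper's route: bound the number of declared configurations by $2^sJ^D$, then use the triangle inequality to inject any strictly $2\varepsilon$-separated subset of $\mathcal F$ into that configuration set. One wording fix: apply the injection to a maximum-cardinality $2\varepsilon$-separated subset, or simply to an arbitrary finite one, rather than an inclusion-maximal one, since an inclusion-maximal subset need not attain $\mathcal M(\mathcal F,2\varepsilon)$; the argument itself works unchanged for any such subset.
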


\begin{proof}
There are at most $2^sJ^D$ declared configurations.  Two targets farther than $2\varepsilon$ apart cannot share one $\varepsilon$-accurate approximant by the triangle inequality.  Therefore every $2\varepsilon$ packing injects into the configuration set, so $\mathcal M(\mathcal F,2\varepsilon)\le2^sJ^D$.
\end{proof}

The qualifier ``declared'' is essential.  Variable-length programs, stochastic descriptions, input-conditioned controllers, externally supplied prompts, and continuous uncharged state can carry additional information and require a different accounting.  The theorem is not a universal Kolmogorov-complexity statement; it is a lower law for the same deterministic global-metadata and pure-schedule model used by the upper theorem.

\subsection{Joint allocation under a declared resource budget}
\label{sec:budget-allocation-main}

Let $\ell_\sigma>0$ be the charged units per executed microstep.  It may stand for schedule bits, operations, latency, or energy, provided the same quantity is used consistently.  Suppose
\begin{equation}
 \abs{E_{D,s}-E_{\Omega,\infty}}
 \le\frac{C_{\mathrm{syn}}}{D}+C_{\mathrm{meta}}2^{-s/m}.
 \label{eq:description-start-main}
\end{equation}
For total declared budget $B>1$, choose
\begin{align}
 s_B&=\left\lceil m\log_2\!\left(\max\{1,C_{\mathrm{meta}}B\}\right)\right\rceil,\\
 D_B&=\left\lfloor\frac{B-s_B}{\ell_\sigma}\right\rfloor.
 \label{eq:description-allocation-main}
\end{align}
For budgets large enough that $B\ge2(s_B+\ell_\sigma)$ and $D_B\ge\max\{D_0,2\}$,
\begin{equation}
 E_{D_B,s_B}
 \le E_{\Omega,\infty}
 +\frac{2C_{\mathrm{syn}}\ell_\sigma+1}{B}.
 \label{eq:description-upper-main}
\end{equation}
Indeed $C_{\mathrm{meta}}2^{-s_B/m}\le B^{-1}$ and $D_B\ge B/(2\ell_\sigma)$.  For $C_{\mathrm{meta}}>0$,
\begin{equation}
 s_B=m\log_2B+O(1),
 \qquad
 D_B=\frac{B}{\ell_\sigma}-\frac{m}{\ell_\sigma}\log_2B+O(1).
 \label{eq:description-asymptotic-main}
\end{equation}
Thus, under a positive per-step charge, global codebook metadata consumes only a logarithmic correction and nearly all asymptotic budget is allocated to executed depth.  The fixed-teacher converse simultaneously gives $E_D(F^\star)\ge a\ell_\sigma/(4eB)$ whenever $s+\ell_\sigma D\le B$ and the exposed mode is unaffected by metadata.  The budget exponent is therefore sharp in that regime.

\begin{figure}[t]
\centering
\includegraphics[width=.99\linewidth]{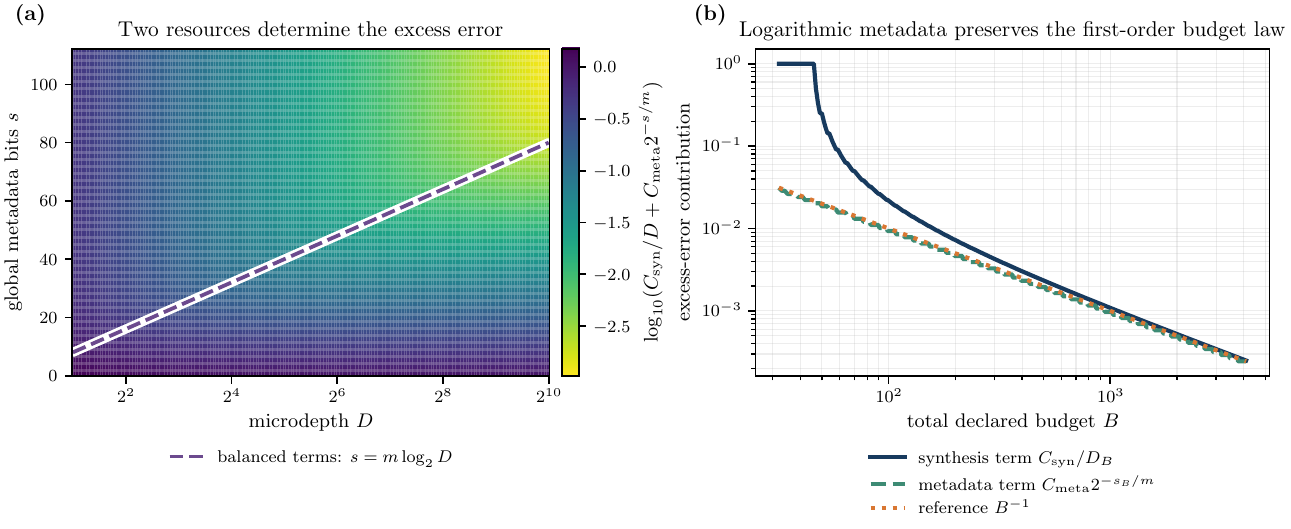}
\caption{\textnormal{Depth and global codebook metadata are distinct resources.}
\textnormal{(a)} The excess-error surface
$C_{\mathrm{syn}}/D+C_{\mathrm{meta}}2^{-s/m}$ for a representative
$m$-dimensional family.  The white-backed curve balances the two terms and is
not an empirical fit.  \textnormal{(b)} Under the declared unit-cost budget in
Section~\ref{sec:budget-allocation-main}, logarithmic metadata keeps its error
commensurate with the synthesis term and preserves the first-order budget law.}
\label{fig:metadata-resources-main}
\end{figure}

Figure~\ref{fig:metadata-resources-main} visualizes the two distinct decay laws and the logarithmic metadata allocation implied by the theorem.

\FloatBarrier

\paragraph{Metadata consequence.}
Executed depth and learned dictionary description are different resources.
Under the compact-family assumptions above, logarithmically growing metadata
is sufficient to keep codebook discretization commensurate with first-order
synthesis, while the packing theorem shows that a family-level approximation
guarantee must pay for both schedule labels and global metadata.  Nominal
weight precision alone does not encode either cost.

\section{Route-Changing Quantized Networks}
\label{sec:hybrid-routing-main}

Hard routing is not a smooth perturbation of a dense block.  A small score error can change the selected expert, the changed expert alters the state, and the state perturbation changes later scores.  The right object is therefore a hybrid feedback loop, not an additive ``router error'' attached after the fact.

A preliminary point prevents a common logical mistake.  If a continuous top-$k$ score path has a uniform strictly positive boundary margin on a connected time interval, its support cannot change on that interval.  A theorem based only on a positive margin is consequently a \emph{frozen-route stability theorem}; it does not describe actual expert swaps.  Route changes require score ties, and well-conditioned changes require those ties to be isolated and transversal.

The bounded-variation version of Theorem~\ref{thm:floor-rate-main} already handles a prescribed piecewise-smooth route: each route event contributes to the temporal variation constant but does not change the $D^{-1}$ exponent.  The remaining problem is to compare a target-route oracle with the state-driven implemented router.

\subsection{Prescribed route times over a continuum input population}
\label{sec:prescribed-route-lp-main}

Before treating a router whose event surface moves with the perturbed state,
consider the simpler but important case in which each input has a prescribed
route schedule.  Shared event times give bounded variation in the full-map
state.  Dispersed event times generally do not, but their concentration gives
a H\"older modulus in $L^p$.

\begin{corollary}[Distributional prescribed-route synthesis]
\label[corollary]{cor:distributional-route-main}
Let $(\Xi,\nu)$ be a probability space, $\Z=L^p(\nu;\mathbb R^q)$ with
$1\le p<\infty$, and let a prescribed route $m^\star(t,\xi)$ have at most
$R$ event-time functions $\tau_r:\Xi\to[0,T]$.  Suppose route changes can
occur only when $t$ crosses a $\tau_r(\xi)$ and, for some $\alpha\in(0,1]$,
\begin{equation}
 \nu\{\xi:\tau_r(\xi)\in(s,t]\}
 \le K_r|t-s|^\alpha
 \qquad(0\le s<t\le T).
 \label{eq:event-time-concentration-main}
\end{equation}
Assume the within-mode fields are pointwise $H_0$-Lipschitz in time, satisfy
the common $L^p$ state-Lipschitz and tube assumptions, and the pointwise field
jump across event $r$ is at most $\Delta_r$.  Then the induced routed atom is
$\vartheta$-H\"older in $L^p$ with
\begin{equation}
 \vartheta=\frac\alpha p,
 \qquad
 H_t\le H_0T^{1-\alpha/p}
       +\sum_{r=1}^R\Delta_rK_r^{1/p}.
 \label{eq:route-holder-main}
\end{equation}
Consequently, Theorem~\ref{thm:holder-time-main} gives a shared-schedule rate
$O(D^{-\alpha/p}+D^{-1})$ in the $L^p(\nu)$ map norm.
\end{corollary}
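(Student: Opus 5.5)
The plan is to verify the time-regularity clause of Assumption~\ref{ass:holder-time-main} for the induced routed atom and then invoke Theorem~\ref{thm:holder-time-main}. Boundedness, state-Lipschitzness, the tube and well-posedness are assumed in the $L^p$ setting, so only the H\"older bound \eqref{eq:holder-time-main} needs proof. Fix an atom $j$, a state $z$ in the tube, and times $0\le s<t\le T$. Write the routed field pointwise as $G_j(t,z)(\xi)=g_{m^\star(t,\xi)}(t,z(\xi),\xi)$. Insert the intermediate term $g_{m^\star(s,\xi)}(t,z(\xi),\xi)$, which is the old mode evaluated at the new time. This splits the difference into a within-mode part $W$ and a mode-switch part $S$. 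Minkowski's inequality in $L^p(\nu)$ then gives $\norm{G_j(t,z)-G_j(s,z)}_{L^p}\le\norm{W}_{L^p}+\norm{S}_{L^p}$.

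\textbf{Within-mode part.} Pointwise $|W(\xi)|\le H_0|t-s|$, and $\nu$ is a probability measure, so $\norm{W}_{L^p}\le H_0|t-s|$. Because $|t-s|\le T$ and $\alpha/p\le1$, we have $|t-s|=|t-s|^{\alpha/p}|t-s|^{1-\alpha/p}\le T^{1-\alpha/p}|t-s|^{\alpha/p}$. This yields the first term of $H_t$.

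\textbf{Switch part.} $S(\xi)$ vanishes unless the mode changes on $(s,t]$. By hypothesis this requires some $\tau_r(\xi)\in(s,t]$. I would bound the jump by telescoping through the successive event times inside $(s,t]$, so that $|S(\xi)|\le\sum_r\Delta_r\one_{\{\tau_r(\xi)\in(s,t]\}}$. Applying Minkowski once more and then \eqref{eq:event-time-concentration-main} gives $\norm{S}_{L^p}\le\sum_r\Delta_r\,\nu\{\tau_r\in(s,t]\}^{1/p}\le\sum_r\Delta_rK_r^{1/p}|t-s|^{\alpha/p}$. Combining the two parts gives \eqref{eq:route-holder-main} with $\vartheta=\alpha/p$. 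Substituting $H_t$ into Theorem~\ref{thm:holder-time-main} then gives the rate $O(D^{-\alpha/p}+D^{-1})$ in the $L^p(\nu)$ norm.

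\textbf{Main obstacle.} The delicate step is the pointwise switch bound. An intermediate mode may be passed through at a later event time, and the jump bound $\Delta_r$ must be read as the field jump across event $r$ for whichever modes are adjacent there. The intermediate time-differences inside $S$ also have to be absorbed consistently. I would handle this with a piecewise telescoping over the ordered event times in $(s,t]$. Each time-shift segment is charged to the $H_0$ term, whose lengths sum to at most $|t-s|$, and each jump is charged to its own $\Delta_r$ indicator. A smaller issue is measurability of $\xi\mapsto G_j(t,z)(\xi)$ so that it lies in $L^p$. This follows if each $\tau_r$ is measurable, since the mode map is then measurable in $\xi$.
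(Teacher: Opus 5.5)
Your proposal is correct and follows the paper's proof: a pointwise bound $H_0|t-s|+\sum_r\Delta_r\one_{\{\tau_r(\xi)\in(s,t]\}}$, then Minkowski in $L^p(\nu)$ with the concentration hypothesis, then $|t-s|\le T^{1-\vartheta}|t-s|^{\vartheta}$, and finally Theorem~\ref{thm:holder-time-main}. One refinement: if the jump bound is known only at the event times, apply the ordered-event telescoping to the full difference $G_j(t,z)-G_j(s,z)$, as the paper does, following each mode from $s$ through the successive $\tau_r(\xi)$ to $t$; if you apply it inside $S$ after splitting off $W$, the $H_0$ term is counted more than once.
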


This result permits continuum-input prescribed route motion and states exactly
how the event-time distribution controls the exponent.  It does not establish
robustness of a state-dependent router to perturbed event surfaces; that
feedback problem is the subject of the next two subsections.

\subsection{Target route, actual route, and event isolation}
\label{sec:routing-setup-main}

Let $N_e$ be the number of experts, let $\TopK_k(s)$ denote a deterministic top-$k$ operator with a fixed tie rule, and let $m^\star(t)$ be a right-continuous target support.  Assume it has isolated event times
\begin{equation}
 0<\tau_1<\cdots<\tau_R<T.
 \label{eq:route-event-times-main}
\end{equation}
At event $r$, one expert $a_r$ enters and one expert $b_r$ leaves; the adjacent supports are denoted by $m_r^-$ and $m_r^+$.  The ideal and implemented score vectors are $s(t,z)$ and $\widehat s(t,z)$.  The relevant pair gap is
\begin{equation}
 \gamma_r(t,z):=s_{a_r}(t,z)-s_{b_r}(t,z),
 \qquad
 \widehat\gamma_r(t,z):=\widehat s_{a_r}(t,z)-\widehat s_{b_r}(t,z).
 \label{eq:route-pair-gaps-main}
\end{equation}

Event transversality alone does not exclude a third expert from entering the support.  We therefore separate \emph{pair crossing} from \emph{mode isolation}.

\begin{definition}[Two-mode isolation radius]
\label[definition]{def:two-mode-isolation-main}
An event neighborhood $U_r$ has isolation radius $\Gamma_r>0$ when every score vector $\widetilde s$ satisfying
\begin{equation}
 \norm{\widetilde s-s(t,z^\star(t))}_\infty<\Gamma_r
 \label{eq:isolation-radius-main}
\end{equation}
has top-$k$ support in $\{m_r^-,m_r^+\}$, and within these two modes the selected support is $m_r^+$ exactly when $\widetilde s_{a_r}>\widetilde s_{b_r}$.  Away from the event neighborhoods, $\Gamma_{\mathrm{out}}>0$ is an isolation radius if the same perturbation preserves the entire target support.
\end{definition}

A sufficient condition is directly checkable: every pairwise ranking inequality other than the $(a_r,b_r)$ swap has gap strictly larger than $2\Gamma_r$, while away from events the target $k$th-to-$(k+1)$st gap is strictly larger than $2\Gamma_{\mathrm{out}}$.  An $\ell_\infty$ score perturbation of radius $\Gamma$ changes any pair gap by less than $2\Gamma$, so these inequalities retain their signs.

For one shared atom schedule $\sigma$, define a target-route oracle and the actual state-driven execution from the same initial state:
\begin{align}
 y_{k+1}
 &=y_k+hG_{\sigma_k}^{m^\star(t_k)}(t_k,y_k),
 \label{eq:target-route-oracle-main}\\
 x_{k+1}
 &=x_k+hG_{\sigma_k}^{\widehat m(t_k,x_k)}(t_k,x_k),
 \qquad
 \widehat m(t,z)=\TopK_k(\widehat s(t,z)).
 \label{eq:actual-routed-main}
\end{align}
The oracle isolates approximation and arithmetic error under the intended route; the second recursion contains the additional hybrid feedback caused by route decisions.

\begin{assumption}[Simple isolated top-$k$ events]
\label[assumption]{ass:simple-route-events-main}
There are disjoint neighborhoods $U_r=(\tau_r-\ell_r,\tau_r+\ell_r)$ and constants $\nu_r,L_r,\eta_{r,D},\Delta_r>0$ such that:
\begin{enumerate}[label=(\roman*),leftmargin=2.1em,itemsep=.32em]
\item the target pair gap is continuous on $\overline U_r$, vanishes at $\tau_r$, and crosses with signed slope bound
\begin{align}
 \gamma_r(t,z^\star(t))&\le-\nu_r(\tau_r-t),
 &&t<\tau_r,\\
 \gamma_r(t,z^\star(t))&\ge \nu_r(t-\tau_r),
 &&t>\tau_r;
 \label{eq:signed-transversality-main}
\end{align}
\item the implemented pair gap obeys the target-path-relative estimate
\begin{equation}
 \abs{\widehat\gamma_r(t,z)-\gamma_r(t,z^\star(t))}
 \le\eta_{r,D}+L_r\norm{z-z^\star(t)};
 \label{eq:route-gap-error-main}
\end{equation}
\item the complete score vector obeys
\begin{equation}
 \norm{\widehat s(t,z)-s(t,z^\star(t))}_\infty
 \le\eta_{0,D}+L_0\norm{z-z^\star(t)};
 \label{eq:route-full-score-error-main}
\end{equation}
\item the target has event and away-event isolation radii as in Definition~\ref{def:two-mode-isolation-main};
\item every routed atom is $L_z$-Lipschitz in state on a common route tube, and adjacent route fields satisfy
\begin{equation}
 \sup_{j,t,z}
 \norm{G_j^{m_r^+}(t,z)-G_j^{m_r^-}(t,z)}
 \le\Delta_r.
 \label{eq:route-field-jump-main}
\end{equation}
\end{enumerate}
Events occur one at a time.  Simultaneous multi-surface intersections require a separate mode graph and are not covered by the theorem below.
\end{assumption}

\subsection{The route-window small-gain theorem}
\label{sec:route-window-theorem-main}

Assume the target-route oracle satisfies
\begin{equation}
 \max_{k\le D}\norm{y_k-z^\star(t_k)}\le\delta_D.
 \label{eq:oracle-route-error-main}
\end{equation}
The quantity $\delta_D$ may already include pure-to-relaxed synthesis, field quantization, increment arithmetic, and metadata error.  The hybrid theorem adds only the price of route disagreement.

\begin{theorem}[Transversal top-$k$ events have certified mismatch windows]
\label[theorem]{thm:route-window-main}
Under Assumption~\ref{ass:simple-route-events-main}, put $S=e^{L_zT}$ and define
\begin{align}
 \chi
 &:=2S\sum_{r=1}^R\frac{\Delta_rL_r}{\nu_r},
 \label{eq:route-small-gain-main}\\
 b_D
 &:=\delta_D
 +2S\sum_{r=1}^R\frac{\Delta_r\eta_{r,D}}{\nu_r}
 +2Sh\sum_{r=1}^R\Delta_r,
 \label{eq:route-forcing-main}\\
 \overline E_D
 &:=\frac{b_D}{1-\chi},
 \label{eq:route-envelope-main}\\
 w_{r,D}
 &:=\frac{\eta_{r,D}+L_r\overline E_D}{\nu_r}.
 \label{eq:route-window-radius-main}
\end{align}
Suppose $\chi<1$, every $w_{r,D}<\ell_r$, the resulting windows are disjoint, all prefixes remain in the common route tube, and
\begin{equation}
 \eta_{0,D}+L_0\overline E_D
 <\min\{\Gamma_{\mathrm{out}},\Gamma_1,\ldots,\Gamma_R\}.
 \label{eq:route-isolation-budget-main}
\end{equation}
The isolation inequality is an explicit a posteriori premise: compute $\overline E_D$ from \eqref{eq:route-envelope-main}, then verify \eqref{eq:route-isolation-budget-main}.
Then
\begin{equation}
 \max_{k\le D}\norm{x_k-z^\star(t_k)}\le\overline E_D,
 \label{eq:route-state-bound-main}
\end{equation}
and the actual and target supports agree at every grid time outside
\begin{equation}
 \bigcup_{r=1}^R
 [\tau_r-w_{r,D},\tau_r+w_{r,D}].
 \label{eq:route-window-union-main}
\end{equation}
The theorem does not assume that the implemented route switches only once inside a window.
\end{theorem}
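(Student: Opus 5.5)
The proof will be a bootstrap (continuity/induction-in-$k$) argument on the grid index, comparing the actual routed execution $x_k$ with the target-route oracle $y_k$. Set $E_k:=\max_{i\le k}\norm{x_i-z^\star(t_i)}$. The induction hypothesis is $E_k\le\overline E_D$. The base case is immediate, since $x_0=y_0=z_0=z^\star(0)$.

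Assume the hypothesis holds up to index $k$. I first show that the support decisions at grid times $t_i$, $i\le k$, are controlled as follows.

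\emph{Away from all windows.} Condition (iii) together with \eqref{eq:route-isolation-budget-main} gives $\norm{\widehat s(t_i,x_i)-s(t_i,z^\star(t_i))}_\infty<\Gamma_{\mathrm{out}}$, or $<\Gamma_r$ inside $U_r$.

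\emph{In $U_r$ but outside the window.} By Definition~\ref{def:two-mode-isolation-main}, the implemented support lies in $\{m_r^-,m_r^+\}$ and is decided by the sign of $\widehat\gamma_r$. By (i) and (ii),
\[
\abs{\widehat\gamma_r(t_i,x_i)-\gamma_r(t_i,z^\star(t_i))}\le\eta_{r,D}+L_r\overline E_D=\nu_rw_{r,D},
\]
while $\abs{\gamma_r(t_i,z^\star(t_i))}\ge\nu_r\abs{t_i-\tau_r}>\nu_rw_{r,D}$. Hence the sign agrees with the target sign and $\widehat m=m^\star$. Here I must check that the target support is $m_r^-$ for $t<\tau_r$ and $m_r^+$ for $t>\tau_r$ consistently with the gap sign, which is built into (i) and (iv).

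\emph{Inside the window.} Mismatch can occur, but the support is still one of the two adjacent modes. The field discrepancy is therefore at most $\Delta_r$, with no assumption on how many times the route switches.

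Next I write the error recursion. Define $e_i=x_i-y_i$. The increment difference is
\[
h\bigl[G^{\widehat m}_{\sigma_i}(t_i,x_i)-G^{m^\star}_{\sigma_i}(t_i,x_i)\bigr]+h\bigl[G^{m^\star}_{\sigma_i}(t_i,x_i)-G^{m^\star}_{\sigma_i}(t_i,y_i)\bigr].
\]
The second bracket is bounded by $hL_z\norm{e_i}$. The first is zero outside the windows and at most $h\Delta_r$ at grid points inside window $r$. The number of such grid points is at most $2w_{r,D}/h+2$, so the forcing from window $r$ sums to at most $2\Delta_rw_{r,D}+2h\Delta_r$. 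The discrete Gronwall inequality with $(1+hL_z)^k\le S$ then gives
\[
\norm{e_{k+1}}\le S\sum_r\bigl(2\Delta_rw_{r,D}+2h\Delta_r\bigr).
\]
Substituting $w_{r,D}=(\eta_{r,D}+L_r\overline E_D)/\nu_r$ and adding $\delta_D$ from \eqref{eq:oracle-route-error-main} yields
\[
\norm{x_{k+1}-z^\star(t_{k+1})}\le b_D+\chi\overline E_D=\overline E_D,
\]
where the last equality is exactly the fixed-point identity defining $\overline E_D$ when $\chi<1$. This closes the induction. Conclusion \eqref{eq:route-state-bound-main} follows, and the support agreement outside \eqref{eq:route-window-union-main} is the case analysis above applied at every $k$.

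The main obstacle is the circularity. The window radius depends on $\overline E_D$, which depends on the windows. The bootstrap must therefore use only the hypothesis at indices $\le k$ to bound the forcing up to step $k$. This works because the step-$(k+1)$ state depends only on decisions at $t_0,\dots,t_k$. Two further points need care. First, the disjointness of the windows and $w_{r,D}<\ell_r$ are what ensure each window sits in a single $U_r$, so that isolation radius applies. Second, the grid-point count in a window must absorb boundary cells, which is where the $2h\Delta_r$ term in $b_D$ comes from; that term must be accounted for exactly.
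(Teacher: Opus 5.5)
Your proposal is correct and follows essentially the same route as the paper: induction on the grid index, using only the state bound at earlier indices to certify route decisions. It then uses transversality plus the isolation budget to confine mismatches to adjacent-mode windows with at most $2w_{r,D}/h+2$ grid points, applies discrete Gronwall against the target-route oracle, and closes with the fixed-point identity $b_D+\chi\overline E_D=\overline E_D$.
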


\begin{proof}
We argue by induction on the grid index.  The initial equality
$x_0=y_0=z^\star(0)$ gives the claim at $n=0$.  Assume the state bound and
route-tube membership hold through index $n$.  First, the state bound at each
index $k\le n$, the target-path-relative score estimates, and signed
transversality imply agreement of the implemented and target pair-gap signs
whenever $|t_k-\tau_r|>w_{r,D}$.  The strict isolation budget rules out every
nonadjacent support.  Thus, through index $n$, mismatches for event $r$ occur
only in the boundary-clipped window and at no more than
$2w_{r,D}/h+2$ grid points.

Second, compare $x$ with the target-route oracle $y$.  A matched update has
only the factor $1+hL_z$; a mismatched update in window $r$ has additional
forcing at most $h\Delta_r$.  Applying discrete Gronwall to the forcing terms
through update $n$ gives
\begin{align}
 \max_{k\le n+1}\norm{x_k-y_k}
 &\le S\sum_{r=1}^R\Delta_r(2w_{r,D}+2h)\nonumber\\
 &=2S\sum_{r=1}^R\frac{\Delta_r\eta_{r,D}}{\nu_r}
   +\chi\overline E_D+2Sh\sum_{r=1}^R\Delta_r.
 \label{eq:route-gronwall-main}
\end{align}
Adding the oracle error gives
\[
 \norm{x_{n+1}-z^\star(t_{n+1})}
 \le b_D+\chi\overline E_D=\overline E_D.
\]
The theorem's route-tube premise places the new state and the interpolation
segment used in the one-step estimate inside the domain of the constants,
which closes the induction.  Reapplying the sign and isolation argument at all
grid times proves \eqref{eq:route-window-union-main}.
\end{proof}

The denominator $1-\chi$ has a direct interpretation.  The factor $L_r/\nu_r$ converts state error into event-window width; $\Delta_r$ converts window width back into state forcing; $S$ propagates that forcing through the remaining horizon.  The sum of these loop gains must be smaller than one.

\begin{corollary}[First-order route-changing law]
\label[corollary]{cor:first-order-routing-main}
If
\begin{equation}
 \delta_D\le\frac{C_\delta}{D},
 \qquad
 \eta_{r,D}\le\frac{C_{\eta,r}}{D},
 \qquad h=\frac TD,
 \label{eq:first-order-route-input-main}
\end{equation}
and the $D$-independent small-gain, tube, and isolation conditions hold, then
\begin{equation}
 \max_{k\le D}\norm{x_k-z^\star(t_k)}
 \le\frac{C_{\mathrm{route}}}{D},
 \qquad
 w_{r,D}\le
 \frac{C_{\eta,r}+L_rC_{\mathrm{route}}}{\nu_rD},
 \label{eq:first-order-route-output-main}
\end{equation}
where
\begin{equation}
 C_{\mathrm{route}}
 =\frac{C_\delta
 +2S\sum_r\Delta_rC_{\eta,r}/\nu_r
 +2ST\sum_r\Delta_r}{1-\chi}.
 \label{eq:first-order-route-constant-main}
\end{equation}
Thus isolated route changes preserve the first-order synthesis exponent when oracle and score errors are first order and the hybrid feedback gain is below one.  Because $w_{r,D}=O(D^{-1})$, the disjoint-window and $w_{r,D}<\ell_r$ premises hold automatically for all sufficiently large $D$ whenever the target event neighborhoods have fixed positive separation and width.
\end{corollary}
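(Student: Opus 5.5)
The corollary follows by substituting the first-order inputs \eqref{eq:first-order-route-input-main} into Theorem~\ref{thm:route-window-main} and reading off the $D$-dependence of each quantity. With $h=T/D$, the forcing term \eqref{eq:route-forcing-main} is bounded by
\[
 b_D\le\frac1D\Bigl(C_\delta+2S\sum_r\Delta_rC_{\eta,r}/\nu_r+2ST\sum_r\Delta_r\Bigr).
\]
The small-gain factor $\chi$ in \eqref{eq:route-small-gain-main} does not depend on $D$, since $S$, $\Delta_r$, $L_r$, and $\nu_r$ are all fixed. Dividing by $1-\chi>0$ then gives $\overline E_D\le C_{\mathrm{route}}/D$, with $C_{\mathrm{route}}$ exactly as in \eqref{eq:first-order-route-constant-main}.

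Inserting this into \eqref{eq:route-window-radius-main} gives
\[
 w_{r,D}\le\frac{C_{\eta,r}/D+L_rC_{\mathrm{route}}/D}{\nu_r}=\frac{C_{\eta,r}+L_rC_{\mathrm{route}}}{\nu_rD},
\]
which is the second claim in \eqref{eq:first-order-route-output-main}. The state bound then follows from \eqref{eq:route-state-bound-main}, once the remaining premises of Theorem~\ref{thm:route-window-main} are verified.

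The main step is to check those premises for all large $D$.

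\textbf{Window containment.} Because $w_{r,D}=O(D^{-1})$ and $\ell_r>0$ is fixed, $w_{r,D}<\ell_r$ holds for all large $D$.

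\textbf{Disjointness.} The event times are isolated, so the fixed separation $\min_r(\tau_{r+1}-\tau_r)$ exceeds $w_{r,D}+w_{r+1,D}$ eventually, and the windows are disjoint.

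\textbf{Isolation budget.} The condition \eqref{eq:route-isolation-budget-main} requires $\eta_{0,D}+L_0\overline E_D<\min\{\Gamma_{\mathrm{out}},\Gamma_r\}$. I read the hypothesis ``$D$-independent isolation conditions hold'' as assuming that $\eta_{0,D}\to0$, for example $\eta_{0,D}=O(D^{-1})$ as part of the first-order score accuracy. Under that reading, the left side is $O(D^{-1})$ and falls below the fixed positive radii for large $D$.

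\textbf{Tube membership.} I take the route-tube containment as given by the stated tube hypothesis. This is the point where the corollary relies on an assumption rather than on a derivation.

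If any of these uniform assumptions were only available at finitely many depths, the conclusion would still hold for all $D$ beyond some threshold $D_1$. I would state that $D_1$ explicitly as the maximum of the thresholds coming from window containment, disjointness, and the isolation budget.
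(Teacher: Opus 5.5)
Your proposal is correct and follows the same route as the paper, which gives no separate proof: substitute the first-order inputs into $b_D$, divide by the $D$-independent factor $1-\chi$, insert the resulting bound on $\overline E_D$ into $w_{r,D}$, and use $w_{r,D}=O(D^{-1})$ to obtain window containment and disjointness for all sufficiently large $D$. Disjointness needs no separate event-separation argument: once $w_{r,D}<\ell_r$, each closed window lies inside $U_r$, and the $U_r$ are disjoint by Assumption~\ref{ass:simple-route-events-main}.
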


Figure~\ref{fig:hybrid-route-window-main} visualizes the two linked conclusions: localization of support errors and control of the downstream state.

\begin{figure}[t]
\centering
\includegraphics[width=.99\linewidth]{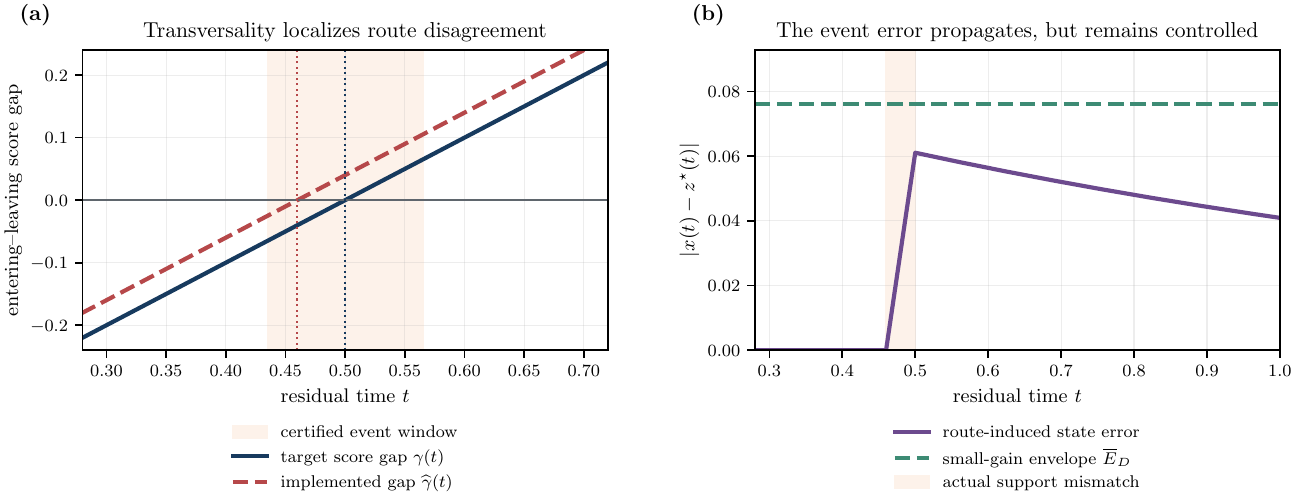}
\caption{A route-changing diagnostic consistent with Theorem~\ref{thm:route-window-main}.  \textbf{Left:} a direct score perturbation makes the implemented route switch one grid point early, but transversality confines the mismatch to the shaded certified event window.  \textbf{Right:} the resulting state error is visible and remains below the small-gain envelope.  The plotted curves are a deterministic theorem illustration, not a claim about a particular MoE checkpoint.}
\label{fig:hybrid-route-window-main}
\end{figure}

\subsection{Event-time uniqueness requires an anti-chattering premise}
\label{sec:anti-chattering-main}

A uniformly small score perturbation can oscillate arbitrarily often near a zero.  Transversality of the target therefore localizes all mismatches but does not by itself imply one implemented switch.  Let $\widetilde x_D$ be the linear interpolation of the Euler states.  If target and routed fields have speed at most $B$, then
\begin{equation}
 \sup_{t\in[0,T]}\norm{\widetilde x_D(t)-z^\star(t)}
 \le\widetilde E_D:=\overline E_D+2Bh.
 \label{eq:continuous-route-state-main}
\end{equation}
The corresponding continuous event window is
\begin{equation}
 \widetilde w_{r,D}
 :=\frac{\eta_{r,D}+L_r\widetilde E_D}{\nu_r}.
 \label{eq:continuous-route-window-main}
\end{equation}
Suppose the target and implemented pair gaps along their respective paths are continuous and absolutely continuous on $U_r$, and there exist $\varsigma_r\in\{-1,1\}$ and $\kappa_r>\zeta_r>0$ such that almost everywhere
\begin{equation}
 \varsigma_r g_r^{\star\prime}(t)\ge\kappa_r,
 \qquad
 \abs{\widehat g_r'(t)-g_r^{\star\prime}(t)}
 \le\kappa_r-\zeta_r.
 \label{eq:event-derivative-separation-main}
\end{equation}
Then $\varsigma_r\widehat g_r$ is strictly increasing, it has exactly one zero $\widehat\tau_r$, and
\begin{equation}
 \abs{\widehat\tau_r-\tau_r}\le\widetilde w_{r,D}.
 \label{eq:event-time-bound-main}
\end{equation}
The a.e. formulation permits derivative jumps at Euler grid points.  Hysteresis or a minimum dwell-time condition can replace derivative separation when that better matches the router implementation.

\subsection{Continuum inputs: why the ambient space must change}
\label{sec:routing-lp-main}

For a continuum of inputs, hard top-$k$ routing can make $\xi\mapsto F(\xi)$ discontinuous.  Such a map may leave $C(\Xi)$ even when every expert is continuous.  A theorem in the continuous-map sup norm is therefore invalid without route-cell restrictions or a positive uniform margin.

Let $(\Xi,\mu)$ be a finite measure space and assume the routed dynamics are input separable apart from the shared atom schedule.  If the hypotheses of Theorem~\ref{thm:route-window-main} hold almost everywhere with measurable input-dependent constants, $\operatorname*{ess\,sup}_\xi\chi(\xi)<1$, and $\overline E_D\in L^p(\mu)$, then pointwise application followed by integration gives
\begin{equation}
 \max_{k\le D}
 \norm{x_k-z^\star(t_k)}_{L^p(\mu)}
 \le\left\|\frac{b_D}{1-\chi}\right\|_{L^p(\mu)}.
 \label{eq:routing-lp-state-main}
\end{equation}
Moreover, if $\mathcal M_k$ is the set of inputs with mismatched support at $t_k$, and $R(\xi)$ is the finite measurable number of isolated target events for input $\xi$ with $R\in L^1(\mu)$,
\begin{equation}
 h\sum_{k=0}^{D-1}\mu(\mathcal M_k)
 \le2\int_\Xi\sum_r w_{r,D}(\xi)\,d\mu(\xi)
 +2h\int_\Xi R(\xi)\,d\mu(\xi).
 \label{eq:routing-lp-occupancy-main}
\end{equation}
A distributional boundary-margin condition
\begin{equation}
 \mu\{\Gamma^\star\le u\}\le C_\Gamma u^\alpha
 \label{eq:margin-density-main}
\end{equation}
then converts an $\ell_\infty$ score error $\varepsilon$ into mismatch mass at most $C_\Gamma(2\varepsilon)^\alpha$ and an $L^p$ route-field perturbation of order $\varepsilon^{\alpha/p}$.  No analogous improvement follows in $L^\infty$ without a positive essential margin.

\subsection{Auditable constants for weighted MoE blocks}
\label{sec:moe-constants-main}

Let each expert field satisfy
\begin{equation}
 \norm{E_e(t,z)}\le B_E,
 \qquad
 \norm{E_e(t,z)-E_e(t,z')}\le L_E\norm{z-z'},
 \label{eq:moe-expert-bounds-main}
\end{equation}
and let the zero-padded normalized gate vector on support $S$ satisfy
\begin{equation}
 \norm{\pi^S(t,z)-\pi^S(t,z')}_1
 \le L_\pi\norm{z-z'}.
 \label{eq:moe-gate-lipschitz-main}
\end{equation}
For
\begin{equation}
 G^S(t,z)=\sum_e\pi_e^S(t,z)E_e(t,z),
 \label{eq:weighted-moe-field-main}
\end{equation}
one may take
\begin{equation}
 L_z\le L_E+B_EL_\pi.
 \label{eq:moe-field-lipschitz-main}
\end{equation}
For one expert swap,
\begin{equation}
 \Delta_r
 \le B_E\sup_{U_r\times\mathcal K}
       \norm{\pi^{S_r^+}-\pi^{S_r^-}}_1
 \le2B_E.
 \label{eq:moe-field-jump-main}
\end{equation}
On a common support, expert error $\eta_E$ and gate error $\eta_\pi$ give
\begin{equation}
 \norm{\widehat G^S-G^S}\le\eta_E+B_E\eta_\pi.
 \label{eq:moe-field-error-main}
\end{equation}
Per-expert score error $\eta_{\mathrm{score}}$ and state Lipschitz constant $L_{\mathrm{score}}$ imply
\begin{equation}
 \eta_{r,D}\le2\eta_{\mathrm{score}},
 \qquad
 L_r\le2L_{\mathrm{score}}.
 \label{eq:moe-gap-constants-main}
\end{equation}
For uniform top-$k$ averaging, a one-expert swap improves \eqref{eq:moe-field-jump-main} to $\Delta_r\le2B_E/k$.  Every term in the small-gain test can therefore be measured or bounded from experts, normalized gates, router scores, and event slopes.

\paragraph{Hybrid-stability consequence.}
Quantized route-changing networks obey a first-order depth law only in a hybrid stability regime: target events must be isolated and transversal, score and oracle errors must shrink at the required rate, and the route--state feedback gain must be below one.  A frozen-margin argument is not a substitute for this theorem.

\FloatBarrier

\section{Certifying the Structural Floor Before Training}
\label{sec:certification-main}
\label{sec:certification}

The structural floor is the decisive asymptotic invariant, but a definition by an infinite-dimensional reachable set is not yet a practical decision procedure.  This section develops a primal--dual hierarchy.  A feasible relaxed trajectory gives an upper bound; a dual witness gives a lower bound.  Their gap quantifies what remains unresolved.  The hierarchy ranges from exact finite-dimensional affine programs to HJB inequalities and modal occupation-measure relaxations.

Throughout this section, $z\in\mathbb R^n$ is a finite-dimensional lifted state, for example the product state obtained by evaluating a shared network on a finite witness set.  Let $K$ be a compact tube containing every relaxed trajectory and let $\phi:K\to\mathbb R$ be a continuous terminal loss.  Define
\begin{equation}
 E_\infty^\phi
 :=\inf_{p(\cdot)}\phi(z_p(T)).
 \label{eq:general-floor-objective-main}
\end{equation}
For a norm-distance problem one may use an epigraph state or a support-function formulation; for a finite witness set the resulting optimization is finite dimensional.

\subsection{HJB subsolutions are rigorous lower certificates}
\label{sec:hjb-main}

\begin{theorem}[HJB subsolution lower certificate]
\label[theorem]{thm:hjb-lower-main}
Let $v$ be the restriction to $[0,T]\times K$ of a $C^1$ function on an open neighborhood.  Suppose
\begin{align}
 v(T,z)&\le\phi(z),
 &&z\in K,
 \label{eq:hjb-terminal-main}\\
 \partial_tv(t,z)+\nabla_zv(t,z)^\top G_j(t,z)&\ge0,
 &&(t,z)\in[0,T]\times K,
 \quad j\in[J].
 \label{eq:hjb-atom-main}
\end{align}
Then
\begin{equation}
 v(0,z_0)\le E_\infty^\phi.
 \label{eq:hjb-value-main}
\end{equation}
The conclusion remains valid for the closed relaxed endpoint set.
\end{theorem}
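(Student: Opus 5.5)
The plan is to show that $v$ is nondecreasing along every relaxed trajectory and then pass to closures. Fix a measurable relaxed control $p:[0,T]\to\Delta_J$. By Assumption~\ref{ass:common-tube-main}(iii), in its finite-dimensional form on $K$, it has a unique Carath\'eodory solution $z_p$ that stays in $K$. Since $z_p$ is absolutely continuous and $v$ is $C^1$ on an open neighborhood of $[0,T]\times K$, the composition $t\mapsto v(t,z_p(t))$ is absolutely continuous. For almost every $t$, the chain rule gives
\[
 \frac{d}{dt}v(t,z_p(t))
 =\partial_tv(t,z_p(t))+\nabla_zv(t,z_p(t))^\top\sum_{j=1}^Jp_j(t)G_j(t,z_p(t)).
\]
Because $p_j(t)\ge0$ and $\sum_jp_j(t)=1$, the right-hand side is a convex combination of the atomwise quantities in \eqref{eq:hjb-atom-main}, each evaluated at $(t,z_p(t))\in[0,T]\times K$. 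Hence the derivative is nonnegative almost everywhere.

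Integrating from $0$ to $T$, which is legitimate by absolute continuity, and then applying the terminal inequality \eqref{eq:hjb-terminal-main} gives
\[
 v(0,z_0)\le v(T,z_p(T))\le\phi(z_p(T)).
\]
Taking the infimum over all measurable $p$ yields \eqref{eq:hjb-value-main}.

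For the closed relaxed endpoint set, take any point $y$ in that closure, with $y=\lim_n z_{p_n}(T)$. Every endpoint $z_{p_n}(T)$ lies in the compact set $K$, so $y\in K$. The function $\phi$ is continuous on $K$, so $\phi(y)=\lim_n\phi(z_{p_n}(T))\ge v(0,z_0)$. Therefore the infimum of $\phi$ over the closed relaxed endpoint set is also bounded below by $v(0,z_0)$.

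The main obstacle is justifying the chain rule and the almost-everywhere derivative identity for a Carath\'eodory solution, since the time dependence of the fields may be only bounded variation. This is handled by the fact that $z_p$ is absolutely continuous, with $\dot z_p$ equal to the relaxed field almost everywhere, together with the $C^1$ regularity of $v$ on an open set containing the graph. That regularity is exactly why the statement requires $v$ to be the restriction of a $C^1$ function rather than merely differentiable on $K$: it gives a well-defined gradient at boundary points of the tube and a valid chain rule there.
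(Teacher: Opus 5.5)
Your proposal is correct and follows the paper's proof essentially step for step. Both use the a.e.\ chain rule along each absolutely continuous relaxed trajectory together with $p_j(t)\ge0$, $\sum_jp_j(t)=1$ to get monotonicity of $v$, then chain $v(0,z_0)\le v(T,z_p(T))\le\phi(z_p(T))$, take the infimum, and extend to the closed endpoint set by continuity of $\phi$ on the compact $K$; your extra remarks on absolute continuity of the composition and on limit endpoints remaining in $K$ just spell out what the paper compresses into one line.
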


\begin{proof}
For a relaxed control $p$ and its absolutely continuous trajectory,
\begin{align}
 \frac{d}{dt}v(t,z_p(t))
 &=\partial_tv(t,z_p(t))
   +\sum_jp_j(t)\nabla v(t,z_p(t))^\top G_j(t,z_p(t))\\
 &\ge0
 \label{eq:hjb-chain-main}
\end{align}
for almost every $t$, because $p_j(t)\ge0$, $\sum_jp_j(t)=1$, and every atom inequality is nonnegative.  Thus
\begin{equation}
 v(0,z_0)\le v(T,z_p(T))\le\phi(z_p(T)).
 \label{eq:hjb-path-main}
\end{equation}
Taking the infimum proves the claim; continuity extends it to endpoint closures.
\end{proof}

A neural or polynomial candidate is not automatically a certificate.  Inequalities \eqref{eq:hjb-terminal-main}--\eqref{eq:hjb-atom-main} must be verified on the entire declared tube by exact algebra, sound interval bounds, a proof-producing SDP, or another global enclosure.  Sampled residuals are only diagnostics.

\subsection{Compositional HJB certificates with a coupling budget}
\label{sec:compositional-hjb-main}

A monolithic certificate can be as difficult as the original control problem.
Residual and Transformer architectures, however, are naturally block
structured.  Local lower certificates remain valid globally when the omitted
couplings are charged explicitly.

\begin{theorem}[Coupling-robust compositional HJB certificate]
\label[theorem]{thm:compositional-hjb-main}
Let $K=K_1\times\cdots\times K_M$, $z=(z_1,\ldots,z_M)$, and suppose
\begin{equation}
 G_j(t,z)_i=g_{i,j}(t,z_i)+c_{i,j}(t,z),
 \qquad i\in[M].
 \label{eq:compositional-dynamics-main}
\end{equation}
Let the terminal loss satisfy $\phi(z)\ge\sum_i\phi_i(z_i)$.  For every block,
let each $v_i$ be the restriction to $[0,T]\times K_i$ of a $C^1$ function defined on an open neighborhood and satisfy
\begin{align}
 v_i(T,z_i)&\le\phi_i(z_i),\nonumber\\
 \partial_tv_i(t,z_i)+\nabla v_i(t,z_i)^\top g_{i,j}(t,z_i)&\ge0
 \quad\text{for every }j.
 \label{eq:local-hjb-main}
\end{align}
If a nonnegative continuous function $\beta$ verifies
\begin{equation}
 \sum_{i=1}^M\nabla v_i(t,z_i)^\top c_{i,j}(t,z)
 \ge-\beta(t)
 \quad\text{for every }(t,z,j),
 \label{eq:coupling-budget-main}
\end{equation}
then
\begin{equation}
 v_{\rm comp}(t,z)
 =\sum_{i=1}^Mv_i(t,z_i)-\int_t^T\beta(s)\,ds
 \label{eq:compositional-value-main}
\end{equation}
is a valid global HJB lower certificate and
\begin{equation}
 E_\infty^\phi
 \ge\sum_{i=1}^Mv_i(0,z_{i,0})-\int_0^T\beta(s)\,ds.
 \label{eq:compositional-lower-main}
\end{equation}
In particular, if continuous nonnegative envelopes $a_i,b_i$ satisfy
$\norm{\nabla v_i}_*\le a_i(t)$ and
$\norm{c_{i,j}}\le b_i(t)$ uniformly, one may take
$\beta(t)=\sum_i a_i(t)b_i(t)$.
\end{theorem}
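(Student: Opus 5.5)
The plan is to reduce the statement to Theorem~\ref{thm:hjb-lower-main}: check that $v_{\rm comp}$ satisfies its terminal and atomwise hypotheses on $K=K_1\times\cdots\times K_M$, then read off the lower bound. Regularity comes first. Each $v_i$ extends to a $C^1$ function on an open neighborhood of $[0,T]\times K_i$. The sum $\sum_i v_i(t,z_i)$ is therefore $C^1$ on the product of those neighborhoods, which is an open neighborhood of $[0,T]\times K$. Since $\beta$ is continuous, $t\mapsto\int_t^T\beta(s)\,ds$ is $C^1$ with derivative $-\beta(t)$. To make $\beta$ defined on an open neighborhood of $[0,T]$, I will extend it continuously beyond the endpoints, for instance by constants; this does not affect any value on $[0,T]$. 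So $v_{\rm comp}$ is the restriction of a $C^1$ function, as Theorem~\ref{thm:hjb-lower-main} requires.

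The terminal inequality is immediate. At $t=T$ the integral term vanishes, so
\[
v_{\rm comp}(T,z)=\sum_i v_i(T,z_i)\le\sum_i\phi_i(z_i)\le\phi(z).
\]

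For the atom inequality, fix $j$ and $(t,z)\in[0,T]\times K$ and differentiate. The block decomposition of $G_j$ gives
\[
\partial_t v_{\rm comp}+\nabla_z v_{\rm comp}^\top G_j
=\sum_i\Bigl[\partial_t v_i+\nabla v_i^\top g_{i,j}(t,z_i)\Bigr]+\sum_i\nabla v_i^\top c_{i,j}(t,z)+\beta(t).
\]
Each bracket is nonnegative by the local HJB inequality \eqref{eq:local-hjb-main}, because $z_i\in K_i$. The coupling sum is at least $-\beta(t)$ by \eqref{eq:coupling-budget-main}. The whole expression is therefore nonnegative, and Theorem~\ref{thm:hjb-lower-main} gives $v_{\rm comp}(0,z_0)\le E_\infty^\phi$, which is exactly \eqref{eq:compositional-lower-main}. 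This argument also inherits the extension to the closed relaxed endpoint set.

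The only point that needs care is the $\beta$ construction in the final clause. By duality of norms,
\[
\nabla v_i^\top c_{i,j}\ge-\norm{\nabla v_i}_*\,\norm{c_{i,j}}\ge-a_i(t)b_i(t),
\]
so \eqref{eq:coupling-budget-main} holds with $\beta=\sum_i a_ib_i$. This $\beta$ is continuous and nonnegative as a sum of products of continuous nonnegative functions. I expect no real obstacle here. The main thing to watch is that the local inequalities are used only at points $z_i\in K_i$, which holds because $z\in K$ is a product point. The coupling budget, by contrast, must hold for the full state $z$, and that is exactly how it is hypothesized.
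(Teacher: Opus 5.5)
Your proposal is correct and follows the paper's proof essentially step for step: the terminal inequality at $t=T$, the atomwise computation in which $+\beta(t)$ cancels the coupling budget, an application of Theorem~\ref{thm:hjb-lower-main}, and the dual-norm bound giving $\beta=\sum_i a_ib_i$. Your extra care is a harmless refinement that the paper leaves implicit. You extend $\beta$ beyond $[0,T]$, and you note that the domain of $\sum_i v_i(t,z_i)$ is open. It is more accurately the intersection of preimages of the $U_i$ than a literal product, since $t$ is shared.
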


The theorem does not claim that arbitrary local certificates compose for
free.  It identifies the exact loss incurred by inter-block dependence.  It
is therefore useful both constructively and diagnostically: a large coupling
budget shows where a proposed modular certificate is too weak.  The sign and
terminal calculations are given in Appendix~\ref{app:merged-exact-proofs}.

\subsection{Full-map support witnesses}
\label{sec:support-witness-main}

Let $F^\star$ be a target in a Banach space $\Z$ and let $\lambda\in\Z^*$ satisfy $\norm{\lambda}_*\le1$.  For every reachable $y$,
\begin{equation}
 \norm{F^\star-y}\ge\inner{\lambda}{F^\star-y},
 \label{eq:dual-norm-main}
\end{equation}
so
\begin{equation}
 E_\infty(F^\star)
 \ge\inner{\lambda}{F^\star}
 -\sup_{y\in\RR_{\Rel}}\inner{\lambda}{y}.
 \label{eq:support-basic-main}
\end{equation}
If a $C^1$ function $w$ satisfies
\begin{align}
 w(T,z)&\ge\inner{\lambda}{z},
 \label{eq:support-terminal-main}\\
 \partial_tw+\nabla w^\top G_j&\le0
 \quad\text{for every atom},
 \label{eq:support-hjb-main}
\end{align}
then $w$ is nonincreasing along every relaxed trajectory and
\begin{equation}
 E_\infty(F^\star)
 \ge\inner{\lambda}{F^\star}-w(0,z_0).
 \label{eq:support-certified-main}
\end{equation}
For $\Z=C(\Xi;\mathbb R^q)$ with the supremum norm, every finite signed point-evaluation functional
\begin{equation}
 \lambda(F)=\sum_{i=1}^Na_i^\top F(\xi_i),
 \qquad
 \sum_i\norm{a_i}_2\le1,
 \label{eq:finite-point-witness-main}
\end{equation}
is admissible.  This is the bridge from a finite witness computation to a rigorous lower bound on the full-map norm: the witness is a legitimate dual functional, not a claim that the finite sample uniformly covers the domain.

\subsection{An exact affine dual and rational certificates}
\label{sec:affine-certificate-main}

A particularly useful scalable case occurs when the atoms share one linear drift:
\begin{equation}
 G_j(t,z)=A(t)z+b_j(t).
 \label{eq:common-drift-affine-main}
\end{equation}
Let $\Phi(t,s)$ be the fundamental matrix of $\dot z=A(t)z$, let $P:\mathbb R^n\to\mathbb R^q$ be linear, and define
\begin{equation}
 a=P\Phi(T,0)z_0,
 \qquad
 c_j(t)=P\Phi(T,t)b_j(t).
 \label{eq:affine-output-data-main}
\end{equation}
Variation of constants gives
\begin{equation}
 Pz_p(T)=a+\int_0^T\sum_jp_j(t)c_j(t)\,dt.
 \label{eq:affine-variation-main}
\end{equation}
The closed relaxed output set is compact and convex, and its support function is
\begin{equation}
 h_{\mathcal Y_{\Rel}}(\lambda)
 =\inner{\lambda}{a}
 +\int_0^T\max_j\inner{\lambda}{c_j(t)}\,dt.
 \label{eq:affine-support-main}
\end{equation}
Indeed, the integrand ranges over the convex hull of finitely many $c_j(t)$, and a maximizing atom may be selected measurably with a fixed tie rule.  Fenchel duality for distance to a closed convex set then yields the exact formula
\begin{equation}
 \dist(y^\star,\mathcal Y_{\Rel})
 =\max_{\norm{\lambda}_*\le1}
 \left\{
 \inner{\lambda}{y^\star-a}
 -\int_0^T\max_j\inner{\lambda}{c_j(t)}\,dt
 \right\}.
 \label{eq:affine-exact-distance-main}
\end{equation}
No state-space HJB grid is required.

For the constant-drift-free, finite-witness case, put $T=1$, $A=0$, $z_0=0$, and $B=[b_1\ \cdots\ b_J]$.  In the $\ell^\infty$ output norm,
\begin{align}
 E_{\mathrm{aff}}
 &=\min_{\alpha\ge0,\,\one^\top\alpha=1}
   \norm{B\alpha-y^\star}_\infty,
 \label{eq:affine-primal-main}\\
 &=\max_{\norm{\lambda}_1\le1}
   \left\{\lambda^\top y^\star-
   \max_j\lambda^\top b_j\right\}.
 \label{eq:affine-dual-main}
\end{align}
When $B$ and $y^\star$ have exact rational representations, any rational feasible pair $(\widehat\alpha,\widehat\lambda)$ gives an exact bracket
\begin{equation}
 L:=\widehat\lambda^\top y^\star-
       \max_j\widehat\lambda^\top b_j
 \le E_{\mathrm{aff}}
 \le
 U:=\norm{B\widehat\alpha-y^\star}_\infty.
 \label{eq:affine-rational-bracket-main}
\end{equation}
After clearing denominators of the complete problem data and witnesses, every inequality is checked with integer arithmetic.  Floating-point optimization may propose the witness, but it does not certify it.

The accompanying matrix example lifts sixteen width-eight witness states under sixteen signed 4-bit $8\times8$ residual atoms, giving dimension $128$.  A rational primal point and rational dual witness certify a floor bracket of width approximately $1.1\times10^{-11}$ around $0.12575462209$.  Section~\ref{sec:certified-evidence-main} returns to its nonlinear full-domain counterpart.

\subsection{Nonlinear systems through modal occupation measures}
\label{sec:occupation-main}

The affine dual is exact but specialized.  For nonlinear finite-dimensional fields, introduce one nonnegative occupation measure $\mu_j$ per atom and a terminal probability measure $\mu_T$.  The modal measures record where and for how long each operation is used.  Consider
\begin{align}
 V_{\mathrm{LP}}:=\inf_{\mu_1,\ldots,\mu_J,\mu_T}
 &\quad\int_K\phi(z)\,d\mu_T(z)
 \label{eq:occupation-objective-main}\\
 \text{subject to }&\quad
 \mu_j\in\mathcal M_+([0,T]\times K),
 \quad\mu_T\in\mathcal P(K),\nonumber\\
 &\quad\sum_j\pi_{t\#}\mu_j=dt,
 \label{eq:occupation-time-marginal-main}\\
 &\quad
 \int_Kv(T,z)\,d\mu_T(z)-v(0,z_0)\nonumber\\
 &\qquad=
 \sum_j\int_{[0,T]\times K}
 \left(\partial_tv+\nabla v^\top G_j\right)d\mu_j
 \quad\forall v\in C^1.
 \label{eq:occupation-liouville-main}
\end{align}

\begin{theorem}[Exact modal occupation-measure representation]
\label[theorem]{thm:occupation-main}
Assume $K$ is nonempty and compact, $\phi\in C(K)$, the fields extend to bounded continuous functions on a neighborhood of $[0,T]\times K$, are uniformly Lipschitz in state, and every relaxed trajectory remains in $K$.  Then
\begin{equation}
 V_{\mathrm{LP}}=E_\infty^\phi,
 \label{eq:occupation-equality-main}
\end{equation}
and the measure program attains its optimum.  The time-marginal constraint is redundant once \eqref{eq:occupation-liouville-main} is imposed for all state-independent tests and $\mu_T$ has unit mass.  The HJB program in Theorem~\ref{thm:hjb-lower-main} is the conic dual after removing this redundancy; weak duality always holds.  Equality of the infinite-dimensional primal and dual is claimed only under the additional closed-image qualification detailed in Appendix~\ref{app:resource-theory-proofs}; that qualification is not automatic for every instance.
\end{theorem}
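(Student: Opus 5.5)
We prove the two inequalities $V_{\mathrm{LP}}\le E_\infty^\phi$ and $V_{\mathrm{LP}}\ge E_\infty^\phi$ separately, then deduce attainment, the redundancy claim and weak duality.

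\textbf{Upper bound.} For every relaxed control $p$ with trajectory $z_p\subset K$, set $d\mu_j=p_j(t)\,dt\otimes\delta_{z_p(t)}$ and $\mu_T=\delta_{z_p(T)}$. The time marginal is $\sum_jp_j\,dt=dt$. The Liouville identity \eqref{eq:occupation-liouville-main} is the fundamental theorem of calculus applied to $t\mapsto v(t,z_p(t))$, exactly as in \eqref{eq:hjb-chain-main}. The objective equals $\phi(z_p(T))$, so taking the infimum over $p$ gives $V_{\mathrm{LP}}\le E_\infty^\phi$.

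\textbf{Attainment and the hard inequality.} We first show attainment. Testing \eqref{eq:occupation-liouville-main} with $v(t,z)=t$ gives total mass $\sum_j\mu_j([0,T]\times K)=T$, and $\mu_T$ is a probability measure. Both live on compact sets, so the feasible set is weak-$*$ compact. The constraints are weak-$*$ closed, because $\partial_tv+\nabla v^\top G_j$ is continuous on $[0,T]\times K$. The objective is continuous, so a minimizer exists.

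The main obstacle is the reverse inequality $V_{\mathrm{LP}}\ge E_\infty^\phi$: every feasible tuple must be shown to be a superposition of relaxed trajectories. We plan to use the convexification argument of \citet{LasserreEtAl2008}.
\begin{enumerate}[leftmargin=1.5em,itemsep=.15em]
\item Set $\mu=\sum_j\mu_j$. Disintegrate $\mu(dt,dz)=dt\,\mu_t(dz)$, which is possible by the time-marginal constraint.
\item Disintegrate each $\mu_j$ with respect to $\mu$ to obtain densities $p_j(t,z)\ge0$ with $\sum_jp_j=1$ $\mu$-a.e. This produces a relaxed, state-feedback velocity field $\bar G(t,z)=\sum_jp_j(t,z)G_j(t,z)$.
\item Observe that \eqref{eq:occupation-liouville-main} says exactly that $(\mu_t)$ solves the continuity equation $\partial_t\mu_t+\nabla\cdot(\bar G\mu_t)=0$ weakly, with $\mu_0=\delta_{z_0}$ and $\mu_T$ as terminal marginal.
\item Apply the superposition principle \citep{AmbrosioGigliSavare2008}. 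Boundedness of the fields makes $\int\!\!\int|\bar G|\,d\mu_t\,dt<\infty$, so there is a probability measure $\eta$ on absolutely continuous curves solving $\dot\gamma=\bar G(t,\gamma)$ with $(e_t)_\#\eta=\mu_t$.
\item For $\eta$-a.e.\ curve, $\dot\gamma(t)$ lies in $\conv\{G_j(t,\gamma(t))\}$. By measurable selection (Filippov's lemma), such a curve is the trajectory of an open-loop relaxed control, so $\gamma(T)$ lies in the relaxed endpoint set.
\end{enumerate}
Then $\int\phi\,d\mu_T=\int\phi(\gamma(T))\,d\eta\ge E_\infty^\phi$. The delicate points are the a.e.\ versus everywhere issues in the disintegration and the fact that uniqueness of solutions from $z_0$ is not needed here.

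\textbf{Redundancy, weak duality, strong duality.} For redundancy, plug a state-independent test $v(t)$ into \eqref{eq:occupation-liouville-main}. Since $\mu_T(K)=1$, this gives $v(T)-v(0)=\sum_j\int v'(t)\,d\mu_j$ for all $v\in C^1$. Because the derivatives $v'$ are dense in $C[0,T]$, this is exactly $\sum_j\pi_{t\#}\mu_j=dt$.

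For weak duality, take any $v$ satisfying \eqref{eq:hjb-terminal-main}--\eqref{eq:hjb-atom-main} and integrate against a feasible tuple:
\[
\int\phi\,d\mu_T\ge\int v(T,\cdot)\,d\mu_T=v(0,z_0)+\sum_j\int\bigl(\partial_tv+\nabla v^\top G_j\bigr)\,d\mu_j\ge v(0,z_0).
\]
So the HJB program is the conic dual and weak duality holds. Strong duality of the infinite-dimensional pair will only be asserted under the closed-image qualification of Appendix~\ref{app:resource-theory-proofs}, via the standard conic duality theorem.
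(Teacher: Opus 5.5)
Your proposal is correct and follows the paper's proof essentially step for step. The shared steps are: path-occupation measures for $V_{\mathrm{LP}}\le E_\infty^\phi$; disintegration of $\mu=\sum_j\mu_j$ with Radon--Nikodym densities, followed by the continuity equation and the Ambrosio--Gigli--Savar\'e superposition principle for the reverse inequality; weak-$*$ compactness for attainment; the state-independent test $\psi(t)=\int_0^t f$ for redundancy; and weak duality, with strong duality deferred to the closed-image qualification. The differences are cosmetic: the paper uses $t\mapsto p(t,\gamma(t))$ directly as the open-loop relaxed control instead of invoking Filippov's selection lemma, and it cites the HJB lower-certificate theorem for weak duality rather than integrating the HJB inequalities against a feasible measure tuple.
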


\begin{proof}[Proof of the primal identity]
Every relaxed trajectory $z_p$ gives feasible measures
\begin{equation}
 d\mu_j(t,z)=p_j(t)\,dt\,\delta_{z_p(t)}(dz),
 \qquad
 \mu_T=\delta_{z_p(T)},
 \label{eq:path-occupation-main}
\end{equation}
so $V_{\mathrm{LP}}\le E_\infty^\phi$.  Conversely, set $\mu=\sum_j\mu_j$ and disintegrate
$\mu(dt,dz)=dt\,\mu_t(dz)$.  The weak Liouville equation and bounded
velocity admit a narrowly continuous representative $t\mapsto\mu_t$ with
initial law $\delta_{z_0}$ and terminal law $\mu_T$.  Since $\mu_j\ll\mu$, the Radon--Nikodym derivatives $p_j=d\mu_j/d\mu$ are simplex valued almost everywhere.  The Liouville identity is the continuity equation for
\begin{equation}
 b(t,z)=\sum_jp_j(t,z)G_j(t,z).
 \label{eq:occupation-velocity-main}
\end{equation}
The superposition principle for continuity equations gives a probability measure on absolutely continuous integral curves of $b$, all starting at $z_0$, whose terminal law is $\mu_T$ \citep{AmbrosioGigliSavare2008}.  Along almost every curve, the composition $t\mapsto p(t,\gamma(t))$ is measurable and simplex valued.  It is therefore an admissible open-loop relaxed control for that individual curve, even though the disintegrated field $p(t,z)$ is state dependent before the curve is selected.  Therefore
\begin{equation}
 \int\phi\,d\mu_T
 =\int\phi(\gamma(T))\,d\Pi(\gamma)
 \ge E_\infty^\phi.
 \label{eq:occupation-superposition-main}
\end{equation}
Taking the state-independent test $v(t,z)=\psi(t)$ in \eqref{eq:occupation-liouville-main} shows that $\sum_j\pi_{t\#}\mu_j=dt$, so the time-marginal constraint is indeed redundant once $\mu_T$ has unit mass.  Compact support gives weak-* compactness and attainment.  The remaining conic-duality details are stated explicitly in Appendix~\ref{app:resource-theory-proofs} rather than being assumed automatically.
\end{proof}

The superposition step matters.  Without it, a feasible positive solution of the Liouville equation could be mistaken for one physical trajectory.  The theorem shows exactly what it represents: a probability mixture of terminal points of admissible relaxed trajectories, which is sufficient because an average terminal loss cannot beat the minimum loss among the mixture components.

\subsection{Moment--SOS lower hierarchies}
\label{sec:sos-main}

Suppose $\mathcal K=[0,T]\times K$ and a declared terminal set
$K_T\subseteq K$ are compact basic semialgebraic sets with Archimedean
quadratic modules, and the fields and terminal loss are polynomial.  Define the
terminal-constrained relaxed value
\begin{equation}
 E_{\infty,K_T}^\phi
 :=\inf\{\phi(z_p(T)):p(\cdot)\text{ relaxed-admissible},\ z_p(T)\in K_T\}.
 \label{eq:terminal-constrained-floor-main}
\end{equation}
Take $K_T=K$ when no additional terminal constraint is intended.  Introduce
one truncated moment sequence $y_j$ per modal measure and $y_T$ for the
terminal measure.  At order $r$, impose positive semidefinite moment and
localizing matrices on $\mathcal K$ and $K_T$, together with
\begin{equation}
 L_{y_T}(v(T,\cdot))-v(0,z_0)
 =\sum_jL_{y_j}(\partial_tv+\nabla v^\top G_j)
 \label{eq:moment-liouville-main}
\end{equation}
for every monomial whose displayed moments have degree at most $2r$.  If $L_r$
denotes the resulting minimum, then for all sufficiently large $r$,
\begin{equation}
 L_r\le L_{r+1}\le E_{\infty,K_T}^\phi,
 \qquad
 \lim_{r\to\infty}L_r=E_{\infty,K_T}^\phi.
 \label{eq:sos-convergence-main}
\end{equation}
This is the modal switched-system hierarchy of
\citet{ClaeysDaafouzHenrion2016}, augmented by a terminal measure and
objective.  The finite conic dual is a degree-restricted SOS HJB certificate.
Asymptotic hierarchy convergence is distinct from finite-order SDP
primal--dual equality, which requires its own no-gap qualification.

A degree-two example is exact.  For $\dot x\in\{-1,1\}$, $x(0)=0$, $t\in[0,1]$, $x\in[-1,1]$, and $\phi(x)=(x-2)^2$, the value is one and
\begin{equation}
 v(t,x)=(1+t-x)^2
 \label{eq:exact-sos-witness-main}
\end{equation}
satisfies $v(1,x)=\phi(x)$,
\begin{equation}
 \partial_tv+\partial_xv(+1)=0,
 \qquad
 \partial_tv+\partial_xv(-1)=4t+4(1-x),
 \label{eq:exact-sos-residual-main}
\end{equation}
which is an exact quadratic-module certificate on the box.  The all-$+1$ trajectory supplies the matching primal cost.  The bundle includes the symbolic identity and verifier.

\subsection{From bounds to a decision}
\label{sec:three-way-decision-main}

Let a feasible relaxed computation give $E_\infty\le U$, let a verified dual witness give $L\le E_\infty$, and let the finite implemented architecture satisfy
\begin{equation}
 \abs{E_D^{\mathrm{impl}}-E_\infty}\le R_D,
 \label{eq:implementation-radius-decision-main}
\end{equation}
where $R_D$ contains the depth, metadata, arithmetic, and---when relevant---hybrid route radius.

\begin{corollary}[Feasible, impossible, or unresolved]
\label[corollary]{cor:three-way-decision-main}
For a required tolerance $\varepsilon_H$,
\begin{align}
 U+R_D\le\varepsilon_H
 &\Longrightarrow\text{certified feasible at depth $D$},
 \label{eq:decision-feasible-main}\\
 L-R_D>\varepsilon_H
 &\Longrightarrow\text{certified impossible at depth $D$},
 \label{eq:decision-finite-impossible-main}\\
 L>\varepsilon_H
 &\Longrightarrow\text{certified asymptotically impossible}.
 \label{eq:decision-asymptotic-impossible-main}
\end{align}
Every other case is unresolved.
\end{corollary}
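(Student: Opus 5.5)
The corollary follows by elementary interval arithmetic once the hypotheses are placed in one chain. The inputs are the primal bound $E_\infty\le U$, the verified dual bound $L\le E_\infty$, and the implementation radius \eqref{eq:implementation-radius-decision-main}. Together they bracket the implemented error:
\[
 \max\{0,L-R_D\}\le E_\infty-R_D\le E_D^{\mathrm{impl}}\le E_\infty+R_D\le U+R_D .
\]
The lower truncation at zero holds because $E_D^{\mathrm{impl}}$ is a distance. The first step of the plan is simply to record this two-sided enclosure.

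\textbf{Feasible branch.} If $U+R_D\le\varepsilon_H$, the upper end of the enclosure gives $E_D^{\mathrm{impl}}\le\varepsilon_H$. The plan is to read this as \emph{feasibility} in the sense that the distance from $F^\star$ to the implemented depth-$D$ class is at most $\varepsilon_H$. If $E_D^{\mathrm{impl}}$ is defined as an infimum over schedules (and metadata), an infimum not exceeding $\varepsilon_H$ gives schedules with error at most $\varepsilon_H+\epsilon'$ for every $\epsilon'>0$. Since the implemented class at fixed $D$ and $s$ is finite ($J^D$ schedules times $2^s$ codes), the infimum is attained, so $\epsilon'=0$.

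\textbf{Impossible branches.} If $L-R_D>\varepsilon_H$, the lower end of the enclosure gives $E_D^{\mathrm{impl}}>\varepsilon_H$. Hence no schedule or metadata choice in the declared depth-$D$ family reaches tolerance $\varepsilon_H$, regardless of the optimizer. For the asymptotic claim, $L>\varepsilon_H$ gives $E_\infty>\varepsilon_H$. Here I would invoke \cref{thm:floor-rate-main}, or \cref{thm:learned-codebook-main} together with \cref{thm:hardware-transfer-main} when metadata and arithmetic are present. What matters is that $R_D\to0$ as $D\to\infty$ and $s\to\infty$ (and, for arithmetic, under the scaling of \cref{thm:error-feedback-main}). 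Then $E_D^{\mathrm{impl}}\ge E_\infty-R_D>\varepsilon_H$ for all sufficiently large resources. Even without that decay, the conclusion is that the relaxed closure itself, which contains all infinite-depth limits, misses the tolerance. That is exactly the statement that no increase in depth can cross the floor, since pure endpoints converge into $\RR_{\Rel}$.

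\textbf{Main obstacle.} The arithmetic is trivial; the obstacle is being precise about what ``asymptotically impossible'' quantifies over. I would state it as: $\liminf_{D}E_D^{\mathrm{impl}}\ge E_\infty>\varepsilon_H$ whenever $R_D\to0$, and also $\dist(F^\star,\RR_{\Rel})>\varepsilon_H$ unconditionally. The remaining ``unresolved'' clause is a definition, namely the complement of the three cases, so it needs no proof. I would add one remark that the cases are mutually consistent: feasible and impossible cannot both fire, because $L\le U$ forces $L-R_D\le U+R_D$.
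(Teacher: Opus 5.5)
Your argument is the paper's: the chain $L-R_D\le E_\infty-R_D\le E_D^{\mathrm{impl}}\le E_\infty+R_D\le U+R_D$ gives the two finite-depth verdicts, and $E_\infty\ge L$ gives the asymptotic one. There is one cosmetic slip in your displayed enclosure: $\max\{0,L-R_D\}\le E_\infty-R_D$ fails when $E_\infty<R_D$, so the zero truncation belongs directly under $E_D^{\mathrm{impl}}$, and that weaker statement is all you actually use.
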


\begin{proof}
Use $E_D^{\mathrm{impl}}\le E_\infty+R_D\le U+R_D$ for feasibility and $E_D^{\mathrm{impl}}\ge E_\infty-R_D\ge L-R_D$ for finite-depth impossibility.  The asymptotic statement follows from $E_\infty\ge L$.
\end{proof}

Figure~\ref{fig:certificate-decisions-main} makes the decision geometry explicit: the method returns a proof-backed verdict only when the certified interval lies wholly on one side of the requested tolerance.

\begin{figure}[t]
\centering
\includegraphics[width=.88\linewidth]{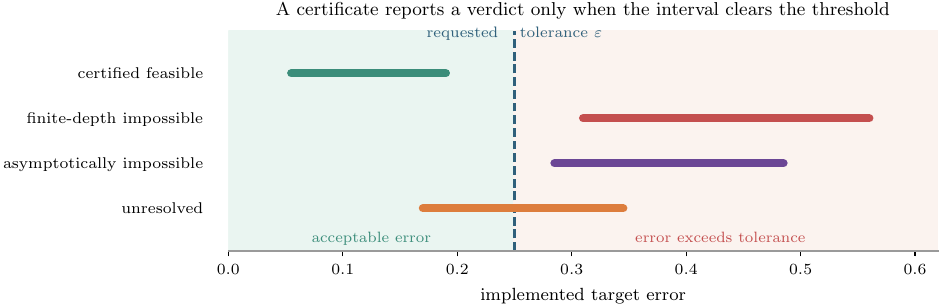}
\caption{The decision interface.  A primal upper bound and a verified dual lower bound define an interval for the implemented target error after adding the finite-resource radius.  The interval can certify feasibility, finite-depth impossibility, or asymptotic impossibility; overlap with the required tolerance is reported as unresolved rather than converted into a post-hoc verdict.}
\label{fig:certificate-decisions-main}
\end{figure}

\paragraph{Certification consequence.}
The resource law can be used before expensive training.  A sound implementation radius plus a primal--dual floor bracket yields a three-way decision.  The unresolved outcome is scientifically essential: it distinguishes lack of representability from lack of certificate strength.

\FloatBarrier

\section{Architecture Specializations}
\label{sec:architecture-specializations-main}
\label{sec:architectures}

The abstract theory is useful only if its tube, Lipschitz, dictionary, arithmetic, and routing constants can be instantiated without hiding architecture-specific behavior inside one symbol.  We develop two complementary specializations.  Contractive soft-threshold networks provide exact nonlinear floors and exposed-mode converses.  Pre-normalized Transformer and MoE blocks provide explicit normalization, attention, MLP, accumulator, and router error ledgers on bounded deployment tubes.

\subsection{Contractive soft-threshold and unfolded networks}
\label{sec:soft-threshold-main}

For $\lambda\ge0$, define coordinatewise soft thresholding
\begin{equation}
 \soft_\lambda(u)_i
 =\operatorname{sign}(u_i)(\abs{u_i}-\lambda)_+.
 \label{eq:soft-threshold-main}
\end{equation}
It is nonexpansive and obeys
\begin{equation}
 \norm{\soft_\lambda(u)-\soft_{\lambda'}(v)}_2
 \le\norm{u-v}_2+\sqrt d\,\abs{\lambda-\lambda'}.
 \label{eq:soft-lipschitz-main}
\end{equation}
On the input ball $\mathcal X=B_2^m(Y)$, take $\Z=C(\mathcal X;\mathbb R^d)$ and let a complete atom $a_j=(W_j,S_j,A_j,\lambda_j)$ induce
\begin{align}
 T_j(z)(y)&=\soft_{\lambda_j}(W_jy+S_jz(y))+A_jz(y),\\
 G_j(z)&=T_j(z)-z.
 \label{eq:soft-field-main}
\end{align}
Assume
\begin{equation}
 \norm{S_j}_2+\norm{A_j}_2\le\rho<1,
 \qquad
 \norm{W_j}_2\le W_\star.
 \label{eq:soft-contraction-main}
\end{equation}
Then all continuous reference and relaxed trajectories starting from zero, and all pure or mixed Euler trajectories with $0<h\le1$ whose atoms satisfy \eqref{eq:soft-contraction-main}, remain in the ball
\begin{equation}
 \norm z_\infty\le R:=\frac{W_\star Y}{1-\rho},
 \label{eq:soft-invariant-radius-main}
\end{equation}
and on that ball one may take
\begin{equation}
 L_z=1+\rho,
 \qquad B=2R.
 \label{eq:soft-field-constants-main}
\end{equation}
The invariant ball follows by the scalar inequality $\norm{T_j(z)}_\infty\le W_\star Y+\rho\norm z_\infty$; the same contraction gives well-posedness and input Lipschitzness.  Detailed proofs appear in Appendix~\ref{app:soft-threshold-proofs}.

\paragraph{Why parameter-space balancing is not enough.}
Balancing or averaging parameter tuples does not in general balance the
induced nonlinear residual fields.  A scalar threshold target and a low-bit
alphabet containing only neighboring thresholds already exhibit the issue:
the parameter discrepancy can remain uniformly small while the target map
retains a kink that is absent from every induced low-bit linear region.  The
relevant convexification is therefore the convex hull of \emph{induced vector
fields}, not the convex hull of parameter tuples.  Nonlinear parameter-to-map
transformations do not commute with convex averaging.

\paragraph{An exact same-dictionary structural split.}
Use the four atoms
\begin{equation}
 W_{\sigma,\ell}=
 \begin{bmatrix}\sigma&0\\0&0\end{bmatrix},
 \quad S_{\sigma,\ell}=0,
 \quad A_{\sigma,\ell}=sI_2,
 \quad \lambda_{\sigma,\ell}=\ell,
 \quad \sigma\in\{-1,+1\},\ \ell\in\{0,1\},
 \label{eq:structural-split-atoms-main}
\end{equation}
with $0\le s<1$.  Put
\begin{equation}
 \mu_s=1-s,
 \qquad
 \alpha_s=\frac{1-e^{-\mu_s}}{\mu_s}.
 \label{eq:alpha-s-main}
\end{equation}
The target using the atom threshold $1$ has zero floor.  The target with the same $W,S,A$ but threshold $1/2$ has the exact floor
\begin{equation}
 E_{\infty,b}=\frac{\alpha_s}{6}.
 \label{eq:exact-positive-floor-main}
\end{equation}
To see why, restrict to inputs $y=(t,0)$.  Every relaxed first-coordinate endpoint is exactly
\begin{equation}
 at+b\soft_1(t),
 \qquad \abs a+\abs b\le\alpha_s.
 \label{eq:split-relaxed-class-main}
\end{equation}
The target is $\alpha_s\soft_{1/2}(t)$.  Evaluating at $t=1/2$ and $t=1$ gives a lower bound $\alpha_s/6$, and the feasible coefficients $a=\alpha_s/3$, $b=2\alpha_s/3$ attain it uniformly.  Thus two targets with identical dimensions, norm bounds, masks, recurrence, and dictionary can lie on opposite sides of the feasibility boundary.  Figure~\ref{fig:structural-split-evidence} visualizes this distinction.

\paragraph{Signed-ray LISTA families.}
Let
\begin{equation}
 W_c=cB_W,
 \quad S_c=cB_S,
 \quad A_c=A_0,
 \quad \lambda_c=c\overline\lambda,
 \label{eq:signed-ray-parameters-main}
\end{equation}
for gains $c$ in a finite nonnegative alphabet $\mathcal C_b$.  Positive homogeneity gives
\begin{equation}
 G_c(z)=cT(z)+A_0z-z,
 \qquad
 T(z)(y)=\soft_{\overline\lambda}(B_Wy+B_Sz(y)).
 \label{eq:signed-ray-field-main}
\end{equation}
Hence relaxed gain paths in $\operatorname{conv}\mathcal C_b$ are compatible by construction.  If $\mathfrak d_b$ denotes the optimal scalar prefix discrepancy of the gain alphabet, a Lipschitz gain path admits a pure schedule with
\begin{equation}
 E_b(D;F_\infty^c)
 \le\frac{C_{\mathrm{disc}}+C_{\mathrm{quant}}\mathfrak d_b}{D}.
 \label{eq:signed-ray-upper-main}
\end{equation}
When $\mathcal C_b$ contains $0$ and its smallest positive edge is $\Delta_0$, an exposed channel gives a matching class-level lower law proportional to $\Delta_0/D$.  The fixed binary-gain teacher in Section~\ref{sec:fixed-teacher-main} strengthens this to one target independent of $D$.  The gain edge and discrepancy radius, rather than nominal bit count alone, determine the finite-depth constant.

\subsection{Pre-normalized Transformer blocks}
\label{sec:transformer-specialization-main}

\paragraph{Exact attention witness versus complete-block error ledger.}
Theorem~\ref{thm:attention-fixed-target-main} proves an exact global converse
for a deliberately minimal attention system: two tokens, one head, a
nonuniform softmax, and an invariant two-mode residual plane.  The results
below answer the complementary implementation question for a general
pre-normalized block: how errors in normalization, scores, values,
projections, MLPs, accumulators, and routing contribute to one verified field
radius.  The exact witness establishes necessity in a recognizable neural
operator; the ledger establishes transfer under architecture-specific bounds.

Consider a parallel pre-normalized residual field
\begin{equation}
 G(t,X)=\alpha(t)A(N_1(X))+\beta(t)M(N_2(X)),
 \label{eq:transformer-field-main}
\end{equation}
and its executed counterpart
\begin{equation}
 \widehat G_k(X)=
 \widehat\alpha_k\widehat A_k(\widehat N_{1,k}(X))
 +\widehat\beta_k\widehat M_k(\widehat N_{2,k}(X)).
 \label{eq:transformer-executed-field-main}
\end{equation}
For a token matrix $X\in\mathbb R^{n\times d}$, write
\begin{equation}
 \norm X_{2,\infty}:=\max_i\norm{X_{i:}}_2.
 \label{eq:token-norm-main}
\end{equation}
The following component bounds all use this norm and therefore compose without changing geometry midway through the argument.

\subsubsection{LayerNorm}

Let $P_d=I-d^{-1}\one\one^\top$ and
\begin{equation}
 N_{\gamma,\beta}(x)
 =\gamma\odot\frac{P_dx}
 {\sqrt{d^{-1}\norm{P_dx}_2^2+\epsilon}}+\beta,
 \qquad\epsilon>0.
 \label{eq:layernorm-map-main}
\end{equation}
Differentiating the centered normalization map shows
\begin{equation}
 \operatorname{Lip}_2(N_{\gamma,\beta})
 \le\frac{\norm\gamma_\infty}{\sqrt\epsilon},
 \qquad
 \norm{N_{\gamma,\beta}(x)-\beta}_2
 \le\sqrt d\,\norm\gamma_\infty.
 \label{eq:layernorm-bounds-main}
\end{equation}
If the executed map uses the same $\epsilon$ but perturbed input and affine parameters, then
\begin{align}
 \norm{N_{\widehat\gamma,\widehat\beta}(\widehat x)
       -N_{\gamma,\beta}(x)}_2
 \le{}&\frac{\norm{\widehat\gamma}_\infty}{\sqrt\epsilon}
        \norm{\widehat x-x}_2
 +\sqrt d\,\norm{\widehat\gamma-\gamma}_\infty
 +\norm{\widehat\beta-\beta}_2.
 \label{eq:layernorm-error-main}
\end{align}
A finite-arithmetic LayerNorm remainder is added once.  A coordinatewise activation quantizer of radius $\rho_x$ contributes at most $\sqrt d\rho_x$ to the input term.

\subsubsection{Softmax attention}

For one head,
\begin{equation}
 S=QK^\top/\sqrt{d_k},
 \qquad
 P=\operatorname{softmax}_{\mathrm{row}}(S),
 \qquad
 Z=PV.
 \label{eq:attention-definition-main}
\end{equation}
The softmax Jacobian maps $\ell^\infty$ to $\ell^1$ with norm at most one:
\begin{equation}
 \norm{\operatorname{softmax}(\widehat s)
       -\operatorname{softmax}(s)}_1
 \le\norm{\widehat s-s}_\infty.
 \label{eq:softmax-lipschitz-main}
\end{equation}
Indeed, along a line segment its derivative is $p_i(\delta_i-\mathbb E_p\delta)$; the $\ell^1$ norm is a mean absolute deviation bounded by the radius of the value interval.

Assume
\begin{align*}
 \norm Q_{2,\infty}&\le B_Q,&
 \norm K_{2,\infty}&\le B_K,&
 \norm V_{2,\infty}&\le B_V,\\
 \norm{\widehat Q-Q}_{2,\infty}&\le\eta_Q,&
 \norm{\widehat K-K}_{2,\infty}&\le\eta_K,&
 \norm{\widehat V-V}_{2,\infty}&\le\eta_V.
\end{align*}
Then
\begin{align}
 \norm{\widehat S-S}_{\max}
 &\le\eta_S
 :=\frac{\eta_Q(B_K+\eta_K)+B_Q\eta_K}{\sqrt{d_k}},
 \label{eq:attention-score-error-main}\\
 \norm{\widehat Z-Z}_{2,\infty}
 &\le\eta_V+B_V\eta_S.
 \label{eq:attention-value-error-main}
\end{align}
Because each row of $P$ is a probability vector, the ideal attended value satisfies $\norm Z_{2,\infty}\le B_V$.  For $A=ZW_O+b_O$, decompose $\widehat Z\widehat W_O-ZW_O=(\widehat Z-Z)\widehat W_O+Z(\widehat W_O-W_O)$ to obtain
\begin{align}
 \norm{\widehat A-A}_{2,\infty}
 \le{}&\norm{\widehat W_O}_{\mathrm{op}}
        (\eta_V+B_V\eta_S)
 +B_V\norm{\widehat W_O-W_O}_{\mathrm{op}}
 +\norm{\widehat b_O-b_O}_2.
 \label{eq:attention-output-error-main}
\end{align}
Per-head errors aggregate in Euclidean norm under concatenation.  These inequalities expose the distinct effects of $Q/K$ score distortion, value distortion, output projection, and arithmetic remainder.

\subsubsection{MLP and complete field error}

On $\norm z\le R$, an affine map obeys
\begin{equation}
 \sup_{\norm z\le R}\norm{\widehat Wz+\widehat b-(Wz+b)}
 \le\norm{\widehat W-W}_{\mathrm{op}}R+\norm{\widehat b-b}.
 \label{eq:affine-error-main}
\end{equation}
For $M(z)=W_2\varphi(W_1z+b_1)+b_2$, with $\varphi$ $L_\varphi$-Lipschitz and bounded by $B_\varphi$ on the tube,
\begin{align}
 \sup_{\norm z\le R}\norm{\widehat M(z)-M(z)}
 \le{}&\norm{\widehat W_2-W_2}_{\mathrm{op}}B_\varphi
 +\norm{\widehat W_2}_{\mathrm{op}}L_\varphi
  \left(\norm{\widehat W_1-W_1}_{\mathrm{op}}R
       +\norm{\widehat b_1-b_1}\right)\\
 &+\norm{\widehat b_2-b_2}.
 \label{eq:mlp-error-main}
\end{align}
Suppose $\eta_{A,k},\eta_{M,k}$ are uniform branch-output errors on the declared tube, $L_A,L_M$ are Lipschitz constants of the ideal branch maps on that tube, $\eta_{N_1,k},\eta_{N_2,k}$ are normalization errors, and $B_A,B_M$ are tube bounds.  Then
\begin{align}
 \eta_{G,k}:=
 \sup_X\norm{\widehat G_k(X)-G(t_k,X)}
 \le{}&\abs{\widehat\alpha_k-\alpha(t_k)}B_A
 +\abs{\widehat\alpha_k}(\eta_{A,k}+L_A\eta_{N_1,k})\\
 &+\abs{\widehat\beta_k-\beta(t_k)}B_M
 +\abs{\widehat\beta_k}(\eta_{M,k}+L_M\eta_{N_2,k}).
 \label{eq:transformer-field-error-main}
\end{align}
This is the $\eta_G$ term in the error-feedback theorem.  Scale errors, normalization errors, operator quantization, and finite MAC remainders remain visible rather than being combined into one undocumented norm.

\subsection{The executed Transformer resource law}
\label{sec:executed-transformer-law-main}

Let a nonempty compact family of Transformer dictionaries be indexed by $\omega\in\Omega$ and satisfy the uniform tube and learned-codebook assumptions.  Suppose every implemented microstep uses increment error feedback with residual radius $\rho_D$, the field error satisfies $\sup_k\eta_{G,k}\le\eta_{G,D}$, carry and MAC accumulators are exact on the declared range or contribute a separately bounded term, and saturation is inactive by a verified tube margin.  Then the endpoint and path reachable sets obey
\begin{align}
 \dH(\widehat\RR_{D,s}^{\mathrm{Tr,EF}},
       \RR_{\Omega,\Rel}^{\mathrm{Tr}})
 &\le r_D^{\mathrm{end}},
 \label{eq:executed-transformer-end-main}\\
 \dH^{\mathrm{path}}(\widehat{\mathscr P}_{D,s}^{\mathrm{Tr,EF}},
       \mathscr P_{\Omega,\Rel}^{\mathrm{Tr}})
 &\le r_D^{\mathrm{path}},
 \label{eq:executed-transformer-path-main}
\end{align}
where
\begin{align}
 r_D^{\mathrm{end}}
 &=\frac{C_{\mathrm{syn}}^{\mathrm{Tr}}}{D}
 +L_\Omega\Phi_{L_z}(T)\delta_s
 +\rho_De^{L_zT}
 +\eta_{G,D}\Phi_{L_z}(T),
 \label{eq:transformer-end-radius-main}\\
 r_D^{\mathrm{path}}
 &=\frac{C_{\mathrm{syn,path}}^{\mathrm{Tr}}}{D}
 +L_\Omega\Phi_{L_z}(T)\delta_s
 +\rho_De^{L_zT}
 +\eta_{G,D}\Phi_{L_z}(T).
 \label{eq:transformer-path-radius-main}
\end{align}
The proof pairs equal schedules through the error-feedback theorem, adds ideal pure-to-relaxed synthesis, and then pairs equal relaxed controls across nearby metadata values using the endpoint-and-path continuity statement in Lemma~\ref{lem:relaxed-codebook-lipschitz-main}.  The path statement is essential for routing; an endpoint bound cannot be substituted for the oracle-path premise of Theorem~\ref{thm:route-window-main}.

A first-order executed law follows when
\begin{equation}
 \delta_s=O(D^{-1}),
 \qquad
 \rho_D=O(D^{-1}),
 \qquad
 \eta_{G,D}=O(D^{-1}),
 \label{eq:transformer-resource-scaling-main}
\end{equation}
and all other arithmetic terms satisfy $\mathcal A_D^{\mathrm{other}}=O(D^{-1})$ or are identically zero.  For route-changing MoE Transformers, use $r_D^{\mathrm{path}}$ as the oracle error $\delta_D$ and insert the expert, gate, score, and event constants of Section~\ref{sec:moe-constants-main} into the hybrid theorem.

\subsection{The integrated resource theorem}
\label{sec:integrated-resource-main}

The preceding results combine without double counting.

\begin{theorem}[Floor--depth--metadata--arithmetic resource law]
\label[theorem]{thm:integrated-resource-main}
Let a compact learned dictionary family satisfy the uniform smooth or bounded-variation assumptions.  Let an $s$-bit metadata code have covering radius $\delta_s$, and suppose the implemented and ideal depth-$D$ systems have a schedulewise arithmetic radius $\mathcal A_D$ uniform over every encoded metadata value.  Then
\begin{equation}
 \dH(\RR_{D,s}^{\mathrm{impl}},\RR_{\Omega,\Rel})
 \le
 \frac{C_{\mathrm{syn}}^{\mathrm{unif}}}{D}
 +L_\Omega\Phi_{L_z}(T)\delta_s
 +\mathcal A_D.
 \label{eq:integrated-resource-main}
\end{equation}
If $\mathcal N(\Omega,\delta)\le(C_\Omega/\delta)^m$, execution uses scaled increment feedback with normalized radius $\overline\rho_b$, field error $\eta_{G,D}$, and remaining implementation radius $\mathcal A_D^{\mathrm{other}}$, then
\begin{equation}
 \dH(\RR_{D,s}^{\mathrm{impl}},\RR_{\Omega,\Rel})
 \le
 \frac{C_{\mathrm{syn}}^{\mathrm{unif}}
       +T\overline\rho_be^{L_zT}}{D}
 +L_\Omega\Phi_{L_z}(T)C_\Omega2^{-s/m}
 +\eta_{G,D}\Phi_{L_z}(T)
 +\mathcal A_D^{\mathrm{other}}.
 \label{eq:integrated-scaled-resource-main}
\end{equation}
The same radii bound the absolute difference between the optimal implemented target error and the learned relaxed structural floor.  Prefix-uniform radii give the corresponding path statement.
\end{theorem}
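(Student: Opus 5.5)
The proof is a Hausdorff triangle inequality through the intermediate ideal pure class $\RR_{D,s}^{\mathrm{learn}}=\bigcup_{\widehat\omega\in\Omega_s}\RR_{\widehat\omega,D}$ of \eqref{eq:learned-pure-union-main}. I would write
\[
 \dH(\RR_{D,s}^{\mathrm{impl}},\RR_{\Omega,\Rel})
 \le \dH(\RR_{D,s}^{\mathrm{impl}},\RR_{D,s}^{\mathrm{learn}})
 +\dH(\RR_{D,s}^{\mathrm{learn}},\RR_{\Omega,\Rel}).
\]
The second term is bounded directly by Theorem~\ref{thm:learned-codebook-main}, giving $C_{\mathrm{syn}}^{\mathrm{unif}}/D+L_\Omega\Phi_{L_z}(T)\delta_s$.

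For the first term I would pair configurations. For each encoded $\widehat\omega\in\Omega_s$ and each schedule $\sigma$, the implemented endpoint $\widehat z_D^{\sigma,\widehat\omega}$ is matched with the ideal endpoint $z_D^{\sigma,\widehat\omega}$ for the same $(\widehat\omega,\sigma)$. By hypothesis the schedulewise radius of Theorem~\ref{thm:hardware-transfer-main} is uniform in $\widehat\omega$, so every pair is within $\mathcal A_D$. Both directed distances between the two unions (over the finite code and all schedules) are therefore at most $\mathcal A_D$. Adding the two terms proves \eqref{eq:integrated-resource-main}.

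For \eqref{eq:integrated-scaled-resource-main}, I would bound the arithmetic radius and the metadata radius separately.
\begin{itemize}
\item The arithmetic radius splits as the error-feedback contribution plus $\mathcal A_D^{\mathrm{other}}$. Theorem~\ref{thm:error-feedback-main}, applied per $\widehat\omega$ with constants uniform over the family, gives $\rho_De^{L_zT}+\eta_{G,D}\Phi_{L_z}(T)$.
\item Scaled increment feedback means the physical residual unit is proportional to $h$, so $\rho_D=h\overline\rho_b=T\overline\rho_b/D$. This term merges into the numerator of the $1/D$ term.
\item The entropy bound \eqref{eq:metadata-radius-main} gives $\delta_s\le C_\Omega2^{-s/m}$.
\end{itemize}

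Composing the remaining non-feedback defects with the feedback carry is the main obstacle. The clean way is to apply the schedulewise triangle inequality: compare the implemented run to an error-feedback run with exact other arithmetic, then compare that run to the ideal pure run. This assumes both prefixes lie in the common tube, which the hypotheses grant. I would treat the additivity of $\mathcal A_D^{\mathrm{other}}$ as the defining property of the "remaining implementation radius" and not re-derive it.

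The target-error statement follows because distance to a set is $1$-Lipschitz under Hausdorff perturbation. The path version repeats the argument with prefix-uniform radii and uses the path halves of Theorem~\ref{thm:floor-rate-main} and Lemma~\ref{lem:relaxed-codebook-lipschitz-main}, replacing $C_{\mathrm{syn}}^{\mathrm{unif}}$ by its path constant.
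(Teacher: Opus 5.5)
Your proposal is correct and follows the paper's own argument. You use a Hausdorff triangle inequality through the ideal pure set with encoded metadata and pair equal $(\widehat\omega,\sigma)$ configurations for the arithmetic term. You then cite Theorem~\ref{thm:learned-codebook-main}, which simply packages the paper's separate synthesis and Lemma~\ref{lem:relaxed-codebook-lipschitz-main} steps, and substitute $\rho_D=T\overline\rho_b/D$ and $\delta_s\le C_\Omega2^{-s/m}$ for the scaled form.
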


\begin{proof}
Insert the ideal pure set with encoded metadata and the relaxed set with the same metadata between the implemented pure set and $\RR_{\Omega,\Rel}$.  Apply, in order, the schedulewise hardware-transfer theorem, pure-to-relaxed synthesis, and Lemma~\ref{lem:relaxed-codebook-lipschitz-main}; then use the Hausdorff triangle inequality.  The scaled form substitutes $\rho_D=T\overline\rho_b/D$ and the metric-entropy cover.  Distance to a target is one-Lipschitz under Hausdorff perturbation.
\end{proof}

The integrated theorem is the formal version of Equation~\eqref{eq:integrated-law-intro}.  Under H\"older-time fields, $C_{\mathrm{syn}}/D$ is replaced by the two-term radius in Theorem~\ref{thm:holder-time-main}.  Its upper law is matched by three independent necessity mechanisms: the fixed-teacher $D^{-1}$ converse, the depth--dictionary--metadata packing law, and positive dual floor certificates.  Actual route-changing networks additionally require the hybrid small-gain conditions; their state and event-window radius replaces the symmetric route-free term rather than being silently added as an independent Hausdorff radius.

\paragraph{Architecture consequence.}
A quantized Transformer block is certifiable only when its low-bit dictionary geometry and its executed numerical semantics are stated in the same norm on the same tube.  Weight error alone is not a block-level resource certificate, and an endpoint-only theorem is not enough for routed execution.

\section{Certified Evidence: What Is Proved, What Is Diagnosed}
\label{sec:certified-evidence-main}
\label{sec:evidence}

The numerical program is organized by the logical strength of the evidence, not by model size.  The hierarchy in Table~\ref{tab:evidence-hierarchy} governs every claim in this section.  Analytic formulas and verified rational witnesses support theorem-level conclusions.  Exhaustive finite checks establish exactness only for the enumerated instance.  Controlled simulations isolate one predicted mechanism.  Pretrained-model studies diagnose whether that mechanism is present in a learned checkpoint.  Training curves and local searches remain empirical upper bounds, never proofs of representability or impossibility.  Prospective experiments are stated as evidence gates rather than reported as completed results.

Appendix~\ref{app:experiments} supplies architecture-level parameters and search details; Appendix~\ref{app:transformer-experiments} collects the complete experiment-setting tables; and Appendix~\ref{app:reproducibility} maps every numerical claim to its data, script, and verification status.  This separation is intentional: the paper's resource theory is mathematical, while the numerical studies show which parts of that theory are exact, mechanistically visible, or presently only diagnostic.

\begin{table}[t]
\centering
\caption{Evidence hierarchy and permitted inference.  A lower row may motivate or diagnose a theorem mechanism, but it does not inherit the stronger claims permitted to a row above it.}
\label{tab:evidence-hierarchy}
\small
\begin{tabularx}{\linewidth}{Y{.20\linewidth}Y{.29\linewidth}X}
\toprule
Evidence class & What is verified & Permitted conclusion \\
\midrule
Analytic or rationally certified & Closed-form identity, exact symbolic inequality, rational LP/MILP/PWA witness, or globally verified HJB/SOS certificate & The stated formula or bound holds on the declared domain and under the theorem assumptions \\
Exhaustive finite verification & Every configuration in a finite registered instance & Exact optimum or invariant for that finite instance; no automatic extrapolation beyond it \\
Controlled mechanistic experiment & A simulation designed to vary one resource while holding the others fixed & The predicted mechanism is present in the controlled system; not a general representability claim \\
Pretrained-model diagnostic & A fixed learned checkpoint, declared data, seeds, and target construction & The mechanism is observable in that checkpoint; not integer-only execution, hardware efficiency, or architecture-wide certification \\
Uncertified empirical result & Training curve, local search, held-out slope, or finite witness estimate without a global certificate & An observed upper bound or diagnostic trend; never proof of a positive floor or global optimum \\
Prospective evidence gate & A preregistered or fully specified experiment not yet run & A falsifiable next test, not a completed contribution \\
\bottomrule
\end{tabularx}
\end{table}

\subsection{Exact executable checks for the new converse and arithmetic laws}
\label{sec:exact-verification-main}

Two load-bearing results have independent executable verifiers.  For
Theorem~\ref{thm:attention-fixed-target-main}, the released program enumerates
every binary schedule through depth $12$, verifies that the all-$+a$ schedule
is globally optimal, checks the invariant-plane reduction, and matches
\eqref{eq:attention-exact-main} to machine precision.  For
Proposition~\ref{prop:bit-exact-error-feedback-main}, the registered verifier
checks residual bounds, every declared register width, conservation, and exact
terminal recovery over $2{,}991{,}904$ exhaustive scalar sequences and $500$
randomized multidimensional trials.  These checks do not replace the proofs;
they guard the algebra, boundary cases, and executable artifact against
independent implementation mistakes.

\begin{figure}[t]
\centering
\includegraphics[width=.96\linewidth]{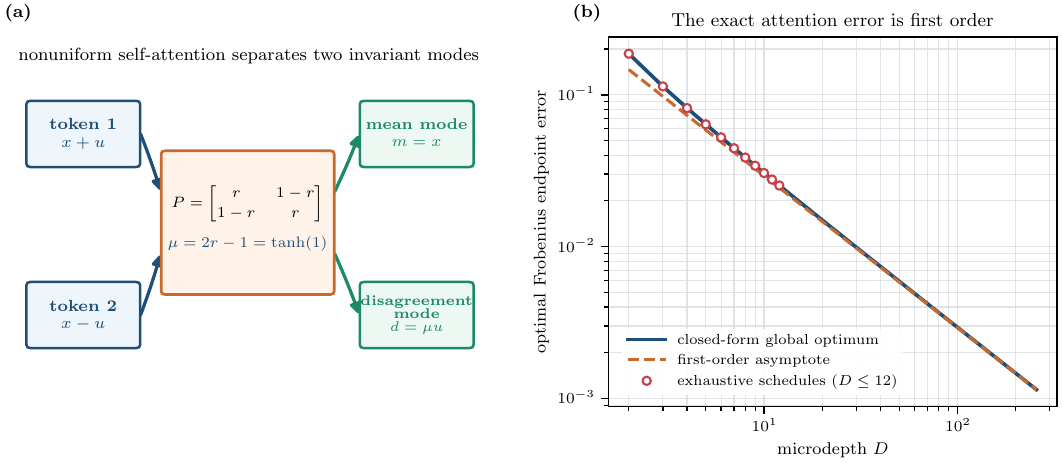}
\caption{\textnormal{An exact fixed-target attention converse.}
\textnormal{(a)} The two-token state decomposes into a mean mode, which carries
the binary scheduling obstruction, and a disagreement mode, which experiences
a nonuniform attention contraction with eigenvalue $\mu=\tanh(1)$.
\textnormal{(b)} The closed-form global optimum in
\eqref{eq:attention-exact-main} agrees with exhaustive schedule enumeration
through depth $12$ and approaches its predicted first-order asymptote.}
\label{fig:attention-fixed-target-main}
\end{figure}

\subsection{The Structural Floor Is the Decisive Invariant Across Architectures}

We first isolate the central structural dichotomy.  Figure~\ref{fig:soft-structural} uses one soft-threshold residual dictionary and two nonlinear targets.  One target lies in the relaxed endpoint set and its globally optimized finite-depth error approaches zero.  The other is separated from that endpoint set and converges to the exact floor $\alpha_s/6$.  Every marker is an exact LP/MILP value over all piecewise-affine breakpoints.

\begin{figure}[t]
\centering
\includegraphics[width=.96\linewidth]{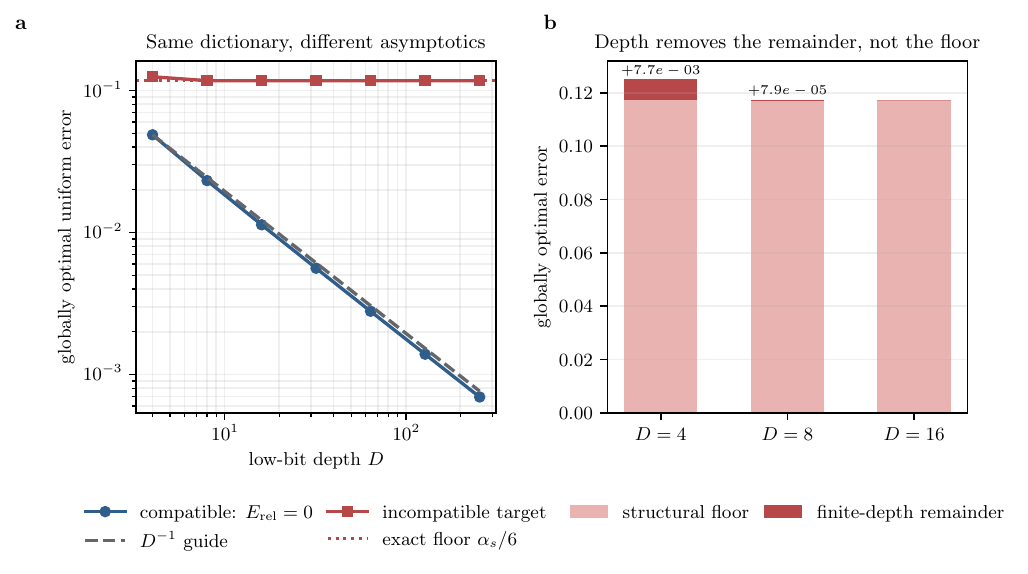}
\caption{\textnormal{Exact structural split in a soft-threshold residual network.}  Under one finite dictionary, a compatible nonlinear target converges toward zero, while an incompatible target converges to the exact infinite-depth floor $\alpha_s/6$.  The dashed floor is analytic and the finite-depth points are globally solved.}
\label{fig:soft-structural}
\label{fig:structural-split-evidence}
\end{figure}

The same phenomenon survives state-dependent token interaction.  These attention studies use ideal real-valued execution; the arithmetic terms of Section~\ref{sec:finite-arithmetic-main} are intentionally held at zero so that operation-library geometry is isolated.  In Figure~\ref{fig:attention-structural}, the compatible attention target decreases from $1.75\times10^{-2}$ at $D=4$ to $5.93\times10^{-4}$ at $D=128$.  The incompatible target has an output component orthogonal to every dictionary atom and therefore retains the exact missing-subspace floor $1.077349$.  The representative attention matrix confirms that the experiment is not a disguised tokenwise linear map.

\begin{figure}[t]
\centering
\includegraphics[width=.98\linewidth]{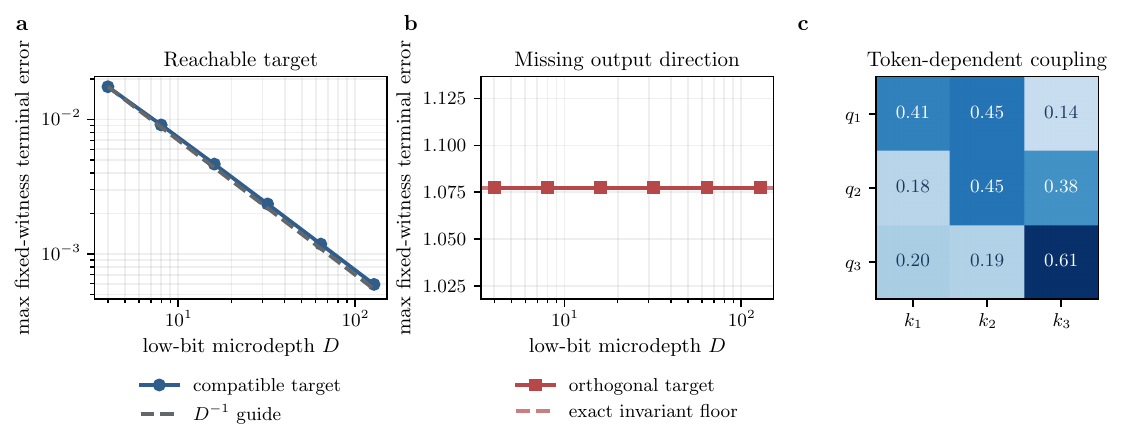}
\caption{\textnormal{A state-dependent attention structural split.}  The compatible target follows a first-order decay, whereas a target with a component in a missing output subspace remains at an exact invariant floor.  The right panel displays a nonuniform, input-dependent attention matrix from the experiment.}
\label{fig:attention-structural}
\end{figure}

Together, Figures~\ref{fig:soft-structural} and~\ref{fig:attention-structural} establish the first message of the paper: neither nominal bit width nor added depth determines feasibility by itself.  The decisive object is the target's position relative to the infinite-depth low-bit reachable class.

\FloatBarrier
\subsection{The First-Order Exponent Is Sharp; the Constant Is Codebook Dependent}

Figure~\ref{fig:soft-rate} verifies the minimax $D^{-1}$ law in the signed-ray soft-threshold family.  Globally solved finite programs coincide with the analytic edge-gap formula at every tested depth.  At $D=128$, the exact values of $DE_D$ are $0.2371$, $0.1185$, $0.0790$, and $0.0339$ for binary, ternary, uniform 2-bit, and uniform 3-bit gain alphabets.

\begin{figure}[H]
\centering
\includegraphics[width=.96\linewidth]{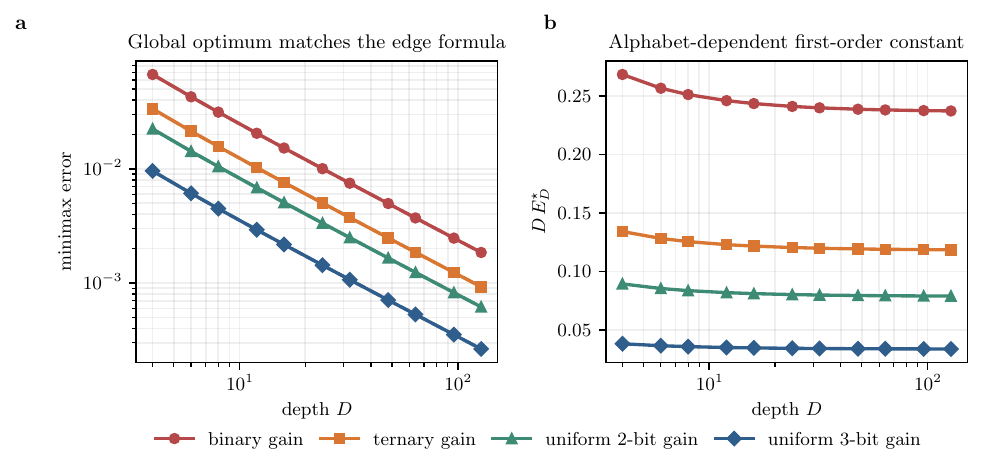}
\caption{\textnormal{Sharp minimax $D^{-1}$ law in a nonlinear signed-ray family.}  Globally solved finite-program values coincide with the exact edge-gap formula.  Multiplication by depth exposes distinct alphabet-dependent constants.}
\label{fig:soft-rate}
\end{figure}

Figure~\ref{fig:attention-rate} repeats the mechanism around a fixed nonlinear attention field.  Binary, ternary, uniform 2-bit, and uniform 3-bit schedules share the exponent but not the constant.  In this instance the ternary dictionary slightly outperforms the chosen uniform 2-bit dictionary, emphasizing that codebook geometry is more informative than nominal bits alone.

\begin{figure}[t]
\centering
\includegraphics[width=.96\linewidth]{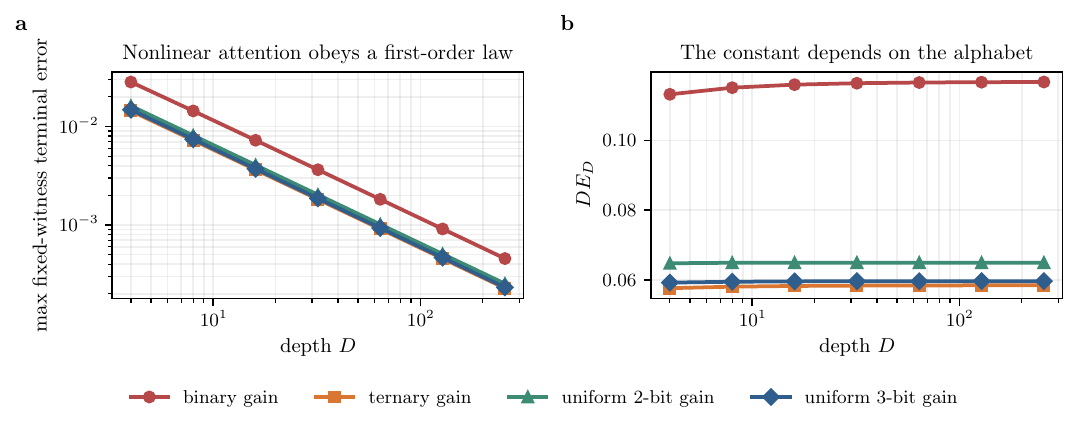}
\caption{\textnormal{First-order synthesis around a nonlinear attention field.}  Compatible gain alphabets all produce $D^{-1}$ terminal error, while $DE_D$ approaches a dictionary-dependent constant.  The experiment separates the universal exponent from the nonuniversal depth price.}
\label{fig:attention-rate}
\end{figure}

\subsection{Reachability Floors Create Phase Boundaries and Precision Sweet Spots}

The theory yields more than a convergence rate: a positive dual floor can identify precisions that can never meet a target tolerance, while a zero or small floor leaves a finite-depth synthesis problem.  Figure~\ref{fig:soft-phase} reports globally solved matching depths for a fixed nonlinear soft-threshold target.  Binary and uniform 2-bit codebooks have floors above every tested high-resolution tolerance.  Ternary, 3-bit, and 4-bit codebooks are feasible in different accuracy ranges, and the matched-storage optimum shifts from ternary at $L=8$, to 3-bit at $L=16$, to 4-bit at $L=32$ and $64$.

\begin{figure}[t]
\centering
\includegraphics[width=.96\linewidth]{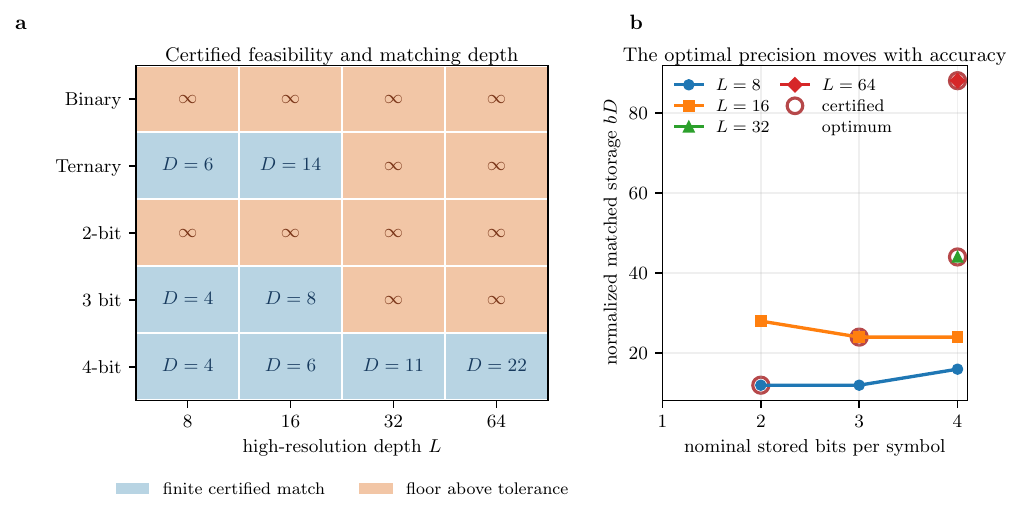}
\caption{\textnormal{Certified soft-threshold feasibility boundary and precision sweet spot.}  Orange cells have certified infinite-depth floors above the requested tolerance and are asymptotically infeasible.  The plot does not infer finite-depth impossibility from the floor alone; isolated shallow exceptions are checked by the accompanying exact search.  Blue cells report globally solved matching depths.  After codebook and schedule costs are counted, the optimum need not occur at the lowest precision and moves in this experiment as accuracy tightens.}
\label{fig:soft-phase}
\end{figure}

The attention-output codebook experiment computes infinite-depth floors exactly by linear programming and finite-depth synthesis by exact count search.  Figure~\ref{fig:attention-codebook-geometry} displays the operation geometry directly: the floors decrease from $0.7654$ for binary to $0.3867$ for ternary and $0.1507$ for uniform 2-bit, and vanish for the tested 3- and 4-bit codebooks.

\begin{figure}[t]
\centering
\includegraphics[width=.98\linewidth]{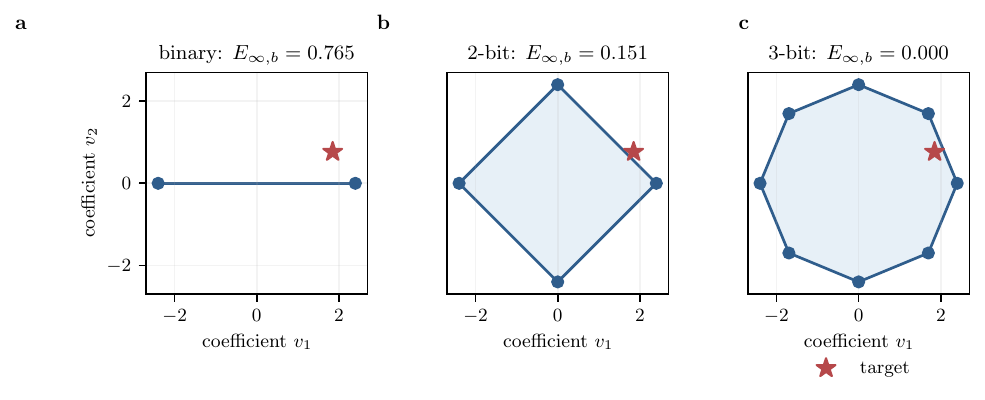}
\caption{\textnormal{Attention-output codebook geometry determines the infinite-depth floor.}  The target is compared with the convex hull generated by each finite codebook.  Binary and uniform 2-bit codebooks leave a visible geometric gap; the tested 3-bit polygon contains the target.  This feasibility calculation is completed before any finite-depth search.}
\label{fig:attention-codebook-geometry}
\end{figure}

Inside the feasible region, exact finite-depth count synthesis produces the matching depths in Figure~\ref{fig:attention-phase}.  Once codebook and schedule costs are included, the matched-storage optimum moves from ternary to 2-bit to 3-bit as the requested tolerance tightens.

\begin{figure}[t]
\centering
\includegraphics[width=.96\linewidth]{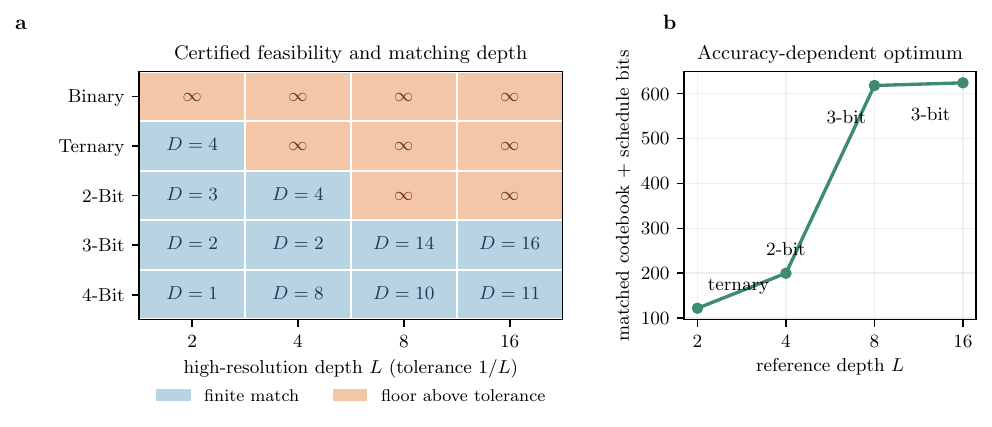}
\caption{\textnormal{Attention-codebook phase boundary and matched-resource optimum.}  The left panel separates positive-floor asymptotic infeasibility from finite matching depths.  The right panel shows that the resource-optimal precision can be interior and can change with the target tolerance.}
\label{fig:attention-phase}
\end{figure}

For the finite soft-threshold cases, a floor-plus-$C/D$ model is calibrated before the held-out global search and predicts matching depth with median multiplicative factor $1.32$.  The exact matrix case below supplies a second prospective check: the calibration model predicts $D=9$ and the full certificate establishes $D_{\min}=8$.

\subsection{Architecture Stability Changes the Depth Price}

The stability budget in the master theorem is not merely a worst-case proof artifact.  Figure~\ref{fig:attention-stability} varies RMSNorm regularization and residual scale over a $3\times3$ grid.  The first-order exponent persists, while the large-depth constant changes by roughly a factor of twenty and the first tested matching depth moves from $32$ to $128$.  Stronger amplification therefore raises the required depth even when the relaxed floor is unchanged.

\begin{figure}[t]
\centering
\includegraphics[width=.96\linewidth]{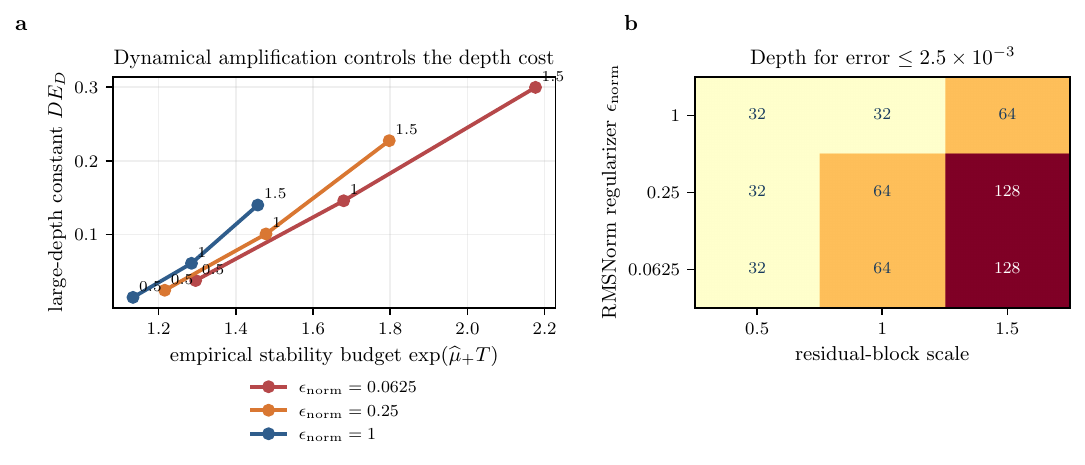}
\caption{\textnormal{Finite-horizon amplification controls the depth multiplier.}  Left: the observed large-depth constant $DE_D$ grows with the empirical stability budget and with more aggressive normalization/residual scaling.  Right: the first tested depth attaining error $2.5\times10^{-3}$.  Stability changes the constant, not the first-order mechanism.}
\label{fig:attention-stability}
\end{figure}

\subsection{Hard Routing Obeys a Margin Law}

Figure~\ref{fig:routing-margin} isolates the hybrid part of the theory.  Below the score-margin threshold, every active quantized router atom misroutes at least one token and a nonzero routing floor remains.  Once the margin dominates router and state error, route mismatch disappears and the smooth first-order depth law returns.  This is the frozen-route margin regime that complements Theorem~\ref{thm:route-window-main}.  The experiment varies a positive static margin and therefore does not stand in for an actual transversal route-change test.

\begin{figure}[t]
\centering
\includegraphics[width=.96\linewidth]{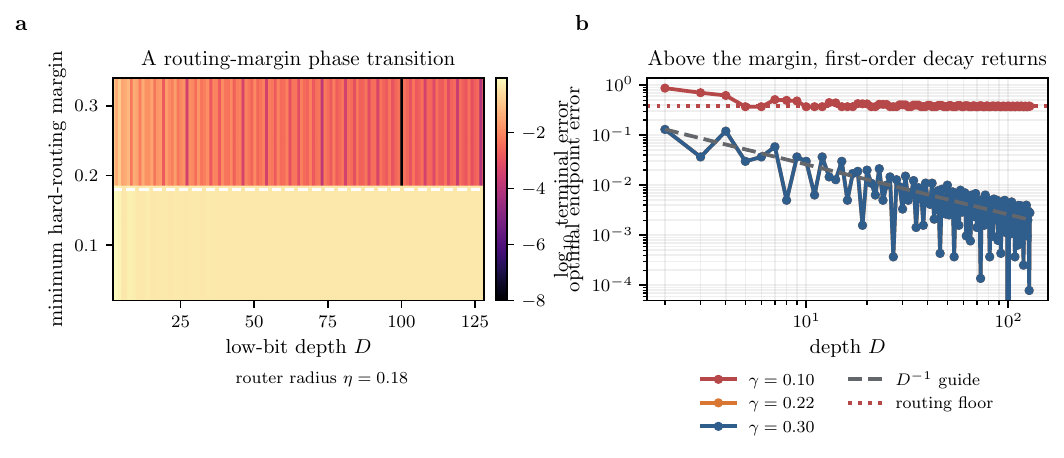}
\caption{\textnormal{Hard routing has a margin-controlled phase transition.}  The dashed boundary is the quantized-router perturbation radius.  Below it, route changes create a persistent error; above it, routes agree and the $D^{-1}$ synthesis regime reappears.}
\label{fig:routing-margin}
\end{figure}

\subsection{Exact Nonlinear Matrix-Valued Accuracy Matching}
\label{sec:matrix-cert}

We now give the strongest finite-depth accuracy-matching certificate.  The desired map is the depth-32 reference; the depth-4 high-precision network and the low-bit students are compared by their uniform errors to this common target.  Consider a recurrent $2\times2$ soft-threshold path with modes
\begin{align}
W^{(0)}&=\begin{bmatrix}0.65&0\\0&0.65\end{bmatrix},
&S^{(0)}&=\begin{bmatrix}0.10&0\\0&-0.10\end{bmatrix},
&\lambda^{(0)}&=0.35,\\
W^{(1)}&=\begin{bmatrix}0&-0.65\\0.65&0\end{bmatrix},
&S^{(1)}&=\begin{bmatrix}0&0.10\\-0.10&0\end{bmatrix},
&\lambda^{(1)}&=0.
\end{align}
The path uses modes $0,1,0$ on $[0,.30)$, $[.30,.65)$, and $[.65,1]$.  Let $F_{32}^{\mathrm H}$ denote the depth-$32$ common reference, $F_4^{\mathrm H}$ the high-precision depth-$L=4$ comparator, and $F_{D,\sigma}^{\mathrm L}$ the depth-$D$ low-bit student under schedule $\sigma$.  The low-bit residual-field dictionary consists of the nearest uniform 4-bit quantizations of the two $(W,S,\lambda)$ modes.  Every depth uses the shared microstep $1/D$, determined exactly by the unit horizon and the integer depth.

All $2^D$ schedules are enumerated for $D=1,\ldots,12$ on $1281$ witness points fixed before schedule evaluation.  The best depth-$8$ schedule is $(0,0,0,0,0,1,0,1)$.  A rational piecewise-affine propagation over inscribed and circumscribed 16-gons proves
\begin{align}
  \inf_{\sigma\in\{0,1\}^7}\sup_{y\in B_2^2(1)}
  \norm{F_{7,\sigma}^{\mathrm L}(y)-F_{32}^{\mathrm H}(y)}_2
  &\ge 0.02553769,\\
  \sup_{y\in B_2^2(1)}\norm{F_4^{\mathrm H}(y)-F_{32}^{\mathrm H}(y)}_2
  &\le 0.02534776,\label{eq:D7-fail}\\[.2em]
  \sup_{y\in B_2^2(1)}\norm{F_{8,\sigma^\star}^{\mathrm L}(y)-F_{32}^{\mathrm H}(y)}_2
  &\le 0.02416130,\\
  \sup_{y\in B_2^2(1)}\norm{F_4^{\mathrm H}(y)-F_{32}^{\mathrm H}(y)}_2
  &\ge 0.02443160.\label{eq:D8-win}
\end{align}
Every smaller-depth witness optimum is larger than the depth-$7$ bound.  Table~\ref{tab:matrix-cert} collects the decisive inequalities, the prospective depth prediction, and the stored-bit count under the fixed-metadata convention.  Hence
\begin{equation}
  \boxed{D_{\mathrm{match}}=8=2L}
  \label{eq:exact-2L}
\end{equation}
as the minimum accuracy-matching depth for this dictionary and common-reference tolerance.

\begin{table}[t]
\centering
\caption{Certified matrix-valued accuracy-matching margins, prospective prediction, and stored-bit accounting under the fixed-metadata convention.}
\label{tab:matrix-cert}
\small
\begin{tabularx}{\linewidth}{@{}Y{0.29\linewidth}Y{0.27\linewidth}>{\raggedright\arraybackslash}X@{}}
\toprule
Quantity & Certified value & Consequence\\
\midrule
Best depth-$7$ witness error & $0.02553769$ & Exceeds the full-disk upper bound of the high-precision comparator.\\
FP depth-$4$ error interval & $[0.02443160,0.02534776]$ & Defines the comparator's accuracy relative to the common reference.\\
Depth-$8$ low-bit interval & $[0.02344455,0.02416130]$ & Lies below the comparator's full-disk lower bound.\\
Calibration prediction & $\widehat D=9$ & Multiplicative factor $9/8=1.125$.\\
Exact accuracy-matching depth & $D_{\mathrm{match}}=8=2L$ & Establishes a nontrivial sequential depth exchange.\\
Variable low-bit storage & $80$ bits & 72-bit codebook plus 8-bit schedule.\\
Untied FP16 storage & $576$ bits & $7.20\times$ compression under the stated accounting.\\
Tied-codebook FP16 storage & $292$ bits & $3.65\times$ compression under the stated accounting.\\
\bottomrule
\end{tabularx}
\end{table}

Figure~\ref{fig:matrix-cert} displays the certified separation on the full input disk; the image is explanatory, while the proof uses exact rational cell bounds.

\begin{figure}[t]
\centering
\includegraphics[width=\linewidth]{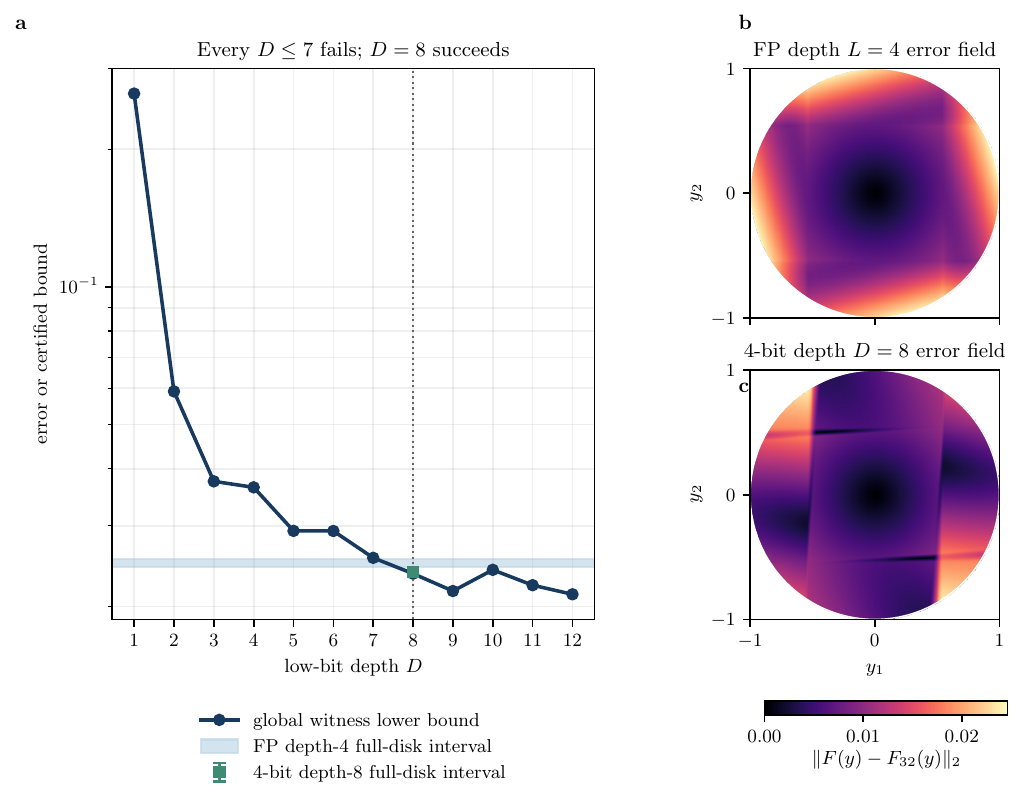}
\caption{\textnormal{Certified minimum accuracy-matching depth in a nonlinear matrix system.}  The large left panel excludes every $D\le7$ and certifies that the selected depth-8 student is more accurate relative to the common depth-32 reference than the depth-4 high-precision comparator.  The two right panels show the comparator and low-bit error fields relative to the depth-$32$ reference on a common color scale.  The proof uses exact rational activation-cell bounds rather than rasterized fields.}
\label{fig:matrix-cert}
\end{figure}

The 80-bit count includes the complete two-atom quantized coefficient codebook and the 8-bit schedule.  The quantizer levels, unit horizon, rule $h_D=1/D$, and decoder convention are globally fixed architectural metadata; any learned scales, zero-points, masks, or separately encoded residual scale would have to be added under a different implementation.  The result is an accuracy-matching and stored-bit certificate under this explicit convention, not a direct-emulation, bit-exact, or hardware-latency claim.  The second predeclared matrix target exhibited witness-level feasibility but its full-disk intervals overlapped at smaller depths; we therefore make no exact minimum-depth claim for that case and report it transparently in Appendix~\ref{app:multiteacher}.

\subsection{A Pretrained-Model Causal Test of Depth Coherence}
\label{sec:pretrained-causal}

The exact examples isolate the theory, but they do not show whether the coherence distinction matters inside a trained model.  We therefore study the feed-forward residual branches of layers 1 and 3 of the public SST-2-finetuned DistilBERT checkpoint \citep{SanhEtAl2019DistilBERT,WangEtAl2019GLUE,SocherEtAl2013SST,HuggingFaceDistilBERTSST2}, evaluated with the Transformers software stack \citep{WolfEtAl2020Transformers}.  For each branch, the \emph{direct} target is its original one-step residual map.  The \emph{coherent} target is a depth-32 high-precision Euler refinement of the same pretrained residual field over horizon $T=1$.  This creates a causal control: if added depth fails even without quantization on the direct target, quantized optimization cannot be the sole explanation.

Figure~\ref{fig:pretrained-coherence} gives the decisive comparison.  On the original one-step target, unquantized subdivision moves sharply away from the target: at layer 1 the relative hidden-map RMSE rises from $2.65\times10^{-4}$ at $D=1$ to $1.383$ at $D=16$, and at layer 3 from $5.34\times10^{-4}$ to $1.248$.  For the coherent targets, the direction reverses: error falls by $96.0\%$ at layer 1 and $91.0\%$ at layer 3.  The fitted log--log slopes are approximately $-1.16$ and $-0.86$, respectively.  Thus the failure of direct subdivision is already present in high precision, whereas the coherent construction recovers the predicted first-order refinement mechanism.

\begin{figure}[t]
\centering
\includegraphics[width=.99\linewidth]{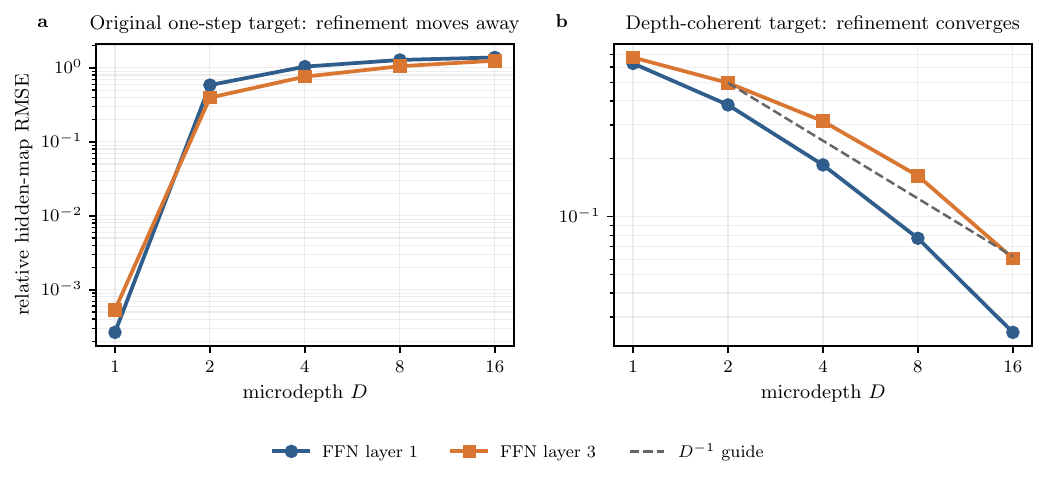}
\caption{\textnormal{Pretrained-model causal control for depth coherence.}  The same DistilBERT feed-forward residual fields are evaluated under two targets.  \textnormal{(a)} Subdividing the original one-step block moves away from its map even without quantization.  \textnormal{(b)} Refining a common depth-32 residual-flow target reduces error rapidly at both tested layers.  The contrast isolates target coherence before any low-bit optimization is introduced.}
\label{fig:pretrained-coherence}
\end{figure}

We next learn a two-atom 4-bit dictionary and time-dependent relaxed controls for each layer and depth $D\in\{4,8,16\}$, then harden the controls to a pure schedule and locally refine that schedule.  Training is repeated with three independent QAT seeds; evaluation uses all 872 SST-2 validation examples.  Figure~\ref{fig:pretrained-qat} and Table~\ref{tab:pretrained-qat} report the locally hardened pure schedules, which are the primary deployable students in this study.

At layer 3, increasing depth from 4 to 16 reduces hidden-map error by $10.4\%$ and logit RMSE by $29.5\%$.  The paired seed contrasts are
\[
  \Delta E=-0.01825,\qquad 95\%\ \mathrm{CI}=[-0.03419,-0.00230],
\]
for hidden-map error and
\[
  \Delta_{\mathrm{logit}}=-0.04993,\qquad 95\%\ \mathrm{CI}=[-0.06319,-0.03668]
\]
for logit RMSE.  Layer 1 is a near-saturation case: all three seeds improve in hidden-map error, but the conservative three-seed $t$ interval crosses zero, while logit error decreases by $13.8\%$.  Prediction agreement with the coherent target remains above $99\%$ throughout and reaches $99.66\%$ at layer 1 and $99.24\%$ at layer 3 for $D=16$.  Downstream SST-2 accuracy is not monotone in map fidelity: at layer 3 it peaks at $90.48\%$ for $D=8$ while endpoint and logit errors continue to improve through $D=16$.  This separation is expected: common-target map matching and task accuracy are related but distinct objectives.

\begin{figure}[t]
\centering
\includegraphics[width=.99\linewidth]{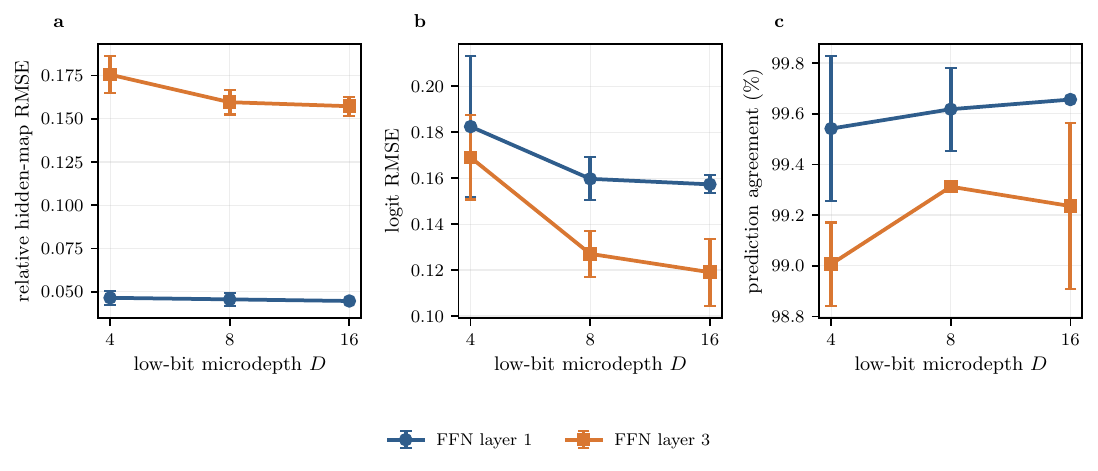}
\caption{\textnormal{Four-bit learned dictionaries preserve the coherent depth benefit.}  Means and Student-$t$ 95\% intervals are computed across three independent QAT seeds for locally hardened pure schedules.  \textnormal{(a)} Hidden-map error decreases modestly at the near-saturated first layer and more clearly at layer 3.  \textnormal{(b)} Logit RMSE improves at both layers, with a seed-robust depth effect at layer 3.  \textnormal{(c)} Prediction agreement with the coherent target model remains above $99\%$.}
\label{fig:pretrained-qat}
\end{figure}

\begin{table}[t]
\centering
\caption{Pretrained DistilBERT robustness study. Values are means across three QAT seeds; intervals are 95\% Student-$t$ intervals across seeds. The student is a locally hardened pure schedule from a learned two-atom 4-bit dictionary.}
\label{tab:pretrained-qat}
\small
\begin{tabular}{ccccc}
\toprule
Layer & $D$ & hidden-map RMSE & logit RMSE & target agreement \\
\midrule
1 & 4  & $0.04648\ [0.04235,0.05060]$ & $0.18243\ [0.15190,0.21297]$ & $99.54\%$\\
1 & 8  & $0.04558\ [0.04185,0.04931]$ & $0.15970\ [0.15035,0.16905]$ & $99.62\%$\\
1 & 16 & $0.04464\ [0.04397,0.04531]$ & $0.15730\ [0.15334,0.16125]$ & $99.66\%$\\
3 & 4  & $0.17548\ [0.16494,0.18602]$ & $0.16900\ [0.15070,0.18729]$ & $99.01\%$\\
3 & 8  & $0.15955\ [0.15245,0.16666]$ & $0.12705\ [0.11700,0.13711]$ & $99.31\%$\\
3 & 16 & $0.15723\ [0.15180,0.16267]$ & $0.11906\ [0.10444,0.13369]$ & $99.24\%$\\
\bottomrule
\end{tabular}
\end{table}

The relaxed and hardened programs are nearly indistinguishable at layer 3 and at the deeper layer-1 settings; the released seedwise compilation-gap table reports differences on the order of a percent or less.  Under the experiment's parameter-storage accounting, the two-atom 4-bit dictionary plus schedule uses approximately $3.97\times$ fewer stored parameter bits than the corresponding high-precision component.  This is a pretrained-model diagnostic, not a certificate of activation memory, integer-only execution, latency, energy, full-model quantization, or asymptotic reachability.  Within that scope, it provides a causal validation of the theory's central distinction: depth can refine a coherent computation, while no optimizer can repair a microstep construction that already moves away from the target in high precision.  Full settings are summarized in Table~\ref{tab:pretrained-settings}; the relaxed-to-pure compilation gaps appear in Figure~\ref{fig:pretrained-compilation}; seedwise trajectories, bootstrap intervals, and artifact paths are given in Appendix~\ref{app:pretrained-experiment}.

\FloatBarrier

\subsection{Medium-Scale Attention Retains the Mechanism}

Figure~\ref{fig:medium-attention} evaluates a state-dependent system with $n=8$ tokens, width $16$, and two heads over 96 held-out random sequences.  Appendix~\ref{app:transformer-experiments} records the fixed dictionary and held-out protocol.  Compatible targets retain first-order decay, while a component in a missing output direction remains at a positive floor.

\begin{figure}[t]
\centering
\includegraphics[width=.98\linewidth]{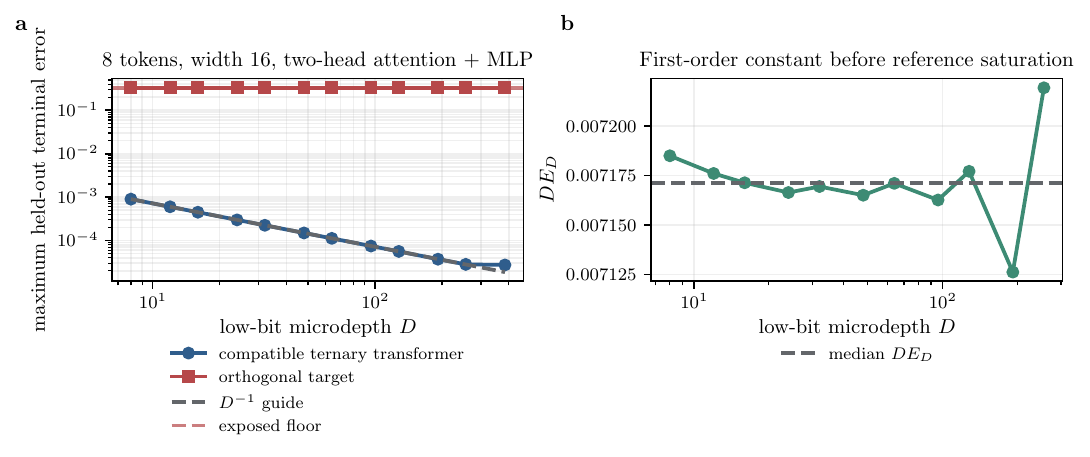}
\caption{\textnormal{Medium-scale attention validation.}  In an eight-token, width-16, two-head attention--MLP system, a compatible target preserves the $D^{-1}$ mechanism, while a component in a missing output direction remains at its exposed floor.  Curves show held-out errors over an input ensemble fixed before the depth sweep.  The right panel omits the final reference-integration-limited point when estimating the first-order plateau.}
\label{fig:medium-attention}
\end{figure}

The separate scale study in Figure~\ref{fig:attention-scale} varies $(n,d,H)$ over $(4,8,1)$, $(8,16,2)$, and $(16,32,4)$.  It shows that the first-order mechanism persists, while both the observed synthesis constant and the missing-direction floor change with architecture size.  Tables~\ref{tab:transformer-settings} and~\ref{tab:scale-sweep-summary} give the exact settings and numerical values.

\begin{figure}[t]
\centering
\includegraphics[width=.96\linewidth]{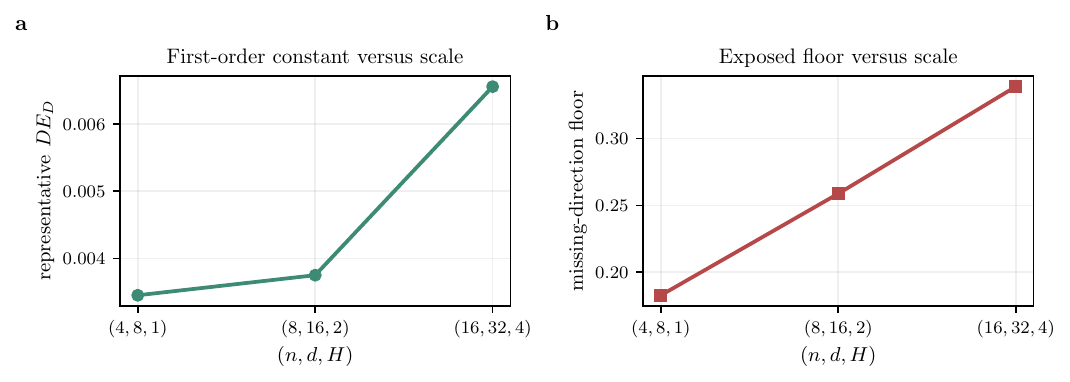}
\caption{\textnormal{Attention scale sweep.}  Left: the representative large-depth constant $D E_D$ for compatible targets.  Right: the exposed missing-direction floor.  The horizontal labels report $(n,d,H)$: token count, model width, and number of heads.}
\label{fig:attention-scale}
\end{figure}

\subsection{Representability and Optimization Are Different Frontiers}

For the recurrent soft-threshold dictionary, exhaustive search reveals a severe local-search gap: at $D=20$, the best of 64 local restarts is about $120\times$ above the global optimum and the median about $419\times$ above it.  Figure~\ref{fig:soft-opt-gap} shows this separation directly.

\begin{figure}[t]
\centering
\includegraphics[width=.98\linewidth]{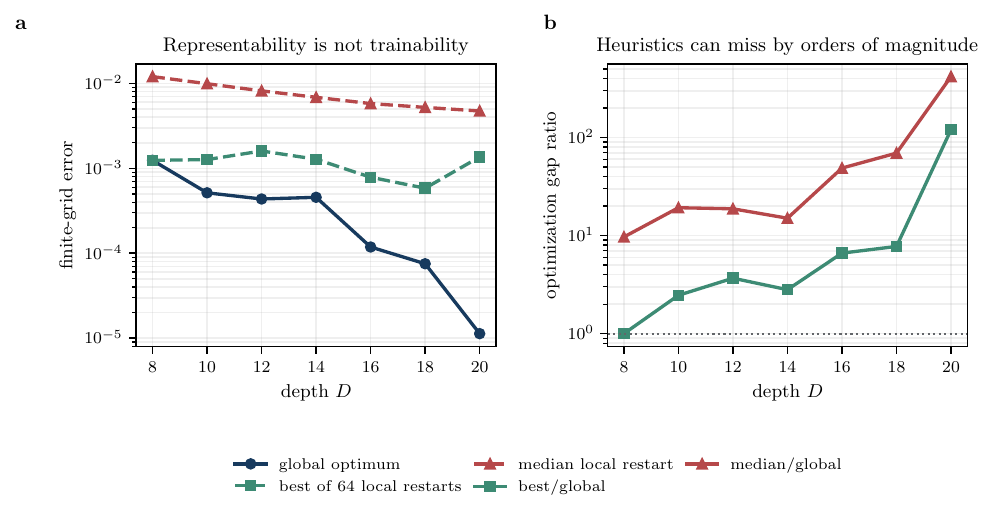}
\caption{\textnormal{Optimization gap for recurrent soft-threshold atoms.}  Globally exhaustive schedule search reveals representations that local coordinate descent misses by orders of magnitude.  This experiment demonstrates that failed local schedule search is not a certificate of positive-floor asymptotic infeasibility; more generally, no failed optimizer establishes nonrepresentability by itself.}
\label{fig:soft-opt-gap}
\end{figure}

Noncommuting attention--MLP atoms exhibit the same distinction at a smaller scale.  Figure~\ref{fig:attention-opt-gap} compares every schedule through $D=16$ with balanced switching and 16-restart coordinate descent.  Balanced schedules stay near the global frontier, whereas the local-search upper tail reaches roughly $1.95\times$ the optimum.

\begin{figure}[t]
\centering
\includegraphics[width=.98\linewidth]{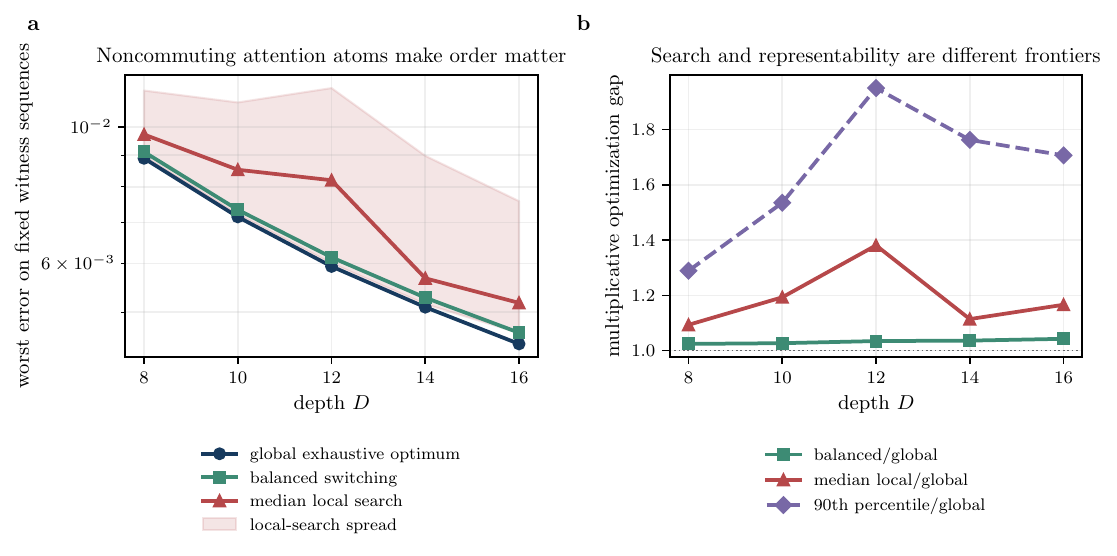}
\caption{\textnormal{Order and optimization matter for noncommuting attention atoms.}  Exhaustive global optima, balanced schedules, and local-search distributions are reported separately.  The theorem supplies a constructive representability law; it does not guarantee that an arbitrary training heuristic reaches the frontier.}
\label{fig:attention-opt-gap}
\end{figure}

A broader eight-teacher matrix stress test and a globally exhaustive four-atom dictionary expansion are reported in Appendix~\ref{app:multiteacher} and Figure~\ref{fig:broad-matrix}.  The claim-to-artifact paths for every numerical result are listed in Table~\ref{tab:repro} of Appendix~\ref{app:reproducibility}.  These studies provide breadth and reproducibility evidence without replacing the exact full-disk certificate above.

\section{From Theorem to a Quantized-Architecture Decision}
\label{sec:design-law-main}
\label{sec:design}

The theory changes the order in which a low-bit architecture should be designed.  It is usually inefficient to choose a bit width, train a large grid of depths, and only then ask why some configurations failed.  The resource view begins with the target and separates five questions.

\begin{enumerate}[leftmargin=1.8em,itemsep=.38em]
\item \textbf{Geometry:} does the learned low-bit operation family have a sufficiently small structural floor for the target?
\item \textbf{Synthesis:} if it does, how much pure depth is needed to implement a compatible relaxed computation?
\item \textbf{Execution:} do activation, scale, accumulator, carry, saturation, and field-computation errors shrink quickly enough for those microsteps to remain visible?
\item \textbf{Routing:} for discontinuous paths, do score errors fit inside isolated transversal event windows and a stable route--state feedback loop?
\item \textbf{Search:} can the optimizer or compiler discover a schedule and metadata value near the representational frontier?
\end{enumerate}

These questions should be answered in this order because later resources cannot repair an earlier structural failure.  A more powerful optimizer does not remove a positive floor.  More depth does not repair an activation grid that freezes the residual increments.  A small endpoint error does not certify a routed path.  Conversely, a failed local search is not evidence that the target is unreachable.

\subsection{A replacement workflow}
\label{sec:replacement-workflow-main}

For a declared target $F^\star$, tolerance $\varepsilon_H$, dictionary family $\Omega$, and execution specification, the following workflow implements the paper's logic.

\begin{enumerate}[leftmargin=1.8em,itemsep=.62em]
\item \textbf{Declare the replacement problem and every charged resource.}  Specify direct emulation, common-reference accuracy matching, or behavioral replacement; then record the target, input geometry, norm, low-bit atom family, shared schedule class, residual horizon, metadata budget, activation and increment grids, scale format, accumulators, overflow rule, carry state, and router.  Verify a common tube for both ideal and implemented prefixes, including state, normalization, field, Lipschitz, and representable-range bounds.

\item \textbf{Bracket the structural floor before allocating depth.}  Construct a feasible relaxed control for an upper bound $U$ and a verified support, HJB, affine, or SOS witness for a lower bound $L$.  If $L>\varepsilon_H$, reject the declared precision family: a more powerful optimizer and additional depth cannot repair that incompatibility.  If the bracket remains wide, report a certificate gap rather than a negative result.

\item \textbf{Allocate pure depth and learned metadata.}  Use
\begin{equation}
 R_{D,s}^{\mathrm{ideal}}
 =\frac{C_{\mathrm{syn}}}{D}+C_{\mathrm{meta}}2^{-s/m}
 \label{eq:ideal-resource-radius-design}
\end{equation}
to identify a candidate region, not a universal depth multiplier.  Count the dictionary, scales, zero-points, masks, schedules, and any other serialized global information.  Compare the achieved schedule with the certified frontier so that representation and optimization errors remain separate.

\item \textbf{Audit execution arithmetic and routes pathwise.}  Under full-state write-back, check whether $D\rho_z$ creates a finite optimal depth or a zero-update regime.  Under increment feedback, verify the carry range, its precision, and the $O(D^{-1})$ physical residual scale; prove no overflow for wrapping arithmetic.  For routed computation, certify event isolation, transversal slopes, score errors, field jumps, path error, and the small-gain condition.  Simultaneous, grazing, or chattering events fall outside the simple-event guarantee and must be reported rather than silently discarded.

\item \textbf{Decide first, then optimize and charge the platform.}  Combine $[L,U]$ with the finite-resource radius and return feasible, impossible, or unresolved before expensive training.  Train or compile only surviving configurations.  Finally count sequential latency, energy, activation and carry traffic, memory, serialized bytes, and kernel utilization.  Parameter bits alone are not a deployment objective, and a certified mathematical architecture is not automatically an efficient kernel.
\end{enumerate}

Table~\ref{tab:claim-interface-main} is a compact guardrail against overinterpreting any one certificate or experiment.

\begin{table}[t]
\centering
\caption{What each mathematical object can and cannot establish.  The separation prevents a local diagnostic from being promoted to a stronger global claim.}
\label{tab:claim-interface-main}
\small
\begin{tabularx}{\linewidth}{Y{.22\linewidth}Y{.33\linewidth}X}
\toprule
Object & What it certifies & What it does not certify \\
\midrule
Positive dual floor $L$ & Asymptotic incompatibility when $L$ exceeds the tolerance & Failure of one optimizer or one finite schedule \\
Zero/small primal upper bound $U$ & Existence of a compatible relaxed computation & Efficient pure compilation or successful training \\
$C_{\mathrm{syn}}/D$ & Pure finite-horizon implementation price under the declared tube & A universal depth multiplier across unrelated target families \\
Arithmetic radius $\mathcal A_D$ & Distance between ideal and declared executed transitions & Behavior of an undocumented hardware kernel \\
Route-window theorem & Path and support control near isolated transversal events & Arbitrary simultaneous events, chattering, or uniform continuity across route cells \\
Finite-witness rational certificate & Exact statement on the lifted witness object or a valid support functional & Uniform coverage of the input domain without a separate remainder proof \\
Task-margin argument & Behavioral agreement on the certified domain & Uniform map emulation \\
\bottomrule
\end{tabularx}
\end{table}

\subsection{Matching depth is target specific}
\label{sec:matching-depth-main}

Let $\varepsilon_H(L)=\norm{F_L^{\mathrm H}-F^\star}$ be a high-precision comparator's error relative to a common target.  Ignoring route events for the moment, a sufficient implemented matching condition is
\begin{equation}
 E_{\Omega,\infty}(F^\star)
 +\frac{C_{\mathrm{syn}}}{D}
 +C_{\mathrm{meta}}2^{-s/m}
 +\mathcal A_D
 \le\varepsilon_H(L).
 \label{eq:matching-condition-main}
\end{equation}
When the nonfloor terms are first order and the floor is strictly below the tolerance, this gives the predictive approximation
\begin{equation}
 D_{\mathrm{match}}
 \approx
 \frac{C_{\mathrm{eff}}}
 {\varepsilon_H(L)-E_{\Omega,\infty}(F^\star)-R_{\mathrm{nondepth}}},
 \label{eq:predictive-matching-depth-main}
\end{equation}
where every term in $C_{\mathrm{eff}}$ and $R_{\mathrm{nondepth}}$ is tied to the declared implementation.  The denominator explains the phase boundary: as the requested tolerance approaches the structural floor, the required depth diverges.

If a coherent high-precision family has $\varepsilon_H(L)\asymp L^{-1}$, a zero floor and an upper first-order low-bit law give $D=O(L)$.  A fixed-target or class-uniform lower law is required for $D=\Theta(L)$.  Theorem~\ref{thm:fixed-teacher-main} supplies such a fixed-target lower mechanism.  By contrast, if an unrestricted depth-$L$ teacher family retains $\Omega(L)$ independent functional choices, the packing law can force $D/L$ to grow logarithmically with $L$.  There is therefore no architecture-independent constant exchange rate.

\subsection{Resource-optimal precision need not be the lowest precision}
\label{sec:resource-optimal-main}

Let $R_{\mathrm{code}}(b,s)$ denote stored dictionary and metadata cost, $R_{\mathrm{step}}(b)$ the charged cost per executed depth, and $R_{\mathrm{state}}(b,D)$ the activation, carry, and accumulator cost.  A deployment-facing design is
\begin{equation}
 \min_{b,s,D}
 \ R_{\mathrm{code}}(b,s)
 +D R_{\mathrm{step}}(b)
 +R_{\mathrm{state}}(b,D)
 \quad\text{subject to a certified error tolerance.}
 \label{eq:resource-optimum-main}
\end{equation}
Lower precision may reduce per-step storage but enlarge the floor, the synthesis constant, the route-error constant, or the depth needed to reach the tolerance.  The optimum can consequently be an interior precision and can move as the requested accuracy changes.  Figures~\ref{fig:soft-phase} and~\ref{fig:attention-phase} show this effect in exact or globally solved examples.

The symbolic charge in \eqref{eq:resource-optimum-main} should be replaced by measured platform costs only after the mathematical feasibility screen.  Sequential depth, codebook reuse, sparsity, kernel fusion, and activation traffic can reverse a conclusion based on parameter bits alone.

\section{Discussion and Limitations}
\label{sec:discussion-main}
\label{sec:discussion}

\subsection{What is genuinely new}
\label{sec:novelty-discussion-main}

The empirical understanding that neural-network depth and parameter precision can trade off is not new.  Quantized approximation theory, oversampled and feedback quantization, mixed-integer control, residual-flow interpretations, and learned codebooks all provide important precedents.  
The contribution here is the connection of these ideas into one target-specific theory with the following jointly supported claims:

\begin{enumerate}[leftmargin=1.8em,itemsep=.35em]
\item the distance to the closed relaxed full-map reachable set is the exact asymptotic floor for a fixed operation library;
\item pure finite depth approaches that set at a target-independent first-order rate under bounded variation and at an explicit H\"older-dependent rate under weaker temporal regularity;
\item one fixed teacher, an exact residual ReLU MLP, a nonuniform two-token attention system, and a defect-robust exposed channel show that the first-order depth price can be necessary;
\item learned dictionary metadata and pure schedule depth are different description resources, with upper, packing, and allocation laws;
\item finite arithmetic can reverse the benefit of depth, while increment error feedback restores the law and admits an exact common-lattice implementation with auditable register widths;
\item prescribed route motion and actual state-driven route changes require different nonsmooth analyses; the latter needs transversality, path control, and hybrid small gain rather than a frozen positive-margin argument; and
\item the structural floor can be lower- and upper-bounded by proof-producing primal--dual certificates, including coupling-robust blockwise HJB witnesses, before training.
\end{enumerate}

\subsection{Representation and optimization remain different frontiers}

The theorems optimize over all declared schedules and metadata values.  They do not show that gradient descent, alternating minimization, local coordinate search, or a compiler finds the optimum.  Figure~\ref{fig:soft-opt-gap} demonstrates that local search can remain orders of magnitude above the exact frontier.  A useful deployment report should therefore contain both a certified or globally optimized representational bracket and the achieved optimization error.

\subsection{Exact finite-depth equivalence is a different problem}
\label{sec:exactness-discussion-main}

Exact bit-level emulation is stricter than vanishing error or accuracy matching.  With a fixed finite dictionary, fixed finite metadata, and input-independent schedules, the union of all finite-depth pure maps is countable, whereas a continuously parameterized teacher family can contain uncountably many maps.  Generic exact emulation therefore requires an additional resource: teacher-specific metadata, auxiliary state or width, input-dependent finite-state control, a finite digital deployment domain, or a weaker behavioral notion.  Formal equivalence verification under integer rounding and overflow semantics is a complementary compilation problem \citep{TeuberEtAl2021Equivalence}; the present resource law supplies approximation radii and structural impossibility certificates rather than a universal exact compiler.

\subsection{Trust boundaries}

Several boundaries should remain visible in every use of the theory.

\paragraph{Tube verification.}
A theorem constant is valid only on the tube for which boundedness, regularity, saturation margin, and field error are established.  Sample maxima are empirical estimates unless accompanied by a sound enclosure.

\paragraph{Finite arithmetic.}
A ``4-bit network'' is not an execution specification.  Weight format, activation format, increment scaling, accumulator width, rescaling, rounding, saturation or wrapping, and carry representation all affect $\mathcal A_D$.  A floating-point carry means the system is not integer only.

\paragraph{Routing.}
The simple-event theorem excludes simultaneous swaps, grazing contacts, and arbitrary chattering.  Its $L^p$ continuum extension permits discontinuous route cells; a uniform $C(\Xi)$ claim does not.

\paragraph{Certification.}
A sampled HJB residual is not a lower certificate.  A numerical SDP value is not a proof until the polynomial identity or interval enclosure is verified.  A finite witness certificate is only as broad as its valid dual interpretation or its separately proved covering remainder.

\paragraph{Evidence.}
The controlled DistilBERT study tests two feed-forward branches and three QAT seeds.  It supports the depth-coherence mechanism but does not establish a universal Transformer scaling law, integer-only execution, or hardware advantage.  The exact and synthetic certificates support the mathematical claims.

\subsection{Validation}
\label{sec:prospective-main}

The artifact bundle contains a strict checkpoint-output contract, exact certificate verifiers, bit-semantic arithmetic modules, and a frozen registry designed for this program.  It also contains QReplace, which turns the declared floor and finite-resource bounds into an auditable recommendation, and QReplaceLean, whose archived pinned build kernel-checks all twelve mapped statements in the exact discrete core; their scopes and status policies are given in Appendices~\ref{app:qreplace-software} and~\ref{app:lean-verification}.  The supplied compact pretrained archive does not contain the QAT checkpoint tensors needed to claim a formal pretrained-block floor, so no such result is manufactured here.

\paragraph{Proof and reproducibility map.}
The complete core resource-theory derivations are collected in Appendix~\ref{app:full-resource-proofs}; the H\"older, prescribed-route, common-lattice, exact neural-converse, matching-depth, and compositional-certificate proofs are collected in Appendix~\ref{app:merged-exact-proofs}.  The specialized soft-threshold chain is modular: well-posedness and invariant tubes appear in Appendix~\ref{soft:app:wellposed}, Euler and switching estimates in Appendix~\ref{soft:app:euler}, the signed-ray sharpness law in Appendix~\ref{soft:app:signed-ray}, certified floor computation in Appendix~\ref{soft:app:computability}, and the exact piecewise-affine matrix certificate in Appendix~\ref{soft:app:pwa}.  Table~\ref{soft:tab:certified-floor} itemizes every error contribution in the terminating floor algorithm.  Architecture, search, checkpoint, broad-stress, and numerical-artifact details are cross-indexed in Appendices~\ref{app:experiments}--\ref{app:reproducibility}; the two reusable software companions are specified in Appendices~\ref{app:qreplace-software} and~\ref{app:lean-verification}.  Thus every figure and table, and every supplementary block used to support a claim in the main paper, has an explicit entry point from the main narrative.

\paragraph{Disclosure of AI-assisted tools.} A general-purpose large language model was used as an assistive tool for drafting and editing, code generation and debugging, and proof exploration. The scopes of machine-checked and executable verification are stated explicitly in the manuscript and supplements.

\section{Conclusion}
\label{sec:conclusion-main}

Depth can replace missing numerical precision, but only in a specific sense and only under identifiable conditions.  The operation library determines a closed relaxed reachable class and therefore an exact structural floor.  Pure finite depth implements compatible bounded-variation relaxed computation at a first-order price, while H\"older temporal regularity yields the corresponding fractional rate.  Fixed teachers in scalar, residual-MLP, and nonuniform-attention systems prove that the first-order price can be necessary; when a coherent high-precision comparator also has first-order error, accuracy matching obeys $D_{\mathrm{match}}(L)=\Theta(L)$.  Learned codebooks add a global metadata resource.  Finite execution can preserve, saturate, or reverse the depth benefit depending on whether small residual increments remain visible; exact common-lattice error feedback shows one concrete way to preserve their accumulated effect.  Route-changing networks introduce a hybrid feedback loop controlled by event transversality and small gain.  Primal--dual certificates turn these objects into feasible, impossible, or unresolved decisions.

The resulting principle is:
\begin{center}
\fbox{\parbox{0.93\linewidth}{\centering
\textbf{Dictionary geometry determines what is reachable.  Depth pays for pure synthesis.  Metadata describes the learned library.  Arithmetic determines whether the microsteps exist physically.  Event geometry determines whether routed paths remain stable.  Certificates determine whether training is warranted.}}}
\end{center}

\appendix

\section{Complete Resource-Theory Proofs and Extensions}
\label{app:full-resource-proofs}
\label{app:resource-theory-proofs}
\label{sec:resource-theory-candidate}

This section strengthens the ideal-arithmetic reachability theorem in five
orthogonal directions: finite arithmetic, fixed-target converses, learned
codebooks, route-changing hybrid dynamics, and certified computation of the
structural floor.  The results are stated for the same lifted state space and
shared-control convention as in \cref{sec:abstract-theory}.  Each theorem
distinguishes the resource that is being varied from the resources held fixed;
no uncounted schedule, scale, accumulator, router, or codebook parameter is
permitted.  When a result requires a finite-dimensional state, this is stated
explicitly.

\subsection{Finite arithmetic: what survives actual execution}
\label{sec:finite-arithmetic-candidate}

For a depth $D$ and $h=T/D$, write
\[
  \Psi_{k,j}(z)=z+hG_j(t_k,z),\qquad t_k=kh,
\]
for the ideal microstep.  Let $\widehat\Psi_{k,j}$ be the corresponding
implemented microstep, including any weight, activation, scale, accumulator,
requantization, or clipping operations.  The implemented and ideal maps are
compared on one common execution tube.

\begin{theorem}[Schedulewise hardware perturbation and reachable-set transfer]
\label[theorem]{thm:hardware-transfer-candidate}
Suppose that every ideal and implemented prefix generated by an admissible
schedule remains in the common tube.  For every $k$ and $j$, assume the ideal
map $\Psi_{k,j}$ is $\kappa_k$-Lipschitz on that tube and
\begin{equation}
  \sup_{z\text{ in the tube}}
  \norm{\widehat\Psi_{k,j}(z)-\Psi_{k,j}(z)}
  \le \varepsilon_{k,j}.
  \label{eq:local-hardware-defect-candidate}
\end{equation}
For a schedule $\sigma\in[J]^D$, define
\begin{equation}
  \mathcal A_D(\sigma)
  :=\sum_{k=0}^{D-1}\varepsilon_{k,\sigma_k}
       \prod_{\ell=k+1}^{D-1}\kappa_\ell,
  \qquad
  \mathcal A_D:=\sup_{\sigma\in[J]^D}\mathcal A_D(\sigma),
  \label{eq:hardware-budget-candidate}
\end{equation}
where an empty product equals one.  Let $\widehat\RR_{b,D}$ be the set of
implemented endpoints, indexed by the same schedules as $\RR_{b,D}$.  Then
\begin{align}
  \norm{\widehat z_D^\sigma-z_D^\sigma}
  &\le \mathcal A_D(\sigma),
  \label{eq:schedule-hardware-bound-candidate}\\
  \dH(\widehat\RR_{b,D},\RR_{b,D})
  &\le \mathcal A_D.
  \label{eq:hardware-hausdorff-candidate}
\end{align}
Consequently, under \Cref{ass:regularity}, for every $D\ge D_0$,
\begin{align}
  \dH(\widehat\RR_{b,D},\RR_{b,\Rel})
  &\le \frac{C_{\mathrm{fh},b}}{D}+\mathcal A_D,
  \label{eq:hardware-relaxed-candidate}\\
  \left|\dist(F^\star,\widehat\RR_{b,D})
       -E_{\infty,b}^{\mathrm{end}}(F^\star)\right|
  &\le \frac{C_{\mathrm{fh},b}}{D}+\mathcal A_D.
  \label{eq:hardware-floor-rate-candidate}
\end{align}
\end{theorem}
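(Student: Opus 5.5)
The plan is a one-step perturbation recursion along a fixed schedule, followed by Hausdorff pairing and the triangle inequality. Fix $\sigma\in[J]^D$ and write $e_k=\widehat z_k^\sigma-z_k^\sigma$, with $e_0=0$ because both recursions start at $z_0$. Since both prefixes lie in the common tube, we may add and subtract $\Psi_{k,\sigma_k}(\widehat z_k^\sigma)$:
\[
\norm{e_{k+1}}\le\norm{\widehat\Psi_{k,\sigma_k}(\widehat z_k^\sigma)-\Psi_{k,\sigma_k}(\widehat z_k^\sigma)}+\norm{\Psi_{k,\sigma_k}(\widehat z_k^\sigma)-\Psi_{k,\sigma_k}(z_k^\sigma)}.
\]
The first term is at most $\varepsilon_{k,\sigma_k}$ by \eqref{eq:local-hardware-defect-candidate}. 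The second is at most $\kappa_k\norm{e_k}$ because $\Psi_{k,\sigma_k}$ is $\kappa_k$-Lipschitz on the tube. Unrolling the linear recursion $\norm{e_{k+1}}\le\varepsilon_{k,\sigma_k}+\kappa_k\norm{e_k}$ from $e_0=0$ should produce exactly the sum in \eqref{eq:hardware-budget-candidate}. I will check this by induction on $k$, showing $\norm{e_k}\le\sum_{i<k}\varepsilon_{i,\sigma_i}\prod_{\ell=i+1}^{k-1}\kappa_\ell$ with the empty-product convention. Setting $k=D$ gives \eqref{eq:schedule-hardware-bound-candidate}.

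For \eqref{eq:hardware-hausdorff-candidate}, both endpoint sets are images of the same index set $[J]^D$. Every ideal endpoint $z_D^\sigma$ therefore has the partner $\widehat z_D^\sigma$ within $\mathcal A_D(\sigma)\le\mathcal A_D$, and conversely. Both directed distances are thus bounded by $\mathcal A_D$. If $\mathcal A_D=\infty$ there is nothing to prove.

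For \eqref{eq:hardware-relaxed-candidate}, I will invoke Theorem~\ref{thm:floor-rate-main}, which gives $\dH(\RR_{b,D},\RR_{b,\Rel})\le C_{\mathrm{fh},b}/D$ for $D\ge D_0$, reading $C_{\mathrm{fh},b}$ as the endpoint constant $C_{\mathrm{syn}}^{\mathrm{end}}$. The claim then follows from the triangle inequality for the Hausdorff (pseudo)distance. Finally, \eqref{eq:hardware-floor-rate-candidate} follows because $A\mapsto\dist(F^\star,A)$ is $1$-Lipschitz under Hausdorff perturbation of nonempty sets, since $\dist(F^\star,A)\le\dist(F^\star,B)+\sup_{b\in B}\dist(b,A)$. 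We also have $\dist(F^\star,\RR_{b,\Rel})=E_{\infty,b}^{\mathrm{end}}(F^\star)$ by definition, which holds whether or not the set is closed.

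The only delicate point, more bookkeeping than a real obstacle, is the tube hypothesis. The Lipschitz estimate is applied at the pair $(\widehat z_k^\sigma,z_k^\sigma)$, and the defect bound is applied at $\widehat z_k^\sigma$. Both points must lie in the region where $\kappa_k$ and $\varepsilon_{k,j}$ are certified, and no interpolation segment is needed beyond that region. This is exactly what the prefix-containment premise supplies. I will state explicitly that the Lipschitz constant is taken on the tube and not on its convex hull, so that no convexity of the tube is required.
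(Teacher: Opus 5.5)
Your proposal is correct and follows the paper's proof essentially step by step. The steps are: the add-and-subtract recursion $\|e_{k+1}\|\le\varepsilon_{k,\sigma_k}+\kappa_k\|e_k\|$, unrolled from $e_0=0$; schedulewise pairing of endpoints to bound both directed Hausdorff distances; and then the Hausdorff triangle inequality with the pure-to-relaxed synthesis theorem, together with the one-Lipschitz dependence of target distance on the set.
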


\begin{proof}
For a fixed schedule, let $e_k=\widehat z_k-z_k$.  By adding and subtracting
$\Psi_{k,\sigma_k}(\widehat z_k)$,
\[
  \norm{e_{k+1}}
  \le \varepsilon_{k,\sigma_k}+\kappa_k\norm{e_k}.
\]
Iteration gives \eqref{eq:schedule-hardware-bound-candidate}.  Every ideal
endpoint is paired with the implemented endpoint having the same schedule,
and conversely, so the two directed set distances are bounded by the same
quantity.  This proves \eqref{eq:hardware-hausdorff-candidate}.  The remaining
claims follow from the triangle inequality for Hausdorff distance and the fact
that distance to a nonempty set is one-Lipschitz under Hausdorff perturbation.
\end{proof}

\begin{remark}[A concrete local-defect decomposition]
For a saturating fixed-point microstep of the form
\[
  \widehat\Psi_{k,j}(z)
  =Q^{\mathrm{sat}}_{\alpha_k,M_k}
   \left(z+\widehat h_k\widehat G_{k,j}(z)
               +e^{\mathrm{acc}}_{k,j}(z)\right),
\]
a valid bound in a finite-dimensional Euclidean state is
\begin{equation}
  \varepsilon_{k,j}
  \le B\abs{\widehat h_k-h}
     +\abs{\widehat h_k}\eta^G_{k,j}
     +\eta^{\mathrm{acc}}_{k,j}
     +\frac{\alpha_k\sqrt q}{2}
     +\rho^{\mathrm{sat}}_{k,j},
  \label{eq:hardware-decomposition-candidate}
\end{equation}
provided
$\norm{\widehat G_{k,j}(z)-G_j(t_k,z)}\le\eta^G_{k,j}$ and
$\norm{e^{\mathrm{acc}}_{k,j}(z)}\le\eta^{\mathrm{acc}}_{k,j}$.
Here $\rho^{\mathrm{sat}}_{k,j}$ is the distance moved by clipping.  The same
formula applies pointwise, followed by a supremum, in a finite-witness lifted
state.  Thus weight/codebook error, scale metadata, accumulation, write-back
rounding, and saturation enter as distinct resources rather than one
undifferentiated ``quantization error.''
\end{remark}

\begin{corollary}[Naive microstep write-back has a depth-dependent optimum]
\label[corollary]{cor:naive-rounding-candidate}
Suppose every write-back contributes a local defect at most $\rho_z$ and every
transition product in \eqref{eq:hardware-budget-candidate} is at most
$\overline{\mathcal S}_{\mathrm E}$.  Then
\begin{equation}
  \mathcal A_D\le D\rho_z\overline{\mathcal S}_{\mathrm E},
  \qquad
  \left|E_D^{\mathrm{hw}}-E_\infty\right|
  \le \frac{C_{\mathrm{fh}}}{D}
      +D\rho_z\overline{\mathcal S}_{\mathrm E}.
  \label{eq:naive-depth-envelope-candidate}
\end{equation}
In particular, this schedule-uniform worst-case envelope can certify a total
$O(D^{-1})$ law under write-back after every microstep only when
$\rho_z=O(D^{-2})$.  The statement concerns the generic certificate; special
architectures may exhibit cancellations that the envelope does not use.  For fixed $\rho_z$, the upper
envelope is minimized at
\[
  D_{\mathrm{opt}}\asymp
  \sqrt{\frac{C_{\mathrm{fh}}}
              {\rho_z\overline{\mathcal S}_{\mathrm E}}},
\]
with minimum order
$\sqrt{C_{\mathrm{fh}}\rho_z\overline{\mathcal S}_{\mathrm E}}$.
\end{corollary}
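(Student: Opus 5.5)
The corollary follows from \Cref{thm:hardware-transfer-candidate} plus an elementary one-variable minimization. I would argue in three steps.

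\emph{Step 1: bound the arithmetic radius.} Fix a schedule $\sigma$. In \eqref{eq:hardware-budget-candidate}, each local defect satisfies $\varepsilon_{k,\sigma_k}\le\rho_z$ by hypothesis. Each downstream product $\prod_{\ell=k+1}^{D-1}\kappa_\ell$ is at most $\overline{\mathcal S}_{\mathrm E}$, and this includes the empty product, which equals one. The sum therefore has $D$ terms, each at most $\rho_z\overline{\mathcal S}_{\mathrm E}$, so $\mathcal A_D(\sigma)\le D\rho_z\overline{\mathcal S}_{\mathrm E}$. Taking the supremum over $\sigma$ gives the first inequality in \eqref{eq:naive-depth-envelope-candidate}.

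\emph{Step 2: transfer to the target error.} Substitute this bound into \eqref{eq:hardware-floor-rate-candidate}. Here I would write $E_D^{\mathrm{hw}}=\dist(F^\star,\widehat\RR_{b,D})$ and $C_{\mathrm{fh}}=C_{\mathrm{fh},b}$, the synthesis constant of \Cref{thm:floor-rate-main}. This gives the second inequality for all $D\ge D_0$.

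\emph{Step 3: read off the consequences.} For the ``only when'' statement, the envelope $C_{\mathrm{fh}}/D+D\rho_z\overline{\mathcal S}_{\mathrm E}$ is $O(D^{-1})$ exactly when $D\rho_z\overline{\mathcal S}_{\mathrm E}=O(D^{-1})$. Treating $\overline{\mathcal S}_{\mathrm E}$ as bounded below by a positive constant (for example $\overline{\mathcal S}_{\mathrm E}\ge1$, since the empty product is one), this is equivalent to $\rho_z=O(D^{-2})$. This is a statement about the certificate, not a lower bound on the true error, and I would say so explicitly.

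For the optimum, set $f(D)=C_{\mathrm{fh}}/D+D\rho_z\overline{\mathcal S}_{\mathrm E}$ and treat $D$ as a positive real. Then $f'(D)=-C_{\mathrm{fh}}/D^2+\rho_z\overline{\mathcal S}_{\mathrm E}$, so the unique critical point is $D^*=\sqrt{C_{\mathrm{fh}}/(\rho_z\overline{\mathcal S}_{\mathrm E})}$. Since $f$ is convex, this is the minimizer, and $f(D^*)=2\sqrt{C_{\mathrm{fh}}\rho_z\overline{\mathcal S}_{\mathrm E}}$; the same value also follows directly from AM--GM.

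The only subtlety is integrality and the range $D\ge D_0$. Rounding $D^*$ to a nearby integer changes $f$ by at most a constant factor. This is why the statement is phrased with $\asymp$ and ``order''. It also requires $D^*\ge D_0$, which holds once $\rho_z$ is small enough.
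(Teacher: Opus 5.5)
Your proposal is correct and follows the paper's own (implicit) route. You bound each of the $D$ summands of $\mathcal A_D(\sigma)$ by $\rho_z\overline{\mathcal S}_{\mathrm E}$, substitute into the hardware-transfer bound, and minimize $C_{\mathrm{fh}}/D+D\rho_z\overline{\mathcal S}_{\mathrm E}$ by convexity or AM--GM. Noting that the empty product forces $\overline{\mathcal S}_{\mathrm E}\ge1$ makes the ``only when'' direction rigorous. Your claimed equivalence also needs $\overline{\mathcal S}_{\mathrm E}$ bounded above uniformly in $D$, e.g.\ by $e^{L_zT}$, but the statement only asserts the necessity direction.
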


\begin{proposition}[A fixed activation grid can annihilate all small residual steps]
\label[proposition]{prop:fixed-grid-no-go-candidate}
Let $Q_\Delta$ denote nearest rounding to $\Delta\mathbb Z$, and consider
\begin{equation}
  \widehat x_{k+1}=Q_\Delta\left(\widehat x_k+\frac1D\right),
  \qquad \widehat x_0=0.
  \label{eq:fixed-grid-no-go-recursion-candidate}
\end{equation}
The continuous target is $x^\star(1)=1$.  If $D>2/\Delta$, then
$\widehat x_k=0$ for every $k$ and
\begin{equation}
  \abs{\widehat x_D-x^\star(1)}=1.
  \label{eq:fixed-grid-no-go-error-candidate}
\end{equation}
Thus increasing depth need not improve an implementation with a fixed
activation lattice.
\end{proposition}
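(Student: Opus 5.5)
The proof is an induction on $k$. The invariant is that $\widehat x_k=0$ for every $0\le k\le D$. The base case $\widehat x_0=0$ is given.

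For the step, suppose $\widehat x_k=0$. The pre-rounding value is then $\widehat x_k+1/D=1/D$. The hypothesis $D>2/\Delta$ rearranges to $0<1/D<\Delta/2$. So the candidate value lies strictly inside the open half-cell around the grid point $0$.

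Nearest rounding to $\Delta\mathbb Z$ sends every real $u$ with $\abs{u}<\Delta/2$ to $0$. The reason is that $0$ is at distance $\abs{u}<\Delta/2$ from $u$, while every other grid point $n\Delta$ with $n\neq0$ is at distance at least $\Delta-\abs{u}>\Delta/2$. The inequality is strict, so no tie can occur and the tie-breaking convention of $Q_\Delta$ never enters. This gives $\widehat x_{k+1}=Q_\Delta(1/D)=0$ and closes the induction.

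The terminal error then follows directly. Since $\widehat x_D=0$, we get $\abs{\widehat x_D-x^\star(1)}=\abs{0-1}=1$. Here $x^\star(1)=1$ is the time-one endpoint of $\dot x=1$ with $x(0)=0$. The same value is also the exact endpoint of the ideal Euler recursion $x_{k+1}=x_k+1/D$, since that recursion sums $D$ increments of $1/D$.

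The only point that needs care is the strictness of the inequality. It is what keeps $1/D$ out of the tie set $\{\pm\Delta/2\}$. If $D=2/\Delta$ were allowed, the outcome would depend on the tie rule, and a round-half-up convention could move the state. The concluding remark, that increasing depth need not help, follows because the conclusion holds for every $D>2/\Delta$. The error therefore stays equal to one along the whole tail of depths, rather than decaying like the ideal $O(D^{-1})$ law of \Cref{thm:floor-rate-main}.
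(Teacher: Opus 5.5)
Your proof is correct and follows the paper's argument: induction on $k$, with $D>2/\Delta$ giving $1/D<\Delta/2$, so nearest rounding returns each proposed state to the grid point $0$. You also spell out the strict distance comparison to the other grid points, which shows why no tie can arise; the paper leaves this implicit.
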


\begin{proof}
The condition $D>2/\Delta$ gives $1/D<\Delta/2$.  Starting from a grid point,
nearest rounding maps every proposed increment back to the same point.  The
claim follows by induction.
\end{proof}

The preceding obstruction is avoided when the quantization residual is kept
and fed into the next increment.  Let $Q:\Z\to\Z$ satisfy
\begin{equation}
  \norm{v-Q(v)}\le\rho
  \label{eq:bounded-increment-quantizer-candidate}
\end{equation}
throughout the nonsaturating execution range.  For a fixed schedule, define
\begin{align}
  q_k&=Q\bigl(hG_{\sigma_k}(t_k,\widehat z_k)+r_k\bigr),
  \label{eq:ef-quantize-candidate}\\
  \widehat z_{k+1}&=\widehat z_k+q_k,
  \label{eq:ef-state-candidate}\\
  r_{k+1}&=hG_{\sigma_k}(t_k,\widehat z_k)+r_k-q_k,
  \qquad r_0=0.
  \label{eq:ef-residual-candidate}
\end{align}

\begin{theorem}[Error feedback removes the factor $D$]
\label[theorem]{thm:error-feedback-candidate}
Assume the ideal and error-feedback paths stay in a common tube on which every
field is $L_z$-Lipschitz.  Let $z_k$ be the ideal pure Euler path with the same
schedule.  Then
\begin{equation}
  \max_{0\le k\le D}\norm{\widehat z_k-z_k}
  \le \rho(1+hL_z)^D
  \le \rho e^{L_zT}.
  \label{eq:ef-bound-candidate}
\end{equation}
Consequently,
\begin{equation}
  \dH(\widehat\RR^{\mathrm{EF}}_{b,D},\RR_{b,\Rel})
  \le \frac{C_{\mathrm{fh},b}}{D}+\rho e^{L_zT}.
  \label{eq:ef-floor-rate-candidate}
\end{equation}
Therefore $\rho=O(D^{-1})$ is sufficient to retain the first-order total law,
in contrast with the $O(D^{-2})$ requirement for naive microstep write-back.
\end{theorem}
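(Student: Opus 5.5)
The plan is to follow the transformed-error argument already used for \Cref{thm:error-feedback-main}, specialized to exact fields ($\eta_G=0$), and to keep the sharper discrete factor $(1+hL_z)^D$. The reachable-set statement will then follow from \Cref{thm:floor-rate-main} and the Hausdorff triangle inequality.

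\textbf{Step 1: carry identity.} Fix a schedule $\sigma$. Let $e_k=\widehat z_k-z_k$ and set $s_k=e_k+r_k$, so that $s_0=0$. From \eqref{eq:ef-residual-candidate} we have $q_k=hG_{\sigma_k}(t_k,\widehat z_k)+r_k-r_{k+1}$. Substituting this into \eqref{eq:ef-state-candidate} and subtracting the ideal Euler step $z_{k+1}=z_k+hG_{\sigma_k}(t_k,z_k)$ gives
\[
s_{k+1}=s_k+h\{G_{\sigma_k}(t_k,\widehat z_k)-G_{\sigma_k}(t_k,z_k)\}.
\]
The rounding residual has telescoped out; this is the key point of the argument.

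\textbf{Step 2: Gronwall recursion.} Since both paths lie in the common tube, the field is $L_z$-Lipschitz there, so
\[
\norm{s_{k+1}}\le \norm{s_k}+hL_z\norm{e_k}\le(1+hL_z)\norm{s_k}+hL_z\rho.
\]
The second inequality uses $e_k=s_k-r_k$ together with $\norm{r_k}\le\rho$. The carry bound itself holds because $r_{k+1}=v-Q(v)$ with $v=hG+r_k$, and \eqref{eq:bounded-increment-quantizer-candidate} applies on the nonsaturating range. One checks by induction that $\norm{s_k}\le\rho\{(1+hL_z)^k-1\}$: the recursion maps $\rho\{(1+hL_z)^k-1\}$ to $\rho\{(1+hL_z)^{k+1}-(1+hL_z)+hL_z\}=\rho\{(1+hL_z)^{k+1}-1\}$. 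It follows that $\norm{e_k}\le\norm{s_k}+\rho\le\rho(1+hL_z)^k\le\rho(1+hL_z)^D$. The final bound $(1+hL_z)^D\le e^{L_zhD}=e^{L_zT}$ gives \eqref{eq:ef-bound-candidate}.

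\textbf{Step 3: set statement.} Pair each error-feedback endpoint with the ideal pure endpoint of the same schedule, and conversely. This gives $\dH(\widehat\RR^{\mathrm{EF}}_{b,D},\RR_{b,D})\le\rho e^{L_zT}$. Adding $\dH(\RR_{b,D},\RR_{b,\Rel})\le C_{\mathrm{fh},b}/D$ from \Cref{thm:floor-rate-main} yields \eqref{eq:ef-floor-rate-candidate}. If $\rho=O(D^{-1})$, both terms are first order. By contrast, \Cref{cor:naive-rounding-candidate} needs $\rho_z=O(D^{-2})$.

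\textbf{Main obstacle.} The delicate point is justifying $\norm{r_k}\le\rho$ for every $k$. This requires the quantizer input $hG+r_k$ to stay within the declared nonsaturating range, and I would take that as part of the tube hypothesis. There is also a minor consistency check: the Lipschitz estimate is applied only at grid states $\widehat z_k,z_k$, and both lie in the tube by assumption, so no interpolation segment is needed.
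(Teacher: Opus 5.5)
Your proposal is correct and follows the paper's proof essentially line for line. The transformed error $s_k=e_k+r_k$ telescopes out the rounding residual, the recursion $\norm{s_{k+1}}\le(1+hL_z)\norm{s_k}+hL_z\rho$ is iterated from $s_0=0$, and the set bound follows by pairing equal schedules and applying the pure-to-relaxed synthesis theorem through the Hausdorff triangle inequality. Your remark that $\norm{r_k}\le\rho$ needs every quantizer input to stay in the nonsaturating range is the same hypothesis the paper relies on implicitly when it derives the carry bound from the quantizer bound.
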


\begin{proof}
The residual definition and \eqref{eq:bounded-increment-quantizer-candidate}
give $\norm{r_k}\le\rho$.  Let
$e_k=\widehat z_k-z_k$ and $s_k=e_k+r_k$.  Since
\[
  q_k=hG_{\sigma_k}(t_k,\widehat z_k)+r_k-r_{k+1},
\]
we have
\[
  s_{k+1}
  =s_k+h\left[G_{\sigma_k}(t_k,\widehat z_k)
                   -G_{\sigma_k}(t_k,z_k)\right].
\]
Because $e_k=s_k-r_k$,
\[
  \norm{s_{k+1}}
  \le(1+hL_z)\norm{s_k}+hL_z\rho.
\]
With $s_0=0$, iteration yields
$\norm{s_k}\le\rho((1+hL_z)^k-1)$.  Hence
\[
  \norm{e_k}\le\norm{s_k}+\norm{r_k}
  \le\rho(1+hL_z)^k,
\]
which proves \eqref{eq:ef-bound-candidate}.  The reachable-set conclusion
follows from \cref{thm:hausdorff}.
\end{proof}

\begin{theorem}[Error feedback with inexact executed fields]
\label[theorem]{thm:error-feedback-field-v4}
For each grid point and atom, let $\widehat G_{k,j}$ be the field actually
computed before increment quantization.  Suppose the ideal and implemented
paths remain in a common tube, every ideal atom is $L_z$-Lipschitz there, and
\begin{equation}
  \sup_{k,j,z\text{ in the tube}}
  \norm{\widehat G_{k,j}(z)-G_j(t_k,z)}
  \le \eta_G.
  \label{eq:executed-field-error-v4}
\end{equation}
Run the error-feedback recursion
\begin{align}
 q_k&=Q\!\left(h\widehat G_{k,\sigma_k}(\widehat z_k)+r_k\right),\notag\\
 \widehat z_{k+1}&=\widehat z_k+q_k,\notag\\
 r_{k+1}&=h\widehat G_{k,\sigma_k}(\widehat z_k)+r_k-q_k,
 \qquad r_0=0,
 \label{eq:field-aware-ef-v4}
\end{align}
where $\norm{v-Q(v)}\le\rho$ on the entire nonsaturating execution range.
For the ideal pure Euler path with the same schedule,
\begin{equation}
 \max_{k\le D}\norm{\widehat z_k-z_k}
 \le \rho e^{L_zT}+\eta_G\Phi_{L_z}(T).
 \label{eq:field-aware-ef-bound-v4}
\end{equation}
Consequently,
\begin{equation}
 \dH(\widehat\RR^{\mathrm{EF}}_{b,D},\RR_{b,\Rel})
 \le \frac{C_{\mathrm{fh},b}}{D}
      +\rho e^{L_zT}+\eta_G\Phi_{L_z}(T).
 \label{eq:field-aware-ef-reach-v4}
\end{equation}
The $\eta_G$ term is an error in the residual \emph{field}; the microstep
multiplies it by $h$, so it accumulates over the fixed horizon rather than as
$D\eta_G$.
\end{theorem}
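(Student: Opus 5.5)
The argument reuses the transformed-error device from the proof of \cref{thm:error-feedback-candidate}, now with the executed field in place of the ideal one. Fix a schedule $\sigma$, and let $z_k$ be the ideal pure Euler path and $(\widehat z_k,r_k)$ the recursion \eqref{eq:field-aware-ef-v4}. Because $r_{k+1}=v_k-q_k$ with $v_k=h\widehat G_{k,\sigma_k}(\widehat z_k)+r_k$, the radius hypothesis on $Q$ gives $\norm{r_k}\le\rho$ for every $k$. This is exactly where the quantizer bound must hold on the whole nonsaturating range containing every pre-rounding value $v_k$.

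Next put $e_k=\widehat z_k-z_k$ and $s_k=e_k+r_k$. From $q_k=h\widehat G_{k,\sigma_k}(\widehat z_k)+r_k-r_{k+1}$ we obtain the exact identity
\[
 s_{k+1}=s_k+h\bigl[\widehat G_{k,\sigma_k}(\widehat z_k)-G_{\sigma_k}(t_k,z_k)\bigr].
\]
Split the bracket by inserting $G_{\sigma_k}(t_k,\widehat z_k)$. The first piece is at most $\eta_G$ by \eqref{eq:executed-field-error-v4}. The second piece is at most $L_z\norm{e_k}\le L_z(\norm{s_k}+\rho)$, using Lipschitzness on the common tube and the bound $e_k=s_k-r_k$. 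Together these give the recursion $\norm{s_{k+1}}\le(1+hL_z)\norm{s_k}+h(L_z\rho+\eta_G)$ with $s_0=0$.

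Discrete Gronwall then yields
\[
 \norm{s_k}\le\rho\bigl[(1+hL_z)^k-1\bigr]+\eta_G\,h\sum_{i<k}(1+hL_z)^i .
\]
The geometric sum equals $((1+hL_z)^k-1)/L_z\le\Phi_{L_z}(T)$ when $L_z>0$, and it equals $kh\le T$ when $L_z=0$. Adding back $\norm{r_k}\le\rho$ gives $\norm{e_k}\le\rho(1+hL_z)^k+\eta_G\Phi_{L_z}(T)\le\rho e^{L_zT}+\eta_G\Phi_{L_z}(T)$, which is \eqref{eq:field-aware-ef-bound-v4}.

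For the reachable-set statement \eqref{eq:field-aware-ef-reach-v4}, pair the implemented and ideal endpoints that share a schedule. The schedulewise bound then controls both directed distances, so $\dH(\widehat\RR^{\mathrm{EF}}_{b,D},\RR_{b,D})$ is at most the right side of \eqref{eq:field-aware-ef-bound-v4}. Combining this with \cref{thm:hausdorff} through the Hausdorff triangle inequality finishes the proof.

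The only delicate step is bookkeeping the constant in the $\eta_G$ term. It must be the Gronwall sum $h\sum_i(1+hL_z)^i$, not $D\eta_G$, which is the point of the final remark in the statement. The $L_z=0$ case should also be checked separately, so that the definition \eqref{eq:gronwall-factor-main} applies.
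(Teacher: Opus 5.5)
Your proposal is correct and is essentially the paper's own proof. Both use the transformed error $s_k=e_k+r_k$ and insert $G_{\sigma_k}(t_k,\widehat z_k)$ to separate the $\eta_G$ field defect from the $L_z(\norm{s_k}+\rho)$ Lipschitz term. Both then apply discrete Gronwall from $s_0=0$, treat $L_z=0$ through $\Phi_{L_z}(T)=T$, and finish by schedule pairing plus the Hausdorff triangle inequality; your geometric-sum bound reproduces the paper's $\eta_G\bigl((1+hL_z)^k-1\bigr)/L_z$ term exactly.
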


\begin{proof}
Let $e_k=\widehat z_k-z_k$ and $s_k=e_k+r_k$.  The residual identity gives
\[
 s_{k+1}=s_k+h\left[
 \widehat G_{k,\sigma_k}(\widehat z_k)
 -G_{\sigma_k}(t_k,z_k)
 \right].
\]
Since $e_k=s_k-r_k$ and $\norm{r_k}\le\rho$,
\[
 \norm{s_{k+1}}
 \le(1+hL_z)\norm{s_k}+hL_z\rho+h\eta_G.
\]
Iteration from $s_0=0$ yields, when $L_z>0$,
\[
 \norm{s_k}
 \le \rho\bigl((1+hL_z)^k-1\bigr)
   +\frac{\eta_G}{L_z}\bigl((1+hL_z)^k-1\bigr).
\]
Adding $\norm{r_k}\le\rho$ and using
$(1+hL_z)^D\le e^{L_zT}$ proves
\eqref{eq:field-aware-ef-bound-v4}.  When $L_z=0$, the same recursion gives
$\norm{e_k}\le\rho+t_k\eta_G$, which is the stated formula with
$\Phi_{L_z}(T)=T$.  Pair schedules and apply the baseline Hausdorff theorem.
\end{proof}

\begin{corollary}[Depth-scaled increments preserve the first-order law]
\label[corollary]{cor:scaled-error-feedback-v2}
Let $\overline Q_b$ be a normalized increment quantizer satisfying
\begin{equation}
  \norm{u-\overline Q_b(u)}\le \overline\rho_b
  \label{eq:normalized-quantizer-v2}
\end{equation}
throughout a normalized nonsaturating range.  For $h=T/D$, run
\begin{align}
  \overline q_k
  &=\overline Q_b\!\left(
      G_{\sigma_k}(t_k,\widehat z_k)+\overline r_k
    \right),
  \label{eq:scaled-ef-q-v2}\\
  \widehat z_{k+1}
  &=\widehat z_k+h\overline q_k,
  \label{eq:scaled-ef-state-v2}\\
  \overline r_{k+1}
  &=G_{\sigma_k}(t_k,\widehat z_k)+\overline r_k-\overline q_k,
  \qquad \overline r_0=0.
  \label{eq:scaled-ef-residual-v2}
\end{align}
If the ideal and implemented paths remain in a common tube, then
\begin{equation}
  \max_{k\le D}\norm{\widehat z_k-z_k}
  \le \frac{T\overline\rho_b e^{L_zT}}{D},
  \label{eq:scaled-ef-schedule-v2}
\end{equation}
and
\begin{equation}
  \dH(\widehat\RR^{\mathrm{scaled\text{-}EF}}_{b,D},\RR_{b,\Rel})
  \le
  \frac{C_{\mathrm{fh},b}+T\overline\rho_b e^{L_zT}}{D}.
  \label{eq:scaled-ef-reach-v2}
\end{equation}
Thus a fixed normalized increment alphabet can preserve the $D^{-1}$ law;
its physical scale shrinks with the residual step $h$.
\end{corollary}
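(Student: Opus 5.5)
The plan is to reduce Corollary~\ref{cor:scaled-error-feedback-v2} to Theorem~\ref{thm:error-feedback-candidate} by a change of variables, rather than repeat the transformed-error argument. Define the physical carry $r_k:=h\overline r_k$ and the physical quantizer $Q(v):=h\,\overline Q_b(v/h)$. Multiplying \eqref{eq:scaled-ef-q-v2}--\eqref{eq:scaled-ef-residual-v2} by $h$ then gives
\[
q_k:=h\overline q_k=Q\bigl(hG_{\sigma_k}(t_k,\widehat z_k)+r_k\bigr),\qquad
\widehat z_{k+1}=\widehat z_k+q_k,
\]
\[
r_{k+1}=hG_{\sigma_k}(t_k,\widehat z_k)+r_k-q_k,\qquad r_0=0.
\]
These are exactly \eqref{eq:ef-quantize-candidate}--\eqref{eq:ef-residual-candidate}.

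Next I check the quantizer hypothesis \eqref{eq:bounded-increment-quantizer-candidate}. For every physical argument $v=hu$ with $u$ in the normalized nonsaturating range, we have
\[
\norm{v-Q(v)}=h\norm{u-\overline Q_b(u)}\le h\overline\rho_b=:\rho .
\]
The arguments actually used are $u=G_{\sigma_k}(t_k,\widehat z_k)+\overline r_k$, and the corollary's premise places them in that normalized range. So the hypothesis holds along the execution with $\rho=T\overline\rho_b/D$. The only other requirement is that both paths stay in the common tube with $L_z$-Lipschitz fields, which is assumed.

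Theorem~\ref{thm:error-feedback-candidate} then gives
\[
\max_k\norm{\widehat z_k-z_k}\le\rho e^{L_zT}=\frac{T\overline\rho_b e^{L_zT}}{D},
\]
which is \eqref{eq:scaled-ef-schedule-v2}. For the reachable-set bound, I pair each implemented endpoint with the ideal pure endpoint of the same schedule, so the Hausdorff distance to $\RR_{b,D}$ is at most this schedule-uniform radius. Adding the synthesis bound of Theorem~\ref{thm:hausdorff} through the Hausdorff triangle inequality gives \eqref{eq:scaled-ef-reach-v2}.

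The main obstacle is keeping the normalized carry bounded: $\overline r_k$ must stay in the range where $\overline\rho_b$ is valid. If this is not taken as part of the nonsaturating premise, I would show it by induction. Each step gives $\norm{\overline r_{k+1}}\le\overline\rho_b$, and together with the field bound $\norm{G}\le B$ this keeps the quantizer argument within a fixed ball of radius $B+\overline\rho_b$. I would state the range assumption as covering that ball.
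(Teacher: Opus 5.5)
Your proposal is correct and matches the paper's proof. The rescaling $Q(v)=h\,\overline Q_b(v/h)$, $r_k=h\overline r_k$ turns the scaled recursion into the error-feedback recursion of Theorem~\ref{thm:error-feedback-candidate} with $\rho=h\overline\rho_b=T\overline\rho_b/D$, and both bounds then follow directly. Your extra induction, which keeps the normalized quantizer argument inside a ball of radius $B+\overline\rho_b$, is a sound refinement; the paper absorbs that point into the ``normalized nonsaturating range'' premise.
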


\begin{proof}
Set $Q_{h,b}(v)=h\overline Q_b(v/h)$ and
$r_k=h\overline r_k$.  Then
$\norm{v-Q_{h,b}(v)}\le h\overline\rho_b$ and the recursion is exactly
\eqref{eq:ef-quantize-candidate}--\eqref{eq:ef-residual-candidate} with
$\rho=h\overline\rho_b$.  Apply
\cref{thm:error-feedback-candidate}.
\end{proof}

\begin{corollary}[Activation-bit and accumulator-bit scaling]
\label[corollary]{cor:activation-bit-scaling-v2}
Suppose a $q$-dimensional full state is quantized after every microstep by a
coordinatewise $b_z$-bit uniform quantizer on $[-R,R]$.  Its Euclidean
covering radius obeys
\begin{equation}
  \rho_z\le \frac{\sqrt q\,R}{2^{b_z}-1}.
  \label{eq:state-grid-radius-v2}
\end{equation}
The naive write-back envelope in
\cref{cor:naive-rounding-candidate} can certify an $O(D^{-1})$ law only if
\begin{equation}
  b_z\ge 2\log_2D+\frac12\log_2q+O(1).
  \label{eq:naive-bit-growth-v2}
\end{equation}
By contrast, if the normalized increment in
\cref{cor:scaled-error-feedback-v2} lies in $[-M,M]^q$, then a fixed
$b_\Delta$-bit normalized quantizer has
\begin{equation}
  \overline\rho_{b_\Delta}
  \le \frac{\sqrt q\,M}{2^{b_\Delta}-1}
  \label{eq:increment-radius-v2}
\end{equation}
and preserves a first-order total law with fixed $b_\Delta$.  If the emitted
increment codes are integers of coordinate magnitude at most $M_b$, exact
worst-case accumulation over $D$ steps requires a signed accumulator of width
at least
\begin{equation}
  1+\left\lceil\log_2(DM_b+1)\right\rceil,
  \label{eq:accumulator-guard-bits-v2}
\end{equation}
namely only $\log_2D+O(1)$ additional guard bits beyond the increment code.
\end{corollary}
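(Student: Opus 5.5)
The corollary has four parts, and I will handle each one directly rather than build new machinery.

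\emph{Covering radius.} A coordinatewise uniform $b_z$-bit quantizer on $[-R,R]$ has $2^{b_z}$ levels and hence $2^{b_z}-1$ cells of width $2R/(2^{b_z}-1)$. Nearest rounding moves each coordinate by at most half a cell, $R/(2^{b_z}-1)$. Summing the squared coordinate displacements over $q$ coordinates and taking the square root gives \eqref{eq:state-grid-radius-v2}. This assumes the pre-rounding state lies in $[-R,R]^q$, i.e.\ the nonsaturating tube hypothesis, which I will state explicitly.

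\emph{Naive write-back.} I will substitute this radius into the envelope of \cref{cor:naive-rounding-candidate}. Requiring $D\rho_z\overline{\mathcal S}_{\mathrm E}\le C/D$ means $\rho_z\le C'D^{-2}$. With the bound above this gives $\sqrt q R/(2^{b_z}-1)\lesssim D^{-2}$, equivalently $b_z\ge 2\log_2 D+\tfrac12\log_2 q+O(1)$. The $O(1)$ absorbs $\log_2 R$, $\log_2\overline{\mathcal S}_{\mathrm E}$ and the constant $C'$.

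There is a logical subtlety in the phrase ``only if''. The claim concerns what the \emph{envelope} can certify, not what the architecture achieves. If the radius actually realized is of order $\sqrt qR2^{-b_z}$, then the envelope term $D\rho_z\overline{\mathcal S}_{\mathrm E}$ is $O(D^{-1})$ exactly when the displayed inequality holds. I will phrase the necessity relative to the worst-case radius, which is attained up to a constant factor by the uniform grid. I expect this framing to be the only delicate point.

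\emph{Error feedback.} I will apply the same per-coordinate half-cell computation to the normalized range $[-M,M]^q$, which gives \eqref{eq:increment-radius-v2}. Invoking \cref{cor:scaled-error-feedback-v2} then yields error $T\overline\rho_{b_\Delta}e^{L_zT}/D$, which is first order for fixed $b_\Delta$. One hypothesis must be checked: the quantizer input $G+\overline r_k$ has to stay within the normalized range. I will include this in the nonsaturating-range assumption, noting that $\|\overline r_k\|\le\overline\rho_b$ is bounded.

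\emph{Accumulator width.} A sum of $D$ integers of magnitude at most $M_b$ has magnitude at most $DM_b$. Representing every integer in $[-N,N]$ in signed two's complement needs $1+\lceil\log_2(N+1)\rceil$ bits, since $2^{b-1}-1\ge N$. Taking $N=DM_b$ gives \eqref{eq:accumulator-guard-bits-v2}. Finally, $\log_2(DM_b+1)\le\log_2D+\log_2(M_b+1)$, so the accumulator needs only $\log_2D+O(1)$ guard bits beyond the increment code.
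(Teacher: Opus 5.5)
Your proposal is correct and follows the paper's proof essentially step for step: take half the coordinate grid spacing and convert it to the Euclidean norm, substitute into the write-back envelope of \cref{cor:naive-rounding-candidate}, appeal to \cref{cor:scaled-error-feedback-v2}, and combine the bound $|\sum_k c_k|\le DM_b$ with the two's-complement range criterion. You also make explicit two points the paper leaves implicit: the ``only if'' direction needs the covering radius to be attained, which the uniform grid does exactly at a cell center, and the quantizer input $G+\overline r_k$ must stay in the nonsaturating range.
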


\begin{proof}
The radius formulas are half the coordinate grid spacing, converted to the
Euclidean norm.  Substitute \eqref{eq:state-grid-radius-v2} into
\eqref{eq:naive-depth-envelope-candidate}; requiring the write-back term to
be $O(D^{-1})$ gives \eqref{eq:naive-bit-growth-v2}.  The increment statement
uses \cref{cor:scaled-error-feedback-v2}.  Finally, the absolute value of a
sum of $D$ integer codes of magnitude at most $M_b$ is at most $DM_b$.
\end{proof}

\begin{corollary}[Bounded-domain Transformer residual streams]
\label[corollary]{cor:transformer-arithmetic-v2}
Consider a finite-witness or pointwise bounded-domain Transformer residual
field for which the normalization, attention, and MLP estimates supply a
common field bound $B_{\mathrm{tr}}$ and state-Lipschitz constant
$L_{\mathrm{tr}}$ on a $q$-dimensional residual stream.  Suppose the executed
field has uniform realization error $\eta_{G,D}$, the increment quantizer has
normalized radius $\overline\rho_b$, residual accumulation is exact, no
saturation occurs, and the shared residual scale is $h=T/D$.  Suppose all
remaining implementation effects, after the field computation and
error-feedback increment quantization, have a uniform schedulewise endpoint
radius at most $\mathcal A_D^{\mathrm{other}}$.  Then
\begin{equation}
 \left|E_D^{\mathrm{tr,impl}}-E_{\infty}^{\mathrm{tr}}\right|
 \le
 \frac{C_{\mathrm{fh,tr}}+T\overline\rho_b e^{L_{\mathrm{tr}}T}}{D}
 +\eta_{G,D}\Phi_{L_{\mathrm{tr}}}(T)
 +\mathcal A_D^{\mathrm{other}},
 \label{eq:transformer-arithmetic-law-v2}
\end{equation}
where
\[
 \Phi_L(T):=\begin{cases}(e^{LT}-1)/L,&L>0,\\T,&L=0,\end{cases}
\]
and $\mathcal A_D^{\mathrm{other}}$ contains only scale, accumulator, or
other local defects not represented by the field and increment bounds.
Consequently, depth substitution requires execution that preserves residual
increments; quantizing the complete residual state after every microblock
obeys the stricter bit-growth law in
\cref{cor:activation-bit-scaling-v2}.
\end{corollary}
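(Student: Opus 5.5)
The plan is to assemble \cref{cor:transformer-arithmetic-v2} from three results already established, joined by the triangle inequality for Hausdorff distance. The three pieces are pure-to-relaxed synthesis (\cref{thm:hausdorff}), scaled increment error feedback with an inexact field (combining \cref{thm:error-feedback-field-v4} with the rescaling in \cref{cor:scaled-error-feedback-v2}), and the schedulewise transfer of \cref{thm:hardware-transfer-candidate} for the leftover defects. I will apply them in that order, treating the Transformer field abstractly as a dictionary with constants $B_{\mathrm{tr}},L_{\mathrm{tr}}$ on the declared tube.

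\textbf{Step 1: synthesis term.} I will take \cref{ass:regularity} for the Transformer atoms with $B=B_{\mathrm{tr}}$ and $L_z=L_{\mathrm{tr}}$. These constants are supplied by the LayerNorm, attention and MLP bounds on the bounded domain. \cref{thm:hausdorff} then gives $\dH(\RR_D,\RR_{\Rel})\le C_{\mathrm{fh,tr}}/D$ for the ideal pure set.

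\textbf{Step 2: executed scaled error feedback.} I fix a schedule and run the normalized recursion \eqref{eq:scaled-ef-q-v2}--\eqref{eq:scaled-ef-residual-v2}, with $G$ replaced by the executed field $\widehat G_{k,j}$. Setting $Q_{h,b}(v)=h\overline Q_b(v/h)$ and $r_k=h\overline r_k$, as in the proof of \cref{cor:scaled-error-feedback-v2}, turns this into exactly the recursion \eqref{eq:field-aware-ef-v4} with $\rho=h\overline\rho_b=T\overline\rho_b/D$. \cref{thm:error-feedback-field-v4} then bounds the deviation from the ideal pure Euler path by $T\overline\rho_b e^{L_{\mathrm{tr}}T}/D+\eta_{G,D}\Phi_{L_{\mathrm{tr}}}(T)$. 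This uses the hypotheses that residual accumulation is exact and saturation is inactive, which are needed so that the quantizer-radius bound holds on the whole execution range.

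\textbf{Step 3: remaining defects.} The other implementation effects are, by hypothesis, bounded by a schedulewise endpoint radius $\mathcal A_D^{\mathrm{other}}$. I read this as the schedulewise bound of \cref{thm:hardware-transfer-candidate}, taken between the fully implemented endpoint and the idealized error-feedback endpoint under the same schedule. Pairing equal schedules turns both pathwise estimates into Hausdorff bounds between the implemented set, the error-feedback set and $\RR_D$. The triangle inequality with Step~1 gives
\[
\dH\bigl(\widehat\RR_D^{\mathrm{tr,impl}},\RR_{\Rel}\bigr)\le \frac{C_{\mathrm{fh,tr}}+T\overline\rho_b e^{L_{\mathrm{tr}}T}}{D}+\eta_{G,D}\Phi_{L_{\mathrm{tr}}}(T)+\mathcal A_D^{\mathrm{other}}.
\]
Since distance to a set is $1$-Lipschitz under Hausdorff perturbation, this yields \eqref{eq:transformer-arithmetic-law-v2}. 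The closing contrast with full-state write-back is just a restatement of \cref{cor:activation-bit-scaling-v2}.

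\textbf{Main obstacle.} The delicate point is bookkeeping: making sure nothing is counted twice or omitted. Field-computation error belongs only in $\eta_{G,D}$, increment rounding only in $\overline\rho_b$, and $\mathcal A_D^{\mathrm{other}}$ must be measured relative to the error-feedback trajectory, not the ideal one. Every intermediate path must also lie in the single common tube on which $L_{\mathrm{tr}}$ is valid. I would state that tube containment explicitly as the standing premise rather than derive it.
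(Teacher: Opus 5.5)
Your proposal is correct and follows the paper's proof essentially step for step. The paper likewise substitutes the physical increment radius $\rho_D=h\overline\rho_b=T\overline\rho_b/D$ into \cref{thm:error-feedback-field-v4}, adds the ideal synthesis radius $C_{\mathrm{fh,tr}}/D$ and the declared radius $\mathcal A_D^{\mathrm{other}}$ through the Hausdorff triangle inequality, and concludes by the one-Lipschitz dependence of target distance on Hausdorff perturbations; your explicit rescaling $Q_{h,b}(v)=h\overline Q_b(v/h)$, $r_k=h\overline r_k$ with the executed field, and your reading of $\mathcal A_D^{\mathrm{other}}$ as measured against the error-feedback trajectory (matching the statement's ``after \ldots\ error-feedback increment quantization''), just make explicit what the paper's short proof leaves implicit.
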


\begin{proof}
Use physical increment radius
$\rho_D=h\overline\rho_b=T\overline\rho_b/D$ in
\cref{thm:error-feedback-field-v4}.  Its schedulewise path bound, evaluated at
the endpoint, contributes
$T\overline\rho_b e^{L_{\mathrm{tr}}T}/D+
\eta_{G,D}\Phi_{L_{\mathrm{tr}}}(T)$.  Add the ideal pure-to-relaxed radius
$C_{\mathrm{fh,tr}}/D$ and the declared remaining schedulewise radius
$\mathcal A_D^{\mathrm{other}}$ by the Hausdorff triangle inequality.  The
target-distance conclusion follows because distance to a nonempty set is
one-Lipschitz under Hausdorff perturbations.
\end{proof}

\begin{remark}[Finite residual accumulators]
If the stored residual in \eqref{eq:ef-residual-candidate} is itself rounded at
every step with error $a_k$, $\norm{a_k}\le\rho_a$, the transformed recursion
acquires the additive term $a_k$.  The generic additional penalty is therefore
$D\rho_a\overline{\mathcal S}_{\mathrm E}$.  An exact common-lattice integer
accumulator, a sufficiently wide residual accumulator, a second feedback
loop, or terminal-only requantization is needed to avoid reintroducing the
factor $D$.
\end{remark}

\begin{proposition}[Saturation-safe tube]
\label[proposition]{prop:saturation-tube-candidate}
Let $\mathcal K_{\mathrm{id}}$ contain every ideal prefix state for every
admissible schedule, let $\mathcal B$ be the representable activation set,
and suppose
\begin{equation}
 \inf_{z\in\mathcal K_{\mathrm{id}}}
 \operatorname{dist}(z,\mathcal B^c)\ge m>0.
 \label{eq:saturation-margin-v4}
\end{equation}  Form the unsaturated implemented maps by deleting the clipping
operation.  If the prefix version of
\eqref{eq:schedule-hardware-bound-candidate} is strictly less than $m$ for
every schedule and every prefix, then clipping never activates; the saturated
and unsaturated executions coincide.
\end{proposition}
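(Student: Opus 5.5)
We argue by induction on the prefix index, proving simultaneously that the saturated and unsaturated executions coincide and that the unsaturated prefix stays inside the representable set with room to spare. Fix an admissible schedule $\sigma$ and write $z_k$ for the ideal prefix, $\widetilde z_k$ for the unsaturated implemented prefix (clipping deleted), and $\widehat z_k$ for the saturated one. The hypothesis is the prefix analogue of \eqref{eq:schedule-hardware-bound-candidate}: for every $n\le D$, the budget $\mathcal A_n(\sigma)$ defined from the unsaturated local defects satisfies $\norm{\widetilde z_n-z_n}\le\mathcal A_n(\sigma)<m$. This bound itself follows from the proof of \cref{thm:hardware-transfer-candidate} applied to the first $n$ steps, since that argument never uses the terminal index.

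The induction statement at stage $n$ is $\widehat z_k=\widetilde z_k$ for all $k\le n$. The base case holds because both executions start from the same $z_0$. For the step, assume $\widehat z_n=\widetilde z_n$. The clipping operation in the saturated microstep acts on the pre-clip value $\widetilde z_{n+1}$, which is exactly the unsaturated output computed from the common state $\widehat z_n=\widetilde z_n$. Since $z_{n+1}\in\mathcal K_{\mathrm{id}}$, \eqref{eq:saturation-margin-v4} gives $\dist(z_{n+1},\mathcal B^c)\ge m$. Combined with $\norm{\widetilde z_{n+1}-z_{n+1}}<m$, this yields $\widetilde z_{n+1}\in\mathcal B$, indeed in its interior. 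Clipping is the identity on $\mathcal B$, so $\widehat z_{n+1}=\widetilde z_{n+1}$.

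The only delicate point is circularity. The prefix bound for the unsaturated execution requires the unsaturated prefixes to remain in the common tube, where the Lipschitz and defect constants are valid. The hypothesis of \cref{thm:hardware-transfer-candidate} already asserts this for every admissible schedule, and the present proposition inherits it. I would state explicitly that the tube assumption is placed on the unsaturated maps, so no induction on tube membership is needed. I would also note that the strict inequality is what is required: a point at distance exactly $m$ from $z_{n+1}$ could lie on $\partial\mathcal B$. Even then clipping would act as the identity on closed $\mathcal B$, so the conclusion would still hold, but the strict version avoids depending on conventions about closedness of $\mathcal B$. Since the schedule and $n$ are arbitrary, the induction completes the proof.
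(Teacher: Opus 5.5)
Your proof is correct and is the same induction the paper uses. Assuming agreement through step $n$, the unsaturated next state lies strictly within $m$ of an ideal prefix state in $\mathcal K_{\mathrm{id}}$, hence lies in $\mathcal B$; clipping therefore acts as the identity, the executions agree at step $n+1$, and your added remarks on the tube hypothesis for the unsaturated maps and on the strict inequality are harmless clarifications that the paper leaves implicit.
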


\begin{proof}
Assume the two executions agree through step $k$.  The unsaturated next state
is within $m$ of the corresponding ideal next state, which lies in
$\mathcal K_{\mathrm{id}}$.  It therefore belongs to $\mathcal B$, so
clipping leaves it unchanged.  Induction proves the claim.
\end{proof}

\begin{remark}[Wrapping arithmetic is a different regime]
A wraparound operation is not a projection onto a representable set.  One
overflow can change both sign and magnitude by an order-range amount.  A small
rounding-radius theorem is therefore impossible for wrapping arithmetic
without a separately verified no-overflow invariant.
\end{remark}

\subsection{A fixed teacher with an exact first-order converse}
\label{sec:fixed-target-converse-candidate}

The current minimax construction may be strengthened to one target that does
not depend on $D$.

\begin{theorem}[Exact best error for one fixed binary target]
\label[theorem]{thm:fixed-target-candidate}
Fix $a>0$, $D\ge2$, and $r_D=1-D^{-1}$.  Consider
\begin{equation}
  x_{k+1}=r_Dx_k+\frac{q_k}{D},
  \qquad q_k\in\{0,a\},
  \qquad x_0=0.
  \label{eq:fixed-target-recursion-candidate}
\end{equation}
Let the fixed target be the endpoint of
\begin{equation}
  \dot x=a-x,\qquad x(0)=0,
  \label{eq:fixed-target-ode-candidate}
\end{equation}
namely $x^\star=a(1-e^{-1})$.  Then the all-$a$ schedule is a globally closest
pure depth-$D$ endpoint, and the exact optimal error is
\begin{equation}
  E_D^{\mathrm{bin}}(x^\star)
  =a\left[e^{-1}-\left(1-\frac1D\right)^D\right].
  \label{eq:fixed-target-exact-candidate}
\end{equation}
For every $D\ge2$,
\begin{equation}
  \frac{a}{4eD}
  \le E_D^{\mathrm{bin}}(x^\star)
  \le \frac{a}{2e(D-1)},
  \label{eq:fixed-target-bounds-candidate}
\end{equation}
and
\begin{equation}
  E_D^{\mathrm{bin}}(x^\star)
  =\frac{a}{2eD}+O(D^{-2}).
  \label{eq:fixed-target-asymptotic-candidate}
\end{equation}
\end{theorem}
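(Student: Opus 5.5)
The plan is to reduce the global optimization to a one-dimensional Voronoi-cell check at the top of the endpoint set. Unrolling \eqref{eq:fixed-target-recursion-candidate} gives
\[
 x_D=\frac1D\sum_{k=0}^{D-1}r_D^{D-1-k}q_k,
\]
so every endpoint is a nonnegative combination of the positive weights $w_k=r_D^{D-1-k}/D$. These weights increase strictly in $k$. The largest endpoint is therefore $M_D=a(1-r_D^D)$, attained by the all-$a$ schedule. Since $M_D<x^\star$, no other endpoint lies above the target, and every non-maximal endpoint drops at least one term $aw_k$. The closest competitor is thus the second-largest endpoint $M_D-aw_0$, which gives the top gap $g_D=\frac aD r_D^{D-1}$. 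Every other endpoint sits at or below $M_D-g_D$, so it suffices to prove $M_D-x^\star\le g_D/2$, i.e.
\[
 e^{-1}-r_D^D\le \frac1{2D}r_D^{D-1}.
\]
Once that holds, $|M_D-x^\star|$ is the optimum, and \eqref{eq:fixed-target-exact-candidate} follows.

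The main obstacle is proving this key inequality uniformly in $D\ge2$. I will rewrite it as $e^{-1}\le r_D^{D-1}(1-\tfrac1{2D})$. Taking logarithms and setting $u=1/D\in(0,1/2]$, it becomes nonnegativity of
\[
 1+(u^{-1}-1)\log(1-u)+\log(1-u/2).
\]
I expand both logarithms as power series. The constant term and the $u^1$ terms cancel, leaving $\sum_{n\ge2}\frac{u^n}{n}\bigl(\frac1{n+1}-\frac1{2^n}\bigr)$. Each coefficient is nonnegative because $2^n\ge n+1$. The coefficient bookkeeping in this expansion is the step to check carefully.

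For the quantitative bounds, I write $r_D^D=e^{-1-\theta_D}$ with $\theta_D=-1-D\log(1-1/D)=\sum_{n\ge2}\frac1{nD^{n-1}}$. This yields $E_D=ae^{-1}(1-e^{-\theta_D})$. The series for $\theta_D$ has first term $1/(2D)$ and is dominated by $\frac12\sum_{n\ge 1} D^{-n}=\frac1{2(D-1)}\le\frac12$. For $\theta\in[0,1/2]$ we have $\theta/2\le1-e^{-\theta}\le\theta$, and combining this with the bounds on $\theta_D$ gives \eqref{eq:fixed-target-bounds-candidate}. Finally, $\theta_D=\frac1{2D}+O(D^{-2})$ and $1-e^{-\theta}=\theta+O(\theta^2)$ together give the asymptotic \eqref{eq:fixed-target-asymptotic-candidate}.
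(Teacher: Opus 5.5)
Your proposal is essentially the paper's proof step for step. It uses the same unrolled endpoint formula, the same top gap $g_D=\frac aD r_D^{D-1}$ and Voronoi reduction, the same power-series identity in $u=1/D$, and the same $\theta_D$ argument for the two-sided bounds and the asymptotics. Your coefficient bookkeeping is right: the $n=1$ coefficient $\frac12-\frac12$ vanishes.

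One sentence is backwards and should be fixed. The claim $M_D<x^\star$ is false. Because $(1-\frac1D)^D<e^{-1}$, one has $M_D-x^\star=a\bigl(e^{-1}-r_D^D\bigr)>0$, and this positive gap is exactly the claimed optimal error. It is also why the Voronoi inequality is needed at all: if $M_D$ lay below the target, that check would be vacuous.

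The correct closing argument is as follows. The key inequality gives $M_D-g_D<M_D-\frac{g_D}{2}\le x^\star<M_D$. Hence $M_D$ is the only endpoint above the target. Every other endpoint $y\le M_D-g_D$ satisfies $x^\star-y\ge \frac{g_D}{2}\ge M_D-x^\star$.

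With that correction, your argument is complete.
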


\begin{proof}
Every endpoint has the form
\[
  x_D=\frac1D\sum_{k=0}^{D-1}r_D^{D-1-k}q_k.
\]
The largest endpoint is
$M_D=a(1-r_D^D)$, obtained by the all-$a$ schedule.  The second largest is
obtained by replacing the earliest, and hence least heavily weighted, symbol
by zero.  The gap is
\[
  g_D=\frac{a}{D}r_D^{D-1}.
\]
It remains to show that $x^\star$ lies in the Voronoi cell of $M_D$, that is,
$M_D-x^\star\le g_D/2$.  Put $x=1/D$.  The convergent power series identity
\begin{align*}
 &1+\left(\frac1x-1\right)\log(1-x)+\log(1-x/2)\\
 &\qquad=\sum_{n=2}^{\infty}
   \frac{x^n}{n}\left(\frac1{n+1}-\frac1{2^n}\right)\ge0
\end{align*}
uses $2^n\ge n+1$.  Exponentiating gives
\[
  e^{-1}\le r_D^{D-1}\left(1-\frac1{2D}\right).
\]
Therefore
\[
  M_D-x^\star
  =a(e^{-1}-r_D^D)
  \le \frac{a}{2D}r_D^{D-1}=\frac{g_D}{2}.
\]
Every other pure endpoint is at most $M_D-g_D$, proving the exact formula.

For the quantitative bounds, set
\[
  \theta_D=-1-D\log(1-D^{-1})
  =\sum_{n=2}^{\infty}\frac{1}{nD^{n-1}}.
\]
Then
\[
  \frac1{2D}\le\theta_D\le\frac1{2(D-1)}\le\frac12
\]
and
\[
  e^{-1}-r_D^D=e^{-1}(1-e^{-\theta_D}).
\]
The inequalities $\theta/2\le1-e^{-\theta}\le\theta$ for
$0\le\theta\le1/2$ prove \eqref{eq:fixed-target-bounds-candidate}.  Expanding
$\theta_D$ and $1-e^{-\theta_D}$ gives
\eqref{eq:fixed-target-asymptotic-candidate}.
\end{proof}

\begin{corollary}[Exposed binary modes give fixed-target lower bounds]
\label[corollary]{cor:exposed-fixed-target-candidate}
Normalize the horizon to $T=1$.  Let $\ell\in\Z^*$ satisfy
$\norm{\ell}_*\le1$ and $\ell(z_0)=0$.  Suppose that for every atom $j$ there
is $q_j\in\{0,a\}$ such that
\begin{equation}
  \ell(G_j(t,z))=q_j-\ell(z)
  \label{eq:exposed-binary-mode-candidate}
\end{equation}
throughout the tube, and suppose at least one atom has $q_j=a$.  Let $F^\star$
be the terminal map of the continuous flow generated by any such $a$-atom.
Then $E_{\infty,b}^{\mathrm{end}}(F^\star)=0$ and, for every $D\ge2$,
\begin{equation}
  E_b^{\mathrm{end}}(D;F^\star)
  \ge a\left[e^{-1}-\left(1-\frac1D\right)^D\right]
  \ge\frac{a}{4eD}.
  \label{eq:exposed-fixed-target-lower-candidate}
\end{equation}
Together with \cref{thm:hausdorff}, for every
$D\ge\max\{2,D_0\}$,
\begin{equation}
  \frac{a}{4eD}
  \le E_b^{\mathrm{end}}(D;F^\star)
  \le\frac{C_{\mathrm{fh},b}}{D}.
  \label{eq:fixed-target-two-sided-candidate}
\end{equation}
Thus the first-order exponent is necessary and sufficient for one fixed
architecture target, not only in a $D$-dependent minimax sense.
\end{corollary}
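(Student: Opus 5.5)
The proof projects the lifted dynamics onto the exposed functional $\ell$, identifies the projected pure-Euler recursion with the binary recursion of \cref{thm:fixed-target-candidate}, and transfers that scalar lower bound back to the full-map norm. The upper half comes directly from \cref{thm:hausdorff}.

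\textbf{Step 1: zero floor.} Let $j^\star$ be an atom with $q_{j^\star}=a$. The constant vertex control $p(t)\equiv e_{j^\star}$ is an admissible relaxed control. Its trajectory is exactly the continuous flow of $G_{j^\star}$, so its endpoint is $F^\star$. Hence $F^\star\in\RR_{b,\Rel}$ and $E_{\infty,b}^{\mathrm{end}}(F^\star)=0$. The same reasoning applied to the continuous flow gives its projected value. Put $y(t)=\ell(z(t))$; by linearity and continuity of $\ell$ and \eqref{eq:exposed-binary-mode-candidate}, $\dot y=a-y$ with $y(0)=0$. Therefore $\ell(F^\star)=a(1-e^{-1})=x^\star$. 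Differentiating under $\ell$ is justified because $\ell$ is a bounded linear functional and the trajectory is absolutely continuous (Carathéodory) in $\Z$.

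\textbf{Step 2: projected pure recursion.} Fix any schedule $\sigma$ with $h=1/D$. Apply $\ell$ to \eqref{eq:pure-euler-main} and set $y_k=\ell(z_k^\sigma)$. This gives
\[
y_{k+1}=y_k+\tfrac1D(q_{\sigma_k}-y_k)=r_Dy_k+q_{\sigma_k}/D,
\qquad y_0=0.
\]
This is exactly \eqref{eq:fixed-target-recursion-candidate}, so $\ell(z_D^\sigma)$ is a depth-$D$ binary endpoint. The step uses the tube hypothesis so that the identity holds at every Euler state.

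\textbf{Step 3: lower bound.} Since $\norm{\ell}_*\le1$,
\[
\norm{F^\star-z_D^\sigma}\ge|\ell(F^\star)-\ell(z_D^\sigma)|=|x^\star-y_D|.
\]
This is at least $E_D^{\mathrm{bin}}(x^\star)$ by \cref{thm:fixed-target-candidate}. Taking the infimum over the finite set $\RR_{b,D}$ yields \eqref{eq:exposed-fixed-target-lower-candidate}, with the final inequality coming from \eqref{eq:fixed-target-bounds-candidate}.

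\textbf{Step 4: upper bound.} For $D\ge D_0$, \eqref{eq:target-end-main} together with the zero floor gives $E_b^{\mathrm{end}}(D;F^\star)\le C_{\mathrm{fh},b}/D$. Restricting further to $D\ge2$ makes the lower bound apply, which yields \eqref{eq:fixed-target-two-sided-candidate}.

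The only delicate point is Step 1: the exposed-mode identity must hold along the continuous flow as well as at Euler grid states. This is covered by the tube clause of \Cref{ass:regularity}(iii), which places relaxed trajectories in the tube where \eqref{eq:exposed-binary-mode-candidate} holds. Everything else is a direct substitution.
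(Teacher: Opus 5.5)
Your proposal is correct and follows the paper's proof essentially step for step. You project every pure Euler recursion by $\ell$ to recover the binary recursion, use $\norm{\ell}_*\le1$ to let the full-map error dominate the projected scalar error, invoke \cref{thm:fixed-target-candidate} for the lower bound, and obtain the zero floor and the $C/D$ upper bound from relaxed reachability of $F^\star$ together with \cref{thm:hausdorff}. Your added details — the justification of the projected ODE and the appeal to the tube clause — are harmless refinements of points the paper leaves implicit.
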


\begin{proof}
Applying $\ell$ to every pure Euler recursion gives exactly
\eqref{eq:fixed-target-recursion-candidate}.  The continuous $a$-atom target
has projection $a(1-e^{-1})$.  Since $\norm{\ell}_*\le1$, full-state distance
dominates projected scalar distance, and
\cref{thm:fixed-target-candidate} gives the lower bound.  The target is a
relaxed-reachable endpoint, so its structural floor is zero; the upper bound
is \cref{thm:hausdorff}.
\end{proof}

\begin{theorem}[Perturbation-stable exposed-mode converse]
\label[theorem]{thm:approx-exposed-mode-v7}
Normalize the horizon to $T=1$, let $D\ge2$, put
$r_D=1-D^{-1}$, and let $\ell\in\Z^*$ satisfy
$\norm{\ell}_*\le1$ and $\ell(z_0)=0$.  Assign each atom a symbol
$q_j\in\{0,a\}$, where $a>0$, and assume that on the entire pure execution
tube
\begin{equation}
 \abs{\ell(G_j(t,z))-(q_j-\ell(z))}
 \le\varepsilon_{\mathrm{mode}}
 \qquad\forall j,t,z.
 \label{eq:approx-exposed-atom-v7}
\end{equation}
Let the fixed target $F^\star$ obey
\begin{equation}
 \abs{\ell(F^\star)-a(1-e^{-1})}
 \le\varepsilon_\star.
 \label{eq:approx-exposed-target-v7}
\end{equation}
Then every pure depth-$D$ implementation satisfies
\begin{align}
 E_b^{\mathrm{end}}(D;F^\star)
 \ge \max\Bigl\{0,{}&
 a\left[e^{-1}-r_D^D\right]
 -\varepsilon_\star
 -\varepsilon_{\mathrm{mode}}(1-r_D^D)
 \Bigr\}.
 \label{eq:approx-exposed-lower-v7}
\end{align}
If $F^\star$ is generated by a continuous teacher whose projected path
$y^\star(t)=\ell(z^\star(t))$ satisfies
\begin{equation}
 \abs{\dot y^\star-(a-y^\star)}\le\varepsilon_{\mathrm{teach}}
 \quad\text{a.e.},\qquad y^\star(0)=0,
 \label{eq:approx-exposed-teacher-v7}
\end{equation}
then one may take
$\varepsilon_\star=\varepsilon_{\mathrm{teach}}(1-e^{-1})$.
Consequently, a measured approximate mode gives a positive finite-depth
necessity certificate whenever the right-hand side of
\eqref{eq:approx-exposed-lower-v7} is positive.  If both defects are
$o(D^{-1})$, the lower law remains
$a/(2eD)+o(D^{-1})$.
\end{theorem}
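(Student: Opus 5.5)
The plan is to project every pure Euler trajectory onto the exposed channel and compare it with the exact binary recursion of \cref{thm:fixed-target-candidate}. Fix a schedule $\sigma$, put $h=1/D$, and set $y_k=\ell(z_k^\sigma)$, so that $y_0=0$. Linearity of $\ell$ together with \eqref{eq:approx-exposed-atom-v7} gives
\[
y_{k+1}=y_k+\frac1D\ell\bigl(G_{\sigma_k}(t_k,z_k)\bigr)=r_Dy_k+\frac{q_{\sigma_k}}D+\frac{\xi_k}D,
\qquad \abs{\xi_k}\le\varepsilon_{\mathrm{mode}}.
\]
This uses that all pure states lie in the execution tube, which is assumed. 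Next, let $\bar y_k$ solve \eqref{eq:fixed-target-recursion-candidate} with the same labels $q_{\sigma_k}$. The difference obeys $d_{k+1}=r_Dd_k+\xi_k/D$ with $d_0=0$. Unrolling gives
\[
\abs{d_D}\le\frac{\varepsilon_{\mathrm{mode}}}D\sum_{m<D}r_D^m=\varepsilon_{\mathrm{mode}}(1-r_D^D).
\]

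For the lower bound, combine three facts. First, $\norm{\ell}_*\le1$ gives $\norm{F^\star-z_D^\sigma}\ge\abs{\ell(F^\star)-y_D}$. Second, by the reverse triangle inequality this is at least
\[
\abs{a(1-e^{-1})-\bar y_D}-\varepsilon_\star-\varepsilon_{\mathrm{mode}}(1-r_D^D).
\]
Third, $\bar y_D$ ranges over pure binary endpoints, so \cref{thm:fixed-target-candidate} gives $\abs{a(1-e^{-1})-\bar y_D}\ge a[e^{-1}-r_D^D]$. Taking the infimum over $\sigma$, and the maximum with $0$ since distances are nonnegative, yields \eqref{eq:approx-exposed-lower-v7}. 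If the pure endpoint set is infinite because the tube is lifted, the bound still holds uniformly for each $\sigma$, so the infimum is unaffected.

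For the teacher clause, set $w=y^\star-a(1-e^{-t})$. Then $w(0)=0$ and $\dot w=-w+\epsilon(t)$ with $\abs{\epsilon}\le\varepsilon_{\mathrm{teach}}$ almost everywhere. Since $y^\star$ is absolutely continuous, variation of constants applies and gives $\abs{w(1)}\le\varepsilon_{\mathrm{teach}}\int_0^1e^{-(1-s)}ds=\varepsilon_{\mathrm{teach}}(1-e^{-1})$. Because $\ell(F^\star)=y^\star(1)$, we may take $\varepsilon_\star=\varepsilon_{\mathrm{teach}}(1-e^{-1})$.

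For the asymptotic remark, note that $1-r_D^D\le1$. If both defects are $o(D^{-1})$, the subtracted terms are therefore $o(D^{-1})$, and \eqref{eq:fixed-target-asymptotic-candidate} gives $a/(2eD)+o(D^{-1})$. The main point needing care is the use of the exact Voronoi optimality of \cref{thm:fixed-target-candidate}, rather than only the spacing of the endpoint lattice. I expect the argument to be otherwise routine.
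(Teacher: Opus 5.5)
Your proposal is correct and follows the paper's proof essentially step for step. You project onto $\ell$, bound the deviation from the exact binary recursion by the geometric sum $\varepsilon_{\mathrm{mode}}(1-r_D^D)$, combine the reverse triangle inequality with the exact optimum of the fixed-target theorem, and use variation of constants for the teacher clause; your caveat about an infinite endpoint set is unnecessary, since the pure endpoint set has at most $J^D$ points, but it does no harm.
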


\begin{proof}
For one pure schedule put $y_k=\ell(z_k)$.  By
\eqref{eq:approx-exposed-atom-v7},
\[
 y_{k+1}=r_Dy_k+\frac{q_{\sigma_k}}D+\frac{\xi_k}D,
 \qquad \abs{\xi_k}\le\varepsilon_{\mathrm{mode}}.
\]
Let $\bar y_k$ solve the exact binary recursion with the same symbols.  Then
\[
 \abs{y_D-\bar y_D}
 \le\frac{\varepsilon_{\mathrm{mode}}}{D}
       \sum_{m=0}^{D-1}r_D^m
 =\varepsilon_{\mathrm{mode}}(1-r_D^D).
\]
The reverse triangle inequality, the target defect, and
\cref{thm:fixed-target-candidate} now give
\eqref{eq:approx-exposed-lower-v7}; the full-state norm dominates the
projection because $\norm{\ell}_*\le1$.  For the teacher statement, variation
of constants gives
\[
 \abs{y^\star(1)-a(1-e^{-1})}
 \le\varepsilon_{\mathrm{teach}}
      \int_0^1e^{-(1-s)}\,ds
 =\varepsilon_{\mathrm{teach}}(1-e^{-1}).
\]
The asymptotic conclusion follows from
\eqref{eq:fixed-target-asymptotic-candidate}.
\end{proof}

\begin{remark}[What must be measured on a trained block]
The theorem needs only one norm-one residual-stream functional, the best
binary labels $q_j\in\{0,a\}$ for the projected atom fields, a uniform or
certified tube residual $\varepsilon_{\mathrm{mode}}$, and the teacher
residual $\varepsilon_{\mathrm{teach}}$.  Unlike the exact decoupling
corollary, it is directly falsifiable on checkpoint traces.  It also shows why
an $O(1)$ projection defect cannot establish an asymptotic $D^{-1}$ converse:
the exact discretization gap itself vanishes as $D^{-1}$.
\end{remark}

\begin{corollary}[A fixed binary-gain LISTA teacher has a two-sided depth law]
\label[corollary]{cor:binary-lista-fixed-teacher-v2}
In the signed-ray soft-threshold specialization, take the binary gain alphabet
$\mathcal C_b=\{0,\Delta\}$ with $\Delta>0$.  Suppose an exposed scalar
channel satisfies the row conditions in the signed-ray theorem and
\begin{equation}
  a_0=(B_Wy_0)_i-\overline\lambda>0.
  \label{eq:lista-exposed-amplitude-v2}
\end{equation}
Let $F_\Delta^\star$ be the fixed continuous teacher generated by the constant
gain $c(t)\equiv\Delta$.  Then $E_{\infty,b}(F_\Delta^\star)=0$ and, for every
$D\ge2$,
\begin{equation}
  E_b(D;F_\Delta^\star)
  \ge
  a_0\Delta\left[
    e^{-1}-\left(1-\frac1D\right)^D
  \right]
  \ge \frac{a_0\Delta}{4eD}.
  \label{eq:lista-fixed-lower-v2}
\end{equation}
For this constant-gain teacher the gain-discrepancy term in the signed-ray
upper theorem vanishes.  Define $C_{\mathrm{LISTA},b}:=C_{\mathrm{disc}}$
with $K_c=0$ in that theorem (or use the larger master-theorem constant).  Then,
for every $D\ge\max\{2,D_0\}$,
\begin{equation}
  \frac{a_0\Delta}{4eD}
  \le E_b(D;F_\Delta^\star)
  \le \frac{C_{\mathrm{LISTA},b}}{D}.
  \label{eq:lista-fixed-two-sided-v2}
\end{equation}
Thus the manuscript's unfolded-network class contains a recognizable fixed
teacher, independent of $D$, for which first-order depth is both necessary
and sufficient.
\end{corollary}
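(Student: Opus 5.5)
The plan is to reduce the statement to the exposed-binary-mode corollary (\cref{cor:exposed-fixed-target-candidate}) with amplitude $a=a_0\Delta$, and then obtain the upper bound from \cref{thm:hausdorff}. In the signed-ray family, \eqref{eq:signed-ray-field-main} gives $G_c(z)=cT(z)+A_0z-z$ with $T(z)(y)=\soft_{\overline\lambda}(B_Wy+B_Sz(y))$. I would take $\ell$ to be the point-evaluation functional $\ell(z)=e_i^\top z(y_0)$ at the exposed input $y_0$ and channel $i$. This functional has dual norm at most one in $C(\mathcal X;\mathbb R^d)$ by \eqref{eq:finite-point-witness-main}, and $\ell(z_0)=0$ because trajectories start from zero.

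\textbf{Step 1: verify the exact exposed mode.} The row conditions of the signed-ray theorem (I expect these to be $(B_S)_{i:}=0$ and $(A_0)_{i:}=0$, or at least that row $i$ of $A_0$ is supported on $e_i$ with zero coefficient) should give the identity
\[
\ell(G_c(z))=c\,\soft_{\overline\lambda}((B_Wy_0)_i)-\ell(z)=c\,a_0-\ell(z).
\]
The last equality uses $a_0=(B_Wy_0)_i-\overline\lambda>0$. The labels are then $q_0=0$ and $q_\Delta=a_0\Delta$.

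\textbf{Step 2: apply the exposed-mode corollary.} \Cref{cor:exposed-fixed-target-candidate} with $a=a_0\Delta$ gives the lower bound \eqref{eq:lista-fixed-lower-v2}, since $F^\star_\Delta$ is the time-one map of the $\Delta$-atom flow. It also gives $E_{\infty,b}(F^\star_\Delta)=0$, because this teacher is the relaxed trajectory with the constant vertex control.

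\textbf{Step 3: the upper bound.} The constant schedule $c\equiv\Delta$ is itself a pure schedule, so there is no gain discrepancy. The error is then the pure Euler-versus-flow error for one atom, which is $C_{\mathrm{disc}}/D$ with $K_c=0$ in \eqref{eq:signed-ray-upper-main}. Alternatively, \cref{thm:hausdorff} combined with the zero floor gives $C_{\mathrm{fh},b}/D$, and the soft-threshold invariant tube \eqref{eq:soft-invariant-radius-main}–\eqref{eq:soft-field-constants-main} supplies the assumptions of that theorem.

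\textbf{Main obstacle.} Step 1 is the delicate part. The exact cancellation giving $-\ell(z)$ needs the exposed row of $A_0$ to vanish and $B_S$ not to feed channel $i$. It also needs the soft-threshold argument at $y_0$ to stay above $\overline\lambda$ along every pure path, so the row conditions must be read off carefully from the signed-ray theorem. If the $A_0$ row contributes a diagonal term $\alpha z_i$, I would rescale time (absorbing $1-\alpha$) and check that the binary recursion keeps the form \eqref{eq:fixed-target-recursion-candidate}.
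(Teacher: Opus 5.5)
Your proposal is correct and follows the paper's argument: project onto the exposed coordinate at $y_0$ to get the binary recursion with amplitude $a=a_0\Delta$, apply the exact fixed-teacher bound, and use that the full-map norm dominates the projection. You route this through the exposed-mode corollary, whose proof is the same projection, and you take the upper bound from the $K_c=0$ signed-ray constant or the master theorem. One simplification: the signed-ray row conditions say the $i$th rows of $B_S$ and $A_0$ both vanish. So the soft-threshold argument at $y_0$ is the constant $(B_Wy_0)_i$, and the Step~1 contingencies you raise (path-dependence of the threshold argument, a diagonal $A_0$ term, time rescaling) never arise.
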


\begin{proof}
On the exposed channel, every pure schedule obeys
\[
 x_{k+1}=\left(1-\frac1D\right)x_k
          +\frac{a_0q_k}{D},
 \qquad q_k\in\{0,\Delta\}.
\]
Apply \cref{thm:fixed-target-candidate} with scalar amplitude
$a=a_0\Delta$.  Full-map error dominates error in the exposed coordinate,
and the constant-$\Delta$ teacher is relaxed reachable.
\end{proof}

\begin{corollary}[A fixed residual-MLP channel inherits the converse]
\label[corollary]{cor:residual-mlp-fixed-v4}
Consider a residual MLP, adapter, or residual-stream block on a verified
activation region.  Let $\ell\in\Z^*$ have $\norm{\ell}_*\le1$, assume
$\ell(z_0)=0$, and expose two quantized atoms with
\begin{equation}
 \ell(G_0(t,z))=-\ell(z),
 \qquad
 \ell(G_1(t,z))=a-\ell(z),
 \qquad a>0.
 \label{eq:residual-mlp-exposed-v4}
\end{equation}
Let $F^\star$ be the fixed continuous teacher generated by $G_1$ over the
unit horizon.  Then, for every $D\ge2$,
\begin{equation}
 E_D(F^\star)
 \ge a\left[e^{-1}-\left(1-\frac1D\right)^D\right]
 \ge\frac{a}{4eD}.
 \label{eq:residual-mlp-lower-v4}
\end{equation}
If the baseline synthesis assumptions hold, then for
$D\ge\max\{2,D_0\}$,
\begin{equation}
 \frac{a}{4eD}\le E_D(F^\star)\le\frac{C_{\mathrm{fh}}}{D}.
 \label{eq:residual-mlp-two-sided-v4}
\end{equation}
Exact full-state equality with the scalar formula holds under the additional
\emph{attainment condition}
\begin{equation}
 \norm{z_D^{(1)}-F^\star}
 =\abs{\ell(z_D^{(1)}-F^\star)},
 \label{eq:residual-mlp-decoupling-v4}
\end{equation}
where $z_D^{(1)}$ is the all-$G_1$ Euler endpoint.  This equality is, for
example, implied by an orthogonal direct-sum decomposition in which the
continuous teacher and the all-$G_1$ Euler path have identical transverse
endpoints.  Without it, the projection proves the lower bound but not exact
full-state optimality.
\end{corollary}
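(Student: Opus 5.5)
The argument projects the lifted dynamics onto the exposed channel $\ell$ and inherits the scalar converse of \cref{thm:fixed-target-candidate}, exactly as in \cref{cor:exposed-fixed-target-candidate}. The lower bound, the two-sided law and the exact-attainment clause are then handled in turn.

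\textbf{Projected recursion.} Fix a pure schedule $\sigma\in\{0,1\}^D$ with $h=1/D$ and set $y_k=\ell(z_k)$. By linearity of $\ell$ and \eqref{eq:residual-mlp-exposed-v4},
\[
y_{k+1}=y_k+\tfrac1D\bigl(q_{\sigma_k}-y_k\bigr)=r_Dy_k+\tfrac{q_{\sigma_k}}{D},\qquad y_0=\ell(z_0)=0,
\]
with $q_0=0$ and $q_1=a$. This is exactly \eqref{eq:fixed-target-recursion-candidate}.

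For the teacher, $\frac{d}{dt}\ell(z^\star(t))=\ell(G_1(t,z^\star(t)))=a-\ell(z^\star(t))$. Differentiating under $\ell$ is legitimate because $\ell$ is a bounded linear functional and the trajectory is absolutely continuous. Hence $\ell(F^\star)=a(1-e^{-1})$.

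\textbf{Lower bound.} Since $\norm{\ell}_*\le1$, every pure endpoint satisfies $\norm{z_D^\sigma-F^\star}\ge|y_D-a(1-e^{-1})|$. Minimizing over $\sigma$ and applying \cref{thm:fixed-target-candidate} gives \eqref{eq:residual-mlp-lower-v4}, including the bound $a/(4eD)$. One point needs checking: the identities \eqref{eq:residual-mlp-exposed-v4} must hold on every pure Euler prefix and along the teacher trajectory. I take this to be part of the ``verified activation region'' hypothesis and will state it explicitly.

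\textbf{Two-sided law.} The teacher is the relaxed trajectory with control $p\equiv e_1$, so $F^\star\in\RR_{b,\Rel}$ and $E_\infty(F^\star)=0$. \cref{thm:hausdorff}, via \eqref{eq:target-end-main}, then gives $E_D(F^\star)\le C_{\mathrm{fh}}/D$ for $D\ge D_0$, which is \eqref{eq:residual-mlp-two-sided-v4}.

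\textbf{Attainment.} In the proof of \cref{thm:fixed-target-candidate}, the all-$a$ schedule (here all-$G_1$) attains the scalar minimum $a[e^{-1}-r_D^D]$. Under \eqref{eq:residual-mlp-decoupling-v4} its full-state error equals that projected value, which matches the lower bound, so the infimum is attained and equals the scalar formula. For the orthogonal direct-sum example, I would decompose $z=\ell(z)u+w$ with $w$ in a complement for which the norm splits. Identical transverse endpoints then make the $w$-difference vanish, so the norm reduces to $|\ell(\cdot)|$ when $\norm{u}=1$; here a norming vector $u$ with $\ell(u)=1$ is used.

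\textbf{Main obstacle.} The delicate step is the attainment clause: why transverse endpoints coincide and why the norm restricts to the absolute value. The statement makes these hypotheses rather than consequences, so I will only verify that the example implies \eqref{eq:residual-mlp-decoupling-v4}. The rest is routine projection.
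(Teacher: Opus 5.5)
Your proposal is correct and is essentially the paper's proof: projecting each pure Euler recursion by $\ell$ yields the scalar binary recursion of \cref{thm:fixed-target-candidate}, $\norm{\ell}_*\le1$ gives the lower bound, relaxed reachability of the $G_1$ teacher together with the baseline synthesis theorem gives the upper bound, and the attainment condition transfers the scalar optimality of the all-$G_1$ schedule to the full norm. Your explicit check that $\ell(F^\star)=a(1-e^{-1})$ and your treatment of the direct-sum example fill in details the paper leaves implicit; in that example the norm-splitting assumption is not even needed, because once the transverse endpoints coincide the difference is $\ell(\cdot)u$ with $\norm{u}=1$.
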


\begin{proof}
Projecting every pure recursion by $\ell$ gives the binary scalar recursion in
\cref{thm:fixed-target-candidate}, so the norm-one projection yields
\eqref{eq:residual-mlp-lower-v4}.  The teacher is relaxed reachable, hence the
baseline theorem gives the upper bound.  Under
\eqref{eq:residual-mlp-decoupling-v4}, the all-$G_1$ endpoint attains in the
full norm the scalar lower bound; no other schedule can do better because its
projected error is at least that value.
\end{proof}

\begin{corollary}[Orthogonally decoupled Transformer residual channel]
\label[corollary]{cor:transformer-exposed-channel-v4}
Let a Transformer residual stream split as
$\Z=\operatorname{span}\{v\}\oplus\mathcal Y$, with $\norm v=1$ and a
norm-one functional $\ell$ satisfying $\ell(v)=1$ and
$\mathcal Y\subset\ker\ell$.  Assume $z_0\in\mathcal Y$, so
$\ell(z_0)=0$.  Suppose two admissible microblock atoms have
\begin{equation}
 G_q(t,xv+y)=(q-x)v+H(t,y),
 \qquad q\in\{0,a\},
 \label{eq:transformer-decoupled-channel-v4}
\end{equation}
with the same $H$ for both atoms.  The continuous all-$a$ teacher obeys the
fixed-target lower law \eqref{eq:residual-mlp-lower-v4}.  If $H\equiv0$ (or if
its continuous and all-$a$ Euler endpoints coincide exactly), the full-state
best error equals the scalar formula in
\eqref{eq:fixed-target-exact-candidate}.  Thus a standard residual-stream
direction supplies an architecture-level fixed teacher; exact equality
requires, and explicitly records, orthogonal decoupling.
\end{corollary}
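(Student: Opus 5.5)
The plan is to reduce everything to Corollary~\ref{cor:residual-mlp-fixed-v4} by checking its hypotheses on the split $\Z=\operatorname{span}\{v\}\oplus\mathcal Y$, and then to prove exact equality by verifying its attainment condition \eqref{eq:residual-mlp-decoupling-v4}.

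\textbf{Exposed-mode identities.} Write $z=xv+y$ with $y\in\mathcal Y$. Since $\ell(v)=1$ and $\mathcal Y\subset\ker\ell$, we have $\ell(z)=x$. Applying $\ell$ to \eqref{eq:transformer-decoupled-channel-v4} kills the term $H(t,y)\in\mathcal Y$, so
\[
\ell(G_q(t,z))=q-x=q-\ell(z).
\]
This is exactly \eqref{eq:residual-mlp-exposed-v4} with $G_0$ and $G_a$ playing the roles of $G_0$ and $G_1$. The remaining hypotheses, $\norm{\ell}_*=1$ and $\ell(z_0)=0$ (because $z_0\in\mathcal Y$), hold by assumption. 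Corollary~\ref{cor:residual-mlp-fixed-v4} then gives the lower law \eqref{eq:residual-mlp-lower-v4} for the continuous all-$a$ teacher $F^\star$. Equivalently, one can apply $\ell$ directly to the Euler recursion, which yields the binary scalar recursion of Theorem~\ref{thm:fixed-target-candidate}.

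\textbf{Exact equality.} The dynamics decouple. Because both atoms share the same $H$, every pure Euler path and the continuous teacher satisfy
\[
x_{k+1}=r_Dx_k+q_k/D,\qquad y_{k+1}=y_k+hH(t_k,y_k),
\]
with continuous analogue $\dot y=H(t,y)$. In particular, the $\mathcal Y$-component of every pure endpoint, and of the all-$a$ endpoint $z_D^{(a)}$, is schedule-independent.

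If $H\equiv0$, or more generally the continuous and all-$a$ Euler transverse endpoints coincide, then
\[
z_D^{(a)}-F^\star=\bigl(x_D^{(a)}-x^\star\bigr)v .
\]
Its norm is $|x_D^{(a)}-x^\star|=|\ell(z_D^{(a)}-F^\star)|$ because $\norm{v}=1$. This is the attainment condition \eqref{eq:residual-mlp-decoupling-v4}. Corollary~\ref{cor:residual-mlp-fixed-v4} then shows that the all-$a$ endpoint attains the scalar lower bound in the full norm. Any other schedule has projected error at least that value, and its full-state error dominates its projected error since $\norm{\ell}_*\le1$. Hence the best error equals \eqref{eq:fixed-target-exact-candidate}.

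\textbf{Main obstacle.} The delicate point is well-posedness rather than algebra. Both the teacher and all Euler paths must stay in the admissible tube on which the atom identity holds. Also, ``coincide'' must be read as equality of the $\mathcal Y$-components of the two endpoints (continuous teacher versus all-$a$ Euler), not of the paths. I would state this explicitly, noting that no direct-sum norm identity beyond $\norm{v}=1$ is needed, because the endpoint difference lies entirely in $\operatorname{span}\{v\}$.
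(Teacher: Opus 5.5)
Your proposal is correct and follows the paper's own argument. You apply $\ell$ to eliminate $H$ and expose $\ell(G_q)=q-\ell(z)$, invoke \cref{cor:residual-mlp-fixed-v4} for the lower law, and verify its attainment condition \eqref{eq:residual-mlp-decoupling-v4} once the transverse endpoint error vanishes. Your extra details fill in steps the paper leaves implicit: the $\mathcal Y$-component is schedule-independent, $H$ is tacitly $\mathcal Y$-valued, and only $\norm{v}=1$ is needed because the endpoint difference lies in $\operatorname{span}\{v\}$.
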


\begin{proof}
The functional $\ell$ eliminates $H$ and exposes
$\ell(G_q)=q-\ell(z)$.  Apply
\cref{cor:residual-mlp-fixed-v4}.  When the orthogonal endpoint error vanishes,
condition \eqref{eq:residual-mlp-decoupling-v4} holds.
\end{proof}

\begin{remark}[General contraction and horizon]
For $\dot x=a-\lambda x$ on $[0,T]$ and the corresponding Euler recursion,
assume $D>\lambda T$ so that the Euler contraction coefficient is positive.
The all-high endpoint error is
\[
  \frac{a}{\lambda}
  \left[e^{-\lambda T}
        -\left(1-\frac{\lambda T}{D}\right)^D\right].
\]
It is the exact optimum whenever the target lies in the top endpoint's Voronoi
cell; the displayed condition is automatic in the normalized binary theorem
above and can be checked explicitly for other $\lambda T$ and alphabets.
\end{remark}

\subsection{Learned codebooks, adaptive dictionaries, and metadata bits}
\label{sec:learned-codebooks-candidate}

Let $(\Omega,d_\Omega)$ be a nonempty compact parameter space.  A global codebook
parameter $\omega\in\Omega$ determines
\[
  \mathcal G_b^\omega
  =\{G_1^\omega,\ldots,G_J^\omega\}.
\]
The parameter is fixed for one complete network and is therefore metadata,
not a per-input control.

\begin{assumption}[Uniform learned-dictionary regularity]
\label[assumption]{ass:learned-dictionary-candidate}
All dictionaries indexed by $\omega\in\Omega$ satisfy
\Cref{ass:regularity} with the same tube and constants.  In addition, for some
$L_\Omega<\infty$,
\begin{equation}
  \sup_{j,t,z\text{ in the tube}}
  \norm{G_j^\omega(t,z)-G_j^{\omega'}(t,z)}
  \le L_\Omega d_\Omega(\omega,\omega').
  \label{eq:dictionary-parameter-lipschitz-candidate}
\end{equation}
\end{assumption}

\begin{remark}[Atom alignment and exchangeable codebooks]
The metric $d_\Omega$ compares dictionaries with their atom labels aligned.
When labels are exchangeable, either fix a canonical alignment or replace
$d_\Omega$ by the permutation-invariant quotient pseudometric
\[
 \overline d_\Omega(\omega,\omega')
 :=\min_{\pi\in\mathfrak S_J}
   \max_j\sup_{t,\,z\text{ in the tube}}
   \norm{G_j^\omega(t,z)-G_{\pi(j)}^{\omega'}(t,z)}.
\]
Without one of these conventions, the parameter-Lipschitz assumption can
charge a harmless relabeling as a large metadata perturbation.
\end{remark}

Let $\RR_{\omega,\Rel}$ and $\RR_{\omega,D}$ be the relaxed and pure endpoint
sets for parameter $\omega$, and define
\begin{equation}
  \RR_{\Omega,\Rel}
  :=\overline{\bigcup_{\omega\in\Omega}\RR_{\omega,\Rel}}.
  \label{eq:learned-relaxed-union-candidate}
\end{equation}
Let $\varnothing\ne\Omega_s\subset\Omega$ be a finite metadata code with
$\abs{\Omega_s}\le2^s$ and covering radius
\begin{equation}
  \delta_s:=\sup_{\omega\in\Omega}
             \inf_{\widehat\omega\in\Omega_s}
             d_\Omega(\omega,\widehat\omega).
  \label{eq:metadata-covering-radius-candidate}
\end{equation}
Set
\begin{equation}
  \RR_{D,s}^{\mathrm{learn}}
  :=\bigcup_{\widehat\omega\in\Omega_s}\RR_{\widehat\omega,D}.
  \label{eq:learned-pure-union-candidate}
\end{equation}

\begin{lemma}[Lipschitz motion of relaxed endpoint sets]
\label[lemma]{lem:learned-endpoint-lipschitz-candidate}
Under \cref{ass:learned-dictionary-candidate},
\begin{equation}
  \dH(\RR_{\omega,\Rel},\RR_{\omega',\Rel})
  \le L_\Omega\Phi_{L_z}(T)d_\Omega(\omega,\omega').
  \label{eq:learned-relaxed-lipschitz-candidate}
\end{equation}
\end{lemma}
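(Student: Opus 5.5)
The plan is a synchronous coupling argument: drive the $\omega$- and $\omega'$-systems with the same relaxed control and compare the two trajectories by Gronwall. It has the same structure as the main-text proof of \cref{lem:relaxed-codebook-lipschitz-main}, but with attention to well-posedness and closures.

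\textbf{Step 1 (coupling).} Fix a measurable relaxed control $p:[0,T]\to\Delta_J$. By \cref{ass:learned-dictionary-candidate} and clause (iii) of \Cref{ass:regularity}, the control generates unique Carath\'eodory solutions $z^\omega_p$ and $z^{\omega'}_p$ from the common initial state $z_0$. Both solutions lie in the common tube, so the shared constants $L_z$ and $L_\Omega$ apply along them.

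\textbf{Step 2 (integral inequality).} Set $e(t)=z^\omega_p(t)-z^{\omega'}_p(t)$. Subtracting the integral equations and inserting $\sum_j p_j(s)G^{\omega'}_j(s,z^\omega_p(s))$ gives, for every $t\in[0,T]$,
\[
\norm{e(t)}\le\int_0^t\bigl(L_z\norm{e(s)}+L_\Omega d_\Omega(\omega,\omega')\bigr)\,ds .
\]
Here I use $p_j\ge0$ and $\sum_jp_j=1$, so the convex combination of the atom bounds preserves both the state-Lipschitz constant and the parameter-Lipschitz constant \eqref{eq:dictionary-parameter-lipschitz-candidate}. The integrals are Bochner integrals, and $t\mapsto\norm{e(t)}$ is continuous.

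\textbf{Step 3 (Gronwall).} The integral form of Gronwall's inequality gives $\norm{e(t)}\le L_\Omega d_\Omega(\omega,\omega')\,\Phi_{L_z}(t)$. Since $\Phi_{L_z}(t)\le\Phi_{L_z}(T)$ for $t\le T$, this yields a uniform bound. When $L_z=0$ the bound reduces to $L_\Omega d_\Omega t$, which matches the convention for $\Phi_0$.

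\textbf{Step 4 (closures and symmetry).} Step 3 shows that every unclosed relaxed endpoint for $\omega$ lies within $r:=L_\Omega\Phi_{L_z}(T)d_\Omega(\omega,\omega')$ of an endpoint for $\omega'$. Points of the closure $\RR_{\omega,\Rel}$ are limits of unclosed endpoints, and the function $y\mapsto\dist(y,\RR_{\omega',\Rel})$ is $1$-Lipschitz. Its values on the approximating sequence are at most $r$, so its value at the limit is also at most $r$. Exchanging $\omega$ and $\omega'$ gives the other directed distance, and the two directions together give the Hausdorff bound.

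The only delicate point is Step 1. The comparison must be legitimate for every measurable control, including controls whose trajectories are only reached in the closure. This is exactly why the uniform tube and uniqueness clause is required for all $\omega$ simultaneously. Given that clause, the rest of the argument is routine.
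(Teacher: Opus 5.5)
Your proposal is correct and is essentially the paper's proof: both drive the $\omega$- and $\omega'$-systems with the same relaxed control, obtain $\norm{e(t)}\le\int_0^t\bigl(L_z\norm{e(s)}+L_\Omega d_\Omega(\omega,\omega')\bigr)\,ds$, apply Gronwall, and then symmetrize and pass to closures, with your Step 4 writing out through the $1$-Lipschitz distance function the closure step that the paper states in one clause. One small correction to your closing remark: no control generates a point that lies only in the closure, so Step 1 needs only measurable controls, and your Step 4 handles the closure completely.
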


\begin{proof}
Drive the two systems by the same relaxed control.  Their state difference
satisfies
\[
  \norm{e(t)}
  \le\int_0^t\left(L_z\norm{e(s)}
        +L_\Omega d_\Omega(\omega,\omega')\right)ds.
\]
Gronwall's inequality gives the stated endpoint bound.  The same construction
works in both directions and extends to the closures.
\end{proof}

\begin{theorem}[Learned-codebook floor plus depth plus metadata]
\label[theorem]{thm:learned-codebook-candidate}
Under \cref{ass:learned-dictionary-candidate}, for every $D\ge D_0$,
\begin{equation}
  \dH(\RR_{D,s}^{\mathrm{learn}},\RR_{\Omega,\Rel})
  \le \frac{C_{\mathrm{fh}}^{\mathrm{unif}}}{D}
       +L_\Omega\Phi_{L_z}(T)\delta_s,
  \label{eq:learned-codebook-rate-candidate}
\end{equation}
where $C_{\mathrm{fh}}^{\mathrm{unif}}$ is the common finite-horizon constant.
Consequently, for every target $F^\star$,
\begin{equation}
  \left|\dist(F^\star,\RR_{D,s}^{\mathrm{learn}})
        -\dist(F^\star,\RR_{\Omega,\Rel})\right|
  \le \frac{C_{\mathrm{fh}}^{\mathrm{unif}}}{D}
       +L_\Omega\Phi_{L_z}(T)\delta_s.
  \label{eq:learned-codebook-target-candidate}
\end{equation}
\end{theorem}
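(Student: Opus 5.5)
The plan is to prove the Hausdorff bound by controlling the two directed distances separately, and then to deduce the target statement from the one-Lipschitz property of distance under Hausdorff perturbation. Throughout, I will use two earlier results. The first is \cref{thm:hausdorff}, applied with the common constant $C_{\mathrm{fh}}^{\mathrm{unif}}$. This constant is uniform over $\omega$ because \cref{ass:learned-dictionary-candidate} fixes the tube, $B$, $L_z$, the temporal variation bound, and $h_0$ independently of $\omega$, and the explicit constant \eqref{eq:bv-constant-main} depends only on these quantities. The second is \cref{lem:learned-endpoint-lipschitz-candidate}.

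\textbf{Direction from the pure learned set to the relaxed union.} Take $y\in\RR_{D,s}^{\mathrm{learn}}$, so $y\in\RR_{\widehat\omega,D}$ for some $\widehat\omega\in\Omega_s\subset\Omega$. By \cref{thm:hausdorff}, $\dist(y,\RR_{\widehat\omega,\Rel})\le C_{\mathrm{fh}}^{\mathrm{unif}}/D$. Since $\RR_{\widehat\omega,\Rel}\subset\RR_{\Omega,\Rel}$, this direction is bounded by $C_{\mathrm{fh}}^{\mathrm{unif}}/D$ alone, without any metadata term.

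\textbf{Direction from the relaxed union to the pure learned set.} First take $y\in\RR_{\omega,\Rel}$ for some $\omega\in\Omega$. Fix $\epsilon>0$ and choose $\widehat\omega\in\Omega_s$ with $d_\Omega(\omega,\widehat\omega)\le\delta_s+\epsilon$; such a point exists because the infimum defining $\delta_s$ is over a finite set, and it is in fact attained. Then:
\begin{itemize}
\item \cref{lem:learned-endpoint-lipschitz-candidate} gives $\dist(y,\RR_{\widehat\omega,\Rel})\le L_\Omega\Phi_{L_z}(T)(\delta_s+\epsilon)$;
\item \cref{thm:hausdorff} gives $\dH(\RR_{\widehat\omega,\Rel},\RR_{\widehat\omega,D})\le C_{\mathrm{fh}}^{\mathrm{unif}}/D$.
\end{itemize}
By the triangle inequality for point-to-set distances, $\dist(y,\RR_{D,s}^{\mathrm{learn}})$ is at most the sum of these two bounds. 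Letting $\epsilon\to0$ gives the claimed radius for every point of the union $\bigcup_\omega\RR_{\omega,\Rel}$.

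The closure in \eqref{eq:learned-relaxed-union-candidate} is then handled by continuity. The map $y\mapsto\dist(y,\RR_{D,s}^{\mathrm{learn}})$ is one-Lipschitz, so the same bound holds at limit points of the union. Taking the maximum of the two directed bounds proves \eqref{eq:learned-codebook-rate-candidate}.

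\textbf{Target statement.} For nonempty sets $A,A'$ in a metric space, $|\dist(F^\star,A)-\dist(F^\star,A')|\le\dH(A,A')$. Applying this with $A=\RR_{D,s}^{\mathrm{learn}}$ and $A'=\RR_{\Omega,\Rel}$ gives \eqref{eq:learned-codebook-target-candidate}. Both sets are nonempty because $\Omega_s$ and $\Omega$ are nonempty.

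\textbf{Main obstacle.} The only delicate point is the uniformity of $C_{\mathrm{fh}}^{\mathrm{unif}}$ and of $D_0$ across $\omega$. I would check this by inspecting the proof of \cref{thm:hausdorff}, verifying that every constant is built only from $B$, $L_z$, $V_t$ (or $L_t$), $J$, and $T$, all of which \cref{ass:learned-dictionary-candidate} makes $\omega$-independent. I would also confirm that the tube membership in clause (iii) holds for every $\omega$, as the assumption states.
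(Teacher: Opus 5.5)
Your proposal is correct and follows essentially the same route as the paper's proof. The metadata-free direction places each $\RR_{\widehat\omega,D}$ within $C_{\mathrm{fh}}^{\mathrm{unif}}/D$ of $\RR_{\widehat\omega,\Rel}\subset\RR_{\Omega,\Rel}$; the reverse direction chains the codebook Lipschitz lemma with the uniform synthesis bound and then passes to the closure; and the target estimate follows from Hausdorff stability of the distance function (your $\epsilon$-slack is harmless but, as you note, unnecessary because $\Omega_s$ is finite).
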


\begin{proof}
Take an endpoint in $\RR_{\omega,\Rel}$ and choose
$\widehat\omega\in\Omega_s$ within $\delta_s$.  By
\cref{lem:learned-endpoint-lipschitz-candidate}, the endpoint is within
$L_\Omega\Phi_{L_z}(T)\delta_s$ of $\RR_{\widehat\omega,\Rel}$, and by
\cref{thm:hausdorff} the latter is within
$C_{\mathrm{fh}}^{\mathrm{unif}}/D$ of
$\RR_{\widehat\omega,D}$.  Passing to the closure proves one directed bound.
Conversely, every endpoint in $\RR_{D,s}^{\mathrm{learn}}$ is within
$C_{\mathrm{fh}}^{\mathrm{unif}}/D$ of the relaxed set having the same
metadata, which is contained in the union defining
$\RR_{\Omega,\Rel}$.  The target-distance statement follows from Hausdorff
stability.
\end{proof}

\begin{corollary}[Metadata-bit law]
\label[corollary]{cor:metadata-bit-law-candidate}
Suppose the metric entropy of $\Omega$ obeys
\begin{equation}
  \mathcal N(\Omega,d_\Omega,\delta)
  \le\left(\frac{C_\Omega}{\delta}\right)^m
  \label{eq:metadata-entropy-assumption-candidate}
\end{equation}
for the relevant range of $\delta$.  Then one may choose an $s$-bit metadata
code with
\begin{equation}
  \delta_s\le C_\Omega 2^{-s/m},
  \label{eq:metadata-radius-rate-candidate}
\end{equation}
and hence
\begin{equation}
  \dH(\RR_{D,s}^{\mathrm{learn}},\RR_{\Omega,\Rel})
  \le \frac{C_{\mathrm{fh}}^{\mathrm{unif}}}{D}
       +L_\Omega\Phi_{L_z}(T)C_\Omega2^{-s/m}.
  \label{eq:metadata-depth-law-candidate}
\end{equation}
Thus, depth and global codebook metadata have different approximation exponents
and should be optimized jointly.
\end{corollary}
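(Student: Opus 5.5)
The corollary has two parts: a covering step that produces the $s$-bit code, and a substitution step that feeds its radius into \cref{thm:learned-codebook-candidate}. No new dynamical argument is needed.

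\textbf{Covering step.} Given $s$, I will choose $\delta=C_\Omega2^{-s/m}$. By \eqref{eq:metadata-entropy-assumption-candidate},
\[
 \mathcal N(\Omega,d_\Omega,\delta)\le (C_\Omega/\delta)^m=2^s .
\]
So there is a finite $\delta$-net $\Omega_s\subset\Omega$ with at most $2^s$ points. It is nonempty because $\Omega$ is nonempty. Every $\omega\in\Omega$ lies within $\delta$ of some net point, so its covering radius \eqref{eq:metadata-covering-radius-candidate} satisfies $\delta_s\le\delta$. This is exactly \eqref{eq:metadata-radius-rate-candidate}.

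Two conventions need care here.

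\begin{itemize}
\item \emph{External versus internal nets.} If $\mathcal N$ counts nets whose centers need not lie in $\Omega$, I will use the standard fix: replace each center by a point of $\Omega$ in its ball. This at most doubles the radius and changes $C_\Omega$ by a factor of $2$. The cleanest route is to read $\mathcal N$ as internal covering numbers, so the bound holds verbatim.
\item \emph{Range of $\delta$.} The hypothesis holds only "for the relevant range of $\delta$," so the statement is for those $s$ whose $\delta=C_\Omega2^{-s/m}$ lies in that range. For larger $\delta$, any single point already gives $\delta_s\le\operatorname{diam}\Omega$.
\end{itemize}

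\textbf{Substitution step.} With this $\Omega_s$ fixed, \cref{thm:learned-codebook-candidate} applies unchanged for $D\ge D_0$, because its constants are uniform over $\Omega$ by \cref{ass:learned-dictionary-candidate}. That gives
\[
 \dH\le C_{\mathrm{fh}}^{\mathrm{unif}}/D+L_\Omega\Phi_{L_z}(T)\delta_s .
\]
Monotonicity in $\delta_s$ and the bound $\delta_s\le C_\Omega2^{-s/m}$ then yield \eqref{eq:metadata-depth-law-candidate}.

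The closing remark about different exponents is interpretive. The two terms decay as $D^{-1}$ and $2^{-s/m}$. Balancing them gives $s=m\log_2 D+O(1)$, which can be stated without further proof.

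\textbf{Main obstacle.} The only delicate point is matching the covering-number convention to the requirement $\Omega_s\subset\Omega$, which the theorem needs so that each $\widehat\omega\in\Omega_s$ indexes an admissible dictionary. I will state the internal-cover convention explicitly, or absorb the factor $2$ into $C_\Omega$.
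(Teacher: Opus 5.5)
Your proposal is correct and follows the paper's own argument. The paper gives no separate proof for this corollary; its main-text derivation does exactly what you do: take a $\delta$-net at $\delta=C_\Omega2^{-s/m}$, which has at most $2^s$ points, then substitute its covering radius into \cref{thm:learned-codebook-candidate} for $D\ge D_0$. Your two remarks, on using internal covers (or absorbing the factor $2$ into $C_\Omega$) and on the admissible range of $\delta$, are appropriate refinements that the paper leaves implicit.
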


\begin{theorem}[Metadata--depth packing converse]
\label[theorem]{thm:metadata-packing-candidate}
Fix an input-independent schedule class with $J$ atom choices per depth and at
most $2^s$ stored global metadata configurations.  If the resulting depth-$D$
class approximates every target in a family $\mathcal F$ to error at most
$\varepsilon$, then
\begin{equation}
  s+D\log_2J
  \ge\log_2\mathcal M(\mathcal F,2\varepsilon),
  \label{eq:metadata-packing-candidate}
\end{equation}
where $\mathcal M(\mathcal F,2\varepsilon)$ is the maximum cardinality of a
subset with pairwise distance strictly greater than $2\varepsilon$.
\end{theorem}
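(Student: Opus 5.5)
The argument is a pigeonhole count, essentially the one behind \cref{thm:packing-main}; I would simply write it in the notation of this section. First count the configurations. A deployed network in the declared class is fixed by one stored metadata value, with at most $2^s$ choices, and one input-independent schedule $\sigma\in[J]^D$, with $J^D$ choices. The decoder is deterministic, so each configuration yields exactly one endpoint map. The resulting depth-$D$ class is therefore a set $\mathcal C$ with $\abs{\mathcal C}\le 2^sJ^D$. It matters that no other information enters, such as target-dependent randomness, hidden real-valued state, or post hoc parameter choices; that is the role of the phrase ``input-independent schedule class'' in the statement.

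Next, take a maximal $2\varepsilon$-packing $\{F_1,\ldots,F_M\}\subset\mathcal F$ with $M=\mathcal M(\mathcal F,2\varepsilon)$. If $M$ is infinite, the class cannot cover $\mathcal F$ and the statement holds trivially or vacuously, so assume $M$ is finite. By hypothesis, each $F_i$ has some approximant $c(F_i)\in\mathcal C$ with $\norm{F_i-c(F_i)}\le\varepsilon$. Fix one such choice for every $i$. The approximation guarantee only supplies an infimum, so if the error bound is read as $\dist(F_i,\mathcal C)\le\varepsilon$, note that $\mathcal C$ is finite and the infimum is therefore attained.

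The map $i\mapsto c(F_i)$ is injective. Suppose $c(F_i)=c(F_k)$ with $i\ne k$. The triangle inequality would give $\norm{F_i-F_k}\le 2\varepsilon$, which contradicts the strict separation $\norm{F_i-F_k}>2\varepsilon$ built into the definition of $\mathcal M$. Hence $M\le\abs{\mathcal C}\le 2^sJ^D$, and taking $\log_2$ gives \eqref{eq:metadata-packing-candidate}.

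No step here is a genuine obstacle. The only care needed is in the bookkeeping: every degree of freedom in the decoder must be counted among the $2^sJ^D$ configurations, and the strict inequality in the packing definition is exactly what lets the non-strict error bound $\le\varepsilon$ be used.
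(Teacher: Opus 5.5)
Your proposal is correct and follows the paper's proof: the paper bounds the number of declared configurations by $2^sJ^D$, and uses the triangle inequality to show that any strict $2\varepsilon$-packing injects into those configurations, giving $\mathcal M(\mathcal F,2\varepsilon)\le 2^sJ^D$. Your additional remarks are sound refinements of details the paper leaves implicit: the infimum is attained because $\mathcal C$ is finite, and the infinite-packing case is excluded by applying the same injection to a finite sub-packing of size $2^sJ^D+1$.
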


\begin{proof}
There are at most $2^sJ^D$ implemented maps.  Two targets separated by more
than $2\varepsilon$ cannot share one $\varepsilon$-accurate approximant.
Therefore
$\mathcal M(\mathcal F,2\varepsilon)\le2^sJ^D$, which is equivalent to
\eqref{eq:metadata-packing-candidate}.
\end{proof}

\begin{proposition}[Smooth adaptive codebooks]
\label[proposition]{prop:smooth-adaptive-codebook-candidate}
Suppose
\[
  G_j(t,z)=\widetilde G_j(t,z,\omega(t,z)),
\]
where $\widetilde G_j$ is $L_z^0$-Lipschitz in $z$, $L_\omega$-Lipschitz in
$\omega$, and $L_t^0$-Lipschitz in $t$, while the adaptive codebook map
$\omega(t,z)$ is $L_g$-Lipschitz in $z$ and $K_g$-Lipschitz in $t$.  Then the
composite field is Lipschitz with constants
\begin{equation}
  L_z\le L_z^0+L_\omega L_g,
  \qquad
  L_t\le L_t^0+L_\omega K_g.
  \label{eq:adaptive-composite-lipschitz-candidate}
\end{equation}
Provided the common tube and boundedness assumptions hold, all conclusions of
\cref{thm:hausdorff} apply.  Discrete or top-$k$ adaptation is not covered by
this proposition and belongs to the hybrid regime below.
\end{proposition}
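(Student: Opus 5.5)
The plan has three steps: check the two Lipschitz constants by a triangle-inequality split, verify the remaining hypotheses of Assumption~\ref{ass:regularity}, and then invoke \cref{thm:hausdorff} unchanged.

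\textbf{State constant.} Fix $t$ and two states $z,z'$ in the tube $\mathcal K^\rho$. Insert the intermediate point $\widetilde G_j(t,z',\omega(t,z))$ and write
\[
\norm{G_j(t,z)-G_j(t,z')}\le\norm{\widetilde G_j(t,z,\omega(t,z))-\widetilde G_j(t,z',\omega(t,z))}+\norm{\widetilde G_j(t,z',\omega(t,z))-\widetilde G_j(t,z',\omega(t,z'))}.
\]
The first term is at most $L_z^0\norm{z-z'}$. The second is at most $L_\omega\norm{\omega(t,z)-\omega(t,z')}\le L_\omega L_g\norm{z-z'}$. Adding the two gives $L_z\le L_z^0+L_\omega L_g$.

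\textbf{Time constant.} Fix $z$ and perturb $t$ instead. The same split, with the intermediate point $\widetilde G_j(t',z,\omega(t,z))$, gives $L_t\le L_t^0+L_\omega K_g$.

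\textbf{Hypotheses of \cref{thm:hausdorff}.} The composite field needs three things:
\begin{itemize}
\item \emph{Boundedness} $\norm{G_j}\le B$. This is assumed in the statement.
\item \emph{Time regularity.} Clause (ii) of Assumption~\ref{ass:regularity} holds in its smooth-time form with the constant $L_t$ above. This also gives $V_t\le JL_tT$.
\item \emph{Existence and tube containment} (clause (iii)). This is part of the stated proviso that the common tube assumptions hold.
\end{itemize}
Once these are in place, \cref{thm:hausdorff} applies verbatim to the composite atoms. The time dependence of $\omega$ is harmless here: it only enters through $L_t$, which feeds the $V_t$ term of \eqref{eq:bv-constant-main}.

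\textbf{Where care is needed.} The only step requiring attention is the domain of the Lipschitz hypotheses. The estimate on $\widetilde G_j$ must be applied at the codebook values $\omega(t,z)$ and $\omega(t,z')$, so both must lie in the region where $\widetilde G_j$ is $L_\omega$-Lipschitz in $\omega$. I would make this explicit by requiring that region to contain $\omega(\mathcal K^\rho)$, the image of the tube under the adaptive map. In the lifted space $\Z$, I would read the composition pointwise in the input $\xi$ and then take the supremum.

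The final remark of the statement, that discrete or top-$k$ adaptation is excluded, needs no proof. Such adaptation makes $\omega$ discontinuous, so $L_g$ is not finite and the hypothesis fails.
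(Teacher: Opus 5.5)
Your proposal is correct and follows the paper's proof: you add and subtract the same intermediate points, $\widetilde G_j(t,z',\omega(t,z))$ for the state bound and $\widetilde G_j(t',z,\omega(t,z))$ for the time bound, and then invoke the synthesis theorem under the stated tube and boundedness proviso. Your extra requirement that the $\omega$-Lipschitz estimate for $\widetilde G_j$ hold on the image $\omega(\mathcal K^\rho)$ makes explicit a domain condition that the paper leaves implicit.
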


\begin{proof}
Add and subtract
$\widetilde G_j(t,z',\omega(t,z))$ for the state bound and
$\widetilde G_j(s,z,\omega(t,z))$ for the time bound, then apply the stated
Lipschitz estimates.
\end{proof}

\subsubsection{Joint metadata--execution allocation under a declared charge}

Let $\ell_\sigma>0$ be the charged resource units per executed microstep.  It
may represent fixed-length schedule bits, operations, latency, or energy, but
it must be declared and used consistently.  Write
$E_{D,s}=\dist(F^\star,\RR_{D,s}^{\mathrm{learn}})$ and
$E_{\Omega,\infty}=\dist(F^\star,\RR_{\Omega,\Rel})$.

\begin{theorem}[Explicit first-order metadata--depth allocation]
\label[theorem]{thm:description-upper-v4}
Suppose
\begin{equation}
 \abs{E_{D,s}-E_{\Omega,\infty}}
 \le \frac{C_{\mathrm{syn}}}{D}+C_{\mathrm{meta}}2^{-s/m}
 \label{eq:description-start-v4}
\end{equation}
for integers $D\ge D_0$ and $s\ge0$.  Given total declared budget
$B_{\mathrm{tot}}>1$, set
\begin{align}
 s_B&:=\left\lceil m\log_2\!\left(
        \max\{1,C_{\mathrm{meta}}B_{\mathrm{tot}}\}\right)\right\rceil,\\
 D_B&:=\left\lfloor
       \frac{B_{\mathrm{tot}}-s_B}{\ell_\sigma}\right\rfloor.
 \label{eq:description-allocation-v4}
\end{align}
If $B_{\mathrm{tot}}\ge2(s_B+\ell_\sigma)$ and $D_B\ge D_0$, then
$s_B+\ell_\sigma D_B\le B_{\mathrm{tot}}$ and
\begin{equation}
 E_{D_B,s_B}
 \le E_{\Omega,\infty}
 +\frac{2C_{\mathrm{syn}}\ell_\sigma+1}{B_{\mathrm{tot}}}.
 \label{eq:description-upper-v4}
\end{equation}
When $C_{\mathrm{meta}}>0$,
\begin{equation}
 s_B=m\log_2B_{\mathrm{tot}}+O(1),
 \qquad
 D_B=\frac{B_{\mathrm{tot}}}{\ell_\sigma}
      -\frac{m}{\ell_\sigma}\log_2B_{\mathrm{tot}}+O(1).
 \label{eq:description-asymptotic-v4}
\end{equation}
Thus globally shared codebook metadata consumes only a logarithmic correction;
almost all asymptotic declared resource should be spent on executed depth.
\end{theorem}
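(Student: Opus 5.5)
The plan is to verify three things: the budget inequality, the two error terms of \eqref{eq:description-start-v4} at $(D_B,s_B)$, and the asymptotic expansions. All of it is elementary bookkeeping from the choice \eqref{eq:description-allocation-v4}.

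\textbf{Budget feasibility.} By definition of the floor, $\ell_\sigma D_B\le B_{\mathrm{tot}}-s_B$. Hence $s_B+\ell_\sigma D_B\le B_{\mathrm{tot}}$ immediately.

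\textbf{Metadata term.} Put $x=\max\{1,C_{\mathrm{meta}}B_{\mathrm{tot}}\}$.
\begin{itemize}
\item If $C_{\mathrm{meta}}B_{\mathrm{tot}}\ge1$, then $s_B\ge m\log_2(C_{\mathrm{meta}}B_{\mathrm{tot}})$. This gives $2^{-s_B/m}\le(C_{\mathrm{meta}}B_{\mathrm{tot}})^{-1}$, so $C_{\mathrm{meta}}2^{-s_B/m}\le B_{\mathrm{tot}}^{-1}$.
\item Otherwise $s_B=0$, and $C_{\mathrm{meta}}<B_{\mathrm{tot}}^{-1}$ directly.
\end{itemize}
In either case the metadata contribution is at most $1/B_{\mathrm{tot}}$.

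\textbf{Depth term.} We need $D_B\ge B_{\mathrm{tot}}/(2\ell_\sigma)$. From $\lfloor y\rfloor>y-1$ we get $\ell_\sigma D_B> B_{\mathrm{tot}}-s_B-\ell_\sigma$. The hypothesis $B_{\mathrm{tot}}\ge2(s_B+\ell_\sigma)$ gives $s_B+\ell_\sigma\le B_{\mathrm{tot}}/2$, so $\ell_\sigma D_B\ge B_{\mathrm{tot}}/2$. Therefore $C_{\mathrm{syn}}/D_B\le2C_{\mathrm{syn}}\ell_\sigma/B_{\mathrm{tot}}$.

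Because $D_B\ge D_0$ and $s_B\ge0$ are integers, \eqref{eq:description-start-v4} applies at $(D_B,s_B)$. Its upper half combined with the two bounds above yields \eqref{eq:description-upper-v4}.

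\textbf{Asymptotics.} Take $C_{\mathrm{meta}}>0$ and let $B_{\mathrm{tot}}\to\infty$. Eventually $C_{\mathrm{meta}}B_{\mathrm{tot}}\ge1$, so
\[
s_B=m\log_2B_{\mathrm{tot}}+m\log_2C_{\mathrm{meta}}+\theta,\qquad \theta\in[0,1),
\]
which is $m\log_2B_{\mathrm{tot}}+O(1)$. Dividing by $\ell_\sigma$ and flooring introduces another error in $(-1,0]$. This gives
\[
D_B=\frac{B_{\mathrm{tot}}}{\ell_\sigma}-\frac{m}{\ell_\sigma}\log_2B_{\mathrm{tot}}+O(1).
\]

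The only delicate point is the floor step in the depth term, which is exactly where the hypothesis $B_{\mathrm{tot}}\ge2(s_B+\ell_\sigma)$ is used. It also requires noting that the theorem's hypotheses already supply $D_B\ge D_0$, so the bound \eqref{eq:description-start-v4} is legitimately invoked. Everything else is substitution.
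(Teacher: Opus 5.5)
Your proposal is correct and matches the paper's proof: the ceiling in $s_B$ gives $C_{\mathrm{meta}}2^{-s_B/m}\le B_{\mathrm{tot}}^{-1}$, and the budget hypothesis gives $D_B\ge B_{\mathrm{tot}}/(2\ell_\sigma)$. Substitution and reading the asymptotics off the allocation formulas finish the argument; you simply spell out the case $C_{\mathrm{meta}}B_{\mathrm{tot}}<1$ and the floor estimate $\lfloor y\rfloor>y-1$, which the paper leaves implicit.
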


\begin{proof}
The definition of $s_B$ gives
$C_{\mathrm{meta}}2^{-s_B/m}\le B_{\mathrm{tot}}^{-1}$.  The budget
condition implies
$D_B\ge B_{\mathrm{tot}}/(2\ell_\sigma)$, so
$C_{\mathrm{syn}}/D_B\le2C_{\mathrm{syn}}\ell_\sigma/B_{\mathrm{tot}}$.
Substitution proves \eqref{eq:description-upper-v4}; the asymptotics follow
directly from \eqref{eq:description-allocation-v4}.
\end{proof}

\begin{corollary}[Matching fixed-teacher law under per-step charging]
\label[corollary]{cor:description-lower-v4}
If the exposed binary mode is unchanged by metadata and
$s+\ell_\sigma D\le B_{\mathrm{tot}}$, the fixed teacher in
\cref{cor:exposed-fixed-target-candidate} satisfies
\begin{equation}
 E_D(F^\star)\ge\frac{a}{4eD}
 \ge\frac{a\ell_\sigma}{4eB_{\mathrm{tot}}}.
 \label{eq:description-lower-v4}
\end{equation}
The $B_{\mathrm{tot}}^{-1}$ exponent is therefore sharp under the same positive
per-step execution charge.  This is not a universal compressed-description
lower bound: a repeated schedule may have sublinear code length.
\end{corollary}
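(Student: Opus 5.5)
The corollary follows directly from \cref{cor:exposed-fixed-target-candidate}; the only additional work is to translate the budget constraint into an upper bound on $D$. Under the hypothesis that the exposed binary mode is unaffected by metadata, every choice of encoded value $\widehat\omega\in\Omega_s$ gives atoms that satisfy \eqref{eq:exposed-binary-mode-candidate} with the same functional $\ell$ and labels $q_j\in\{0,a\}$. The fixed teacher $F^\star$ is also independent of $D$ and $s$. Projecting any pure depth-$D$ recursion with any encoded metadata by $\ell$ therefore reproduces the scalar recursion \eqref{eq:fixed-target-recursion-candidate}. Since \cref{thm:fixed-target-candidate} applies uniformly across the union \eqref{eq:learned-pure-union-candidate}, every endpoint in $\RR_{D,s}^{\mathrm{learn}}$ lies at distance at least $a[e^{-1}-(1-1/D)^D]\ge a/(4eD)$ from $F^\star$ whenever $D\ge2$. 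This gives the first inequality in \eqref{eq:description-lower-v4}.

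For the second inequality, the constraint $s\ge0$ together with $s+\ell_\sigma D\le B_{\mathrm{tot}}$ gives $\ell_\sigma D\le B_{\mathrm{tot}}$, that is, $1/D\ge\ell_\sigma/B_{\mathrm{tot}}$. Multiplying by $a/(4e)$ yields $a/(4eD)\ge a\ell_\sigma/(4eB_{\mathrm{tot}})$. Combining this with \cref{thm:description-upper-v4}, which attains $E_{\Omega,\infty}+O(B_{\mathrm{tot}}^{-1})$ with $E_{\Omega,\infty}=0$ for this relaxed-reachable teacher, establishes that the $B_{\mathrm{tot}}^{-1}$ exponent is sharp.

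The main obstacle is the edge case $D=1$, where \cref{thm:fixed-target-candidate} does not apply; it requires $D\ge2$. I would treat $D=1$ directly. The recursion then has $r_1=0$, so the reachable endpoints are $\{0,a\}$. The distance to $x^\star=a(1-e^{-1})$ is $\min\{a(1-e^{-1}),\,ae^{-1}\}=ae^{-1}$, which exceeds $a/(4e)$, so the bound still holds. With that case settled, the remaining obligation is to make explicit that the hypothesis ``unchanged by metadata'' means the labels and $\ell$ are identical for every $\widehat\omega$. I would also note that the lower bound says nothing about variable-length schedule codes, as the statement's caveat already records.
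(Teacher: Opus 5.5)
Your proposal is correct and follows the paper's proof, which applies the fixed-target lower bound of \cref{cor:exposed-fixed-target-candidate} uniformly over encoded metadata (as the ``unchanged by metadata'' hypothesis permits) and then uses $D\le B_{\mathrm{tot}}/\ell_\sigma$, which follows from $s\ge0$. You also add two things the paper leaves implicit: a separate check of $D=1$, where the cited bound is stated only for $D\ge2$ but the exact projected error $ae^{-1}$ still exceeds $a/(4e)$, and an explicit appeal to \cref{thm:description-upper-v4} to justify the sharpness remark.
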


\begin{proof}
Use the fixed-target lower bound and
$D\le B_{\mathrm{tot}}/\ell_\sigma$.
\end{proof}

\subsection{Bounded-variation dictionaries and route-changing hybrid networks}
\label{sec:hybrid-routing-v2}

A positive margin on a connected interval freezes a continuous top-$k$ route.
Actual route changes therefore require two ingredients, in the spirit of
transversal-event sensitivity for hybrid flows \citep{SacconEtAl2014}: a synthesis theorem
that permits temporal jumps, and a target-specific perturbation theorem that
localizes switching errors near transverse events.

\begin{assumption}[C\`adl\`ag bounded-variation dictionary]
\label[assumption]{ass:bv-fields-v2}
Retain the boundedness, state-Lipschitz, and common-tube clauses of
\Cref{ass:regularity}.  Assume $\Z$ is Banach and the declared tube is contained in an open set on
which all fields are defined.  For each atom, regard
\[
 g_j(t):=G_j(t,\cdot)|_{\mathcal K^{\rho_{\mathcal K}}}
\]
as a map into the Banach space
$\mathcal B(\mathcal K^{\rho_{\mathcal K}};\Z)$ of bounded fields with the
uniform norm.  Assume that $g_j$ has a right-continuous representative of bounded
variation.  Let $\nu_j$ be its variation measure, so that for $0\le s<t\le T$,
\begin{equation}
 \sup_{z\in\mathcal K^{\rho_{\mathcal K}}}
 \norm{G_j(t,z)-G_j(s,z)}
 \le \nu_j((s,t]),
 \label{eq:bv-variation-measure-v2}
\end{equation}
and set
\begin{equation}
  V_t:=\sum_{j=1}^J\nu_j((0,T])<\infty.
  \label{eq:bv-total-variation-v2}
\end{equation}
The representative is used at Euler grid points; changing its value at
finitely many times does not change a Carath\'eodory trajectory.
\end{assumption}

\begin{lemma}[Well-posed relaxed flow under BV time dependence]
\label[lemma]{lem:bv-carath-v2}
Under \cref{ass:bv-fields-v2}, every measurable relaxed control
$p:[0,T]\to\Delta_J$ induces a unique absolutely continuous solution of
\begin{equation}
  \dot z(t)=\sum_jp_j(t)G_j(t,z(t)),\qquad z(0)=z_0,
  \label{eq:bv-relaxed-flow-v2}
\end{equation}
inside the declared tube.  It satisfies
$\norm{z(t)-z(s)}\le B|t-s|$.
\end{lemma}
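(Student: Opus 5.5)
The plan is to reduce the BV-time problem to the standard Banach-space Carathéodory existence and uniqueness theorem with a Lipschitz bound, applied to the relaxed vector field
\[
 f_p(t,z):=\sum_j p_j(t)G_j(t,z).
\]
The work splits into a measurability check, an a priori integral bound, a Picard-type construction, and a Gronwall uniqueness argument.

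\textbf{Step 1: strong measurability.} We first verify that $t\mapsto f_p(t,z(t))$ is strongly measurable for every continuous path $z$ in the tube. A right-continuous BV map $g_j:[0,T]\to\mathcal B(\mathcal K^{\rho_{\mathcal K}};\Z)$ has at most countably many discontinuities and left and right limits everywhere. Hence it is a uniform limit of step functions, and therefore strongly measurable with separable essential range. This resolves the nonseparability of $\mathcal B(\mathcal K;\Z)$, and it is the step I expect to be the main obstacle, since general bounded-field-valued maps need not be Bochner measurable.

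\textbf{Step 2: passing to the composition.} Next we show that $t\mapsto G_j(t,z(t))$ is strongly measurable. Approximate $g_j$ uniformly by step functions $g_j^{(n)}$; on each step the map $t\mapsto g_j^{(n)}(t)(z(t))$ is continuous in $t$, because each field is $L_z$-Lipschitz in state and $z$ is continuous. The uniform convergence $\norm{g_j^{(n)}(t)(z(t))-g_j(t)(z(t))}\le\norm{g_j^{(n)}-g_j}_\infty\to0$ then transfers measurability. Multiplying by the measurable scalars $p_j(t)$ and summing keeps strong measurability. Boundedness by $B$ gives Bochner integrability.

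\textbf{Step 3: existence.} With measurability and integrability in hand, run Picard iteration
\[
 z^{(n+1)}(t)=z_0+\int_0^t f_p(s,z^{(n)}(s))\,ds.
\]
Since $\sum_jp_j=1$, the field $f_p$ is $L_z$-Lipschitz uniformly in $t$, and the usual factorial estimate gives uniform convergence to a solution. The iterates stay inside $\mathcal K^{\rho_{\mathcal K}}$ by the tube clause (iii) of Assumption~\ref{ass:regularity}. If needed, we argue on a maximal interval and use that clause to exclude exit before $T$. Because the integrand has norm at most $B$, the Lipschitz bound $\norm{z(t)-z(s)}\le B|t-s|$ follows at once, and hence absolute continuity.

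\textbf{Step 4: uniqueness and the representative.} For two solutions, the difference satisfies $\norm{e(t)}\le L_z\int_0^t\norm{e}$, and Gronwall's inequality forces $e\equiv0$. Finally, changing the representative of $g_j$ at finitely many times alters the integrand only on a null set, so the integrals, and hence the solution, are unchanged. This confirms the closing remark of Assumption~\ref{ass:bv-fields-v2}.
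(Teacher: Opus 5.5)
Your proposal is correct and follows the paper's route: strong measurability from the BV structure, Picard iteration for the Bochner integral equation, the forward tube plus the speed bound to reach all of $[0,T]$, and integration of the speed bound to get $\norm{z(t)-z(s)}\le B|t-s|$. Your regulated-function/step-function argument and your explicit treatment of the composition $t\mapsto G_j(t,z(t))$ spell out what the paper compresses into ``totally bounded, hence separable, range''; in particular, uniform approximation by step functions is exactly what makes that range totally bounded. One correction: clause (iii) of the common-tube assumption constrains relaxed trajectories, not Picard iterates, so it does not justify running the iteration globally in time. Your maximal-interval fallback is the argument actually needed, namely local Picard iteration in a ball inside the open tube followed by continuation, and it is what the paper does.
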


\begin{proof}
A Banach-valued BV map has totally bounded, hence separable, range and admits
a strongly measurable representative.  Consequently the relaxed right-hand
side is strongly measurable in time, uniformly bounded by $B$, and
$L_z$-Lipschitz in state.  Picard iteration for the Bochner integral equation
gives local existence and uniqueness.  The declared forward tube and the speed
bound prevent finite-time escape and extend the solution to $[0,T]$.  Integrating the
speed bound gives the last claim.
\end{proof}

\begin{lemma}[Online simplex rounding]
\label[lemma]{lem:online-simplex-rounding-v4}
For every sequence $p_0,\ldots,p_{D-1}\in\Delta_J$, there is a sequence
$\sigma_0,\ldots,\sigma_{D-1}\in[J]$ such that, for every prefix $n\le D$
and every coordinate $j$,
\begin{equation}
 \left|\sum_{k=0}^{n-1}
  \bigl(\one_{\{\sigma_k=j\}}-p_{k,j}\bigr)\right|<J.
 \label{eq:online-rounding-prefix-v4}
\end{equation}
One online construction maintains $q_0=0$ and chooses
\begin{equation}
 \sigma_k\in\arg\max_j(q_{k,j}+p_{k,j}),
 \qquad q_{k+1}=q_k+p_k-e_{\sigma_k},
 \label{eq:online-rounding-alg-v4}
\end{equation}
where $e_j$ is the $j$th coordinate vector.
\end{lemma}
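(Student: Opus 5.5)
The plan is to prove the stronger invariant that the online rule keeps every coordinate of the running vector $q_k$ in the open interval $(-1,J)$. The stated prefix bound then follows at once from the identity
\[
 q_n=\sum_{k<n}\bigl(p_k-e_{\sigma_k}\bigr),
\]
which holds by telescoping the update in \eqref{eq:online-rounding-alg-v4} from $q_0=0$. Then $-q_{n,j}$ is exactly the prefix discrepancy, so $-1<q_{n,j}<J$ gives $\abs{q_{n,j}}<J$ whenever $J\ge1$. I will also record the conservation law $\one^\top q_k=0$. It holds because $\one^\top p_k=1=\one^\top e_{\sigma_k}$.

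The induction will be on $k$, and the base case $q_0=0$ is immediate. Assume every coordinate of $q_k$ lies in $(-1,J)$ and set $r=q_k+p_k$. Since $p_k\ge0$ and $\one^\top p_k=1$, we have $r_j>-1$ for every $j$ and $\one^\top r=1$. The maximizer $\sigma=\sigma_k$ satisfies $r_\sigma\ge 1/J$ by averaging. Hence the new coordinate $r_\sigma-1\ge 1/J-1>-1$. For the upper side, $r_\sigma\le q_{k,\sigma}+1<J+1$, so $r_\sigma-1<J$. The unselected coordinates are unchanged from $r$, so they stay above $-1$.

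The main obstacle is the upper bound for an unselected coordinate $i\ne\sigma$, whose value $r_i$ is not reduced. I will argue by contradiction and suppose $r_i\ge J$. Then $r_\sigma\ge r_i\ge J$ by maximality, and each of the remaining $J-2$ coordinates exceeds $-1$. Summing gives
\[
 \one^\top r> 2J-(J-2)=J+2>1,
\]
which contradicts $\one^\top r=1$. The case $J=1$ is trivial because then there is no unselected coordinate, and $J=2$ is covered by the same sum with no remaining terms. This closes the induction.

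Finally, I will note that any tie rule works in the argmax step, since only maximality was used. The strict inequalities are preserved at every step, which gives the strict prefix bound $<J$ claimed in \eqref{eq:online-rounding-prefix-v4}.
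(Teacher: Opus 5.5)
Your proposal is correct and follows the paper's proof essentially step for step. It uses the same invariant $-1<q_{k,j}<J$, the same vector $r=q_k+p_k$ with the averaging bound $r_{\sigma_k}\ge 1/J$ for the selected coordinate, the same unit-sum contradiction ruling out $r_i\ge J$ for an unselected coordinate, and the same telescoping identity $q_n=\sum_{k<n}(p_k-e_{\sigma_k})$ to turn the invariant into the prefix bound.
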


\begin{proof}
The case $J=1$ is immediate.  For $J\ge2$, $q_k$ is the negative cumulative
discrepancy and $\sum_jq_{k,j}=0$.  Inductively suppose
$-1<q_{k,j}<J$.  Put $r=q_k+p_k$, so $\sum_jr_j=1$ and $r_j>-1$.
The selected coordinate has $r_{\sigma_k}\ge1/J$, hence
$q_{k+1,\sigma_k}=r_{\sigma_k}-1>-1$, and
$r_{\sigma_k}<J+1$ gives the upper bound.  For an unselected coordinate,
$q_{k+1,j}=r_j>-1$.  If some unselected $r_j\ge J$, then also
$r_{\sigma_k}\ge J$; together with $r_\ell>-1$ on the remaining $J-2$
coordinates this would give
$\sum_\ell r_\ell>2J-(J-2)>1$, a contradiction.  Thus every unselected
$r_j<J$.  Finally,
$q_n=\sum_{k<n}(p_k-e_{\sigma_k})$, which is exactly
\eqref{eq:online-rounding-prefix-v4}.
\end{proof}

\begin{theorem}[Pure-to-relaxed approximation for BV fields]
\label[theorem]{thm:bv-reachability-v2}
Under \cref{ass:bv-fields-v2}, let $D_0=\lceil T/h_0\rceil$.  For every
$D\ge D_0$,
\begin{align}
  \dH(\RR_{b,D},\RR_{b,\Rel})
  &\le\frac{C_{\mathrm{BV},b}}{D},
  \label{eq:bv-endpoint-rate-v2}\\
  C_{\mathrm{BV},b}
  &:=Te^{L_zT}\left[
       V_t+\frac12L_zBT
       +J^2(B+L_zBT)+JV_t
     \right],
  \label{eq:bv-constant-v2}\\
  \dH^{\mathrm{path}}(\mathscr P_{b,D},\mathscr P_{b,\Rel})
  &\le\frac{C_{\mathrm{BV},b}+2BT}{D}.
  \label{eq:bv-path-rate-v2}
\end{align}
The same bounds hold for distances to arbitrary target endpoints and paths.
In particular, finitely many prescribed route changes alter the constant
through total variation but not the $D^{-1}$ exponent.
\end{theorem}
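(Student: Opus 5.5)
The proof follows the two-stage sketch given after \cref{thm:floor-rate-main}, with the bounded-variation bookkeeping done explicitly so that the constant \eqref{eq:bv-constant-v2} comes out. Fix a relaxed control $p$ with trajectory $z$, which exists and is unique by \cref{lem:bv-carath-v2}, together with the cell averages $p_k=h^{-1}\int_{I_k}p$. Let $u_k$ be the mixed Euler path \eqref{eq:mixed-euler-main}.

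\textbf{Step 1 (flow versus mixed Euler).} On the cell $I_k$, write the local defect as
\[
\int_{I_k}\sum_jp_j(t)\bigl[G_j(t,z(t))-G_j(t_k,z(t_k))\bigr]\,dt .
\]
Split each bracket into a state part and a time part. The state part is bounded by $L_zB(t-t_k)$ through the speed bound of \cref{lem:bv-carath-v2}. The time part is bounded by $\nu_j((t_k,t])$ through \eqref{eq:bv-variation-measure-v2}, where the right-continuous representative is the one evaluated at grid points. This gives a per-cell defect of at most $\tfrac12L_zBh^2+h\sum_j\nu_j((t_k,t_{k+1}])$. Summing over cells and applying discrete Gronwall with factor $(1+hL_z)^D\le e^{L_zT}$ gives
\[
\max_k\norm{z(t_k)-u_k}\le e^{L_zT}h\bigl(\tfrac12L_zBT+V_t\bigr).
\]
Since $h=T/D$, this accounts for the terms $V_t+\tfrac12L_zBT$ in the constant.

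\textbf{Step 2 (mixed Euler versus pure Euler).} Choose $\sigma$ by \cref{lem:online-simplex-rounding-v4}. Define $f_k$ and $F_n$ as in \eqref{eq:forcing-main}, and write $g_{k,j}=G_j(t_k,u_k)$. Abel summation gives
\[
F_n=\sum_j\Bigl[d_{n,j}\,g_{n-1,j}-\sum_{k<n-1}d_{k+1,j}\,(g_{k+1,j}-g_{k,j})\Bigr],
\]
where $d_{n,j}$ is the prefix discrepancy, so $|d_{n,j}|<J$. The field variation along the mixed path is bounded by
\[
\sum_k\norm{g_{k+1,j}-g_{k,j}}\le L_z\sum_k\norm{u_{k+1}-u_k}+\nu_j((0,T])\le L_zBT+\nu_j((0,T]).
\]
Summing over $j$ gives
\[
\max_n\norm{F_n}\le J\bigl[JB+JL_zBT+V_t\bigr]=J^2(B+L_zBT)+JV_t .
\]
This is exactly the remaining bracket in the constant. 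For the pure path $v_k$, set $e_k=v_k-u_k$ and $\widetilde e_k=e_k-hF_k$, so that \eqref{eq:transformed-error-main} holds. Gronwall then gives $\norm{e_k}\le hF_{\max}(e^{L_zT}-1)+hF_{\max}\le hF_{\max}e^{L_zT}$. Adding Steps 1 and 2 gives the relaxed-to-pure directed distance $C_{\mathrm{BV},b}/D$.

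\textbf{Step 3 (reverse direction, closures, paths).} For a pure schedule, take the piecewise-constant vertex control equal to $e_{\sigma_k}$ on $I_k$. Its cell averages reproduce $\sigma$ exactly, so its mixed Euler path is the pure path and Step 2 is vacuous. Step 1 alone then bounds the distance by a quantity no larger than $C_{\mathrm{BV},b}/D$. Directed distances are unchanged by taking closures, and the target statements follow from the fact that distance to a set is $1$-Lipschitz under Hausdorff perturbation. For the path sets, the grid-point bounds above are uniform over $k$. On each cell, both the continuous trajectory and the linear interpolant stay within $Bh$ of the grid values, which adds $2Bh=2BT/D$ and gives \eqref{eq:bv-path-rate-v2}.

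\textbf{Main obstacle.} The delicate step is the Abel-summation variation estimate in Step 2. It must use the variation measure on half-open cells consistently with the right-continuous grid representative, so that atoms of $\nu_j$ located at jump times are counted exactly once. It must also keep every mixed-Euler, pure-Euler, and interpolation state inside the tube where \eqref{eq:bv-variation-measure-v2} holds. I would rely on clause (iii) of \Cref{ass:regularity}, valid for $D\ge D_0$, for tube membership, and bound $\norm{u_{k+1}-u_k}\le Bh$ directly.
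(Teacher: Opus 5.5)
Your proposal is correct and follows the paper's proof essentially step for step. The paper uses the same cell-averaged mixed Euler comparison with per-cell defect $\tfrac12L_zBh^2+h\sum_j\nu_j$, the same online-rounding and Abel-summation bound $J^2(B+L_zBT)+JV_t$ closed by the transformed error $\widetilde e_k=e_k-hF_k$, and the same vertex-control reverse direction, closure argument, and $2Bh$ interpolation term. The only cosmetic difference is that you charge the time defect on half-open cells $(t_k,t_{k+1}]$, whereas the paper uses open cells $(t_k,t_{k+1})$; both sum to at most $V_t$.
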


\begin{proof}
Let $h=T/D$, $I_k=[t_k,t_{k+1})$, and average a relaxed control on each
interval:
$p_{k,j}=h^{-1}\int_{I_k}p_j(t)\,dt$.  Let $z(t)$ be the relaxed solution and
$u_k$ the associated mixed Euler path.  The one-step defect is
\[
 \int_{I_k}\sum_jp_j(t)
 \left[G_j(t,z(t))-G_j(t_k,z(t_k))\right]dt.
\]
By \cref{lem:bv-carath-v2}, the state-motion part is at most
$L_zBh^2/2$.  By \eqref{eq:bv-variation-measure-v2}, the time part is at most
$h\sum_j\nu_j((t_k,t_{k+1}))$.  Therefore the sum of all local defects is at
most
\begin{equation}
  hV_t+\frac12L_zBTh.
  \label{eq:bv-total-local-defect-v2}
\end{equation}
Discrete Gronwall gives
\begin{equation}
  \max_{k\le D}\norm{u_k-z(t_k)}
  \le he^{L_zT}\left(V_t+\frac12L_zBT\right).
  \label{eq:bv-mixed-error-v2}
\end{equation}

Apply \cref{lem:online-simplex-rounding-v4} to $p_k$.  With
$G_{j,k}=G_j(t_k,u_k)$, Abel summation and the prefix discrepancy bound give
\begin{align}
 \sup_{n\le D}\left\|\sum_{k<n}\sum_j
   (\one_{\{\sigma_k=j\}}-p_{k,j})G_{j,k}\right\|
 &\le \sum_jJ\left[B+
       \sum_{k=0}^{D-2}\norm{G_{j,k+1}-G_{j,k}}\right]\\
 &\le J^2(B+L_zBT)+JV_t
 =:W_{\mathrm{BV}}.
 \label{eq:bv-prefix-forcing-v2}
\end{align}
Indeed,
\[
 \sum_k\norm{G_j(t_{k+1},u_{k+1})-G_j(t_k,u_k)}
 \le L_zBT+\nu_j((0,T]).
\]
For completeness, put
\[
 f_k:=\sum_j(\one_{\{\sigma_k=j\}}-p_{k,j})G_{j,k},
 \qquad F_n:=\sum_{k<n}f_k,
\]
so \eqref{eq:bv-prefix-forcing-v2} gives
$\max_n\norm{F_n}\le W_{\mathrm{BV}}$.  If
$e_k=z_k-u_k$ and $\widetilde e_k=e_k-hF_k$, then the pure and mixed Euler
recursions imply
\[
 \norm{\widetilde e_{k+1}}
 \le(1+hL_z)\norm{\widetilde e_k}+h^2L_zW_{\mathrm{BV}}.
\]
Iteration from $\widetilde e_0=0$, followed by
$e_k=\widetilde e_k+hF_k$, yields
\begin{equation}
  \max_{k\le D}\norm{z_k-u_k}
  \le hW_{\mathrm{BV}}(1+hL_z)^D
  \le hW_{\mathrm{BV}}e^{L_zT}.
  \label{eq:bv-pure-mixed-v2}
\end{equation}
Combining \eqref{eq:bv-mixed-error-v2} and
\eqref{eq:bv-pure-mixed-v2} proves one directed endpoint estimate.

For the reverse direction, take the vertex-valued relaxed control that equals
$e_{\sigma_k}$ on $I_k$.  Its mixed Euler path is the given pure path, so
\eqref{eq:bv-mixed-error-v2} places that pure endpoint within the same bound
of a relaxed endpoint.  Passing to the endpoint closures preserves both
directed distances: approximate a closed-set point by raw relaxed endpoints
and use continuity of distance to the finite pure set.  The identical argument
works for path closures.  Both the relaxed path and the linear Euler
interpolant move by at most $Bh$ inside one cell, adding at most $2Bh$ and
proving \eqref{eq:bv-path-rate-v2}.
\end{proof}

We now formalize actual top-$k$ changes.  Let $N_e$ be the number of experts,
let $\TopK_k(s)$ denote a deterministic top-$k$ support with a fixed tie rule,
and let $m^\star(t)$ be a right-continuous target support with isolated event
times $0<\tau_1<\cdots<\tau_R<T$.  The target support is required to be
self-consistent: away from event ties,
$m^\star(t)=\TopK_k(s(t,z^\star(t)))$, and the prescribed tie rule supplies
its value at each event.  At event $r$, one expert $a_r$ enters and
one expert $b_r$ leaves, so the adjacent supports $m_r^-$ and $m_r^+$ differ
by one swap.  Let $s(t,z)\in\R^{N_e}$ be the ideal score vector and
$\widehat s(t,z)$ its implementation.

\begin{definition}[Two-mode isolation radius]
\label[definition]{def:two-mode-isolation-v2}
An event neighborhood $U_r$ has isolation radius $\Gamma_r>0$ if every score
vector $\widetilde s$ satisfying
$\norm{\widetilde s-s(t,z^\star(t))}_\infty<\Gamma_r$ has top-$k$ support in
$\{m_r^-,m_r^+\}$, and within those two modes the selected support is
$m_r^+$ exactly when
$\widetilde s_{a_r}>\widetilde s_{b_r}$.  Outside the event neighborhoods,
$\Gamma_{\mathrm{out}}>0$ is an isolation radius if the same perturbation
preserves the complete target support.
\end{definition}

\begin{lemma}[A checkable sufficient top-$k$ isolation condition]
\label[lemma]{lem:topk-isolation-v2}
The event isolation property holds with radius $\Gamma_r$ if, along the target
path in $U_r$, every ranking inequality needed to distinguish experts other
than the pair $(a_r,b_r)$ has gap strictly larger than $2\Gamma_r$.  The
away-event property holds if the target $k$th-to-$(k+1)$st score gap is
strictly larger than $2\Gamma_{\mathrm{out}}$.
\end{lemma}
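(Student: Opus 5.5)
The plan is to reduce both claims to one elementary fact. An $\ell_\infty$ perturbation of the score vector of size less than $\Gamma$ changes every pairwise difference $\widetilde s_u-\widetilde s_v$ by strictly less than $2\Gamma$. For any experts $u,v$,
\[
|(\widetilde s_u-\widetilde s_v)-(s_u-s_v)|\le|\widetilde s_u-s_u|+|\widetilde s_v-s_v|<2\Gamma .
\]
Hence every ranking inequality whose target gap exceeds $2\Gamma$ keeps its strict sign under the perturbation. After this, the work is combinatorial: show that the preserved inequalities pin down the top-$k$ support, up to the single permitted ambiguity. Here $s$ denotes $s(t,z^\star(t))$ at a fixed time $t$ in the relevant set.

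\textbf{Away-event case.} I write the target support as $m^\star(t)$, which is the top-$k$ set because the target support is self-consistent. The $k$th-to-$(k+1)$st gap exceeds $2\Gamma_{\mathrm{out}}$, so $\min_{u\in m^\star}s_u-\max_{v\notin m^\star}s_v>2\Gamma_{\mathrm{out}}$. By the elementary fact, $\widetilde s_u>\widetilde s_v$ for every $u\in m^\star$ and $v\notin m^\star$. So the $k$ members of $m^\star$ strictly dominate every outsider. Then $\TopK_k(\widetilde s)=m^\star$, with no ties at the boundary and so independent of the tie rule.

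\textbf{Event case.} I interpret the hypothesis concretely as follows. Let $C=m_r^-\cap m_r^+$ be the $k-1$ common experts and $O$ the experts outside $m_r^-\cup m_r^+$. The ``needed'' ranking inequalities are $s_c>s_{a_r},s_{b_r}$ for $c\in C$, and $s_{a_r},s_{b_r}>s_o$ for $o\in O$, each with gap exceeding $2\Gamma_r$. The elementary fact preserves all of these under the perturbation. So every $c\in C$ beats every expert outside $C$, and both $a_r$ and $b_r$ beat every $o\in O$. Any top-$k$ set must then contain $C$ and exactly one of $a_r,b_r$, which is why the support lies in $\{m_r^-,m_r^+\}$. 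Within these two modes the choice is decided by comparing $\widetilde s_{a_r}$ with $\widetilde s_{b_r}$. Since $a_r$ enters at the event, the support is $m_r^+$ exactly when $\widetilde s_{a_r}>\widetilde s_{b_r}$. At an exact tie the prescribed tie rule applies, consistent with the paper's convention for the target at $\tau_r$.

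The main obstacle is making precise which inequalities count as ``needed''. In particular, the $(a_r,b_r)$ comparison must be the only one exempted. Order among experts within $C$, or among experts within $O$, is irrelevant and must not be required. I would state this set of inequalities explicitly, and handle the tie case $\widetilde s_{a_r}=\widetilde s_{b_r}$ by deferring to the fixed tie rule so that ``exactly when'' reads as the strict-inequality convention of Definition~\ref{def:two-mode-isolation-v2}.
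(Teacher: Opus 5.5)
Your proposal is correct and follows the paper's proof: an $\ell_\infty$ perturbation of radius $\Gamma$ moves every pairwise gap by less than $2\Gamma$, so every ranking inequality with a larger target margin keeps its sign, and near an event only the $(a_r,b_r)$ order can flip. Your explicit list of the needed inequalities (the $k-1$ common experts above both $a_r$ and $b_r$, and both of these above every outsider) spells out the combinatorial step the paper states in one sentence. Your caveat is also apt and is left implicit in the paper's proof: the ``exactly when'' clause can only be read modulo exact ties $\widetilde s_{a_r}=\widetilde s_{b_r}$, where the fixed tie rule decides.
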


\begin{proof}
An $\ell_\infty$ score perturbation of radius $\Gamma$ changes every pairwise
gap by less than $2\Gamma$.  Hence all ranking inequalities with larger
margin retain their sign.  At an event, only the order of $a_r$ and $b_r$ can
change, which gives exactly the two adjacent supports.
\end{proof}

For one shared atom schedule $\sigma$, take the common initial state
$x_0=y_0=z^\star(0)$ and define the target-route oracle and the actual routed
recursion by
\begin{align}
  y_{k+1}
  &=y_k+hG_{\sigma_k}^{m^\star(t_k)}(t_k,y_k),
  \label{eq:oracle-route-recursion-v2}\\
  x_{k+1}
  &=x_k+hG_{\sigma_k}^{\widehat m(t_k,x_k)}(t_k,x_k),
  \qquad
  \widehat m(t,z)=\TopK_k(\widehat s(t,z)).
  \label{eq:actual-route-recursion-v2}
\end{align}

\begin{assumption}[Simple isolated top-$k$ swaps]
\label[assumption]{ass:simple-topk-events-v2}
There are disjoint neighborhoods
$U_r=(\tau_r-\ell_r,\tau_r+\ell_r)$ and constants
$\nu_r,L_r,\eta_{r,D},\Delta_r>0$ such that:
\begin{enumerate}[label=(\roman*),leftmargin=2em]
\item the target pair gap
$t\mapsto\gamma_r(t,z^\star(t))$, where
$\gamma_r(t,z)=s_{a_r}(t,z)-s_{b_r}(t,z)$, is continuous on
$\overline U_r$, satisfies
$\gamma_r(\tau_r,z^\star(\tau_r))=0$, and obeys
\begin{align}
 \gamma_r(t,z^\star(t))&\le-\nu_r(\tau_r-t),
 &&t\in(\tau_r-\ell_r,\tau_r),\\
 \gamma_r(t,z^\star(t))&\ge \nu_r(t-\tau_r),
 &&t\in(\tau_r,\tau_r+\ell_r);
 \label{eq:signed-transversality-v2}
\end{align}
\item the implemented pair gap obeys
\begin{equation}
 \abs{\widehat\gamma_r(t,z)-\gamma_r(t,z^\star(t))}
 \le \eta_{r,D}+L_r\norm{z-z^\star(t)};
 \label{eq:route-gap-error-v2}
\end{equation}
\item the full score perturbation obeys
\begin{equation}
 \norm{\widehat s(t,z)-s(t,z^\star(t))}_\infty
 \le \eta_{0,D}+L_0\norm{z-z^\star(t)};
 \label{eq:full-score-error-v2}
\end{equation}
\item the target scores have event isolation radii $\Gamma_r$ and away-event
radius $\Gamma_{\mathrm{out}}$ as in
\cref{def:two-mode-isolation-v2};
\item every routed atom is $L_z$-Lipschitz on the common route tube and
\begin{equation}
  \sup_{j,t,z\text{ in the tube}}
  \norm{G_j^{m_r^+}(t,z)-G_j^{m_r^-}(t,z)}
  \le\Delta_r.
  \label{eq:route-field-jump-v2}
\end{equation}
\end{enumerate}
Events are one-at-a-time; simultaneous swaps require a separate
multi-surface statement.
\end{assumption}

\begin{theorem}[Transversal top-$k$ changes have shrinking mismatch windows]
\label[theorem]{thm:route-window-v2}
Suppose \cref{ass:simple-topk-events-v2} holds and the oracle satisfies
\begin{equation}
  \max_{k\le D}\norm{y_k-z^\star(t_k)}\le\delta_D.
  \label{eq:oracle-route-error-v2}
\end{equation}
Let $S=e^{L_zT}$ and define
\begin{align}
  \chi
  &:=2S\sum_{r=1}^R\frac{\Delta_rL_r}{\nu_r},
  \label{eq:route-small-gain-v2}\\
  b_D
  &:=\delta_D
      +2S\sum_{r=1}^R\frac{\Delta_r\eta_{r,D}}{\nu_r}
      +2Sh\sum_{r=1}^R\Delta_r,
  \label{eq:route-forcing-v2}\\
  \overline E_D
  &:=\frac{b_D}{1-\chi},
  \label{eq:route-envelope-v2}\\
  w_{r,D}
  &:=\frac{\eta_{r,D}+L_r\overline E_D}{\nu_r}.
  \label{eq:route-window-v2}
\end{align}
Assume $\chi<1$, every $w_{r,D}<\ell_r$, the resulting time windows are
disjoint, every oracle and actual prefix remains in the common route tube, and
\begin{equation}
  \eta_{0,D}+L_0\overline E_D
  <\min\{\Gamma_{\mathrm{out}},\Gamma_1,\ldots,\Gamma_R\}.
  \label{eq:route-isolation-budget-v2}
\end{equation}
Then
\begin{equation}
  \max_{k\le D}\norm{x_k-z^\star(t_k)}\le\overline E_D,
  \label{eq:route-state-bound-v2}
\end{equation}
and the actual and target supports agree at every grid time outside
\begin{equation}
  \bigcup_{r=1}^R
  [\tau_r-w_{r,D},\tau_r+w_{r,D}].
  \label{eq:route-window-union-v2}
\end{equation}
The theorem localizes route error without assuming that the implemented
router switches only once inside a window.
\end{theorem}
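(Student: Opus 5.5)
The argument is a strong induction on the grid index $n$. The induction hypothesis $\mathcal H_n$ states three things: $\norm{x_k-z^\star(t_k)}\le\overline E_D$ for all $k\le n$; every prefix through $n$ lies in the route tube; and the supports $\widehat m(t_k,x_k)$ and $m^\star(t_k)$ agree for every $k\le n$ with $t_k$ outside the window union \eqref{eq:route-window-union-v2}. The base case $n=0$ holds because $x_0=z^\star(0)$ and $0<\tau_1-w_{1,D}$, since $w_{1,D}<\ell_1<\tau_1$. Support agreement at $0$ then follows from the away-event isolation step described next.

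\textbf{Support localization from the state bound.} Fix $k\le n$ with $\norm{x_k-z^\star(t_k)}\le\overline E_D$. By \eqref{eq:full-score-error-v2} and \eqref{eq:route-isolation-budget-v2}, the implemented score lies within the relevant isolation radius of the target score. There are two cases.
\begin{itemize}
\item If $t_k\notin\bigcup_rU_r$, the away-event radius $\Gamma_{\mathrm{out}}$ of \cref{def:two-mode-isolation-v2} forces $\widehat m=m^\star$.
\item If $t_k\in U_r$, the support lies in $\{m_r^-,m_r^+\}$ and is decided by the sign of $\widehat\gamma_r$. When $|t_k-\tau_r|>w_{r,D}$, transversality \eqref{eq:signed-transversality-v2} gives $|\gamma_r(t_k,z^\star(t_k))|\ge\nu_r|t_k-\tau_r|>\eta_{r,D}+L_r\overline E_D$. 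By \eqref{eq:route-gap-error-v2}, $\widehat\gamma_r$ then has the same strict sign, so $\widehat m=m^\star(t_k)$. Here right-continuity of $m^\star$ and the tie rule fix the value exactly at $\tau_r$, which lies inside the window anyway.
\end{itemize}
Consequently, mismatch indices up to $n$ lie in the windows. Window $r$ contains at most $2w_{r,D}/h+2$ grid points, and the windows are disjoint, so each mismatch is attributed to a single event. Mismatches may occur at arbitrary grid points inside a window; no single-switch claim is needed.

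\textbf{Comparison with the oracle.} Set $d_k=x_k-y_k$. On a matched step, both recursions use the same field, and $L_z$-Lipschitzness gives $\norm{d_{k+1}}\le(1+hL_z)\norm{d_k}$. On a mismatched step in window $r$, add and subtract $hG^{m^\star}_{\sigma_k}(t_k,x_k)$. The actual support is $m_r^\pm$ and the target support is the other adjacent mode, so \eqref{eq:route-field-jump-v2} adds forcing at most $h\Delta_r$. Discrete Gronwall from $d_0=0$ then bounds $\max_{k\le n+1}\norm{d_k}$ by $S\sum_r h\Delta_r(2w_{r,D}/h+2)$. Substituting $w_{r,D}$ turns this into $2S\sum_r\Delta_r\eta_{r,D}/\nu_r+\chi\overline E_D+2Sh\sum_r\Delta_r$. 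Adding the oracle error \eqref{eq:oracle-route-error-v2} gives $\norm{x_{n+1}-z^\star(t_{n+1})}\le b_D+\chi\overline E_D=\overline E_D$, by the definition \eqref{eq:route-envelope-v2} and $\chi<1$. Tube membership at $n+1$ is the stated premise. Applying the localization step once more at index $n+1$ closes $\mathcal H_{n+1}$.

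\textbf{Main obstacle.} The delicate point is the apparent circularity: window widths depend on $\overline E_D$, and the state bound depends on the windows. The induction resolves this only if the forcing estimate at step $n+1$ uses mismatch information at indices $k\le n$ alone. It does, because $x_{n+1}$ depends only on supports at times $t_0,\dots,t_n$. A second care point is uniformity of the count: the bound $2w_{r,D}/h+2$ must be used as a bound on mismatches up to $n$, not on the full horizon, so that the Gronwall sum holds at every stage. I would make the counting explicit by noting that grid points in a closed interval of length $2w$ number at most $2w/h+1\le 2w/h+2$.
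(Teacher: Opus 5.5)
Your proposal is correct and follows the paper's own proof essentially step for step. The shared steps are: induction on the grid index; localization of mismatches from the inductive state bound, via transversality and isolation; the $2w_{r,D}/h+2$ count per window; discrete Gronwall with $h\Delta_r$ forcing at mismatched steps; and closure via $b_D+\chi\overline E_D=\overline E_D$. One small aside: $\ell_1<\tau_1$ is not among the hypotheses, but you do not need it, because your localization step settles support agreement at $t_0$ wherever $t_0$ lies relative to the windows.
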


\begin{proof}
The common initialization $x_0=y_0=z^\star(0)$ starts an induction.  Assume
that the state discrepancy and route-tube membership hold through grid index
$n$.  At each $k\le n$, the state bound inserted into
\eqref{eq:route-gap-error-v2} and \eqref{eq:full-score-error-v2}, together
with signed transversality and the strict isolation radii, proves route
agreement outside the certified event windows and excludes all nonadjacent
supports.  The boundary-clipped number of mismatched grid points for event
$r$ through index $n$ is at most $2w_{r,D}/h+2$.

For the update to $n+1$, matched modes contribute only the state-Lipschitz
factor, whereas a mismatch in window $r$ adds at most $h\Delta_r$.  Discrete
Gronwall applied to the forcing through step $n$ gives
\begin{align}
 \max_{k\le n+1}\norm{x_k-y_k}
 &\le S\sum_{r=1}^R\Delta_r(2w_{r,D}+2h)\\
 &=2S\sum_r\frac{\Delta_r\eta_{r,D}}{\nu_r}
   +\chi\overline E_D+2Sh\sum_r\Delta_r.
 \label{eq:route-gronwall-v2}
\end{align}
Adding the oracle error yields the state bound at $n+1$.  The declared common
route tube contains the new state and all interpolation segments used in the
estimate, so the induction closes.  The sign and isolation comparison then
proves \eqref{eq:route-window-union-v2}.
\end{proof}

\begin{corollary}[First-order hard-routing law]
\label[corollary]{cor:first-order-routing-v2}
If
\begin{equation}
  \delta_D\le\frac{C_\delta}{D},
  \qquad
  \eta_{r,D}\le\frac{C_{\eta,r}}{D},
  \qquad h=\frac TD,
  \label{eq:first-order-router-input-v2}
\end{equation}
and the $D$-independent small-gain and isolation hypotheses of
\cref{thm:route-window-v2} hold, then
\begin{align}
 \max_{k\le D}\norm{x_k-z^\star(t_k)}
 &\le \frac{C_{\mathrm{route}}}{D},\\
 w_{r,D}
 &\le \frac{C_{\eta,r}+L_rC_{\mathrm{route}}}{\nu_rD},
 \label{eq:first-order-router-output-v2}
\end{align}
where
\begin{equation}
 C_{\mathrm{route}}
 =\frac{C_\delta
 +2S\sum_r\Delta_rC_{\eta,r}/\nu_r
 +2ST\sum_r\Delta_r}{1-\chi}.
 \label{eq:route-constant-v2}
\end{equation}
Thus actual route changes preserve the synthesis exponent when oracle and
score errors are first order and the hybrid feedback gain is below one.
\end{corollary}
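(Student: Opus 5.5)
The statement is Corollary~\ref{cor:first-order-routing-v2}. The plan is a direct substitution into Theorem~\ref{thm:route-window-v2}: bound $b_D$ by a multiple of $D^{-1}$, divide by $1-\chi$, and read off the window radius. The gain $\chi=2S\sum_r\Delta_rL_r/\nu_r$ involves only $D$-independent constants ($S=e^{L_zT}$, $\Delta_r$, $L_r$, $\nu_r$). The small-gain premise $\chi<1$ is therefore a single $D$-independent condition, and the denominator $1-\chi$ is a fixed positive number.

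Next, insert the first-order inputs into \eqref{eq:route-forcing-v2}. Using $\delta_D\le C_\delta/D$, $\eta_{r,D}\le C_{\eta,r}/D$ and $h=T/D$ gives
\[
 b_D\le\frac1D\Bigl(C_\delta+2S\sum_r\frac{\Delta_rC_{\eta,r}}{\nu_r}+2ST\sum_r\Delta_r\Bigr).
\]
Dividing by $1-\chi$ yields $\overline E_D\le C_{\mathrm{route}}/D$ with $C_{\mathrm{route}}$ exactly as in \eqref{eq:route-constant-v2}. Then \eqref{eq:route-window-v2} gives
\[
 w_{r,D}\le\frac{C_{\eta,r}/D+L_rC_{\mathrm{route}}/D}{\nu_r},
\]
which is the stated window bound. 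The state bound follows from \eqref{eq:route-state-bound-v2} once the theorem applies.

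The only real point is making sure the remaining hypotheses of Theorem~\ref{thm:route-window-v2} hold for the $D$ under consideration. These are $w_{r,D}<\ell_r$, disjointness of the windows, and the isolation budget \eqref{eq:route-isolation-budget-v2}. I would argue that all three follow for sufficiently large $D$. The bound on $w_{r,D}$ is $O(D^{-1})$, so it eventually drops below the fixed $\ell_r$, and because the neighborhoods $U_r$ are disjoint the windows inside them are disjoint too. The isolation budget also holds eventually, since $\eta_{0,D}+L_0\overline E_D\to0$ provided $\eta_{0,D}\to0$, while $\min\{\Gamma_{\mathrm{out}},\Gamma_r\}$ is fixed and positive.

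Two conditions are not forced by large $D$: that $\eta_{0,D}\to0$, and tube membership of every oracle and actual prefix. The corollary's phrase ``isolation hypotheses hold'' covers these, and I would state explicitly that they are assumed for the $D$ considered, either directly or for all $D$ beyond a threshold. This bookkeeping about which premises are automatic and which are assumed is the only delicate step; the rest is arithmetic.
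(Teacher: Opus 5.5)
Your proposal is correct and follows the paper's argument. Substitute $\delta_D\le C_\delta/D$, $\eta_{r,D}\le C_{\eta,r}/D$ and $h=T/D$ into $b_D$, divide by the $D$-independent factor $1-\chi$ to get $\overline E_D\le C_{\mathrm{route}}/D$, and read off the window bound from the definition of $w_{r,D}$. Your bookkeeping matches the paper's remark after the main-text version: $w_{r,D}<\ell_r$ and window disjointness hold automatically for large $D$ when the $U_r$ are fixed and disjoint, while tube membership and the isolation budget stay as premises unless, as you note, $\eta_{0,D}\to0$ makes the budget eventual.
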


\begin{corollary}[Unique event time under a.e. derivative separation]
\label[corollary]{cor:event-time-v4}
Under \cref{thm:route-window-v2}, suppose all routed fields and the target path
have speed at most $B$, and let $\widetilde x_D$ be the linear interpolant of
the actual Euler states.  Then
\begin{equation}
 \sup_{t\in[0,T]}\norm{\widetilde x_D(t)-z^\star(t)}
 \le \widetilde E_D:=\overline E_D+2Bh.
 \label{eq:event-continuous-state-v4}
\end{equation}
Define
$\widehat g_r(t)=\widehat\gamma_r(t,\widetilde x_D(t))$,
$g_r^\star(t)=\gamma_r(t,z^\star(t))$, and
\begin{equation}
 \widetilde w_{r,D}
 :=\frac{\eta_{r,D}+L_r\widetilde E_D}{\nu_r}.
 \label{eq:event-continuous-window-v4}
\end{equation}
Assume $\widetilde w_{r,D}<\ell_r$, both gap functions are continuous and
absolutely continuous on $U_r$, and there are an orientation
$\varsigma_r\in\{-1,1\}$ and constants $\kappa_r>\zeta_r>0$ such that, for
almost every $t\in U_r$,
\begin{equation}
 \varsigma_r g_r^{\star\prime}(t)\ge\kappa_r,
 \qquad
 \abs{\widehat g_r'(t)-g_r^{\star\prime}(t)}
 \le\kappa_r-\zeta_r.
 \label{eq:event-derivative-separation-v4}
\end{equation}
Then $\widehat g_r$ is strictly monotone, has exactly one zero
$\widehat\tau_r$, and
\begin{equation}
 \abs{\widehat\tau_r-\tau_r}\le\widetilde w_{r,D}.
 \label{eq:event-time-bound-v4}
\end{equation}
The hypotheses allow the Euler interpolant to have derivative jumps at grid
points.  Without an anti-chattering condition such as
\eqref{eq:event-derivative-separation-v4}, a uniformly small score error need
not imply a unique implemented event.
\end{corollary}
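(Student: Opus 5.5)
The plan has three steps: bound the state error along the interpolant, use the derivative hypotheses to get monotonicity and hence at most one zero, and use transversality of the target gap to produce exactly one zero inside the continuous window.

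\textbf{Step 1 (state bound on the interpolant).} At each grid point, \cref{thm:route-window-v2} gives $\norm{x_k-z^\star(t_k)}\le\overline E_D$. Fix $t$ in a cell $[t_k,t_{k+1}]$. The interpolant $\widetilde x_D(t)$ lies on the segment between $x_k$ and $x_{k+1}$, and consecutive Euler states differ by $h$ times a field of norm at most $B$. Hence $\norm{\widetilde x_D(t)-x_k}\le Bh$. Likewise $\norm{z^\star(t)-z^\star(t_k)}\le Bh$ by the speed bound. The triangle inequality then gives \eqref{eq:event-continuous-state-v4}.

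\textbf{Step 2 (monotonicity).} By absolute continuity and \eqref{eq:event-derivative-separation-v4},
\[
\varsigma_r\widehat g_r'\ge\varsigma_r g_r^{\star\prime}-\abs{\widehat g_r'-g_r^{\star\prime}}\ge\kappa_r-(\kappa_r-\zeta_r)=\zeta_r>0
\]
almost everywhere on $U_r$. Integrating, $\varsigma_r\widehat g_r$ is strictly increasing on $U_r$, so it has at most one zero there.

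\textbf{Step 3 (existence and localization of the zero).} Assumption (ii) of \cref{ass:simple-topk-events-v2}, evaluated at $z=\widetilde x_D(t)$ and combined with Step 1, gives
\[
\abs{\widehat g_r(t)-g_r^\star(t)}\le\eta_{r,D}+L_r\widetilde E_D=\nu_r\widetilde w_{r,D}.
\]
Signed transversality \eqref{eq:signed-transversality-v2} gives $\abs{g_r^\star(t)}\ge\nu_r\abs{t-\tau_r}$, with sign $-$ before $\tau_r$ and $+$ after. For every $t$ with $\widetilde w_{r,D}<\abs{t-\tau_r}<\ell_r$ we therefore have $\abs{g_r^\star(t)}>\abs{\widehat g_r(t)-g_r^\star(t)}$, so $\widehat g_r(t)$ has the same nonzero sign as $g_r^\star(t)$. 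Taking points just outside $[\tau_r-\widetilde w_{r,D},\tau_r+\widetilde w_{r,D}]$ on both sides (possible since $\widetilde w_{r,D}<\ell_r$), $\widehat g_r$ takes opposite signs there. Continuity gives a zero $\widehat\tau_r$, and Step 2 makes it unique in $U_r$. The sign agreement excludes zeros at distance greater than $\widetilde w_{r,D}$ from $\tau_r$, which proves \eqref{eq:event-time-bound-v4}.

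The orientation must be consistent: transversality says $g_r^\star$ crosses from negative to positive, so the a.e. bound $\varsigma_r g_r^{\star\prime}\ge\kappa_r$ forces $\varsigma_r=+1$. The opposite choice is vacuous, or else signals the other labeling convention for the entering and leaving experts.

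\textbf{Main obstacle.} The delicate point is Step 1's use of hypothesis (ii) off the grid. \cref{thm:route-window-v2} only controls grid states, and the interpolation segments must lie in the route tube for the score estimates to apply. The theorem's tube premise explicitly covers interpolation segments, so I would invoke it there.
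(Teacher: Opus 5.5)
Your proof is correct and follows the paper's argument essentially step for step: the $2Bh$ interpolation bound on $\widetilde x_D$, then sign agreement of $\widehat g_r$ and $g_r^\star$ outside $[\tau_r-\widetilde w_{r,D},\tau_r+\widetilde w_{r,D}]$ from the pair-gap estimate and signed transversality, then the a.e.\ bound $\varsigma_r\widehat g_r'\ge\zeta_r$ giving strict monotonicity, and finally the intermediate value theorem for existence of the zero. Your two extra remarks are correct refinements that the paper leaves implicit: under the stated transversality convention the hypotheses force $\varsigma_r=+1$, and applying the score estimate off the grid relies on the tube premise covering interpolation segments.
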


\begin{proof}
Both the target path and each linear Euler interpolant move by at most $Bh$
inside one cell, proving \eqref{eq:event-continuous-state-v4}.  The pair-gap
perturbation and signed transversality imply sign agreement whenever
$\abs{t-\tau_r}>\widetilde w_{r,D}$.  From
\eqref{eq:event-derivative-separation-v4},
$\varsigma_r\widehat g_r'(t)\ge\zeta_r$ almost everywhere.  Absolute
continuity therefore makes $\varsigma_r\widehat g_r$ strictly increasing.
Its one-sided signs outside the closed localized window are opposite, so
continuity gives a zero in that window; strict monotonicity makes it unique.
\end{proof}

\begin{theorem}[Continuum-input hard routing in $L^p$]
\label[theorem]{thm:continuum-routing-lp-v2}
Let $(\Xi,\mu)$ be a finite measure space, fix $1\le p\le\infty$, and
suppose the routed network is
input separable: for almost every $\xi$, its state and router depend on
$z(t,\xi)$ and the same shared atom schedule $\sigma$, but not on other
inputs.  Assume that for almost every $\xi$ the hypotheses of
\cref{thm:route-window-v2} hold with measurable input-dependent quantities
$R(\xi)$, $\delta_D(\xi)$, $\eta_{r,D}(\xi)$,
$\nu_r(\xi)$, $L_r(\xi)$, and $\Delta_r(\xi)$.  Define pointwise
$\chi(\xi)$, $b_D(\xi)$, and $\overline E_D(\xi)$ by
\eqref{eq:route-small-gain-v2}--\eqref{eq:route-envelope-v2}, and suppose
\begin{equation}
  \mathop{\mathrm{ess\,sup}}_{\xi\in\Xi}\chi(\xi)<1,
  \qquad
  \overline E_D\in L^p(\mu),
  \qquad R\in L^1(\mu),
  \qquad W_D:=\sum_{r=1}^{R(\cdot)}w_{r,D}\in L^1(\mu).
  \label{eq:lp-routing-small-gain-v2}
\end{equation}
Then, even when $\xi\mapsto m(t,\xi)$ and
$\xi\mapsto z(t,\xi)$ are discontinuous,
\begin{equation}
 \max_{k\le D}
 \norm{x_k-z^\star(t_k)}_{L^p(\mu)}
 \le
 \left\|\frac{b_D}{1-\chi}\right\|_{L^p(\mu)}.
 \label{eq:lp-routing-state-v2}
\end{equation}
Moreover, with $\mathcal M_k$ the set of inputs whose support is mismatched at
$t_k$,
\begin{equation}
 h\sum_{k=0}^{D-1}\mu(\mathcal M_k)
 \le
 2\int_\Xi\sum_{r=1}^{R(\xi)}w_{r,D}(\xi)\,d\mu(\xi)
 +2h\int_\Xi R(\xi)\,d\mu(\xi).
 \label{eq:lp-routing-occupancy-v2}
\end{equation}
This provides a mathematically valid continuum-input replacement for the
uniform $C(\Xi)$ formulation, which hard routing need not preserve.
\end{theorem}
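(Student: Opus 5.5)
The plan is to reduce everything to the pointwise statement of \cref{thm:route-window-v2} and then integrate. Input separability is the structural fact that makes this legitimate. For almost every $\xi$, the trajectory $k\mapsto x_k(\xi)$ is exactly the actual routed recursion \eqref{eq:actual-route-recursion-v2} for a single input. It is driven by the shared schedule $\sigma$ and its own router $\widehat m(t,x_k(\xi))$, and likewise for the oracle $y_k(\xi)$ and target $z^\star(\cdot,\xi)$. Applying \cref{thm:route-window-v2} to that input, with the measurable constants $R(\xi),\delta_D(\xi),\eta_{r,D}(\xi),\nu_r(\xi),L_r(\xi),\Delta_r(\xi)$, gives two conclusions. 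First, $\abs{x_k(\xi)-z^\star(t_k,\xi)}\le\overline E_D(\xi)=b_D(\xi)/(1-\chi(\xi))$ for every $k\le D$. Second, the support of $\xi$ agrees with the target support at every grid time outside $\bigcup_{r\le R(\xi)}[\tau_r(\xi)-w_{r,D}(\xi),\tau_r(\xi)+w_{r,D}(\xi)]$. The condition $\operatorname{ess\,sup}\chi<1$ ensures that $1-\chi$ is bounded away from zero, so $\overline E_D$ is a.e.\ finite. Together with the hypothesis $\overline E_D\in L^p$, this makes the right-hand side meaningful.

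For \eqref{eq:lp-routing-state-v2}, I will fix $k$ and take the $L^p(\mu)$ norm of the pointwise inequality. Monotonicity of the $L^p$ norm then gives $\norm{x_k-z^\star(t_k)}_{L^p}\le\norm{b_D/(1-\chi)}_{L^p}$ for every $k\le D$. Since the bound is uniform in $k$, the maximum over $k$ obeys the same inequality, and the case $p=\infty$ is identical with the essential supremum. At no point is continuity in $\xi$ used, only measurability. I will therefore note that $x_k(\cdot)$ is measurable because it is built by finitely many compositions of measurable maps: the top-$k$ operator with a fixed tie rule is Borel, being piecewise constant on finitely many semialgebraic cells.

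For \eqref{eq:lp-routing-occupancy-v2}, I will write $h\sum_k\mu(\mathcal M_k)=\int_\Xi h\,N(\xi)\,d\mu(\xi)$, where $N(\xi)=\#\{k<D:\xi\in\mathcal M_k\}$; this is Tonelli's theorem applied to the indicator of $\{(k,\xi):\xi\in\mathcal M_k\}$. By the pointwise window conclusion, every mismatched grid time lies in one of the intervals of length $2w_{r,D}(\xi)$. Such an interval contains at most $2w_{r,D}(\xi)/h+2$ grid points (I will keep the $+2$ to avoid boundary issues). Hence $hN(\xi)\le\sum_{r\le R(\xi)}(2w_{r,D}(\xi)+2h)$, and integrating with $W_D,R\in L^1$ yields the claim.

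The main obstacle is measurability and the a.e.\ bookkeeping rather than any estimate. I must check three things. First, $\mathcal M_k$ must be a measurable set. Second, the input-dependent event times and windows must be measurable, so that $\xi\mapsto\sum_r w_{r,D}(\xi)$ with a variable number of terms $R(\xi)$ is measurable. I would handle this by writing the sum as $\sum_{r\ge1}\one_{\{r\le R\}}w_{r,D}$, a countable sum of measurable functions. Third, the null set on which the pointwise hypotheses fail must be independent of $k$. This holds because there are only finitely many $k$.
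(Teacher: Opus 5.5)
Your proposal is correct and matches the paper's proof: apply the single-input route-window theorem pointwise for almost every $\xi$, take $L^p(\mu)$ norms for the state bound, and for the occupancy bound count at most $2w_{r,D}(\xi)/h+2$ mismatched grid points per window, multiply by $h$, and integrate. The one difference is that you try to derive the measurability of $x_k$ and $\mathcal M_k$, which would also need joint measurability of the fields and router scores in $(\xi,z)$, whereas the paper simply treats measurability as part of the hypotheses.
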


\begin{proof}
Apply \cref{thm:route-window-v2} pointwise for almost every input and then
take the $L^p$ norm.  For the second claim, the pointwise number of mismatched
grid cells is at most
$\sum_r(2w_{r,D}(\xi)/h+2)$; multiply by $h$ and integrate.  Measurability is
part of the hypotheses.
\end{proof}

\begin{corollary}[Distributional top-$k$ margin law]
\label[corollary]{cor:margin-density-v2}
At a fixed time, let
$\Gamma^\star(\xi)$ be the target gap between its $k$th and $(k+1)$st scores.
Suppose
\begin{equation}
 \mu\{\xi:\Gamma^\star(\xi)\le u\}
 \le C_\Gamma u^\alpha,
 \qquad 0\le u\le u_0,
 \label{eq:margin-density-v2}
\end{equation}
and
$\norm{\widehat s(\xi)-s^\star(\xi)}_\infty\le\varepsilon\le u_0/2$
almost everywhere.  Then
\begin{equation}
 \mu\{\xi:\TopK_k(\widehat s(\xi))
              \ne\TopK_k(s^\star(\xi))\}
 \le C_\Gamma(2\varepsilon)^\alpha.
 \label{eq:margin-route-measure-v2}
\end{equation}
For $1\le p<\infty$, suppose the two routed fields agree whenever their
supports agree and differ by at most $\Delta$ pointwise whenever their
supports differ.  Their $L^p$ routing perturbation is then at most
\begin{equation}
  \Delta\,C_\Gamma^{1/p}(2\varepsilon)^{\alpha/p}.
  \label{eq:margin-field-lp-v2}
\end{equation}
For $p=\infty$, the corresponding worst-case bound is simply $\Delta$; a
distributional margin assumption does not improve an essential-supremum norm.
\end{corollary}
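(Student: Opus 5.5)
The plan is to prove the measure bound \eqref{eq:margin-route-measure-v2} first and then derive the $L^p$ estimate from it by a pointwise bound followed by integration.

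\textbf{Step 1: containment of the mismatch set.} Fix $\xi$ and suppose $\Gamma^\star(\xi)>2\varepsilon$. By the away-event part of \cref{lem:topk-isolation-v2}, an $\ell_\infty$ perturbation of radius $\varepsilon$ changes every pairwise score gap by at most $2\varepsilon$. Every member of $\TopK_k(s^\star(\xi))$ exceeds every nonmember by at least $\Gamma^\star(\xi)>2\varepsilon$. Hence every such inequality keeps its strict sign under $\widehat s(\xi)$, and
\[
\TopK_k(\widehat s(\xi))=\TopK_k(s^\star(\xi)),
\]
independently of the tie rule. Therefore the mismatch set is contained in $\{\Gamma^\star\le2\varepsilon\}$, up to the null set where the sup-norm bound fails.

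\textbf{Step 2: the measure bound.} Since $2\varepsilon\le u_0$, the margin-density hypothesis \eqref{eq:margin-density-v2} applies at $u=2\varepsilon$. This gives the bound $C_\Gamma(2\varepsilon)^\alpha$ on the mismatch measure.

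\textbf{Step 3: the $L^p$ estimate.} The main point to get right is measurability of the mismatch set. It is a preimage under the measurable map $\xi\mapsto(\widehat s(\xi),s^\star(\xi))$ of the set where the deterministic top-$k$ supports differ, which is a finite union of polyhedral pieces. If one prefers to avoid this argument, the outer measure bound suffices. By hypothesis the pointwise field difference is at most $\Delta\,\one_{\mathcal M}(\xi)$, where $\mathcal M$ is the mismatch set. So for $1\le p<\infty$,
\[
\|\cdot\|_{L^p}^p\le\Delta^p\mu(\mathcal M)\le\Delta^pC_\Gamma(2\varepsilon)^\alpha.
\]
Taking $p$-th roots gives \eqref{eq:margin-field-lp-v2}.

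\textbf{Step 4: the case $p=\infty$.} Here the essential supremum of $\Delta\one_{\mathcal M}$ equals $\Delta$ whenever $\mu(\mathcal M)>0$, so only the bound $\Delta$ holds. To see that the distributional margin assumption cannot improve this, take an example with $\mu\{\Gamma^\star\le u\}>0$ for every $u>0$ and a perturbation $\widehat s$ that flips the $k$th and $(k+1)$st scores on that set. This example should be stated briefly as a remark rather than proved in detail.
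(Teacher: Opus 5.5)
Your proposal is correct and follows the paper's own argument. The paper likewise shows the mismatch set lies in $\{\Gamma^\star\le2\varepsilon\}$, applies the margin density at $u=2\varepsilon\le u_0$, and integrates $\Delta^p$ times the mismatch indicator. Your measurability remark and the $p=\infty$ sharpness example are only extra detail beyond the paper's one-line proof.
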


\begin{proof}
A top-$k$ support can change under an $\ell_\infty$ score perturbation of size
$\varepsilon$ only if the target boundary gap is at most $2\varepsilon$.
Apply \eqref{eq:margin-density-v2}; the field bound follows by integrating the
indicator of the mismatch set.
\end{proof}

\begin{proposition}[Checkable weighted top-$k$ MoE constants]
\label[proposition]{prop:topk-moe-constants-v4}
Let expert fields satisfy
\begin{equation}
 \sup_{e,t,z}\norm{E_e(t,z)}\le B_E,
 \qquad
 \norm{E_e(t,z)-E_e(t,z')}\le L_E\norm{z-z'}.
 \label{eq:moe-expert-bounds-v4}
\end{equation}
For each fixed active support $S$, let the zero-padded normalized gate vector
$\pi^S(t,z)$ be supported on $S$, nonnegative, sum to one, and obey
\begin{equation}
 \norm{\pi^S(t,z)-\pi^S(t,z')}_1
 \le L_\pi\norm{z-z'}.
 \label{eq:moe-gate-lipschitz-v4}
\end{equation}
For
$G^S(t,z)=\sum_e\pi_e^S(t,z)E_e(t,z)$, one may take
\begin{equation}
 L_z\le L_E+B_EL_\pi.
 \label{eq:moe-field-lipschitz-v4}
\end{equation}
For a one-expert swap between $S_r^-$ and $S_r^+$,
\begin{equation}
 \Delta_r\le B_E\Delta_{\pi,r},
 \qquad
 \Delta_{\pi,r}:=
 \sup_{U_r\times\mathcal K}
 \norm{\pi^{S_r^+}-\pi^{S_r^-}}_1\le2.
 \label{eq:moe-field-jump-v4}
\end{equation}
If, on a common support, $\widehat\pi^S$ is also nonnegative and
normalized, $\sup_e\norm{\widehat E_e-E_e}\le\eta_E$, and
$\norm{\widehat\pi^S-\pi^S}_1\le\eta_\pi$, then
\begin{equation}
 \norm{\widehat G^S-G^S}\le\eta_E+B_E\eta_\pi.
 \label{eq:moe-field-error-v4}
\end{equation}
Finally, per-expert score error $\eta_{\mathrm{score}}$ and score
state-Lipschitz constant $L_{\mathrm{score}}$ imply
\begin{equation}
 \eta_{r,D}\le2\eta_{\mathrm{score}},
 \qquad
 L_r\le2L_{\mathrm{score}}.
 \label{eq:moe-router-gap-v4}
\end{equation}
Thus every constant in the route-window theorem can be audited from expert,
gate, and router quantities.
\end{proposition}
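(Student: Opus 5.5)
The four claims are elementary norm estimates, and we take them in the order stated. Throughout we use $\sum_e\pi_e^S=1$, $\pi^S\ge0$, and the convention that all gate vectors are zero-padded to $\mathbb R^{N_e}$, so differences of gates on different supports are defined coordinatewise.

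\textbf{Lipschitz constant \eqref{eq:moe-field-lipschitz-v4}.} For fixed $S$, add and subtract $\sum_e\pi_e^S(t,z)E_e(t,z')$. This gives
\[
 G^S(t,z)-G^S(t,z')=\sum_e\pi_e^S(t,z)\bigl[E_e(t,z)-E_e(t,z')\bigr]
 +\sum_e\bigl[\pi_e^S(t,z)-\pi_e^S(t,z')\bigr]E_e(t,z').
\]
Since the weights $\pi^S(t,z)$ form a convex combination, the first sum has norm at most $L_E\norm{z-z'}$. The second sum has norm at most $B_E\norm{\pi^S(t,z)-\pi^S(t,z')}_1\le B_EL_\pi\norm{z-z'}$. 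Adding the two bounds yields $L_E+B_EL_\pi$.

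\textbf{Field jump \eqref{eq:moe-field-jump-v4}.} The same expert fields appear under both supports, so
\[
 G^{S_r^+}-G^{S_r^-}=\sum_e\bigl(\pi_e^{S_r^+}-\pi_e^{S_r^-}\bigr)E_e .
\]
This is bounded by $B_E\norm{\pi^{S_r^+}-\pi^{S_r^-}}_1$. Taking the supremum over $U_r\times\mathcal K$ gives $\Delta_r\le B_E\Delta_{\pi,r}$. The bound $\Delta_{\pi,r}\le2$ is the triangle inequality for two probability vectors, each of $\ell^1$ norm one. The sup over the atom index $j$ is harmless, because each atom is a field of this form.

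\textbf{Implementation error \eqref{eq:moe-field-error-v4}.} Use the same add-and-subtract split:
\[
 \widehat G^S-G^S=\sum_e\widehat\pi_e^S(\widehat E_e-E_e)+\sum_e(\widehat\pi_e^S-\pi_e^S)E_e .
\]
Nonnegativity and normalization of $\widehat\pi^S$ are used here: they make the first sum a convex combination, bounded by $\eta_E$. The second sum is at most $B_E\eta_\pi$.

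\textbf{Router constants \eqref{eq:moe-router-gap-v4}.} These come from $\widehat\gamma_r-\gamma_r=(\widehat s_{a_r}-s_{a_r})-(\widehat s_{b_r}-s_{b_r})$. We take the per-expert assumptions in the form required by \cref{ass:simple-topk-events-v2}(ii):
\[
 \abs{\widehat s_e(t,z)-s_e(t,z^\star(t))}\le\eta_{\mathrm{score}}+L_{\mathrm{score}}\norm{z-z^\star(t)}\quad\text{for every }e.
\]
Applying the triangle inequality to the two coordinates then doubles both constants, giving $\eta_{r,D}\le2\eta_{\mathrm{score}}$ and $L_r\le2L_{\mathrm{score}}$.

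\textbf{Main obstacle.} The only delicate point is interpretive rather than computational. The per-expert score error and score Lipschitz constant must be read as a single target-path-relative estimate, equivalently $\abs{\widehat s_e(t,z)-s_e(t,z)}\le\eta_{\mathrm{score}}$ combined with $\abs{s_e(t,z)-s_e(t,z')}\le L_{\mathrm{score}}\norm{z-z'}$. With that reading the factor two is exact, and no other step uses anything beyond the triangle inequality and $\ell^1$–$\ell^\infty$ duality.
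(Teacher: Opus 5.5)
Your proposal is correct and follows the paper's proof step for step. It uses the same insertion of $\sum_e\pi_e^S(t,z)E_e(t,z')$ for the Lipschitz constant, the zero-padded $\ell^1$ bound for the swap jump, the insertion of $\sum_e\widehat\pi_e^SE_e$ together with normalization of $\widehat\pi^S$ for the implementation error, and the pair gap as a difference of two scores for the factor two, with your target-path-relative reading of the score hypotheses making explicit what the paper leaves implicit.
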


\begin{proof}
For a fixed support, insert and subtract
$\sum_e\pi_e^S(t,z)E_e(t,z')$ and use that the weights sum to one, obtaining
$L_E+B_EL_\pi$.  At a route change, use the same expert values and bound the
zero-padded weight difference in $\ell^1$.  For implementation error, insert
$\sum_e\widehat\pi_e^SE_e$ and use normalization.  A pair gap is the
difference of two scores, giving the final factor two.
\end{proof}

\begin{corollary}[Top-$1$ and uniformly averaged top-$k$ route constants]
\label[corollary]{cor:uniform-topk-constants-v4}
Under \cref{prop:topk-moe-constants-v4}, top-$1$ routing permits
\begin{equation}
 L_z\le L_E,\qquad \Delta_r\le2B_E.
 \label{eq:top1-constants-v4}
\end{equation}
For the uniform top-$k$ average
$G^S=k^{-1}\sum_{e\in S}E_e$ and a one-expert swap,
\begin{equation}
 L_z\le L_E,\qquad \Delta_r\le\frac{2B_E}{k}.
 \label{eq:uniform-topk-constants-v4}
\end{equation}
In the latter case the hybrid small-gain coefficient obeys
\begin{equation}
 \chi
 \le \frac{8e^{L_ET}B_EL_{\mathrm{score}}}{k}
       \sum_{r=1}^R\frac1{\nu_r}.
 \label{eq:uniform-topk-small-gain-v4}
\end{equation}
This is a directly checkable sufficient condition for stable route changes.
\end{corollary}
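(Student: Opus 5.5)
The plan is to specialize \cref{prop:topk-moe-constants-v4} to the two gate choices and then substitute the resulting constants into the definition \eqref{eq:route-small-gain-v2} of $\chi$.

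\emph{Top-$1$ routing.} The active support is a singleton $S=\{e\}$, so the normalized gate vector is the constant indicator $\pi^S=e_e$, independent of $(t,z)$. Hence $L_\pi=0$, and \eqref{eq:moe-field-lipschitz-v4} gives $L_z\le L_E$. A one-expert swap replaces $e_{b_r}$ by $e_{a_r}$. The zero-padded $\ell^1$ difference of these two indicators is exactly $2$, so \eqref{eq:moe-field-jump-v4} yields $\Delta_r\le 2B_E$.

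\emph{Uniform top-$k$ averaging.} Here $\pi^S=k^{-1}\one_S$, which is again state independent, so $L_\pi=0$ and $L_z\le L_E$. The swap $S_r^-\to S_r^+$ changes exactly two coordinates, each by $1/k$. This gives $\norm{\pi^{S_r^+}-\pi^{S_r^-}}_1=2/k$ and therefore $\Delta_r\le 2B_E/k$. Alternatively, one can compute directly that $G^{S_r^+}-G^{S_r^-}=k^{-1}(E_{a_r}-E_{b_r})$, which has norm at most $2B_E/k$.

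\emph{Small-gain bound.} We insert $S=e^{L_zT}\le e^{L_ET}$, $\Delta_r\le 2B_E/k$, and $L_r\le 2L_{\mathrm{score}}$ from \eqref{eq:moe-router-gap-v4} into $\chi=2S\sum_r\Delta_rL_r/\nu_r$. This gives $\chi\le 2e^{L_ET}\cdot(2B_E/k)\cdot 2L_{\mathrm{score}}\sum_r\nu_r^{-1}$, which equals the stated $8e^{L_ET}B_EL_{\mathrm{score}}k^{-1}\sum_r\nu_r^{-1}$.

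The only point requiring care is monotonicity. We must confirm that $\chi$ is nondecreasing in $S$, $\Delta_r$ and $L_r$, so that upper bounds can be substituted; this holds because every term is nonnegative. We also need $L_z$ in $S$ to be the Lipschitz constant of the routed atoms, and that constant is exactly what \eqref{eq:moe-field-lipschitz-v4} controls. Beyond these checks the argument is pure substitution, so I expect no substantive obstacle.
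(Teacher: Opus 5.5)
Your proposal is correct and follows the paper's own proof. The fixed-support gates are constant, so $L_\pi=0$ and $L_z\le L_E$; one-hot and uniform top-$k$ weight vectors differing by one swap are at $\ell^1$ distance $2$ and $2/k$; and substituting these together with $L_r\le 2L_{\mathrm{score}}$ and $e^{L_zT}\le e^{L_ET}$ into the definition of $\chi$ gives the stated bound, with your monotonicity remark and the direct computation $G^{S_r^+}-G^{S_r^-}=k^{-1}(E_{a_r}-E_{b_r})$ being harmless details the paper leaves implicit.
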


\begin{proof}
The fixed-support weights are constant.  Two one-hot vectors have $\ell^1$
distance two; two uniform top-$k$ vectors differing by one swap have distance
$2/k$.  Substitute \eqref{eq:moe-router-gap-v4} and
\eqref{eq:uniform-topk-constants-v4} into
\eqref{eq:route-small-gain-v2}.
\end{proof}

\subsection{Scalable primal--dual certificates for the structural floor}
\label{sec:floor-certificates-v2}

The next results use a finite-dimensional joint state $z\in\R^n$, for example
the product state induced by a finite witness set.  Let $K$ be a compact tube
containing every relaxed trajectory, let $\phi:K\to\R$ be a continuous
terminal loss, and define
\begin{equation}
  E_\infty^\phi
  :=\inf_{p(\cdot)}\phi(z_p(T)).
  \label{eq:general-floor-objective-v2}
\end{equation}

\begin{theorem}[HJB subsolution lower certificate]
\label[theorem]{thm:hjb-lower-v2}
Let $v$ be the restriction to $[0,T]\times K$ of a $C^1$ function on an
open neighborhood and satisfy
\begin{align}
  v(T,z)&\le\phi(z), &&z\in K,
  \label{eq:hjb-terminal-v2}\\
  \partial_tv(t,z)+\nabla_zv(t,z)^\top G_j(t,z)&\ge0,
  &&(t,z)\in[0,T]\times K,\quad j\in[J].
  \label{eq:hjb-atom-v2}
\end{align}
Then
\begin{equation}
  v(0,z_0)\le E_\infty^\phi.
  \label{eq:hjb-value-v2}
\end{equation}
The conclusion remains valid for the closed relaxed endpoint set.
\end{theorem}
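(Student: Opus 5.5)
The plan is the standard verification argument: show that $v$ is nondecreasing along every relaxed trajectory, so its initial value is at most its terminal value, which in turn is at most the terminal loss. Fix a measurable relaxed control $p:[0,T]\to\Delta_J$ and its trajectory $z_p$. By the standing hypothesis of this subsection, $z_p$ stays in the compact tube $K$, and by the Carath\'eodory well-posedness in Assumption~\ref{ass:regularity} it is absolutely continuous.

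Since $v$ is the restriction of a $C^1$ function on an open neighborhood of $[0,T]\times K$, the composition $t\mapsto v(t,z_p(t))$ is absolutely continuous. The chain rule for a $C^1$ function composed with an absolutely continuous curve gives, for almost every $t$,
\[
 \frac{d}{dt}v(t,z_p(t))=\partial_tv(t,z_p(t))+\sum_jp_j(t)\nabla_zv(t,z_p(t))^\top G_j(t,z_p(t)).
\]
Because $p_j(t)\ge0$ and $\sum_jp_j(t)=1$, this is a convex combination of the atom expressions in \eqref{eq:hjb-atom-v2}, each of which is nonnegative. So the derivative is nonnegative almost everywhere. Integrating over $[0,T]$ and then applying \eqref{eq:hjb-terminal-v2} gives
\[
 v(0,z_0)\le v(T,z_p(T))\le\phi(z_p(T)).
\]
Taking the infimum over $p$ yields \eqref{eq:hjb-value-v2}.

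For the closure statement, let $y$ lie in the closed relaxed endpoint set. Choose raw relaxed endpoints $z_{p_n}(T)\to y$; these lie in the compact set $K$, so $y\in K$. Continuity of $\phi$ on $K$ gives $\phi(y)=\lim\phi(z_{p_n}(T))\ge v(0,z_0)$. Hence the infimum of $\phi$ over the closed endpoint set also dominates $v(0,z_0)$.

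The only delicate step is justifying the almost-everywhere chain rule. The trajectory is merely absolutely continuous, and the control is only measurable. This is why the statement requires $v$ to be $C^1$ on an open neighborhood rather than only on $K$, so that the classical derivative exists at points on the boundary of the tube. With that in place, the chain rule follows from the derivative formula at every point where $z_p$ is differentiable and satisfies the ODE, which is almost every $t$.
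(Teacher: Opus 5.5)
Your proposal is correct and follows the paper's proof essentially step for step. The shared steps are the a.e.\ chain rule along each relaxed trajectory, nonnegativity of the derivative as a convex combination of the atom inequalities, integration to get $v(0,z_0)\le v(T,z_p(T))\le\phi(z_p(T))$, the infimum over controls, and continuity of $\phi$ on the compact $K$ for the closed endpoint set; you only add detail the paper leaves implicit, namely absolute continuity of $t\mapsto v(t,z_p(t))$ and why limits of endpoints stay in $K$.
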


\begin{proof}
Along every relaxed trajectory,
\[
 \frac d{dt}v(t,z_p(t))
 =\partial_tv+\sum_jp_j\nabla v^\top G_j\ge0
\]
for almost every $t$.  Hence
$v(0,z_0)\le v(T,z_p(T))\le\phi(z_p(T))$.  Take the infimum; continuity
extends the inequality to endpoint closures.
\end{proof}

\begin{proposition}[Support-function witness for full-map lower bounds]
\label[proposition]{prop:support-witness-v2}
Let $\lambda\in\Z^*$ with $\norm{\lambda}_*\le1$.  Then
\begin{equation}
 E_{\infty,b}^{\mathrm{end}}(F^\star)
 \ge \inner{\lambda}{F^\star}
      -\sup_{y\in\RR_{b,\Rel}}\inner{\lambda}{y}.
 \label{eq:support-basic-v2}
\end{equation}
If $w$ is the restriction of a $C^1$ function on an open neighborhood of
$[0,T]\times K$ and satisfies
\begin{align}
 w(T,z)&\ge\inner{\lambda}{z},
 \label{eq:support-terminal-v2}\\
 \partial_tw(t,z)+\nabla_zw(t,z)^\top G_j(t,z)&\le0
 \quad\text{for every }j,
 \label{eq:support-hjb-v2}
\end{align}
then
\begin{equation}
 E_{\infty,b}^{\mathrm{end}}(F^\star)
 \ge\inner{\lambda}{F^\star}-w(0,z_0).
 \label{eq:support-certified-v2}
\end{equation}
For $\Z=C(\Xi;\R^q)$, every finite signed point-evaluation functional
\[
 \lambda(F)=\sum_{i=1}^Na_i^\top F(\xi_i),
 \qquad \sum_i\norm{a_i}_2\le1,
\]
is admissible.
\end{proposition}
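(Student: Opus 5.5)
The plan has three parts, one for each claim in the proposition.

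\textbf{The basic bound \eqref{eq:support-basic-v2}.} Fix any raw relaxed endpoint $y$. By the definition of the dual norm and $\norm{\lambda}_*\le1$,
\[
\norm{F^\star-y}\ge\inner{\lambda}{F^\star-y}=\inner{\lambda}{F^\star}-\inner{\lambda}{y}\ge\inner{\lambda}{F^\star}-\sup_{y'\in\RR_{b,\Rel}}\inner{\lambda}{y'}.
\]
The right-hand side does not depend on $y$, and $y\mapsto\norm{F^\star-y}$ is continuous. So the inequality passes to the closure $\RR_{b,\Rel}$, and taking the infimum over $y$ gives \eqref{eq:support-basic-v2}. This step also shows that the supremum over the closure equals the supremum over raw endpoints, since $\lambda$ is continuous.

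\textbf{The HJB bound \eqref{eq:support-certified-v2}.} I would bound the support term $\sup_y\inner{\lambda}{y}$ by $w(0,z_0)$, mirroring the proof of Theorem~\ref{thm:hjb-lower-v2} with the inequalities reversed.

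For a measurable relaxed control $p$, the trajectory $z_p$ is absolutely continuous and stays in $K$. Since $w$ is $C^1$ on a neighborhood, the chain rule holds almost everywhere:
\[
\frac{d}{dt}w(t,z_p(t))=\partial_tw+\sum_jp_j(t)\nabla w^\top G_j\le0,
\]
because each atom inequality \eqref{eq:support-hjb-v2} holds and the weights $p_j$ are a convex combination. The composition $t\mapsto w(t,z_p(t))$ is absolutely continuous: $w$ is locally Lipschitz on the compact set $[0,T]\times K$ and $z_p$ is absolutely continuous. Hence
\[
\inner{\lambda}{z_p(T)}\le w(T,z_p(T))\le w(0,z_0).
\]
Taking the supremum over controls, and extending to the closure by continuity of $\lambda$, gives $\sup_{\RR_{b,\Rel}}\inner{\lambda}{\cdot}\le w(0,z_0)$. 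Substituting into \eqref{eq:support-basic-v2} yields \eqref{eq:support-certified-v2}.

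\textbf{Point-evaluation functionals.} For $\Z=C(\Xi;\R^q)$, I check admissibility directly using Cauchy--Schwarz:
\[
\abs{\lambda(F)}\le\sum_i\norm{a_i}_2\norm{F(\xi_i)}_2\le\Bigl(\sum_i\norm{a_i}_2\Bigr)\norm{F}_{\Z}\le\norm{F}_{\Z}.
\]
So $\lambda$ is linear with $\norm{\lambda}_*\le1$, which is all the first claim requires.

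\textbf{Main obstacle.} The only delicate point is the finite-dimensional versus lifted setting. The HJB part is stated on a finite-dimensional tube $K\subset\R^n$, where $\inner{\lambda}{z}$ is read as the induced linear functional on the lifted state and $\nabla w$ is an ordinary gradient. I would state this identification explicitly, for example for the finite-witness product state. I would also invoke the well-posedness clause of Assumption~\ref{ass:regularity}, so that every relaxed control generates an absolutely continuous trajectory inside $K$ and the chain rule is justified.
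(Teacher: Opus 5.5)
Your proposal is correct and follows the paper's proof essentially line for line. The dual-norm inequality $\norm{F^\star-y}\ge\inner{\lambda}{F^\star-y}$ gives the first bound, and monotonicity of $w$ along every relaxed trajectory gives $\inner{\lambda}{z_p(T)}\le w(T,z_p(T))\le w(0,z_0)$, which is then substituted back. Your Cauchy--Schwarz check that finite signed point evaluations have dual norm at most one supplies a detail the paper leaves implicit, as does your remark identifying $\inner{\lambda}{\cdot}$ with a functional on the finite-dimensional lifted tube.
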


\begin{proof}
For every relaxed endpoint $y$,
$\norm{F^\star-y}\ge\inner{\lambda}{F^\star-y}$, which gives the first
claim.  The differential inequality makes $w$ nonincreasing along every
relaxed trajectory, so
$\inner{\lambda}{z_p(T)}\le w(T,z_p(T))\le w(0,z_0)$.  Substitute this
support bound into \eqref{eq:support-basic-v2}.
\end{proof}

\begin{theorem}[Exact common-drift affine floor dual]
\label[theorem]{thm:affine-floor-dual-v5}
Let the finite-dimensional relaxed atoms have one common linear drift,
\begin{equation}
 G_j(t,z)=A(t)z+b_j(t),\qquad j\in[J],
 \label{eq:common-drift-affine-v5}
\end{equation}
where $A$ is integrable, the $b_j$ are integrable, and $\Phi(t,s)$ is the
fundamental matrix of $\dot z=A(t)z$.  Let $P:\R^n\to\R^q$ be linear, set
\begin{equation}
 a=P\Phi(T,0)z_0,\qquad
 c_j(t)=P\Phi(T,t)b_j(t),
 \label{eq:affine-output-data-v5}
\end{equation}
and let $\mathcal Y_{\Rel}$ be the closed set of relaxed outputs $Pz_p(T)$.
Then $\mathcal Y_{\Rel}$ is compact and convex and, for every
$\lambda\in\R^q$,
\begin{equation}
 h_{\mathcal Y_{\Rel}}(\lambda)
 =\inner{\lambda}{a}
  +\int_0^T\max_{j\in[J]}\inner{\lambda}{c_j(t)}\,dt.
 \label{eq:affine-support-v5}
\end{equation}
Consequently, for every target $y^\star$ and every norm with dual norm
$\norm{\cdot}_*$,
\begin{equation}
 \dist(y^\star,\mathcal Y_{\Rel})
 =\max_{\norm{\lambda}_*\le1}
 \left\{
   \inner{\lambda}{y^\star-a}
   -\int_0^T\max_j\inner{\lambda}{c_j(t)}\,dt
 \right\}.
 \label{eq:affine-exact-distance-v5}
\end{equation}
Thus the structural floor is the value of a finite-dimensional concave dual
problem, without a state-space HJB discretization.
\end{theorem}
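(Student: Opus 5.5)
The plan is to reduce everything to the variation-of-constants formula and then identify the relaxed output set with an Aumann-type integral of a finite, time-varying polytope-valued map. For any measurable relaxed control $p$, the solution of $\dot z=A(t)z+\sum_jp_j(t)b_j(t)$ is
\[
z_p(T)=\Phi(T,0)z_0+\int_0^T\Phi(T,t)\sum_jp_j(t)b_j(t)\,dt .
\]
Applying $P$ gives $Pz_p(T)=a+\int_0^T\sum_jp_j(t)c_j(t)\,dt$. Integrability of $A$ makes $\Phi$ continuous and bounded on $[0,T]^2$, so each $c_j$ is integrable. Hence the raw output set is
\[
\mathcal Y_0:=a+\Bigl\{\int_0^T C(t)p(t)\,dt:\ p \text{ measurable},\ p(t)\in\Delta_J\Bigr\},\qquad C(t)=[c_1(t)\ \cdots\ c_J(t)],
\]
and $\mathcal Y_{\Rel}$ is its closure.

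\textbf{Convexity and compactness.} Convexity of $\mathcal Y_0$ is immediate: the map $p\mapsto\int Cp$ is linear, and the set of measurable $\Delta_J$-valued controls is convex. Boundedness follows from $\norm{\int Cp}\le\int_0^T\max_j\norm{c_j(t)}\,dt<\infty$. Taking closures preserves convexity, and closed plus bounded in $\R^q$ gives compactness. Alternatively, the controls form a weak-$*$ compact subset of $L^\infty([0,T];\R^J)$, and $p\mapsto\int Cp$ is weak-$*$ continuous because $C\in L^1$. This shows $\mathcal Y_0$ is already closed, but the closure statement suffices.

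\textbf{Support function.} Fix $\lambda$. For any control, the integrand satisfies
\[
\inner{\lambda}{C(t)p(t)}=\sum_jp_j(t)\inner{\lambda}{c_j(t)}\le\max_j\inner{\lambda}{c_j(t)},
\]
which gives the upper bound $h\le\inner{\lambda}{a}+\int\max_j\inner{\lambda}{c_j}$. The support function of a closure equals that of the set, so this bound transfers to $\mathcal Y_{\Rel}$.

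For equality, define $\sigma(t)$ as the smallest index attaining $\max_j\inner{\lambda}{c_j(t)}$ and set $p(t)=e_{\sigma(t)}$. This is the step I expect to need the most care. Each $t\mapsto\inner{\lambda}{c_j(t)}$ is measurable, so the sets $\{t:\inner{\lambda}{c_j}\ge\inner{\lambda}{c_i}\ \forall i,\ \inner{\lambda}{c_j}>\inner{\lambda}{c_i}\ \forall i<j\}$ are measurable. Thus $\sigma$ is measurable with finitely many values, and $p$ is an admissible relaxed control. Its output attains the bound, and the maximum integrand is integrable since it is dominated by $\sum_j\abs{\inner{\lambda}{c_j}}$.

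\textbf{Distance formula.} This is standard convex duality for distance to a nonempty compact convex set $Y$. Weak duality comes from $\norm{y^\star-y}\ge\inner{\lambda}{y^\star-y}$ for $\norm{\lambda}_*\le1$, which gives
\[
\dist\ge\sup_\lambda\{\inner{\lambda}{y^\star}-h_Y(\lambda)\}.
\]
For attainment, if $y^\star\in Y$ take $\lambda=0$. Otherwise let $\bar y$ be a nearest point; it exists by compactness. Separate the open ball $B(y^\star,d)$, with $d=\dist$, from $Y$ using Hahn–Banach in finite dimensions. This yields $\lambda$ with $\norm{\lambda}_*=1$ and $\inner{\lambda}{y}\le\inner{\lambda}{u}$ for all $y\in Y$ and $u$ in the ball. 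Then
\[
h_Y(\lambda)\le\inf_{\norm{u-y^\star}<d}\inner{\lambda}{u}=\inner{\lambda}{y^\star}-d,
\]
so the supremum equals $d$ and is attained as a maximum over the compact dual ball; the objective is concave and upper semicontinuous, being the difference of a linear and a convex finite function. Substituting the support formula gives the displayed identity. Concavity of the dual objective holds because $\lambda\mapsto\int\max_j\inner{\lambda}{c_j}$ is convex, which makes the floor a finite-dimensional concave maximization.
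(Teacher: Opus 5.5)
Your proposal is correct and follows essentially the same route as the paper: variation of constants, convexity and compactness of the relaxed output set, and pointwise maximization of $\inner{\lambda}{c_j(t)}$ with smallest-index tie-breaking for the measurable selection. It then uses finite-dimensional duality for the distance to a compact convex set, as the paper does; the only difference is that you prove directly, by linearity plus closure and by a separation argument, the two facts the paper simply cites (compact convexity of the Aumann integral and the Fenchel formula for $\dist(y,C)$).
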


\begin{proof}
Variation of constants gives
\[
 Pz_p(T)=a+\int_0^T\sum_jp_j(t)c_j(t)\,dt.
\]
The integrand ranges over the compact convex hull of
$\{c_1(t),\ldots,c_J(t)\}$.  The Aumann integral of this integrably bounded
compact convex multifunction is compact and convex.  Maximizing a linear
functional can be done pointwise: measurable selection is immediate because
there are finitely many atoms and ties may be broken by the smallest index.
This proves \eqref{eq:affine-support-v5}.  For a nonempty closed convex set
$C$ in finite dimensions, Fenchel duality for the norm distance gives
\[
 \dist(y,C)=\max_{\norm{\lambda}_*\le1}
 \{\inner{\lambda}{y}-h_C(\lambda)\}.
\]
Substituting the support formula proves
\eqref{eq:affine-exact-distance-v5}.
\end{proof}

\begin{corollary}[Rational finite-witness LP certificate]
\label[corollary]{cor:rational-affine-lp-v5}
In \cref{thm:affine-floor-dual-v5}, suppose $T=1$, $A=0$, $z_0=0$,
$P=I$, $b_j\in\mathbb Q^q$ are constant, and the target
$y^\star\in\mathbb Q^q$ is represented exactly.  Put
$B=[b_1\ \cdots\ b_J]$.  In the $\ell^\infty$ output norm, the exact floor is
both the primal and dual LP value
\begin{align}
 E_{\mathrm{aff}}
 &=\min_{\alpha\ge0,\,\one^\top\alpha=1}
      \norm{B\alpha-y^\star}_\infty,
 \label{eq:affine-primal-lp-v5}\\
 &=\max_{\norm{\lambda}_1\le1}
      \left\{\lambda^\top y^\star-
                 \max_j\lambda^\top b_j\right\}.
 \label{eq:affine-dual-lp-v5}
\end{align}
For any rational feasible $\widehat\alpha$ and $\widehat\lambda$, define
\begin{equation}
 U=\norm{B\widehat\alpha-y^\star}_\infty,
 \qquad
 L=\widehat\lambda^\top y^\star-
       \max_j\widehat\lambda^\top b_j.
 \label{eq:rational-affine-bracket-v5}
\end{equation}
Then $L\le E_{\mathrm{aff}}\le U$ exactly.  All inequalities can be checked with integer arithmetic after clearing the
denominators of $B$, $y^\star$, $\widehat\alpha$, and
$\widehat\lambda$.  In a finite-witness
full-map lift, $\widehat\lambda$ is a sparse signed combination of coordinate
point evaluations and is therefore an interpretable obstruction witness.
\end{corollary}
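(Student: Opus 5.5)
The plan is to specialize \cref{thm:affine-floor-dual-v5} and then check the rational bracket by weak duality alone. With $T=1$, $A=0$, $z_0=0$, $P=I$ and constant $b_j$, the fundamental matrix is the identity. Hence $a=0$ and $c_j(t)=b_j$.

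\textbf{Primal identity.} Variation of constants gives relaxed outputs $\int_0^1\sum_jp_j(t)b_j\,dt=B\bar p$, where $\bar p=\int_0^1p(t)\,dt\in\Delta_J$. Conversely, every $\alpha\in\Delta_J$ is attained by the constant control $p\equiv\alpha$. So $\mathcal Y_{\Rel}=\{B\alpha:\alpha\in\Delta_J\}$, which is already closed. Distance in $\ell^\infty$ to this polytope is exactly \eqref{eq:affine-primal-lp-v5}, and the minimum is attained by compactness.

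\textbf{Dual identity.} The dual norm of $\ell^\infty$ is $\ell^1$. The support integrand is constant in time, so $\int_0^1\max_j\inner{\lambda}{b_j}\,dt=\max_j\lambda^\top b_j$. Substituting into \eqref{eq:affine-exact-distance-v5} gives \eqref{eq:affine-dual-lp-v5}. If one prefers LP language, the primal can be written as minimizing $t$ subject to $-t\one\le B\alpha-y^\star\le t\one$, $\alpha\ge0$, $\one^\top\alpha=1$. Its LP dual, after eliminating variables, is the displayed max, and strong duality holds because the primal is feasible and bounded.

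\textbf{Bracket without solving either program.} I would argue directly rather than through the duality theorem. Upper bound: $\widehat\alpha$ is feasible, so $E_{\mathrm{aff}}\le U$. Lower bound: for any feasible $\alpha$, Hölder's inequality with $\norm{\widehat\lambda}_1\le1$ gives $\norm{B\alpha-y^\star}_\infty\ge\widehat\lambda^\top(y^\star-B\alpha)$. Since $\alpha$ lies in the simplex, $\widehat\lambda^\top B\alpha\le\max_j\widehat\lambda^\top b_j$. Together these give $E_{\mathrm{aff}}\ge L$.

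\textbf{Integer checkability.} Let $N$ be a common denominator of all entries of $B$, $y^\star$, $\widehat\alpha$, $\widehat\lambda$. Every quantity involved is a sum of at most bilinear products of these entries, so multiplying by $N^2$ makes them all integers. Feasibility ($\widehat\alpha\ge0$, $\one^\top\widehat\alpha=1$, $\norm{\widehat\lambda}_1\le1$), the finitely many comparisons defining $U$ as a max of absolute values, and the max defining $L$ then become exact integer comparisons.

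\textbf{Witness interpretation.} In a finite-witness lift, coordinates of $\R^q$ are point evaluations $F(\xi_i)_\ell$. So $\widehat\lambda$ is a signed combination of point evaluations with total weight at most one. This is an admissible functional in the sense of \cref{prop:support-witness-v2}. Sparsity is obtained by taking a vertex, i.e.\ basic, optimal dual solution.

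The only step needing care is identifying the dual norm and confirming that no closure issue arises. Both are immediate here because $\mathcal Y_{\Rel}$ is a polytope.
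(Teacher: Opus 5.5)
Your proposal is correct and follows essentially the same route as the paper: identify the relaxed output set with $\conv\{b_1,\ldots,b_J\}$, read the primal as the epigraph LP for distance to that polytope, obtain the dual by substituting the $\ell^1$ dual norm and the support function $\max_j\lambda^\top b_j$ into the exact affine distance formula, and get the rational bracket from weak duality for any feasible pair. Two points the paper's proof leaves implicit are filled in usefully: your explicit denominator-clearing argument, and your observation that sparsity of $\widehat\lambda$ comes from choosing a basic optimal dual solution rather than holding for every feasible $\widehat\lambda$.
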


\begin{proof}
The relaxed endpoint set is $\conv\{b_1,\ldots,b_J\}$.  The first display is
the epigraph LP for distance to that polytope.  The second follows from
\eqref{eq:affine-exact-distance-v5}, because the dual of $\ell^\infty$ is
$\ell^1$ and the support function of the convex hull is
$\max_j\lambda^\top b_j$.  Weak duality applied to any feasible pair gives
the exact rational bracket.
\end{proof}

\begin{theorem}[Certified nonlinear enclosure around an affine dictionary]
\label[theorem]{thm:affine-remainder-enclosure-v5}
Let $\widetilde G_j(t,z)=A(t)z+b_j(t)$ be the affine atoms of
\cref{thm:affine-floor-dual-v5}, and let $G_j$ be actual atoms on a common tube.
Assume every actual and surrogate relaxed trajectory driven by the same
control remains in that tube, and let $\eta:[0,T]\to[0,\infty)$ be measurable
with $\eta\in L^1([0,T])$ such that
\begin{equation}
 \sup_{z\text{ in the tube}}
 \norm{G_j(t,z)-\widetilde G_j(t,z)}
 \le\eta(t)
 \quad\text{for a.e. }t,\quad j\in[J].
 \label{eq:affine-remainder-v5}
\end{equation}
Then their closed relaxed output sets satisfy
\begin{equation}
 \dH(\mathcal Y_{\Rel},\widetilde{\mathcal Y}_{\Rel})
 \le R_\eta
 :=\int_0^T\norm{P\Phi(T,t)}_{\mathrm{op}}\eta(t)\,dt.
 \label{eq:affine-remainder-radius-v5}
\end{equation}
Hence, for every target,
\begin{equation}
 \abs{E_\infty-\widetilde E_{\mathrm{aff}}}\le R_\eta.
 \label{eq:affine-floor-enclosure-v5}
\end{equation}
In particular, an exact affine dual lower certificate $L$ yields the rigorous
nonlinear lower bound $E_\infty\ge\max\{0,L-R_\eta\}$, while an affine primal
upper witness $U$ yields $E_\infty\le U+R_\eta$.
\end{theorem}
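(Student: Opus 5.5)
The plan is to pair trajectories driven by the same relaxed control, as in the proof of \cref{lem:learned-endpoint-lipschitz-candidate}, and to propagate only the affine part exactly through the fundamental matrix. This is what produces the weight $\norm{P\Phi(T,t)}_{\mathrm{op}}$ instead of a cruder Gronwall factor. Fix a measurable relaxed control $p$. Let $z$ be the actual relaxed trajectory and $\widetilde z$ the surrogate trajectory, both starting from $z_0$. By hypothesis both remain in the common tube, so the remainder bound \eqref{eq:affine-remainder-v5} applies along $z$.

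Write the actual dynamics as a forced affine system:
\[
 \dot z=A(t)z+\sum_jp_j(t)b_j(t)+\sum_jp_j(t)\bigl[G_j(t,z)-\widetilde G_j(t,z)\bigr].
\]
Set $e=z-\widetilde z$. Then $\dot e=A(t)e+r(t)$ with $e(0)=0$, where $r(t)=\sum_jp_j(t)[G_j-\widetilde G_j](t,z(t))$. Since the weights are simplex valued, $\norm{r(t)}\le\eta(t)$ for almost every $t$. One small point is that $r$ must be Bochner measurable; this follows from the Carath\'eodory measurability of the fields along an absolutely continuous path. Variation of constants, valid because $A$ is integrable, gives
\[
 Pe(T)=\int_0^TP\Phi(T,t)r(t)\,dt .
\]
Hence $\norm{Pz(T)-P\widetilde z(T)}\le R_\eta$. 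This is exactly why the linear drift must be shared between the two systems: the difference equation then has no state-dependent mismatch other than $r$, so no Lipschitz constant for the nonlinear remainder enters.

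Because the bound holds for every control, each raw actual output lies within $R_\eta$ of a raw surrogate output, and conversely. Taking closures preserves both directed distances, since distance to a set is continuous. This gives \eqref{eq:affine-remainder-radius-v5}. The floor enclosure \eqref{eq:affine-floor-enclosure-v5} follows from the one-Lipschitz dependence of $\dist(y^\star,\cdot)$ under Hausdorff perturbation, with $\widetilde E_{\mathrm{aff}}=\dist(y^\star,\widetilde{\mathcal Y}_{\Rel})$. The certificate consequences then follow. For the lower bound, $L\le\widetilde E_{\mathrm{aff}}$ by weak duality in \cref{thm:affine-floor-dual-v5}, so $E_\infty\ge L-R_\eta$, and trivially $E_\infty\ge0$. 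For the upper bound, $U\ge\widetilde E_{\mathrm{aff}}$, so $E_\infty\le U+R_\eta$.

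The main obstacle is well-posedness of the actual relaxed trajectory, which requires unique Carath\'eodory solutions for the nonlinear atoms. The statement does not list this separately. I would read it as part of the common-tube hypothesis (each actual relaxed trajectory exists in the tube), in the same sense as \Cref{ass:regularity}(iii), and I would state this reading explicitly. No state-Lipschitz constant for $G_j$ is otherwise needed.
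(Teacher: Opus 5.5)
Your proposal is correct and follows the paper's proof essentially step for step. You pair actual and surrogate trajectories under one relaxed control, cancel the shared drift so that $e=z-\widetilde z$ solves $\dot e=A(t)e+r(t)$ with $\norm{r(t)}\le\eta(t)$, apply variation of constants and then $P$, and finish with closures and the one-Lipschitz dependence of distance on the set.

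One small refinement to your closing remark: you need only existence of an actual relaxed trajectory in the tube for each control, not uniqueness. Existence is used for the direction from surrogate outputs to actual outputs. In the other direction, any actual trajectory, unique or not, can be paired with the unique surrogate trajectory under the same control.
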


\begin{proof}
Pair actual and surrogate trajectories by the same relaxed control.  Their
difference satisfies
\[
 e(T)=\int_0^T\Phi(T,t)r(t)\,dt,
 \qquad \norm{r(t)}\le\eta(t),
\]
after subtracting the common linear drift.  Applying $P$ proves the uniform
pairing radius in one direction, and the same control pairing proves the
reverse direction.  Take closures and use the one-Lipschitz dependence of
distance to a target on Hausdorff perturbations.
\end{proof}

\begin{remark}[Why this is the scalable certificate tier]
The affine dual is exact for linear residual channels and for finite-witness
libraries whose atom outputs are state independent.  For a nonlinear
Transformer or MLP block, interval, Lipschitz, or verified linearization bounds
can supply $\eta(t)$ in \eqref{eq:affine-remainder-v5}.  The result then gives
a formally checkable lower floor without solving a high-dimensional HJB PDE.
Occupation measures and SOS remain the more general nonlinear tier.
\end{remark}

\paragraph{A verified 128-dimensional block certificate.}
The accompanying exact artifact evaluates sixteen signed 4-bit $8\times8$
linear residual atoms on sixteen width-8 witness inputs, giving a lifted
full-map dimension of $128$.  A high-precision target is generated by a
rational convex atom mixture plus a rational rank-one perturbation.  A floating
LP proposes the witnesses, but the reported simplex point and support
functional are rounded to denominator $10^{12}$ and checked independently
using exact rational arithmetic.  The resulting structural-floor bracket is
\begin{center}
\begin{tabular}{@{}ll@{}}
\toprule
Exact dual lower bound & $0.1257546220866386$\\
Exact primal upper bound & $0.1257546220976685$\\
Exact bracket width & $1.1030\times10^{-11}$\\
Relative bracket width & $8.7710\times10^{-11}$\\
\bottomrule
\end{tabular}
\end{center}
The certificate hash begins \texttt{ec23f09ccf8e1984}.  This is a scalable,
architecture-shaped proof of concept; it is not yet a certificate for a
pretrained checkpoint.

\begin{lemma}[Superposition realization of feasible modal measures]
\label[lemma]{lem:modal-superposition-v7}
Assume the hypotheses of \cref{thm:occupation-lp-v2}.  If
$(\mu_1,\ldots,\mu_J,\mu_T)$ satisfies
\eqref{eq:occupation-time-marginal-v2}--
\eqref{eq:occupation-liouville-v2}, then there are a probability measure
$\Pi$ on $AC([0,T];K)$ and Borel functions
$p_j:[0,T]\times K\to[0,1]$ such that
\begin{equation}
 \sum_jp_j=1\quad dt\,\mu_t\text{-a.e.},
 \qquad
 \dot\gamma(t)=\sum_jp_j(t,\gamma(t))G_j(t,\gamma(t))
 \quad\text{a.e. for $\Pi$-a.e. }\gamma,
 \label{eq:modal-superposition-curve-v7}
\end{equation}
all curves start at $z_0$, and
$(e_T)_\#\Pi=\mu_T$.  In particular, $\mu_T$ is a probability mixture of
terminal points of admissible relaxed trajectories.
\end{lemma}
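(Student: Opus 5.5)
The plan follows the primal-identity argument sketched after Theorem~\ref{thm:occupation-main}, now written with explicit measurability. Set $\mu:=\sum_j\mu_j$, a finite nonnegative measure on $[0,T]\times K$.

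\textbf{Step 1: disintegrate $\mu$ in time.} Testing the Liouville identity \eqref{eq:occupation-liouville-v2} with state-independent $v(t,z)=\psi(t)$ gives $\int\psi(T)\,d\mu_T-\psi(0)=\int\psi'(t)\,d\mu$. Since $\mu_T$ has unit mass, this forces $\pi_{t\#}\mu=dt$ on $[0,T]$; this also recovers \eqref{eq:occupation-time-marginal-v2}. The disintegration theorem on the compact metric space $K$ then gives $\mu(dt,dz)=dt\,\mu_t(dz)$ with Borel probability kernels $\mu_t$.

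\textbf{Step 2: define the weights and the velocity field.} Each $\mu_j\le\mu$, so $\mu_j\ll\mu$. Take Borel versions $p_j=d\mu_j/d\mu\in[0,1]$. Because $\sum_jd\mu_j/d\mu=1$ $\mu$-a.e., the first clause of \eqref{eq:modal-superposition-curve-v7} holds. Redefine the $p_j$ on a $\mu$-null set so that they are simplex valued everywhere. Put $b=\sum_jp_jG_j$. This field is Borel and bounded by $B$, since the fields are bounded continuous on a neighborhood of $[0,T]\times K$.

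\textbf{Step 3: derive the continuity equation.} Rewriting the right side of the Liouville identity gives $\int(\partial_tv+\nabla v^\top b)\,dt\,\mu_t$. Taking $v(t,z)=\psi(t)\varphi(z)$ shows that $t\mapsto\mu_t$ solves $\partial_t\mu_t+\nabla\cdot(b\mu_t)=0$ in the distributional sense on $(0,T)\times\mathbb R^n$. The equation is read with the boundary terms $\delta_{z_0}$ at $t=0$ and $\mu_T$ at $t=T$.

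\textbf{Step 4: choose a narrowly continuous representative.} Boundedness of $b$ gives that $t\mapsto\int\varphi\,d\mu_t$ is Lipschitz for smooth $\varphi$. Hence there is a narrowly continuous representative whose endpoint values are forced by the boundary terms: $\mu_0=\delta_{z_0}$ and $\mu_T$ equal to the given terminal measure. I expect this bookkeeping to be the main obstacle. The issue is that the identity is only against $C^1$ tests with free time endpoints, so one has to check that the weak limits at $0$ and $T$ coincide with $\delta_{z_0}$ and $\mu_T$. I would handle this with cutoff tests $\psi_\epsilon(t)\varphi(z)$, with $\psi_\epsilon$ ramping over $[0,\epsilon]$ or $[T-\epsilon,T]$, and let $\epsilon\to0$.

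\textbf{Step 5: apply superposition.} The superposition principle \citep[Thm.~8.2.1]{AmbrosioGigliSavare2008} applies because $\int_0^T\!\int|b|\,d\mu_t\,dt\le BT<\infty$. It yields a probability measure $\Pi$ on $C([0,T];\mathbb R^n)$ concentrated on absolutely continuous integral curves of $b$, with $(e_t)_\#\Pi=\mu_t$ for all $t$. Then $\gamma(0)=z_0$ $\Pi$-a.s. and $(e_T)_\#\Pi=\mu_T$.

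The remaining claims follow from this. Since $\mu_t$ is supported in $K$ for a.e.\ $t$, continuity of curves gives $\gamma([0,T])\subset K$ $\Pi$-a.s. Fubini, using $(e_t)_\#\Pi=\mu_t$, shows that the exceptional set where the $p_j$ were redefined is met by a.e.\ curve only on a time-null set. Finally, $t\mapsto p(t,\gamma(t))$ is a measurable simplex-valued open-loop control, so each such $\gamma$ is an admissible relaxed trajectory. This exhibits $\mu_T$ as a probability mixture of terminal points of admissible relaxed trajectories.
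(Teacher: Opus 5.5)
Your proposal is correct and follows the paper's proof essentially step for step: disintegration through the time marginal, Borel Radon--Nikodym weights $p_j=d\mu_j/d\mu$, the bounded field $b=\sum_jp_jG_j$, the Ambrosio--Gigli--Savar\'e superposition principle, support in $K$ by continuity of the curves, and $t\mapsto p(t,\gamma(t))$ as an open-loop relaxed control. You also make explicit two points the paper leaves implicit, namely the identification of the narrowly continuous representative's endpoint laws $\delta_{z_0}$ and $\mu_T$, and the null-set Fubini check. One small precision: their Theorem~8.2.1 is stated under an $L^p$ integrability hypothesis with $p>1$, not the $p=1$ bound you quote, but your bounded $b$ satisfies that hypothesis for every $p$.
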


\begin{proof}
Put $\mu=\sum_j\mu_j$.  The time marginal gives a disintegration
$\mu(dt,dz)=dt\,\mu_t(dz)$ with $\mu_t\in\mathcal P(K)$ almost everywhere.
Because $\mu_j\ll\mu$, choose Borel Radon--Nikodym densities
$p_j=d\mu_j/d\mu$ and extend them arbitrarily on the $\mu$-null set; then
$p=(p_1,\ldots,p_J)$ is simplex valued $\mu$-almost everywhere.  The
Liouville identity is the distributional continuity equation for the bounded
Borel velocity
\[
 b(t,z)=\sum_jp_j(t,z)G_j(t,z),
\]
with initial law $\delta_{z_0}$ and terminal law $\mu_T$.  The finite-dimensional
superposition principle for continuity equations
\citep[Theorem~8.2.1]{AmbrosioGigliSavare2008} supplies a probability measure
$\Pi$ on absolutely continuous integral curves of $b$ with the prescribed
marginals.  Since the time marginals are supported on the closed set $K$,
almost every continuous curve lies in $K$ for all times.  Along each such
curve, $t\mapsto p(t,\gamma(t))$ is measurable, so it is an admissible relaxed
open-loop control for that curve.  The initial Dirac law and terminal marginal
give the remaining claims.  This is the same modal-measure realization used
for switched systems in \citet[Theorems~1--2]{ClaeysDaafouzHenrion2016}.
\end{proof}

\begin{theorem}[Exact occupation-measure representation of the relaxed floor]
\label[theorem]{thm:occupation-lp-v2}
Assume $\phi\in C(K)$, the finite-dimensional fields extend to bounded
continuous maps on a neighborhood of $[0,T]\times K$, are uniformly
Lipschitz in $z$, and every relaxed trajectory remains in nonempty compact
$K$.  Let $\mathcal M_+$ denote finite nonnegative Borel measures.  Consider
\begin{align}
 V_{\mathrm{LP}}:=\inf_{\mu_1,\ldots,\mu_J,\mu_T}
 &\quad \int_K\phi(z)\,d\mu_T(z)
 \label{eq:occupation-objective-v2}\\
 \text{subject to }&\quad
 \mu_j\in\mathcal M_+([0,T]\times K),\quad
 \mu_T\in\mathcal P(K),\nonumber\\
 &\quad \sum_{j=1}^J\pi_{t\#}\mu_j=dt,
 \label{eq:occupation-time-marginal-v2}\\
 &\quad
 \int_Kv(T,z)\,d\mu_T(z)-v(0,z_0)\nonumber\\
 &\qquad=\sum_{j=1}^J\int_{[0,T]\times K}
   \left(\partial_tv+\nabla v^\top G_j\right)d\mu_j
 \quad\forall v\in C^1([0,T]\times\mathbb R^n).
 \label{eq:occupation-liouville-v2}
\end{align}
Then
\begin{equation}
  V_{\mathrm{LP}}=E_\infty^\phi,
  \label{eq:occupation-equality-v2}
\end{equation}
and the measure program attains its optimum.  The time-marginal constraint
is redundant once \eqref{eq:occupation-liouville-v2} is imposed for all
state-independent tests $v(t,z)=\psi(t)$ and $\mu_T$ has unit mass.  After
removing that redundant constraint, the conic dual lower program is
\begin{equation}
 \sup_{v\in C^1}v(0,z_0)
 \quad\text{subject to}\quad
 v(T,z)\le\phi(z),\quad
 \partial_tv+\nabla v^\top G_j\ge0\ \forall j.
 \label{eq:occupation-dual-v2}
\end{equation}
Weak duality always holds.  If the image cone of the nonnegative measure cone
under the Liouville boundary operator, augmented by the cost coordinate, is
weak-* closed, infinite-dimensional conic separation gives equality of the
measure and HJB values.  The exact primal floor identity
\eqref{eq:occupation-equality-v2} does not require this additional dual-closure
hypothesis.
\end{theorem}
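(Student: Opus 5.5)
I would prove the primal identity in two inequalities and then treat the duality claims separately. For $V_{\mathrm{LP}}\le E_\infty^\phi$, take any relaxed control $p$ with trajectory $z_p$, which stays in $K$ by hypothesis. Define $d\mu_j=p_j(t)\,dt\,\delta_{z_p(t)}$ and $\mu_T=\delta_{z_p(T)}$. The time-marginal constraint holds because $\sum_jp_j=1$. The Liouville identity is the fundamental theorem of calculus for $t\mapsto v(t,z_p(t))$, which is absolutely continuous since $v\in C^1$ and $z_p$ is absolutely continuous. This gives a feasible point with cost $\phi(z_p(T))$. Taking the infimum over $p$ gives the inequality, and continuity of $\phi$ passes it to the closed endpoint set.

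For $V_{\mathrm{LP}}\ge E_\infty^\phi$, I would invoke Lemma~\ref{lem:modal-superposition-v7} directly. Any feasible tuple yields a probability measure $\Pi$ on absolutely continuous curves in $K$ starting at $z_0$, each driven by a measurable simplex-valued open-loop control $t\mapsto p(t,\gamma(t))$, with $(e_T)_\#\Pi=\mu_T$. Since every such curve is an admissible relaxed trajectory, uniqueness of Carathéodory solutions identifies it with $z_p$ for that control. Hence
\[
 \int\phi\,d\mu_T=\int\phi(\gamma(T))\,d\Pi(\gamma)\ge E_\infty^\phi.
\]
The main obstacle sits inside the lemma: the disintegration needs a narrowly continuous representative of $t\mapsto\mu_t$ matching $\delta_{z_0}$ and $\mu_T$ at the endpoints. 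I would obtain this from the Liouville identity with tests $v(t,z)=\psi(t)\varphi(z)$, using bounded velocity to get equicontinuity in a bounded-Lipschitz metric. I would also check that curves of $b$ agree a.e. with the state-dependent density only $\Pi$-almost surely.

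For attainment, the feasible set is nonempty by the first step. Testing with $v=t$ shows the total mass of $\sum_j\mu_j$ is $T$, and $\mu_T$ has mass one. By compactness of $[0,T]\times K$ and Banach--Alaoglu, the feasible set is weak-* compact. All constraints are weak-* closed, since each test integrand $\partial_tv+\nabla v^\top G_j$ is continuous on the compact set, and the objective is weak-* continuous. So a minimizer exists.

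Redundancy of the time-marginal constraint follows by inserting $v=\psi(t)$. The identity then reads $\psi(T)-\psi(0)=\sum_j\int\psi'\,d\mu_j$ for all $\psi\in C^1$, which forces $\sum_j\pi_{t\#}\mu_j=dt$ because $\{\psi'\}$ is dense in $C([0,T])$.

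Weak duality for \eqref{eq:occupation-dual-v2} is Theorem~\ref{thm:hjb-lower-v2} integrated against feasible measures: $v(0,z_0)=\int v(T,\cdot)\,d\mu_T-\sum_j\int(\partial_tv+\nabla v^\top G_j)\,d\mu_j\le\int\phi\,d\mu_T$. The conditional no-gap statement is the standard conic separation argument under the stated weak-* closedness of the augmented image cone. I would only sketch it, emphasizing that the primal identity does not use it.
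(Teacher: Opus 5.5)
Your proposal is correct and follows the paper's proof step for step: path-induced modal measures give the upper bound, Lemma~\ref{lem:modal-superposition-v7} gives the lower bound, bounded mass plus compactness gives attainment, the test $v=\psi(t)$ gives redundancy of the time marginal, and the HJB inequalities give weak duality. The one cosmetic difference is in weak duality: you integrate the HJB constraints directly against a feasible tuple, which gives it at the level of the measure program itself, whereas the paper cites Theorem~\ref{thm:hjb-lower-v2} together with the primal identity.
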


\begin{proof}
Every relaxed path $z_p$ gives feasible modal measures
\[
 d\mu_j(t,z)=p_j(t)\,dt\,\delta_{z_p(t)}(dz),
 \qquad \mu_T=\delta_{z_p(T)},
\]
so $V_{\mathrm{LP}}\le E_\infty^\phi$.  Conversely,
\cref{lem:modal-superposition-v7} writes the terminal law of every feasible
measure tuple as a probability mixture of admissible relaxed endpoints.
Therefore
\[
 \int\phi\,d\mu_T
 =\int\phi(\gamma(T))\,d\Pi(\gamma)
 \ge\inf_{p(\cdot)}\phi(z_p(T))=E_\infty^\phi.
\]
This proves equality.

For redundancy, let $\eta=\sum_j\pi_{t\#}\mu_j$ and test the Liouville
identity with $v(t,z)=\psi(t)$.  Then
$\int\psi'\,d\eta=\psi(T)-\psi(0)$ for every $\psi\in C^1[0,T]$.
Given any $f\in C[0,T]$, choosing
$\psi(t)=\int_0^tf(s)\,ds$ gives
$\int f\,d\eta=\int_0^Tf(t)\,dt$, hence $\eta=dt$.
All feasible modal measures have total mass $T$ and $\mu_T$ has mass one;
compact support therefore makes the feasible set weak-* compact.  The
Liouville integrands are continuous, so the constraints and objective are
weak-* closed and continuous, proving attainment.  Lagranging the equivalent
formulation without the redundant marginal gives
\eqref{eq:occupation-dual-v2}; weak duality is
\cref{thm:hjb-lower-v2}.  The final no-gap statement is the standard closed-cone
separation criterion.
\end{proof}

\begin{theorem}[Moment--SOS hierarchy with terminal support]
\label[theorem]{thm:sos-hierarchy-v2}
Let $\mathcal K=[0,T]\times K$ and $K_T\subseteq K$ be compact basic
semialgebraic sets.  Write
\begin{align*}
 \mathcal K&=\{(t,z):g_i(t,z)\ge0,\ i=1,\ldots,m_K\},\\
 K_T&=\{z:h_i(z)\ge0,\ i=1,\ldots,m_T\}.
\end{align*}
Assume that the corresponding quadratic modules are Archimedean.  The set
$K_T$ is the declared support of the terminal measure; take $K_T=K$ when no
extra terminal constraint is imposed.
Assume every atom $G_j$ and the terminal loss $\phi$ are polynomial, all
feasible relaxed paths remain in $K$, and at least one admissible relaxed path
terminates in $K_T$.  Define
\begin{equation}
 E_{\infty,K_T}^{\phi}
 :=\inf\{\phi(z_p(T)):p(\cdot)\text{ relaxed-admissible},\ z_p(T)\in K_T\}.
 \label{eq:terminal-constrained-floor-v2}
\end{equation}
A compact polynomial epigraph lift may be used for a semialgebraic norm or
maximum loss.  When no extra terminal condition is imposed, take $K_T=K$, in
which case $E_{\infty,K_T}^{\phi}=E_\infty^\phi$.

For each modal measure introduce a truncated moment sequence $y_j$, and for
$\mu_T$ introduce $y_T$.  At relaxation order $r$, impose
\begin{align}
 M_r(y_j)&\succeq0,&
 M_{r-d_i}(g_i y_j)&\succeq0,\nonumber\\
 M_r(y_T)&\succeq0,&
 M_{r-e_i}(h_i y_T)&\succeq0,
 \label{eq:modal-moment-psd-v7}
\end{align}
with $d_i=\lceil\deg g_i/2\rceil$ and
$e_i=\lceil\deg h_i/2\rceil$, together with the truncated Liouville equations
\begin{equation}
 L_{y_T}(v(T,\cdot))-v(0,z_0)
 =\sum_jL_{y_j}(\partial_tv+\nabla v^\top G_j)
 \label{eq:modal-moment-liouville-v7}
\end{equation}
for every monomial $v$ satisfying
$\deg v(T,\cdot)\le2r$ and
$\deg(\partial_tv+\nabla v^\top G_j)\le2r$ for every atom $j$.
Let $L_r$ be the minimum of $L_{y_T}(\phi)$ over these constraints, including
unit terminal mass and all localizing constraints defining $K_T$.  Its conic dual is the corresponding degree-restricted
SOS HJB certificate.  Then, for all sufficiently large $r$,
\begin{equation}
 L_r\le L_{r+1}\le E_{\infty,K_T}^\phi,
 \qquad
 \lim_{r\to\infty}L_r=E_{\infty,K_T}^\phi.
 \label{eq:sos-convergence-v2}
\end{equation}
When a finite primal or dual SDP satisfies a standard no-gap condition, its
moment and SOS values agree.  A feasible relaxed rollout gives an independent
upper bound $U\ge E_{\infty,K_T}^\phi$ from a rollout terminating in
$K_T$, so $[L_r,U]$ is a rigorous bracket for the constrained floor.
\end{theorem}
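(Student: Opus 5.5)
The plan is to reduce the statement to the Lasserre-type moment--SOS convergence theory for the modal occupation-measure program of \cref{thm:occupation-lp-v2}, augmented by the terminal support $K_T$. The proof has three parts: (i) monotonicity and validity $L_r\le L_{r+1}\le E^\phi_{\infty,K_T}$; (ii) identification of the infinite-dimensional terminal-constrained measure LP with $E^\phi_{\infty,K_T}$; (iii) convergence $L_r\to$ that LP value. The bracket and SDP remarks then follow immediately.

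For (i), I would take any relaxed trajectory ending in $K_T$ and form its modal occupation measures $p_j(t)\,dt\,\delta_{z_p(t)}$ and $\mu_T=\delta_{z_p(T)}$. Their moment sequences satisfy every truncated PSD, localizing and Liouville constraint at every order, because the Liouville identity holds for all $C^1$ tests and the supports lie in $\mathcal K$ and $K_T$. This gives $L_r\le E^\phi_{\infty,K_T}$. Monotonicity in $r$ holds because the order-$(r+1)$ constraints truncate to order-$r$ constraints. Feasibility of the hierarchy uses the existence of one admissible path terminating in $K_T$.

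For (ii), I would repeat the proof of \cref{thm:occupation-lp-v2} with $\mu_T\in\mathcal P(K_T)$. The path measures above give one inequality. For the converse I would apply \cref{lem:modal-superposition-v7}: its curves end at points distributed by $\mu_T$, so $\Pi$-almost every curve terminates in $K_T$. The resulting mixture argument then gives $\int\phi\,d\mu_T\ge E^\phi_{\infty,K_T}$. Attainment follows by the same weak-* compactness argument, since $K_T$ is closed.

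For (iii), which is the main obstacle, I would follow \citet{ClaeysDaafouzHenrion2016}. The Archimedean property gives a uniform bound on the moments of each $y_j$, because the time marginal forces mass $T$ via the Liouville equation with $v=t$, and $y_T$ has unit mass. Any sequence of order-$r$ optimizers therefore has, after diagonal extraction, a pointwise limit of infinite moment sequences. By Putinar's theorem each limit sequence is represented by a measure on $\mathcal K$ or $K_T$. The limit measures satisfy the Liouville identity for all polynomial $v$, and by Stone--Weierstrass density of polynomials in $C^1$ on the compact set they satisfy it for all $C^1$ tests. They are therefore feasible for the LP of step (ii). Hence $\lim L_r\ge V_{\mathrm{LP}}=E^\phi_{\infty,K_T}$, and combined with (i) this proves \eqref{eq:sos-convergence-v2}. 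The delicate point is the degree bookkeeping in the truncated Liouville equations, since each monomial $v$ must eventually be admitted; this is why the claim is stated for sufficiently large $r$. I would also justify the $C^1$ density carefully on a neighborhood of $\mathcal K$.

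The remaining claims are short. The SOS dual is the Lagrangian conic dual of the moment SDP, and the no-gap equality is the cited standard condition. For the bracket, $U$ comes from an admissible rollout ending in $K_T$, which gives $U\ge E^\phi_{\infty,K_T}\ge L_r$, so $[L_r,U]$ contains the constrained floor.
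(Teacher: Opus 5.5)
Your proposal is correct and follows essentially the same route as the paper. It uses trajectory occupation moments for feasibility, truncation for $L_r\le L_{r+1}\le E^\phi_{\infty,K_T}$, Archimedean moment bounds with diagonal extraction, Putinar representation on $\mathcal K$ and $K_T$, extension of the Liouville identity from polynomial to $C^1$ tests, and the superposition realization to identify the terminal-constrained LP value, as in \citet{ClaeysDaafouzHenrion2016}. One small correction, which you already half-flag: Stone--Weierstrass alone gives only uniform density, so the $C^1$ step needs simultaneous polynomial approximation of $v$ and its first derivatives on a compact neighborhood, which is what the paper invokes.
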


\begin{proof}
Every admissible relaxed trajectory terminating in $K_T$ supplies feasible
moments at every order, so $L_r\le E_{\infty,K_T}^\phi$; increasing $r$
adds consistency and positivity constraints, hence $L_r\le L_{r+1}$.  For a
sequence of nearly optimal truncated solutions, the Archimedean localizing
constraints give uniform bounds on every fixed moment.  A diagonal
subsequence therefore converges coefficientwise to full moment sequences.
Putinar's representing-measure theorem yields nonnegative modal measures on
$\mathcal K$ and a terminal probability measure supported on $K_T$.  Passing
to the limit in every polynomial Liouville identity gives the weak Liouville
equation for polynomial tests.  On the compact supports, simultaneous
polynomial approximation of a $C^1$ test and its first derivatives extends the
identity to the $C^1$ class used in \cref{thm:occupation-lp-v2}.  The limiting
measures are therefore feasible for the exact modal LP with terminal support
$K_T$.  Applying the superposition realization to that LP identifies its value
with $E_{\infty,K_T}^\phi$, proving convergence.  This is the modal
switched-system hierarchy of \citet[Theorem~5]{ClaeysDaafouzHenrion2016},
augmented by the terminal support and objective.  The SOS program is the
finite conic dual; equality of finite primal and dual SDP values is a separate
qualification and is not used in the asymptotic argument.
\end{proof}

\begin{theorem}[Exact degree-two HJB/SOS floor certificate]
\label[theorem]{thm:exact-polynomial-sos-v7}
Consider the scalar switched system
\begin{equation}
 \dot x(t)=u(t),\qquad u(t)\in\{-1,1\},\qquad
 x(0)=0,\qquad (t,x)\in[0,1]\times[-1,1],
 \label{eq:exact-sos-system-v7}
\end{equation}
with terminal loss $\phi(x)=(x-2)^2$.  Its relaxed structural floor is exactly
\begin{equation}
 E_\infty^\phi=1.
 \label{eq:exact-sos-value-v7}
\end{equation}
Moreover the polynomial
\begin{equation}
 v(t,x)=(1+t-x)^2
 \label{eq:exact-sos-value-function-v7}
\end{equation}
is a degree-two SOS/HJB certificate attaining this value.  Indeed,
\begin{align}
 v(1,x)&=\phi(x),\nonumber\\
 \partial_tv+\partial_xv\,(+1)&=0,\nonumber\\
 \partial_tv+\partial_xv\,(-1)
 &=4(1+t-x)=4t+4(1-x),
 \label{eq:exact-sos-decomposition-v7}
\end{align}
which lies in the quadratic module generated by
$t$, $1-t$, $1-x$, and $1+x$ with constant SOS multipliers.  Thus a finite
SOS relaxation already returns the exact global floor.
\end{theorem}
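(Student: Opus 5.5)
The plan has two halves. The upper half supplies a feasible relaxed trajectory with terminal loss one. The lower half uses Theorem~\ref{thm:hjb-lower-v2} applied to the displayed polynomial $v$. Together they pin $E_\infty^\phi$ at exactly one. Throughout, the tube is $K=[-1,1]$; every relaxed trajectory has $|\dot x|\le1$ and $x(0)=0$, so it stays in $K$ on $[0,1]$. This verifies the tube hypothesis of the HJB theorem. The same observation also gives a direct check: $x_p(1)\in[-1,1]$ implies $\phi(x_p(1))=(x_p(1)-2)^2\ge1$. That check can be stated as an independent sanity argument.

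For the upper bound, take the constant control $u\equiv+1$, which is a pure (hence relaxed) schedule. Its trajectory is $x(t)=t$, so $x(1)=1$ and $\phi(1)=1$. This gives $E_\infty^\phi\le1$.

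For the lower bound, the certificate conditions are routine. Compute $\partial_t v=2(1+t-x)$ and $\partial_x v=-2(1+t-x)$. The terminal identity $v(1,x)=(2-x)^2=\phi(x)$ gives \eqref{eq:hjb-terminal-v2} with equality. The atom $u=+1$ gives $\partial_tv+\partial_xv=0$. The atom $u=-1$ gives $\partial_tv-\partial_xv=4(1+t-x)$. On $[0,1]\times[-1,1]$ this equals $4t+4(1-x)$, a nonnegative combination of the generators $t$ and $1-x$ with constant multipliers, so it is exactly a quadratic-module certificate. Both atom inequalities \eqref{eq:hjb-atom-v2} therefore hold on the whole tube. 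Since $v$ is a polynomial, it is $C^1$ on an open neighborhood. Theorem~\ref{thm:hjb-lower-v2} then gives $E_\infty^\phi\ge v(0,0)=1$.

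Combining the two bounds yields \eqref{eq:exact-sos-value-v7}. The certificate is degree two with constant SOS multipliers. It is therefore feasible for the dual of the order-one moment--SOS relaxation in Theorem~\ref{thm:sos-hierarchy-v2}, so $L_1\ge1$; together with $L_r\le E_\infty^\phi=1$ this gives $L_1=1$, which is the claim that a finite relaxation returns the exact floor. There is no real obstacle. The one point needing care is a bookkeeping check that the HJB inequalities need only hold on the compact tube $K$, not globally, since $4(1+t-x)$ is negative for $x>1+t$. That restriction is legitimate exactly because of the invariance of $[-1,1]$ established at the start.
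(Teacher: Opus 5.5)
Your proposal is correct and follows the paper's proof. The upper bound comes from the all-$+1$ trajectory, which has terminal cost one. The lower bound $v(0,0)=1$ comes from the HJB subsolution $v$, whose atom residuals $0$ and $4t+4(1-x)$ form an exact quadratic-module certificate on $[0,1]\times[-1,1]$. Your explicit order-one weak-duality step showing $L_1=1$ is a valid elaboration of the paper's closing remark that a finite SOS relaxation already returns the exact floor.
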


\begin{proof}
Equation \eqref{eq:exact-sos-decomposition-v7} and the terminal identity make
$v$ feasible in the HJB dual and give the lower bound
$v(0,0)=1$.  The all-$+1$ trajectory reaches $x(1)=1$ and has terminal cost
one, giving a matching primal upper bound.  The displayed decomposition is an
exact rational SOS certificate, not a sampled inequality check.
\end{proof}

\paragraph{Machine-checkable certificate.}
The accompanying artifact
\nolinkurl{polynomial_sos_certificate_v7.json} stores the value polynomial,
terminal identity, modal residuals, quadratic-module decomposition, and
matching primal trajectory over the rational polynomial ring.  Its exact
symbolic verifier reports SHA-256
\nolinkurl{9196fc00171a964954e09898750162dc9f46f28e844d213189f9464b80c030ec}.

\begin{theorem}[Exact HJB certificate for the nonlinear soft-threshold floor]
\label[theorem]{thm:soft-threshold-hjb-exact-v2}
Consider the manuscript's same-dictionary structural-split example with
$\mu_s=1-s>0$ and
$\alpha_s=(1-e^{-\mu_s})/\mu_s$.  Lift only the exposed first coordinate at
inputs $y=1/2$ and $y=1$, writing
$z=(x_{1/2},x_1)$.  Every low-bit atom has joint dynamics
\begin{equation}
 \dot x_{1/2}=-\mu_sx_{1/2}+\frac12u_j,
 \qquad
 \dot x_1=-\mu_sx_1+u_j
 \label{eq:two-input-soft-dynamics-v2}
\end{equation}
for some $u_j\in\{-1,0,1\}$.  In the two-point sup norm, define the norm-one point-evaluation witness
\begin{equation}
  \lambda(z)=\frac13x_1-\frac23x_{1/2}
  \label{eq:soft-witness-v2}
\end{equation}
and
\begin{equation}
  w(t,z)=e^{\mu_s(t-1)}\lambda(z).
  \label{eq:soft-hjb-v2}
\end{equation}
Then
\begin{equation}
 w(1,z)=\lambda(z),
 \qquad
 \partial_tw+\nabla w^\top G_j=0
 \quad\text{for every atom }j.
 \label{eq:soft-hjb-identity-v2}
\end{equation}
For the target
$F^\star(y)=\alpha_s\soft_{1/2}(y)$,
\begin{equation}
  E_{\infty,b}(F^\star)=\frac{\alpha_s}{6}.
  \label{eq:soft-exact-floor-v2}
\end{equation}
The lower bound is certified by the single explicit HJB witness above, and the
matching upper bound is attained by the relaxed coefficient choice
$a=\alpha_s/3$, $b=2\alpha_s/3$.
\end{theorem}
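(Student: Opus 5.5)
The plan has three parts: first check the HJB identity \eqref{eq:soft-hjb-identity-v2} by direct computation, then feed $w$ into Proposition~\ref{prop:support-witness-v2} to obtain the lower bound, and finally exhibit a relaxed endpoint achieving the upper bound.

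\textbf{The identity.} For the identity, note first that $w(1,z)=\lambda(z)$ holds trivially. Next compute
\[
\partial_tw=\mu_s e^{\mu_s(t-1)}\lambda(z),\qquad \nabla w=e^{\mu_s(t-1)}\bigl(-\tfrac23,\tfrac13\bigr).
\]
Plugging in the dynamics \eqref{eq:two-input-soft-dynamics-v2} gives
\[
\nabla w^\top G_j=e^{\mu_s(t-1)}\Bigl[-\mu_s\lambda(z)-\tfrac23\cdot\tfrac12u_j+\tfrac13u_j\Bigr]=-\mu_s e^{\mu_s(t-1)}\lambda(z).
\]
The $u_j$ terms cancel exactly because the witness weights $-\tfrac23,\tfrac13$ are matched to the input scales $\tfrac12,1$. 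So $\partial_tw+\nabla w^\top G_j=0$ for every atom, which is the identity.

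A preliminary check is that the two-input reduction is legitimate. On inputs $(y,0)$, the first coordinate of $\soft_\ell(\sigma y)$ equals $u\cdot y$ with $u\in\{-1,0,1\}$ for $y\in\{1/2,1\}$ and $\ell\in\{0,1\}$. Concretely, $\soft_1(1/2)=0$ and $\soft_1(1)=0$ give $u=0$, while threshold $0$ gives $u=\sigma$. This is exactly the structure in \eqref{eq:two-input-soft-dynamics-v2}, with $-\mu_s x$ coming from $A=sI$.

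\textbf{Lower bound.} Next apply Proposition~\ref{prop:support-witness-v2}. The functional $\lambda$ is a point evaluation at $y=1$ and $y=1/2$ with weights $\tfrac13$ and $-\tfrac23$, whose absolute values sum to $1$, so it is admissible for the sup norm. Restricting to the lifted coordinates only lowers the full-map norm, so full-map distance dominates. Since $z_0=0$ we have $w(0,z_0)=0$, and the proposition gives
\[
E_\infty\ge\lambda(F^\star)=\alpha_s\bigl(\tfrac13\soft_{1/2}(1)-\tfrac23\soft_{1/2}(\tfrac12)\bigr)=\alpha_s\cdot\tfrac13\cdot\tfrac12=\tfrac{\alpha_s}{6}.
\]

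\textbf{Upper bound.} For the upper bound, take the constant relaxed control with weight $\tfrac13$ on $(\sigma=+1,\ell=0)$ and weight $\tfrac23$ on $(\sigma=+1,\ell=1)$. By linearity of the variation-of-constants formula with common drift $-\mu_s$, the first-coordinate endpoint is $\alpha_s\bigl(\tfrac13 t+\tfrac23\soft_1(t)\bigr)$, which is the form \eqref{eq:split-relaxed-class-main}. The second coordinate stays at $0$, and the target's second coordinate is also $0$.

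It then remains to check
\[
\sup_t\bigl|\soft_{1/2}(t)-\tfrac13t-\tfrac23\soft_1(t)\bigr|=\tfrac16
\]
over the input ball. The difference is piecewise linear and odd, so it suffices to check breakpoints:
\begin{itemize}
\item at $t=1/2$ the value is $-1/6$;
\item at $t=1$ it is $1/6$;
\item for $t\ge1$ the slopes satisfy $1-\tfrac13-\tfrac23=0$, so the value stays at $1/6$.
\end{itemize}
Scaling by $\alpha_s$ gives the bound $\alpha_s/6$.

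The main care point is this last step: it requires the input domain and norm conventions of the split example, and it relies on the fact that the full sup error is attained only through these breakpoints. This comes down to routine checking of the piecewise-linear difference, handled by the parity and slope arguments above.
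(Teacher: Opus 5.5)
Your proposal is correct and follows the paper's proof essentially step for step. You use the same cancellation of the $u_j$ terms in the witness direction, the same application of the support-witness proposition with $w(0,z_0)=0$ to get $\lambda(F^\star)=\alpha_s/6$, and the same piecewise-linear check of $\alpha_s\bigl(\tfrac13 t+\tfrac23\soft_1(t)\bigr)$ with odd symmetry; your extra derivation of the two-point dynamics from the atoms is a harmless addition, since the paper takes those dynamics as a hypothesis.
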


\begin{proof}
The coefficients of $u_j$ cancel in the witness direction:
\[
 \nabla\lambda^\top G_j
 =\frac13(-\mu_sx_1+u_j)
  -\frac23(-\mu_sx_{1/2}+u_j/2)
 =-\mu_s\lambda(z).
\]
Therefore \eqref{eq:soft-hjb-identity-v2} holds.  Since $z_0=0$,
$w(0,z_0)=0$, and \cref{prop:support-witness-v2} gives
\[
 E_{\infty,b}(F^\star)\ge\lambda(F^\star).
\]
Now $F^\star(1/2)=0$ and $F^\star(1)=\alpha_s/2$, so
$\lambda(F^\star)=\alpha_s/6$.  The stated relaxed coefficients produce the reachable map
$a y+b\soft_1(y)$.  After dividing by $\alpha_s$ and using
$a/\alpha_s=1/3$, $b/\alpha_s=2/3$, the error on $[0,2]$ is
\[
 \begin{cases}
  y/3,&0\le y\le1/2,\\
  \abs{2y/3-1/2},&1/2\le y\le1,\\
  1/6,&1\le y\le2.
 \end{cases}
\]
Its supremum is $1/6$, and odd symmetry gives the same bound on $[-2,0]$.
Thus the primal upper bound is $\alpha_s/6$, and the primal and dual bounds
coincide.
\end{proof}

\begin{corollary}[Certified feasible, impossible, or unresolved]
\label[corollary]{cor:primal-dual-decision-v2}
Suppose a feasible relaxed trajectory gives $E_\infty\le U$, a verified HJB,
SOS, or support certificate gives $L\le E_\infty$, and the finite
implementation obeys
\begin{equation}
  |E_D^{\mathrm{impl}}-E_\infty|\le R_D.
  \label{eq:implementation-radius-v2}
\end{equation}
For a required tolerance $\varepsilon_H$,
\begin{align*}
 U+R_D\le\varepsilon_H
 &\Longrightarrow \text{certified feasible at depth $D$},\\
 L-R_D>\varepsilon_H
 &\Longrightarrow \text{certified impossible at depth $D$},\\
 L>\varepsilon_H
 &\Longrightarrow \text{certified asymptotically impossible}.
\end{align*}
Otherwise the certificate is unresolved rather than negative.
\end{corollary}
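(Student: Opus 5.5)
The plan is to reduce the corollary to two elementary sandwich inequalities built from the three hypotheses, namely $E_\infty\le U$, $L\le E_\infty$, and $|E_D^{\mathrm{impl}}-E_\infty|\le R_D$. The quantity $E_D^{\mathrm{impl}}$ is taken exactly as in \eqref{eq:implementation-radius-v2}: the best implemented error at depth $D$, which could come from \cref{thm:hardware-transfer-candidate}, \cref{thm:error-feedback-field-v4}, or the integrated resource law. I will not re-derive any of these radii; they enter only through the single hypothesis \eqref{eq:implementation-radius-v2}.

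For the feasibility branch, I unfold the absolute value to get $E_D^{\mathrm{impl}}\le E_\infty+R_D$. Then I substitute the primal bound to reach $E_D^{\mathrm{impl}}\le U+R_D\le\varepsilon_H$. This says some implemented depth-$D$ configuration meets the tolerance. One subtlety: $E_D^{\mathrm{impl}}$ is an infimum over a finite schedule set (times a finite metadata code), so the infimum is attained, and "certified feasible" genuinely exhibits a configuration rather than only a limiting one.

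For finite-depth impossibility, I use the other side of the absolute value, $E_D^{\mathrm{impl}}\ge E_\infty-R_D$. Combined with the dual bound this gives $E_D^{\mathrm{impl}}\ge L-R_D>\varepsilon_H$, so every implemented configuration at depth $D$ misses the tolerance. For asymptotic impossibility, $E_\infty\ge L>\varepsilon_H$. I then need to say why this rules out all large depths. Since $R_D\to0$ along the declared resource schedule, eventually $L-R_D>\varepsilon_H$ and the previous branch applies. The statement itself only asserts the floor bound, so I would phrase the conclusion as "the relaxed floor exceeds the tolerance" and note the eventual-depth consequence separately.

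The remaining step is the "unresolved" clause, which needs only a remark that the three tests are sufficient conditions and are not exhaustive. There is no real obstacle here. The only point needing care is consistency: $U$, $L$, and $R_D$ must refer to the same norm, target, and declared family. Otherwise the inequalities cannot be chained, so I would state that alignment explicitly.
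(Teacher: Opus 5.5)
Your argument is correct and is exactly the paper's proof: unfold $|E_D^{\mathrm{impl}}-E_\infty|\le R_D$ into $E_D^{\mathrm{impl}}\le E_\infty+R_D\le U+R_D$ for feasibility and into $E_D^{\mathrm{impl}}\ge E_\infty-R_D\ge L-R_D$ for finite-depth impossibility, and read the asymptotic branch simply as $E_\infty\ge L>\varepsilon_H$. Your side claim that $R_D\to0$ is not among the hypotheses: under fixed-grid write-back, $R_D$ contains a $D\rho_z$ term that grows with $D$. So keep the ``all sufficiently large depths'' consequence explicitly conditional, as you already propose.
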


\begin{proof}
Use
$E_D^{\mathrm{impl}}\le E_\infty+R_D\le U+R_D$ for feasibility and
$E_D^{\mathrm{impl}}\ge E_\infty-R_D\ge L-R_D$ for finite-depth
impossibility.  The asymptotic statement follows from $E_\infty\ge L$.
\end{proof}

\subsection{Architecture specialization: Transformer and routed-MoE execution}
\label{sec:transformer-execution-v4}

Consider a parallel pre-normalized Transformer residual field
\begin{equation}
 G(t,X)=\alpha(t)A(N_1(X))+\beta(t)M(N_2(X))
 \label{eq:transformer-field-v4}
\end{equation}
and its executed field
\begin{equation}
 \widehat G_k(X)=
 \widehat\alpha_k\widehat A_k(\widehat N_{1,k}(X))
 +\widehat\beta_k\widehat M_k(\widehat N_{2,k}(X)).
 \label{eq:transformer-executed-field-v4}
\end{equation}
All norms below are the same norm used to verify the common execution tube.
The following propositions make the normalization and attention terms
checkable from matrix, activation, and quantizer error budgets rather than
leaving them as opaque constants.

\paragraph{Operational norms.}
For a token matrix $X\in\mathbb R^{n\times d}$ write
$\norm{X}_{2,\infty}:=\max_i\norm{X_{i:}}_2$.  For a score matrix $S$, write
$\norm{S}_{\max}:=\max_{i,j}\abs{S_{ij}}$ and, for a row matrix $P$,
$\norm{P}_{\infty,1}:=\max_i\sum_j\abs{P_{ij}}$.

\begin{proposition}[LayerNorm tube and quantization constants]
\label[proposition]{prop:layernorm-constants-v7}
Let $P_d=I-d^{-1}\one\one^\top$ and, for $\epsilon>0$, define
\begin{equation}
 N_{\gamma,\beta}(x)
 =\gamma\odot\frac{P_dx}
 {\sqrt{d^{-1}\norm{P_dx}_2^2+\epsilon}}+\beta.
 \label{eq:layernorm-map-v7}
\end{equation}
Then
\begin{equation}
 \Lip_2(N_{\gamma,\beta})
 \le\frac{\norm{\gamma}_\infty}{\sqrt\epsilon},
 \qquad
 \norm{N_{\gamma,\beta}(x)-\beta}_2
 \le\sqrt d\,\norm{\gamma}_\infty.
 \label{eq:layernorm-lipschitz-v7}
\end{equation}
If the same $\epsilon$ is used by an executed map with input
$\widehat x$ and parameters $(\widehat\gamma,\widehat\beta)$, then exact
LayerNorm arithmetic obeys
\begin{align}
 \norm{N_{\widehat\gamma,\widehat\beta}(\widehat x)
       -N_{\gamma,\beta}(x)}_2
 \le{}&\frac{\norm{\widehat\gamma}_\infty}{\sqrt\epsilon}
        \norm{\widehat x-x}_2
 +\sqrt d\,\norm{\widehat\gamma-\gamma}_\infty
 +\norm{\widehat\beta-\beta}_2.
 \label{eq:layernorm-error-v7}
\end{align}
Any finite-arithmetic LayerNorm remainder is added once to the right-hand
side.  A coordinatewise activation quantizer of radius $\rho_x$ contributes
at most $\sqrt d\,\rho_x$ to $\norm{\widehat x-x}_2$.
\end{proposition}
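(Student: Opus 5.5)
The plan is to factor $N_{\gamma,\beta}=\gamma\odot n(x)+\beta$, where
\[
n(x)=\frac{u}{\sqrt{d^{-1}\norm{u}_2^2+\epsilon}},\qquad u=P_dx.
\]
We first establish the two bounds for the parameter-free map $n$, and then handle $\gamma$, $\beta$, and the input perturbation separately by a triangle inequality.

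\textbf{Range bound.} Write $s=\norm{u}_2$. Then
\[
\norm{n(x)}_2=\frac{s}{\sqrt{s^2/d+\epsilon}}<\sqrt d,
\]
so $\norm{N_{\gamma,\beta}(x)-\beta}_2=\norm{\gamma\odot n(x)}_2\le\norm{\gamma}_\infty\sqrt d$.

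\textbf{Lipschitz bound.} Set $\phi(u)=u/\sqrt{\norm{u}^2/d+\epsilon}$, so that $n=\phi\circ P_d$. Since $\norm{P_d}_{\mathrm{op}}=1$, it suffices to bound $\norm{D\phi(u)}_{\mathrm{op}}$. Differentiating gives
\[
D\phi(u)=c^{-1/2}\Bigl(I-\frac{uu^\top}{dc}\Bigr),\qquad c=\frac{\norm u^2}{d}+\epsilon .
\]
This matrix is symmetric. On $u^\perp$ it has eigenvalue $c^{-1/2}$, and along $u$ it has eigenvalue
\[
c^{-1/2}\Bigl(1-\frac{\norm u^2}{dc}\Bigr)=\frac{\epsilon}{c^{3/2}}\in[0,c^{-1/2}].
\]
Hence $\norm{D\phi(u)}_{\mathrm{op}}\le c^{-1/2}\le\epsilon^{-1/2}$. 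The map is $C^1$ because $\epsilon>0$, so the mean value inequality along segments gives $\Lip_2(n)\le\epsilon^{-1/2}$. Multiplying coordinatewise by $\gamma$ has operator norm at most $\norm{\gamma}_\infty$, which yields the first bound in \eqref{eq:layernorm-lipschitz-v7}.

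\textbf{Error bound.} We decompose
\[
N_{\widehat\gamma,\widehat\beta}(\widehat x)-N_{\gamma,\beta}(x)
=\widehat\gamma\odot\bigl(n(\widehat x)-n(x)\bigr)+(\widehat\gamma-\gamma)\odot n(x)+(\widehat\beta-\beta).
\]
The three terms are bounded, respectively, by $\norm{\widehat\gamma}_\infty\epsilon^{-1/2}\norm{\widehat x-x}_2$ (Lipschitz bound), by $\norm{\widehat\gamma-\gamma}_\infty\sqrt d$ (range bound), and trivially by $\norm{\widehat\beta-\beta}_2$. This gives \eqref{eq:layernorm-error-v7}.

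\textbf{Remaining clauses.} A finite-arithmetic remainder $e$ enters as one additive term, $\widehat N=N_{\widehat\gamma,\widehat\beta}(\widehat x)+e$, so the triangle inequality adds $\norm e$ once. For the activation quantizer, each coordinate error is at most $\rho_x$, so the Euclidean error is at most $\sqrt d\,\rho_x$.

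The only step requiring care is the Jacobian eigenvalue computation, specifically confirming that the radial eigenvalue $\epsilon/c^{3/2}$ is nonnegative and bounded by $c^{-1/2}$. Everything else is direct.
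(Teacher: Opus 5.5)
Your proof is correct and follows essentially the same route as the paper, which also bounds the Jacobian eigenvalues ($1/s$ orthogonal to $P_dx$, $\epsilon/s^3$ along $P_dx$), uses $\norm{f(x)}_2\le\sqrt d$, and splits the executed error into input, scale, and bias terms. The only difference is presentation: the paper differentiates $P_dx/s(x)$ directly and notes the Jacobian vanishes on $\operatorname{span}\{\one\}$, whereas you factor through $P_d$ and use $\norm{P_d}_{\mathrm{op}}=1$, which is the same computation.
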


\begin{proof}
For
$f(x)=P_dx/s(x)$ with
$s(x)^2=d^{-1}\norm{P_dx}_2^2+\epsilon$, differentiation gives
\[
 Df(x)=\frac{P_d}{s(x)}
 -\frac{(P_dx)(P_dx)^\top}{d\,s(x)^3}.
\]
On the mean-zero subspace its eigenvalues are $1/s(x)$ orthogonally to
$P_dx$ and $\epsilon/s(x)^3$ in the $P_dx$ direction; it vanishes on
$\operatorname{span}\{\one\}$.  Hence $\norm{Df(x)}_{\mathrm{op}}
\le1/\sqrt\epsilon$.  Also $\norm{f(x)}_2\le\sqrt d$.  Splitting the executed
error into input, scale, and bias perturbations proves
\eqref{eq:layernorm-error-v7}.
\end{proof}

\begin{lemma}[Rowwise softmax is Lipschitz from $\ell^\infty$ to $\ell^1$]
\label[lemma]{lem:softmax-lipschitz-v7}
For all $s,\widehat s\in\mathbb R^n$,
\begin{equation}
 \norm{\operatorname{softmax}(\widehat s)
       -\operatorname{softmax}(s)}_1
 \le\norm{\widehat s-s}_\infty.
 \label{eq:softmax-lipschitz-v7}
\end{equation}
Consequently, for rowwise softmax,
\begin{equation}
 \norm{\operatorname{softmax}_{\rm row}(\widehat S)
       -\operatorname{softmax}_{\rm row}(S)}_{\infty,1}
 \le\norm{\widehat S-S}_{\max}.
 \label{eq:row-softmax-lipschitz-v7}
\end{equation}
\end{lemma}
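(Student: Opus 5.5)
We prove the scalar inequality by the mean-value theorem along the segment $s_\theta=s+\theta(\widehat s-s)$, $\theta\in[0,1]$, and then take a maximum over rows for the matrix version. Write $\delta=\widehat s-s$ and $p(\theta)=\operatorname{softmax}(s_\theta)$. The map is smooth, so
\[
 \operatorname{softmax}(\widehat s)-\operatorname{softmax}(s)=\int_0^1 J(s_\theta)\delta\,d\theta,
\]
where $J(u)=\operatorname{diag}(p)-pp^\top$ is the softmax Jacobian at $u$ with $p=\operatorname{softmax}(u)$. The triangle inequality for the Bochner integral reduces the claim to a uniform bound $\norm{J(u)\delta}_1\le\norm{\delta}_\infty$ for every $u$.

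To get that bound, compute coordinatewise $(J(u)\delta)_i=p_i(\delta_i-\bar\delta)$, where $\bar\delta=\sum_jp_j\delta_j$ is the mean of $\delta$ under the probability vector $p$. Hence $\norm{J(u)\delta}_1=\sum_ip_i\abs{\delta_i-\bar\delta}$, the mean absolute deviation of the random variable taking value $\delta_i$ with probability $p_i$. This is where the one point needing care lies: the crude estimate $\abs{\delta_i-\bar\delta}\le2\norm{\delta}_\infty$ loses a factor two, so instead use centering. For any constant $c$, $J(u)\one=0$ gives $J(u)\delta=J(u)(\delta-c\one)$. Choose $c=(\max_i\delta_i+\min_i\delta_i)/2$; then every shifted value, and therefore $\bar\delta-c$, lies in $[-w,w]$ with $w=(\max\delta-\min\delta)/2\le\norm{\delta}_\infty$.

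The naive bound on $\sum_ip_i\abs{(\delta_i-c)-(\bar\delta-c)}$ still gives $2w$, so one more step is needed: for a distribution supported in an interval of length $2w$, the mean absolute deviation from the mean is at most half the range, $w$. Prove this by writing $\sum_ip_i\abs{x_i-\bar x}=2\sum_ip_i(x_i-\bar x)_+$. The positive part is at most $(M-\bar x)$ on a set of mass $P^+$, and balance of the positive and negative parts gives $\sum_ip_i(x_i-\bar x)_+\le\min\{(M-\bar x)P^+,(\bar x-m)(1-P^+)\}$. A short calculation bounds this by $(M-m)/4$, hence the mean absolute deviation is at most $(M-m)/2=w\le\norm{\delta}_\infty$.

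Integrating over $\theta$ proves \eqref{eq:softmax-lipschitz-v7}. For \eqref{eq:row-softmax-lipschitz-v7}, apply the scalar bound to row $i$: it gives $\ell^1$ row error at most $\max_j\abs{\widehat S_{ij}-S_{ij}}\le\norm{\widehat S-S}_{\max}$, and taking the maximum over $i$ yields the $\norm{\cdot}_{\infty,1}$ bound.
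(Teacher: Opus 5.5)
Your proof is correct and follows the paper's argument: integrate the softmax Jacobian along the segment, identify $\norm{J(u)\delta}_1$ as the mean absolute deviation of $\delta$ under $p$, bound it by half the range $(\max_i\delta_i-\min_i\delta_i)/2\le\norm{\delta}_\infty$, and take the maximum over rows. The only difference is the proof of that half-range bound: the paper gets it from $\mathbb E_p\abs{X-\mathbb E_pX}\le\operatorname{Var}_p(X)^{1/2}\le(\sup X-\inf X)/2$ via Cauchy--Schwarz and the elementary variance bound, while your positive-part balance argument gives the same constant without second moments, and your midpoint centering is superfluous because both the deviation and the range are shift invariant.
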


\begin{proof}
Along the segment $s+\tau\delta$, the softmax derivative is
$p_i(\delta_i-\mathbb E_p\delta)$.  Its $\ell^1$ norm is the mean absolute
deviation of a random variable taking values in an interval of width at most
$2\norm\delta_\infty$.  Cauchy--Schwarz and the elementary variance bound
$\operatorname{Var}(X)\le(\sup X-\inf X)^2/4$ make this at most
$\norm\delta_\infty$.  Integrate over
$\tau\in[0,1]$ and apply the result row by row.
\end{proof}

\begin{proposition}[Auditable single-head attention error]
\label[proposition]{prop:attention-error-v7}
Let
\begin{equation}
 S=QK^\top/\sqrt{d_k},\qquad P=\operatorname{softmax}_{\rm row}(S),
 \qquad Z=PV,
 \label{eq:attention-definition-v7}
\end{equation}
and define the executed quantities analogously.  Suppose
\begin{align*}
 \norm{Q}_{2,\infty}&\le B_Q,&
 \norm{K}_{2,\infty}&\le B_K,&
 \norm{V}_{2,\infty}&\le B_V,\\
 \norm{\widehat Q-Q}_{2,\infty}&\le\eta_Q,&
 \norm{\widehat K-K}_{2,\infty}&\le\eta_K,&
 \norm{\widehat V-V}_{2,\infty}&\le\eta_V.
\end{align*}
Then
\begin{align}
 \norm{\widehat S-S}_{\max}
 &\le\eta_S
 :=\frac{\eta_Q(B_K+\eta_K)+B_Q\eta_K}{\sqrt{d_k}},
 \label{eq:attention-score-error-v7}\\
 \norm{\widehat Z-Z}_{2,\infty}
 &\le\eta_V+B_V\eta_S.
 \label{eq:attention-value-error-v7}
\end{align}
For an output projection $A=ZW_O+b_O$,
\begin{align}
 \norm{\widehat A-A}_{2,\infty}
 \le{}&\norm{\widehat W_O}_{\mathrm{op}}
        (\eta_V+B_V\eta_S)
 +B_V\norm{\widehat W_O-W_O}_{\mathrm{op}}
 +\norm{\widehat b_O-b_O}_2.
 \label{eq:attention-output-error-v7}
\end{align}
All bounds remain valid with additional certified arithmetic remainders added
to the relevant $\eta$ terms.
\end{proposition}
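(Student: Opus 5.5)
The plan is a three-stage telescoping: scores, then attended values, then the output projection. Every estimate is taken row by row, and we finish by maximizing over rows.

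\textbf{Score stage.} Fix a pair $(i,j)$ and write
\[
\widehat Q_{i:}\widehat K_{j:}^\top-Q_{i:}K_{j:}^\top=(\widehat Q_{i:}-Q_{i:})\widehat K_{j:}^\top+Q_{i:}(\widehat K_{j:}-K_{j:})^\top .
\]
Cauchy--Schwarz bounds the first term by $\eta_Q\norm{\widehat K_{j:}}_2\le\eta_Q(B_K+\eta_K)$, where $\norm{\widehat K_{j:}}_2\le\norm{K_{j:}}_2+\norm{\widehat K_{j:}-K_{j:}}_2$. It bounds the second term by $B_Q\eta_K$. Dividing by $\sqrt{d_k}$ and taking the maximum over $(i,j)$ gives \eqref{eq:attention-score-error-v7}. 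The asymmetric placement of the hatted $K$ is exactly what produces the stated form of $\eta_S$, so we insert the cross term in that order.

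\textbf{Value stage.} Write $\widehat Z-Z=\widehat P(\widehat V-V)+(\widehat P-P)V$.

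For the first term, row $i$ is a convex combination of the rows of $\widehat V-V$. Since $\widehat P$ is rowwise stochastic, its Euclidean norm is at most $\eta_V$.

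For the second term, row $i$ is $\sum_j(\widehat P_{ij}-P_{ij})V_{j:}$. Its norm is at most $\sum_j\abs{\widehat P_{ij}-P_{ij}}\,B_V$. By \cref{lem:softmax-lipschitz-v7}, $\sum_j\abs{\widehat P_{ij}-P_{ij}}\le\norm{\widehat S-S}_{\max}\le\eta_S$.

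Adding the two terms gives \eqref{eq:attention-value-error-v7}.

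\textbf{Output stage.} Decompose
\[
\widehat A-A=(\widehat Z-Z)\widehat W_O+Z(\widehat W_O-W_O)+(\widehat b_O-b_O).
\]
The rowwise bound $\norm{xW}_2\le\norm{x}_2\norm{W}_{\mathrm{op}}$ handles the first two terms. For the second term we also need $\norm{Z_{i:}}_2\le B_V$. This holds because $Z_{i:}$ is a convex combination of the rows of $V$, each of norm at most $B_V$. These three bounds give \eqref{eq:attention-output-error-v7}.

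\textbf{Arithmetic remainders.} Any certified arithmetic remainder enters additively at the stage where it occurs. A score-rounding remainder adds to $\eta_S$ before \cref{lem:softmax-lipschitz-v7} is applied, since that lemma holds for arbitrary $\widehat S$. An accumulation remainder in $PV$ or $ZW_O$ is added by one extra triangle inequality. The one point needing care is that $\widehat P$ must remain a genuine probability matrix. If a quantized softmax breaks normalization, the deviation has to be charged separately. Otherwise no step is an obstacle; the bounds are all routine triangle-inequality bookkeeping.
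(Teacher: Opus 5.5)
Your proposal is correct and follows the paper's proof essentially step for step. You use the same add-and-subtract of $Q_{i:}\widehat K_{j:}^\top$ for the scores, the same split $\widehat P(\widehat V-V)+(\widehat P-P)V$ justified by row-stochasticity and \cref{lem:softmax-lipschitz-v7}, and the same output-projection split using $\norm{Z}_{2,\infty}\le B_V$; your caveat that a non-normalized quantized softmax must be charged separately is a fair sharpening of the paper's brief remark about arithmetic remainders.
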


\begin{proof}
For every score entry, add and subtract $q_i^\top\widehat k_j$ to obtain
\eqref{eq:attention-score-error-v7}.  By
\cref{lem:softmax-lipschitz-v7},
$\norm{\widehat P-P}_{\infty,1}\le\eta_S$.  Since every row of
$\widehat P$ is a probability vector,
\[
 \norm{\widehat P(\widehat V-V)}_{2,\infty}\le\eta_V,
 \qquad
 \norm{(\widehat P-P)V}_{2,\infty}\le B_V\eta_S,
\]
which proves \eqref{eq:attention-value-error-v7}.  Split the output projection
error and use $\norm Z_{2,\infty}\le B_V$.
\end{proof}

\begin{corollary}[Multihead attention aggregation]
\label[corollary]{cor:multihead-attention-v7}
If head $h$ has preprojection error $\eta_{Z,h}$ in
$\norm{\cdot}_{2,\infty}$, concatenation satisfies
\begin{equation}
 \norm{\operatorname{concat}_h\widehat Z_h
       -\operatorname{concat}_h Z_h}_{2,\infty}
 \le\left(\sum_h\eta_{Z,h}^2\right)^{1/2}.
 \label{eq:multihead-concat-v7}
\end{equation}
Applying \eqref{eq:attention-output-error-v7} to the shared output projection
turns per-head quantizer, score, and value errors into the branch constant
$\eta_{A,k}$ in \cref{prop:transformer-field-error-v4}.
\end{corollary}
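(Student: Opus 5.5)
The plan is to prove the bound at the level of token rows, since $\norm{\cdot}_{2,\infty}$ is a maximum over rows. Fix a token index $i$ and write the $i$th row of the concatenation as the block vector $(\widehat Z_{h,i:}-Z_{h,i:})_h$. Its Euclidean norm squared is exactly $\sum_h\norm{\widehat Z_{h,i:}-Z_{h,i:}}_2^2$, because concatenation along the feature axis places the head blocks on orthogonal coordinate sets. Each summand is bounded by $\eta_{Z,h}^2$ by the definition of the per-head error in $\norm{\cdot}_{2,\infty}$.

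Taking the square root and then the maximum over $i$ gives \eqref{eq:multihead-concat-v7}. The key point is that the maximum of a sum is at most the sum of the per-head maxima, and each per-head maximum is at most $\eta_{Z,h}^2$.

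For the second sentence, I will apply \cref{prop:attention-error-v7}'s output-projection step. I will reuse only the splitting argument $\widehat Z\widehat W_O-ZW_O=(\widehat Z-Z)\widehat W_O+Z(\widehat W_O-W_O)$, with the concatenated matrices in place of the single-head $Z$. The value error $\eta_V+B_V\eta_S$ is replaced by the aggregated radius $(\sum_h\eta_{Z,h}^2)^{1/2}$. Rowwise, $\norm{x\widehat W_O}_2\le\norm{x}_2\norm{\widehat W_O}_{\mathrm{op}}$, so the first term is bounded by this radius times $\norm{\widehat W_O}_{\mathrm{op}}$. The second term uses the row bound on the ideal concatenation, $(\sum_hB_{V,h}^2)^{1/2}$, which follows by the same orthogonality argument and the fact that each attention row is a probability vector. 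This yields a bound on $\eta_{A,k}$.

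I expect no real obstacle. The only care point is keeping the norm geometry consistent: the row $\ell^2$ structure is what makes the aggregation Euclidean rather than a sum.
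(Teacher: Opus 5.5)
Your proposal is correct and follows essentially the same route as the paper. Concatenation puts the head blocks on orthogonal coordinates, so each token row has squared error equal to the sum of the squared head errors; you then maximize over tokens and feed the aggregated radius into the output-projection bound \eqref{eq:attention-output-error-v7}. Your explicit replacement of $B_V$ by $(\sum_h B_{V,h}^2)^{1/2}$ for the ideal concatenation supplies a detail the paper leaves implicit, and the bias term, which you did not mention, carries over unchanged.
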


\begin{proof}
For each token, the squared Euclidean norm of a concatenation is the sum of
the squared head norms.  Maximize over tokens and apply the projection bound.
\end{proof}

\begin{proposition}[Componentwise Transformer field-error budget]
\label[proposition]{prop:transformer-field-error-v4}
Suppose on the normalized tube
\begin{align*}
 \norm{A(N_1(X))}&\le B_A,&
 \norm{M(N_2(X))}&\le B_M,\\
 \norm{\widehat N_{i,k}(X)-N_i(X)}&\le\eta_{N_i,k},&i&=1,2,\\
 \sup_Z\norm{\widehat A_k(Z)-A(Z)}&\le\eta_{A,k},&
 \operatorname{Lip}(A)&\le L_A,\\
 \sup_Z\norm{\widehat M_k(Z)-M(Z)}&\le\eta_{M,k},&
 \operatorname{Lip}(M)&\le L_M.
\end{align*}
Then
\begin{align}
 \sup_X\norm{\widehat G_k(X)-G(t_k,X)}
 \le\eta_{G,k}:={}&
 \abs{\widehat\alpha_k-\alpha(t_k)}B_A
 +\abs{\widehat\alpha_k}(\eta_{A,k}+L_A\eta_{N_1,k})\notag\\
 &+\abs{\widehat\beta_k-\beta(t_k)}B_M
 +\abs{\widehat\beta_k}(\eta_{M,k}+L_M\eta_{N_2,k}).
 \label{eq:transformer-field-error-v4}
\end{align}
If $G(t_k,\cdot)$ is $L_G$-Lipschitz, then
\begin{equation}
 \norm{\widehat G_k(X)-G(t_k,Y)}
 \le L_G\norm{X-Y}+\eta_{G,k}.
 \label{eq:transformer-cross-error-v4}
\end{equation}
\end{proposition}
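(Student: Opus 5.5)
The plan is a single add-and-subtract decomposition of the executed field, followed by the triangle inequality. For a fixed $X$ in the normalized tube, write
\[
 \widehat G_k(X)-G(t_k,X)
 =\bigl[\widehat\alpha_k\widehat A_k(\widehat N_{1,k}(X))-\alpha(t_k)A(N_1(X))\bigr]
 +\bigl[\widehat\beta_k\widehat M_k(\widehat N_{2,k}(X))-\beta(t_k)M(N_2(X))\bigr],
\]
and treat the attention and MLP branches separately; they are structurally identical. For the attention branch we insert the two intermediate terms $\widehat\alpha_kA(N_1(X))$ and $\widehat\alpha_kA(\widehat N_{1,k}(X))$. This gives three differences:
\begin{itemize}
\item a scale term $(\widehat\alpha_k-\alpha(t_k))A(N_1(X))$, bounded by $\abs{\widehat\alpha_k-\alpha(t_k)}B_A$;
\item an operator term $\widehat\alpha_k[\widehat A_k(\widehat N_{1,k}(X))-A(\widehat N_{1,k}(X))]$, bounded by $\abs{\widehat\alpha_k}\eta_{A,k}$ through the uniform-in-$Z$ hypothesis on $\widehat A_k-A$;
\item a normalization term $\widehat\alpha_k[A(\widehat N_{1,k}(X))-A(N_1(X))]$, bounded by $\abs{\widehat\alpha_k}L_A\eta_{N_1,k}$ by the Lipschitz constant of $A$.
\end{itemize}
The same three steps for the MLP branch, with $B_M$, $\eta_{M,k}$, $L_M$, $\eta_{N_2,k}$, produce the remaining terms of \eqref{eq:transformer-field-error-v4}. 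Taking the supremum over $X$ then gives the bound on $\eta_{G,k}$.

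The only point needing care is the domain in which the constants apply. The sup over $Z$ in $\eta_{A,k}$ and the constant $L_A$ must be valid at $Z=\widehat N_{1,k}(X)$ and along the segment joining $N_1(X)$ to $\widehat N_{1,k}(X)$. We will read the ``normalized tube'' hypothesis as covering both ideal and executed normalized outputs, together with these segments, in keeping with the paper's convention that tubes contain every interpolation segment used in a Lipschitz estimate. Under that reading the step is immediate; otherwise one would enlarge the tube by $\eta_{N_i,k}$. Note also that $B_A$ is used only at the ideal point $N_1(X)$, which is why it multiplies the scale error rather than $\abs{\widehat\alpha_k}$.

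For the cross bound \eqref{eq:transformer-cross-error-v4}, write $\widehat G_k(X)-G(t_k,Y)=[\widehat G_k(X)-G(t_k,X)]+[G(t_k,X)-G(t_k,Y)]$. The first bracket is at most $\eta_{G,k}$ by the result just proved, and the second is at most $L_G\norm{X-Y}$ by hypothesis, with $X,Y$ in the tube. This step is routine; the only potential obstacle in the whole argument is the tube-membership bookkeeping noted above.
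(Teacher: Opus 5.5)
Your proposal is correct and follows the paper's proof: the same two intermediate terms $\widehat\alpha_kA(N_1(X))$ and $\widehat\alpha_kA(\widehat N_{1,k}(X))$ produce the scale, operator, and normalization errors in each branch, and adding and subtracting $G(t_k,X)$ gives the cross estimate. Your remark that $\eta_{A,k}$ and $L_A$ must also hold at the executed normalized output $\widehat N_{1,k}(X)$ is the right bookkeeping point; the paper leaves it implicit in the phrase ``normalized tube''.
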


\begin{proof}
For the attention term, add and subtract
$\widehat\alpha_kA(N_1(X))$ and
$\widehat\alpha_kA(\widehat N_{1,k}(X))$; bound the resulting scale,
operator, and normalization errors.  Repeat for the MLP term.  Add and
subtract $G(t_k,X)$ for the cross-state estimate.
\end{proof}

\begin{proposition}[Affine and two-layer MLP quantization errors]
\label[proposition]{prop:mlp-error-v4}
On $\norm z\le R$, an affine map $L(z)=Wz+b$ and its quantized version obey
\begin{equation}
 \sup_{\norm z\le R}\norm{\widehat L(z)-L(z)}
 \le\norm{\widehat W-W}_{\mathrm{op}}R+\norm{\widehat b-b}.
 \label{eq:affine-error-v4}
\end{equation}
For
$M(z)=W_2\varphi(W_1z+b_1)+b_2$, assume $\varphi$ is
$L_\varphi$-Lipschitz and
$\norm{\varphi(W_1z+b_1)}\le B_\varphi$ on the tube.  Then
\begin{align}
 \sup_{\norm z\le R}\norm{\widehat M(z)-M(z)}
 \le{}&\norm{\widehat W_2-W_2}_{\mathrm{op}}B_\varphi
 +\norm{\widehat W_2}_{\mathrm{op}}L_\varphi
   \left(\norm{\widehat W_1-W_1}_{\mathrm{op}}R
        +\norm{\widehat b_1-b_1}\right)\notag\\
 &+\norm{\widehat b_2-b_2}.
 \label{eq:mlp-error-v4}
\end{align}
\end{proposition}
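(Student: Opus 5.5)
The plan is to establish both displays by direct add-and-subtract decompositions, estimating each difference with the operator norm on the declared ball $\norm z\le R$. No earlier result beyond the triangle inequality and the definition of the operator norm is needed. The affine estimate will then be reused inside the two-layer MLP estimate.

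For the affine bound \eqref{eq:affine-error-v4}, I will write
\[
 \widehat L(z)-L(z)=(\widehat W-W)z+(\widehat b-b).
\]
The triangle inequality together with $\norm{(\widehat W-W)z}\le\norm{\widehat W-W}_{\mathrm{op}}\norm z\le\norm{\widehat W-W}_{\mathrm{op}}R$ gives the claim after taking the supremum over the ball.

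For the MLP, set $u=W_1z+b_1$, $\widehat u=\widehat W_1z+\widehat b_1$, and $a=\varphi(u)$, $\widehat a=\varphi(\widehat u)$. I will insert the intermediate term $\widehat W_2 a$:
\[
 \widehat M(z)-M(z)=(\widehat W_2-W_2)a+\widehat W_2(\widehat a-a)+(\widehat b_2-b_2).
\]
The terms are then bounded one at a time.
\begin{enumerate}
\item The first term is at most $\norm{\widehat W_2-W_2}_{\mathrm{op}}B_\varphi$, using the tube bound $\norm{a}\le B_\varphi$.
\item The second term is at most $\norm{\widehat W_2}_{\mathrm{op}}L_\varphi\norm{\widehat u-u}$, by Lipschitzness of $\varphi$. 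The affine estimate already proved bounds $\norm{\widehat u-u}$ by $\norm{\widehat W_1-W_1}_{\mathrm{op}}R+\norm{\widehat b_1-b_1}$.
\item The third term is simply $\norm{\widehat b_2-b_2}$.
\end{enumerate}
Summing the three bounds and taking the supremum yields \eqref{eq:mlp-error-v4}.

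The only delicate point is the domain of validity of the $\varphi$ estimates. The Lipschitz bound for $\varphi$ must hold on a set containing both $u$ and $\widehat u$, since it is applied to the pair $(u,\widehat u)$, so I will read "on the tube" as covering the executed preactivations as well. By contrast, $B_\varphi$ is needed only at the ideal preactivation $u$, because the decomposition places $\widehat W_2$ (not $W_2$) in front of $\widehat a-a$.
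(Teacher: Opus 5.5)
Your proposal is correct and follows the paper's proof. The paper likewise inserts the exact first-layer affine map, passes the first-layer perturbation through the Lipschitz nonlinearity and the executed operator $\widehat W_2$, and bounds the remaining second-layer and bias differences directly; your remark that the Lipschitz bound must cover both preactivations, while $B_\varphi$ is needed only at the ideal one, is an accurate reading of the hypotheses.
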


\begin{proof}
Insert the exact intermediate affine map.  The first-layer perturbation passes
through the Lipschitz nonlinearity and the executed second-layer operator;
the remaining terms are direct second-layer and bias errors.
\end{proof}

\begin{theorem}[Executed Transformer endpoint and path resource laws]
\label[theorem]{thm:executed-transformer-v4}
Consider a nonempty learned family of Transformer dictionaries indexed by
$\omega\in\Omega$, with an $s$-bit metadata cover of radius $\delta_s$.
Assume the family has one common tube, uniform boundedness and state-Lipschitz
constants, and the parameter-Lipschitz bound
\eqref{eq:dictionary-parameter-lipschitz-candidate}.  Assume in addition that
its time dependence satisfies either the uniform smooth hypotheses or the
uniform BV hypotheses, with endpoint constant
$C_{\mathrm{syn}}^{\mathrm{Tr}}$ and path constant
$C_{\mathrm{syn,path}}^{\mathrm{Tr}}$.  Let
$\widehat{\mathscr P}_{D,s}^{\mathrm{Tr,EF}}$ be the linearly interpolated
implemented path set and set
\[
 \mathscr P_{\Omega,\Rel}^{\mathrm{Tr}}
 :=\overline{\bigcup_{\omega\in\Omega}
                 \mathscr P_{\omega,\Rel}^{\mathrm{Tr}}}.
\]
Suppose every implemented microstep uses increment error feedback with
physical residual radius $\rho_D$ and uniform field error
$\eta_{G,D}:=\sup_k\eta_{G,k}$.  Assume the carry and integer MAC accumulators
are exact on the declared tube, saturation is inactive by
\cref{prop:saturation-tube-candidate}, and any serial attention--MLP splitting
cost is included in the two synthesis constants.  Then
\begin{align}
 \dH\!\left(
 \widehat\RR_{D,s}^{\mathrm{Tr,EF}},
 \RR_{\Omega,\Rel}^{\mathrm{Tr}}
 \right)
 &\le r_D^{\mathrm{end}},
 \label{eq:executed-transformer-law-v4}\\
 \dH^{\mathrm{path}}\!\left(
 \widehat{\mathscr P}_{D,s}^{\mathrm{Tr,EF}},
 \mathscr P_{\Omega,\Rel}^{\mathrm{Tr}}
 \right)
 &\le r_D^{\mathrm{path}},
 \label{eq:executed-transformer-path-law-v4}
\end{align}
where
\begin{align}
 r_D^{\mathrm{end}}
 &:=\frac{C_{\mathrm{syn}}^{\mathrm{Tr}}}{D}
 +L_\Omega\Phi_{L_z}(T)\delta_s
 +\rho_De^{L_zT}
 +\eta_{G,D}\Phi_{L_z}(T),\\
 r_D^{\mathrm{path}}
 &:=\frac{C_{\mathrm{syn,path}}^{\mathrm{Tr}}}{D}
 +L_\Omega\Phi_{L_z}(T)\delta_s
 +\rho_De^{L_zT}
 +\eta_{G,D}\Phi_{L_z}(T).
 \label{eq:executed-transformer-radii-v4}
\end{align}
The same radii bound the differences between implemented and ideal learned-
dictionary endpoint or path target errors.  In particular, the full executed
block retains a first-order depth law when
\begin{equation}
 \delta_s=O(D^{-1}),\qquad
 \rho_D=O(D^{-1}),\qquad
 \eta_{G,D}=O(D^{-1}).
 \label{eq:transformer-resource-scaling-v4}
\end{equation}
\end{theorem}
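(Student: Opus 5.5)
The plan is a chain of three Hausdorff comparisons joined by the triangle inequality, as in the proof of \cref{thm:integrated-resource-main}, with one extra step for the path version. Fix an encoded metadata value $\widehat\omega\in\Omega_s$ and a schedule $\sigma\in[J]^D$. Let $\widehat z^{\sigma,\widehat\omega}_k$ be the implemented error-feedback path and $z^{\sigma,\widehat\omega}_k$ the ideal pure Euler path for the same schedule and metadata.

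\emph{Step 1 (implemented versus ideal pure).} Apply \cref{thm:error-feedback-field-v4} with quantizer radius $\rho_D$ and field error $\eta_{G,D}$. Its hypotheses hold by the stated assumptions:
\begin{enumerate}[label=(\roman*),leftmargin=2em,itemsep=.1em]
\item the MAC and carry accumulators are exact on the tube;
\item saturation is inactive, so by \cref{prop:saturation-tube-candidate} the executed map equals the unsaturated recursion~\eqref{eq:field-aware-ef-v4};
\item the componentwise budget of \cref{prop:transformer-field-error-v4} supplies $\sup_{k,j,X}\norm{\widehat G_{k,j}(X)-G_j(t_k,X)}\le\eta_{G,D}$.
\end{enumerate}
This gives $\max_k\norm{\widehat z_k-z_k}\le\rho_De^{L_zT}+\eta_{G,D}\Phi_{L_z}(T)$ uniformly in $\sigma$ and $\widehat\omega$. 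Pairing equal indices yields the Hausdorff bound on endpoints. For paths, two linear interpolants of grid sequences differ in sup norm by at most the maximum grid difference, because the interpolation is convex on each cell. Hence the same radius bounds the path Hausdorff distance between $\widehat{\mathscr P}^{\mathrm{Tr,EF}}_{D,s}$ and $\bigcup_{\widehat\omega}\mathscr P_{\widehat\omega,D}$.

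\emph{Step 2 (pure versus relaxed at fixed metadata).} Apply the uniform synthesis theorem, \cref{thm:hausdorff} in the smooth case or \cref{thm:bv-reachability-v2} in the BV case, to each dictionary $\widehat\omega$. The constants are uniform over $\Omega$ by assumption, so this gives $C^{\mathrm{Tr}}_{\mathrm{syn}}/D$ for endpoints and $C^{\mathrm{Tr}}_{\mathrm{syn,path}}/D$ for paths. The constants already absorb any serial attention--MLP splitting error, as the hypothesis stipulates. \emph{Step 3 (metadata).} Use \cref{lem:relaxed-codebook-lipschitz-main}, both the endpoint and the path versions, to show the following. Every $\omega\in\Omega$ has a cover point $\widehat\omega$ with $d_\Omega(\omega,\widehat\omega)\le\delta_s$, so $\mathscr P_{\omega,\Rel}$ lies within $L_\Omega\Phi_{L_z}(T)\delta_s$ of $\mathscr P_{\widehat\omega,\Rel}$. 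In the other direction, $\mathscr P_{\widehat\omega,\Rel}\subset\mathscr P^{\mathrm{Tr}}_{\Omega,\Rel}$ trivially.

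\emph{Assembly.} The two directed distances are assembled separately, exactly as in \cref{thm:learned-codebook-main}. For the hard direction, start from a point of the closure $\mathscr P^{\mathrm{Tr}}_{\Omega,\Rel}$ and approximate it by a raw relaxed path with some $\omega$. Move to the cover point $\widehat\omega$ (Step 3), then to a pure path (Step 2), then to the implemented path with the same schedule (Step 1). For the easy direction, go from the implemented path to the ideal pure path, then to the relaxed set with the same $\widehat\omega$; this set is contained in the union. Summing the radii gives $r^{\mathrm{end}}_D$ and $r^{\mathrm{path}}_D$. Passing to closures costs nothing, because all four radii are uniform, and distance to a nonempty set is one-Lipschitz under Hausdorff perturbation; this yields the target-error statement. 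The first-order corollary is then immediate by substituting the three $O(D^{-1})$ rates.

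The main obstacle is the tube bookkeeping in Step 1 and Step 2. The constants $L_z$ and $\eta_{G,D}$ are only valid on the common normalized tube, so every implemented prefix, every ideal prefix, and every relaxed path for every $\omega$ must lie in one tube. This has to be stated as a hypothesis, not derived. I would also check that the path Hausdorff metric is the same uniform norm in Steps 1 to 3, so that the three bounds genuinely add.
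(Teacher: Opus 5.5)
Your proposal is correct and follows essentially the same route as the paper. You pair equal schedules through the inexact-field error-feedback theorem, noting that linear interpolation does not enlarge the grid-point gap; you add the uniform smooth/BV endpoint and path synthesis radii at fixed metadata; you use the endpoint-and-path Gronwall codebook lemma; and you assemble both Hausdorff directions with closures and the one-Lipschitz property of distance.
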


\begin{proof}
For one metadata value and one schedule,
\cref{thm:error-feedback-field-v4} controls every grid prefix.  Linear
interpolation between paired grid states does not enlarge their maximum
difference.  The smooth baseline theorem or
\cref{thm:bv-reachability-v2} supplies the ideal endpoint and path synthesis
radii.  Driving two learned dictionaries by the same relaxed control and
applying Gronwall gives the codebook term uniformly over the whole path, not
only at $T$.  Pair schedules in both directions, take closures, and apply the
endpoint or path Hausdorff triangle inequality.  The target-distance claims
follow from one-Lipschitz continuity of distance under Hausdorff perturbation.
\end{proof}

\begin{corollary}[Executed route-changing MoE Transformer law]
\label[corollary]{cor:executed-moe-transformer-v4}
Let $z^\star$ be a target route-changing Transformer path with simple isolated
events, and suppose it belongs to the closed relaxed path class generated by
the target-route dictionary
\[
 \mathcal G^{m^\star,\omega}(t)
 =\{G_1^{m^\star(t),\omega}(t,\cdot),\ldots,
     G_J^{m^\star(t),\omega}(t,\cdot)\}
\]
for some $\omega\in\Omega$.  Assume this learned target-route family satisfies
the uniform BV and executed-Transformer hypotheses above.  Use
\cref{prop:topk-moe-constants-v4} to instantiate $L_z$, $\Delta_r$,
$\eta_{r,D}$, and $L_r$.  Then there exist a metadata code and one pure
schedule whose executed \emph{target-route oracle} obeys
\begin{equation}
 \delta_D:=\max_{k\le D}\norm{y_k-z^\star(t_k)}
 \le r_D^{\mathrm{path}}.
 \label{eq:moe-oracle-radius-v4}
\end{equation}
If this radius obeys the route-window small-gain, isolation, disjoint-window,
and tube conditions, the actual state-driven implementation satisfies
\begin{equation}
 \max_{k\le D}\norm{x_k-z^\star(t_k)}
 \le\overline E_D=\frac{b_D}{1-\chi},
 \label{eq:executed-moe-state-v4}
\end{equation}
and its route mismatches occur only in the certified event windows.  Under
\eqref{eq:transformer-resource-scaling-v4} and
$\eta_{\mathrm{score}}=O(D^{-1})$, both the state error and event-window widths
are $O(D^{-1})$.
\end{corollary}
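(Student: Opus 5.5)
The corollary chains results already proved. The plan is to build an executed target-route oracle satisfying \eqref{eq:moe-oracle-radius-v4} from \cref{thm:executed-transformer-v4}, and then feed it into \cref{thm:route-window-v2}.

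\textbf{Step 1: the oracle bound.} Freeze the route at the target support $m^\star(t)$. This gives a non-hybrid residual system whose atoms $G_j^{m^\star(t),\omega}(t,\cdot)$ depend on time only through $m^\star$ and the within-mode time dependence. Because $m^\star$ is right-continuous with finitely many isolated events, each atom is a c\`adl\`ag BV map into the bounded-field space. Its variation is the within-mode variation plus at most $\Delta_r$ per event. Hence \cref{ass:bv-fields-v2} holds with $V_t$ increased by at most $J\sum_r\Delta_r$, which is the mechanism recorded after \cref{thm:bv-reachability-v2}.

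By hypothesis $z^\star$ lies in the closed relaxed path class for some $\omega$, so $\dist(z^\star,\mathscr P^{\mathrm{Tr}}_{\Omega,\Rel})=0$. The path statement \eqref{eq:executed-transformer-path-law-v4} is a Hausdorff bound between the implemented path set and that relaxed class. Therefore some implemented path, coming from a metadata code point $\widehat\omega$ and one pure schedule $\sigma$, lies within $r_D^{\mathrm{path}}$ of $z^\star$ in the uniform path norm.

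A minor point needs care: the Hausdorff infimum might not be attained. The implemented set is finite for fixed $(D,s)$ (finitely many codes times $J^D$ schedules), so the distance is a minimum and it is attained. Evaluating the path difference at grid times gives $\max_k\norm{y_k-z^\star(t_k)}\le r_D^{\mathrm{path}}$. That is exactly $\delta_D$ in \eqref{eq:moe-oracle-radius-v4}.

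\textbf{Step 2: instantiate the route constants.} Proposition~\ref{prop:topk-moe-constants-v4} supplies $L_z$, $\Delta_r$, $\eta_{r,D}\le2\eta_{\mathrm{score}}$ and $L_r\le2L_{\mathrm{score}}$. Together with the assumed transversality and isolation radii, this verifies every clause of \cref{ass:simple-topk-events-v2} for the same schedule $\sigma$ and metadata $\widehat\omega$.

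\textbf{Step 3: the actual routed implementation.} Run the state-driven router from $x_0=y_0=z^\star(0)$. The assumed small-gain, isolation budget, disjoint-window and tube conditions are precisely the premises of \cref{thm:route-window-v2}. That theorem gives \eqref{eq:executed-moe-state-v4} and confines mismatches to the windows \eqref{eq:route-window-union-v2}.

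\textbf{Step 4: rates.} Under \eqref{eq:transformer-resource-scaling-v4} we get $r_D^{\mathrm{path}}=O(D^{-1})$, so $C_\delta$ exists. Also $\eta_{r,D}=O(D^{-1})$ because $\eta_{\mathrm{score}}=O(D^{-1})$. \Cref{cor:first-order-routing-v2} then yields $\overline E_D=O(D^{-1})$ and $w_{r,D}=O(D^{-1})$.

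\textbf{Main obstacle.} The delicate step is Step 1. The path radius must be computed for the \emph{executed} oracle, meaning error feedback and field error evaluated along the frozen target route. It must also be uniform over metadata so that the cover point $\widehat\omega$ is admissible. I would check this by applying \cref{thm:error-feedback-field-v4} schedulewise to the frozen-route dictionary. I would then confirm that the BV constant, with the route jumps absorbed, is the $C_{\mathrm{syn,path}}^{\mathrm{Tr}}$ entering $r_D^{\mathrm{path}}$.
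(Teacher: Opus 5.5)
Your proposal is correct and follows the paper's proof. Like the paper, you take a metadata point, a pure schedule, and an executed target-route oracle from the path half of \cref{thm:executed-transformer-v4}, evaluate the path bound at grid times, instantiate the constants via \cref{prop:topk-moe-constants-v4}, and then apply \cref{thm:route-window-v2} and \cref{cor:first-order-routing-v2}. Your two additional checks fill in details the paper leaves implicit: route jumps only enlarge $V_t$ by at most $J\sum_r\Delta_r$, and the implemented path set is finite for fixed $(D,s)$, so the Hausdorff infimum is attained.
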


\begin{proof}
Because $z^\star$ lies in the closed relaxed target-route path class, the
pathwise part of \cref{thm:executed-transformer-v4} supplies a metadata value,
pure schedule, and executed oracle whose entire interpolated path is within
$r_D^{\mathrm{path}}$ of $z^\star$; in particular,
\eqref{eq:moe-oracle-radius-v4} holds at every grid point.  This is the
pathwise premise needed by the hybrid theorem and does not follow from an
endpoint estimate alone.  The weighted-MoE proposition supplies the remaining
constants.  Apply \cref{thm:route-window-v2,cor:first-order-routing-v2}.
\end{proof}

\subsection{The integrated resource theorem}
\label{sec:integrated-resource-v2}

\begin{theorem}[Floor--depth--metadata--arithmetic resource law]
\label[theorem]{thm:integrated-resource-v2}
Let a compact learned dictionary family satisfy the uniform learned-codebook
assumptions, with either smooth time dependence or the uniform BV assumptions
of \cref{thm:bv-reachability-v2}.  Let $C_{\mathrm{syn}}^{\mathrm{unif}}$
denote the corresponding uniform finite-depth constant, let an $s$-bit
metadata code have covering radius $\delta_s$, and suppose the implemented and ideal depth-$D$ systems have a schedulewise
arithmetic radius $\mathcal A_D$ uniform over every encoded metadata value.
Then
\begin{equation}
 \dH(\RR_{D,s}^{\mathrm{impl}},\RR_{\Omega,\Rel})
 \le
 \frac{C_{\mathrm{syn}}^{\mathrm{unif}}}{D}
 +L_\Omega\Phi_{L_z}(T)\delta_s
 +\mathcal A_D.
 \label{eq:integrated-resource-law-v2}
\end{equation}
If the metadata entropy obeys
$\mathcal N(\Omega,\delta)\le(C_\Omega/\delta)^m$, execution uses scaled
error feedback with normalized increment radius $\overline\rho_b$, the
executed field error is uniformly at most $\eta_{G,D}$, and all remaining
implementation effects have schedulewise radius at most
$\mathcal A_D^{\mathrm{other}}$, then
\begin{equation}
 \dH(\RR_{D,s}^{\mathrm{impl}},\RR_{\Omega,\Rel})
 \le
 \frac{C_{\mathrm{syn}}^{\mathrm{unif}}
       +T\overline\rho_b e^{L_zT}}{D}
 +L_\Omega\Phi_{L_z}(T)C_\Omega2^{-s/m}
 +\eta_{G,D}\Phi_{L_z}(T)
 +\mathcal A_D^{\mathrm{other}}.
 \label{eq:integrated-scaled-law-v2}
\end{equation}
The same quantities bound the signed difference between the implemented
optimal target error and the learned relaxed structural floor.  Replacing the
endpoint synthesis and arithmetic radii by their prefix-uniform counterparts
gives the identical statement for linearly interpolated path sets.
\end{theorem}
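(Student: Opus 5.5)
The plan is to chain three Hausdorff estimates through two intermediate sets and then close with the Hausdorff triangle inequality. Write $\RR_{D,s}^{\mathrm{learn}}=\bigcup_{\widehat\omega\in\Omega_s}\RR_{\widehat\omega,D}$ for the ideal pure endpoint set with encoded metadata. Write $\RR_{D,s}^{\mathrm{impl}}=\bigcup_{\widehat\omega\in\Omega_s}\widehat\RR_{\widehat\omega,D}$ for its implemented counterpart, indexed by the same pairs $(\widehat\omega,\sigma)$.

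\textbf{First step: implemented versus ideal pure sets.} Fix $\widehat\omega$ and a schedule $\sigma$. The schedulewise transfer bound of \cref{thm:hardware-transfer-candidate} gives $\norm{\widehat z_D^{\sigma,\widehat\omega}-z_D^{\sigma,\widehat\omega}}\le\mathcal A_D(\sigma)\le\mathcal A_D$. This is where the hypothesis that $\mathcal A_D$ is uniform over every encoded metadata value is used. Pairing endpoints with the same $(\widehat\omega,\sigma)$ in both directions gives $\dH(\RR_{D,s}^{\mathrm{impl}},\RR_{D,s}^{\mathrm{learn}})\le\mathcal A_D$.

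\textbf{Second step: ideal pure sets versus the learned relaxed floor set.} Apply \cref{thm:learned-codebook-candidate}, whose proof already combines \cref{thm:hausdorff}, or \cref{thm:bv-reachability-v2} in the BV case, with \cref{lem:learned-endpoint-lipschitz-candidate}. This gives $\dH(\RR_{D,s}^{\mathrm{learn}},\RR_{\Omega,\Rel})\le C_{\mathrm{syn}}^{\mathrm{unif}}/D+L_\Omega\Phi_{L_z}(T)\delta_s$. One point needs care: the synthesis constant must be uniform in $\omega$. This holds because the constant formula \eqref{eq:bv-constant-v2} depends only on $T,B,L_z,J,V_t$, which the uniform assumptions fix independently of $\omega$.

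\textbf{Combining.} The triangle inequality for Hausdorff distance now yields \eqref{eq:integrated-resource-law-v2}. The target-error statement follows because $F\mapsto\dist(F^\star,\cdot)$ is $1$-Lipschitz under Hausdorff perturbation of nonempty sets.

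\textbf{Scaled corollary.} For \eqref{eq:integrated-scaled-law-v2}, do not use the generic $\mathcal A_D$ directly. Instead, bound the schedulewise radius by the error-feedback estimate of \cref{thm:error-feedback-field-v4} with physical radius $\rho_D=h\overline\rho_b=T\overline\rho_b/D$, as in \cref{cor:scaled-error-feedback-v2}. This contributes $T\overline\rho_be^{L_zT}/D+\eta_{G,D}\Phi_{L_z}(T)$. Add $\mathcal A_D^{\mathrm{other}}$ through one more triangle inequality, pairing the same schedules. Finally, \cref{cor:metadata-bit-law-candidate} gives a code with $\delta_s\le C_\Omega2^{-s/m}$.

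\textbf{Path version.} Repeat the argument with prefix-uniform radii. Linear interpolation between paired grid states does not increase the maximal pointwise gap. Use the path synthesis constant and the path form of the codebook Lipschitz lemma, \eqref{eq:relaxed-codebook-path-lipschitz-main}.

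\textbf{Main obstacle.} The delicate step is justifying that the error-feedback radius and the $\mathcal A_D^{\mathrm{other}}$ term compose additively as schedulewise radii. The feedback analysis controls the implemented path relative to the ideal one only when the other defects are absent. So I would define an intermediate execution, with error feedback and exact fields, and apply \cref{thm:hardware-transfer-candidate} to the residual defects, relative to that intermediate execution rather than to the ideal one. This needs the tube hypotheses to cover all three executions. The declared definition of $\mathcal A_D^{\mathrm{other}}$ as the radius of the effects remaining after field and increment quantization is exactly this premise, and I would state it explicitly.
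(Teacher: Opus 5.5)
Your main chain is the paper's proof. You insert the ideal pure set with encoded metadata at radius $\mathcal A_D$, apply the learned-codebook estimate (which the paper unpacks into pure-to-relaxed synthesis at fixed metadata plus the codebook Lipschitz lemma), and close with the Hausdorff triangle inequality. For the scaled form you use $\rho_D=T\overline\rho_b/D$ in \cref{thm:error-feedback-field-v4} together with the entropy cover. Your uniformity, path, and target-distance remarks also match the paper.

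The one step that would fail is the mechanism in your obstacle paragraph. \Cref{thm:hardware-transfer-candidate} requires the reference transition to be $\kappa_k$-Lipschitz on the tube. An error-feedback transition acts on $(z,r)$ through the rounding map $Q$ and has no finite Lipschitz constant, so the transfer theorem cannot be applied relative to that intermediate execution. Separately, an intermediate with \emph{exact} fields would move the $\eta_{G,D}\Phi_{L_z}(T)$ term out of \cref{thm:error-feedback-field-v4}, which contradicts your own accounting in the previous paragraph.

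The paper avoids the issue by reading $\mathcal A_D^{\mathrm{other}}$ as a declared schedulewise endpoint discrepancy between the full execution and the error-feedback-only execution (compare the wording of \cref{cor:transformer-arithmetic-v2}). It then adds this term by one more Hausdorff triangle inequality, so nothing has to be derived.

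If you want to derive additivity rather than assume it, inject the remaining defects $a_k$ into the state update and track the transformed error $s_k=e_k+r_k$. You get $\norm{s_{k+1}}\le(1+hL_z)\norm{s_k}+hL_z\rho+h\eta_G+\norm{a_k}$. The extra contribution is then $\sum_k\norm{a_k}(1+hL_z)^{D-1-k}$. This is a transfer radius with the \emph{ideal} factor $\kappa_k=1+hL_z$, as in the paper's remark on finite residual accumulators. It is valid provided the carry bound $\norm{r_k}\le\rho$ holds on the complete implemented range.
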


\begin{proof}
Use the triangle inequality among the implemented pure set, the ideal pure set
with quantized metadata, the corresponding relaxed set, and the union over
all learned metadata.  Apply the hardware transfer theorem, the smooth or BV
pure-to-relaxed theorem, and the learned-endpoint Lipschitz lemma.  The scaled form uses \cref{thm:error-feedback-field-v4}, the physical
radius $\rho_D=T\overline\rho_b/D$, and the metadata entropy bound.
Distance to a target is one-Lipschitz under Hausdorff perturbations.
\end{proof}

The upper law is complemented by three independent necessity mechanisms:
\begin{enumerate}[leftmargin=2em]
\item the fixed LISTA teacher in
\cref{cor:binary-lista-fixed-teacher-v2} forces an architecture-level
$\Omega(D^{-1})$ synthesis price;
\item the metadata--depth packing converse requires
$s+D\log_2J\ge\log_2\mathcal M(2\varepsilon,\mathcal F)$ for a teacher
family;
\item HJB, support, or SOS certificates lower-bound a positive structural
floor that no depth can remove.
\end{enumerate}
For actual route-changing networks, the target-specific hybrid theorem and
its $L^p$ extension supply the additional event-window conditions.  These
conditions should remain separate from the symmetric Hausdorff law until a
genuine two-sided hybrid reachable-set theorem is proved.

\paragraph{Theoretical completion boundary.}
The smooth/BV reachable-set laws, finite-arithmetic transfer, exact and
perturbation-stable fixed-target converses, global-codebook resource laws,
simple-event hybrid theorem, explicit Transformer component bounds, and
primal--dual floor certificates above form a closed theorem package.  The
paper does not claim a symmetric Hausdorff theorem for arbitrary state-driven
hybrid reachable sets, a uniform $C(\Xi)$ theorem across discontinuous input
routes, or an integer-hardware guarantee without a verified tube, accumulator,
scale, and saturation audit.  Those exclusions are mathematical boundaries of
the stated resource theory, not informal caveats.

\section{Proofs for the H\"older, Integer, Neural-Converse, and Compositional Extensions}
\label{app:merged-exact-proofs}

This appendix proves the results added to the main theorem spine after the
core resource-theory development in Appendix~\ref{app:full-resource-proofs}.
The order follows the main text: H\"older-time synthesis, prescribed route
motion, exact common-lattice execution, neural fixed-target converses,
matching-depth consequences, and compositional HJB certificates.  We retain
all closure, boundary-case, and register-range details because they are the
parts most easily obscured by a proof sketch.

\subsection{H\"older-time pure-to-relaxed synthesis}
\label{app:holder-proof}

We prove Theorem~\ref{thm:holder-time-main}.  Put $h=T/D$ and
$t_k=kh$.  Let $z:[0,T]\to\mathcal K$ be the trajectory of a measurable
relaxed control $\alpha:[0,T]\to\Delta_J$.  On each cell
$I_k=[t_k,t_{k+1}]$, define the averaged control
\begin{equation}
 p_{k,j}:=\frac1h\int_{I_k}\alpha_j(s)\,ds,
 \qquad p_k\in\Delta_J,
 \label{eq:app-holder-average}
\end{equation}
and the mixed Euler path
\begin{equation}
 u_{k+1}=u_k+h\sum_{j=1}^Jp_{k,j}G_j(t_k,u_k),
 \qquad u_0=z(0).
 \label{eq:app-holder-mixed}
\end{equation}
All paths below lie in the enlarged tube by
Assumption~\ref{ass:holder-time-main} and $D\ge D_0$.

\paragraph{Step 1: continuous relaxed trajectory versus mixed Euler.}
The relaxed trajectory is $B$-Lipschitz, because for $s<t$,
\begin{equation}
 \norm{z(t)-z(s)}
 \le\int_s^t\sum_j\alpha_j(r)\norm{G_j(r,z(r))}\,dr
 \le B(t-s).
 \label{eq:app-holder-path-speed}
\end{equation}
Consequently, the local defect obtained by starting the mixed step at the
exact state $z(t_k)$ is
\begin{align}
 d_k
 &:={z(t_{k+1})-z(t_k)}
    -h\sum_jp_{k,j}G_j(t_k,z(t_k))\nonumber\\
 &=\int_{I_k}\sum_j\alpha_j(s)
   \{G_j(s,z(s))-G_j(t_k,z(t_k))\}\,ds,
 \label{eq:app-holder-local-defect}
\end{align}
and hence
\begin{align}
 \norm{d_k}
 &\le\int_0^h\{H_tr^\vartheta+L_zBr\}\,dr\nonumber\\
 &=\frac{H_t}{\vartheta+1}h^{1+\vartheta}
   +\frac12L_zBh^2.
 \label{eq:app-holder-local-bound}
\end{align}
For $e_k^{\rm mix}:=z(t_k)-u_k$,
\begin{equation}
 \norm{e_{k+1}^{\rm mix}}
 \le(1+hL_z)\norm{e_k^{\rm mix}}+\norm{d_k}.
 \label{eq:app-holder-mixed-recursion}
\end{equation}
Using
\begin{equation}
 h\sum_{r=0}^{D-1}(1+hL_z)^r
 \le \Phi_{L_z}(T),
 \qquad
 \Phi_{L_z}(T):=
 \begin{cases}
  (e^{L_zT}-1)/L_z,&L_z>0,\\
  T,&L_z=0,
 \end{cases}
 \label{eq:app-holder-phi}
\end{equation}
we obtain, uniformly in $k\le D$,
\begin{equation}
 \norm{z(t_k)-u_k}
 \le
 \left\{\frac{H_t}{\vartheta+1}h^\vartheta
       +\frac12L_zBh\right\}\Phi_{L_z}(T).
 \label{eq:app-holder-cont-mixed}
\end{equation}
This estimate also covers $L_z=0$ by the displayed definition of
$\Phi_{L_z}$.

\paragraph{Step 2: balanced pure schedule versus mixed Euler.}
Apply Lemma~\ref{lem:online-rounding-main} to $p_0,\ldots,p_{D-1}$ and let
$\sigma_0,\ldots,\sigma_{D-1}$ be the resulting pure schedule.  Set
\begin{equation}
 d_{k,j}:=\one_{\{\sigma_k=j\}}-p_{k,j},
 \qquad
 S_{n,j}:=\sum_{k=0}^{n-1}d_{k,j}.
 \label{eq:app-holder-discrepancy}
\end{equation}
Then $|S_{n,j}|<J$ for all $n,j$.  Along the mixed path, define
\begin{equation}
 f_k:=\sum_{j=1}^Jd_{k,j}G_j(t_k,u_k),
 \qquad
 F_n:=\sum_{k=0}^{n-1}f_k,
 \quad F_0=0.
 \label{eq:app-holder-forcing}
\end{equation}
For each fixed $j$, discrete Abel summation gives
\begin{equation}
 \sum_{k=0}^{n-1}d_{k,j}G_j(t_k,u_k)
 =S_{n,j}G_j(t_{n-1},u_{n-1})
  +\sum_{k=0}^{n-2}S_{k+1,j}
   \{G_j(t_k,u_k)-G_j(t_{k+1},u_{k+1})\}.
 \label{eq:app-holder-abel}
\end{equation}
The mixed Euler increments satisfy $\norm{u_{k+1}-u_k}\le Bh$, and therefore
\begin{equation}
 \norm{G_j(t_{k+1},u_{k+1})-G_j(t_k,u_k)}
 \le H_th^\vartheta+L_zBh.
 \label{eq:app-holder-grid-var}
\end{equation}
Combining \eqref{eq:app-holder-discrepancy}--\eqref{eq:app-holder-grid-var}
and summing over $j$ yields the prefix bound
\begin{equation}
 \max_{0\le n\le D}\norm{F_n}
 \le M_D
 :=J^2\{B+H_tT h^{\vartheta-1}+L_zBT\}.
 \label{eq:app-holder-prefix}
\end{equation}
When $\vartheta<1$, $M_D$ grows with $D$; the transformed-error argument below
multiplies it by $h$, which is precisely why the final rate is
$h^\vartheta$ rather than $h$.

Let $v_0=u_0$ and
\begin{equation}
 v_{k+1}=v_k+hG_{\sigma_k}(t_k,v_k)
 \label{eq:app-holder-pure}
\end{equation}
be the pure Euler path.  With $e_k:=v_k-u_k$,
\begin{equation}
 e_{k+1}=e_k+h\{G_{\sigma_k}(t_k,v_k)
                    -G_{\sigma_k}(t_k,u_k)\}+hf_k.
 \label{eq:app-holder-e-rec}
\end{equation}
Introduce the transformed error $\widetilde e_k=e_k-hF_k$.  Since
$F_{k+1}=F_k+f_k$,
\begin{equation}
 \widetilde e_{k+1}
 =\widetilde e_k+h\{G_{\sigma_k}(t_k,v_k)
                         -G_{\sigma_k}(t_k,u_k)\}.
 \label{eq:app-holder-transformed-exact}
\end{equation}
Because $e_k=\widetilde e_k+hF_k$,
\begin{equation}
 \norm{\widetilde e_{k+1}}
 \le(1+hL_z)\norm{\widetilde e_k}+h^2L_zM_D.
 \label{eq:app-holder-transformed-bound}
\end{equation}
Starting from $\widetilde e_0=0$ and using
\eqref{eq:app-holder-phi},
\begin{equation}
 \norm{\widetilde e_k}\le hL_zM_D\Phi_{L_z}(T),
 \qquad
 \norm{e_k}\le hM_D\{1+L_z\Phi_{L_z}(T)\}
              =hM_De^{L_zT}.
 \label{eq:app-holder-mixed-pure}
\end{equation}
Substituting \eqref{eq:app-holder-prefix} gives
\begin{equation}
 \max_k\norm{v_k-u_k}
 \le J^2e^{L_zT}
 \{Bh+H_tT h^\vartheta+L_zBT h\}.
 \label{eq:app-holder-switch-final}
\end{equation}

\paragraph{Step 3: the relaxed-to-pure directed distance.}
Combining \eqref{eq:app-holder-cont-mixed} and
\eqref{eq:app-holder-switch-final}, then substituting $h=T/D$, gives
\begin{align}
 \norm{v_D-z(T)}
 &\le \frac{T^\vartheta H_t}{D^\vartheta}
       \left\{\frac{\Phi_{L_z}(T)}{\vartheta+1}
                    +J^2Te^{L_zT}\right\}\nonumber\\
 &\quad+\frac{T}{D}
       \left\{\frac12L_zB\Phi_{L_z}(T)
                    +J^2(B+L_zBT)e^{L_zT}\right\}.
 \label{eq:app-holder-directed-rel-pure}
\end{align}
The right-hand side is exactly
$C_\vartheta^{\rm end}D^{-\vartheta}+C_1^{\rm end}D^{-1}$.
It is uniform over the relaxed control, so it bounds the directed distance
from the raw relaxed endpoint set to the pure endpoint set.

\paragraph{Step 4: the pure-to-relaxed directed distance.}
Conversely, fix any pure Euler schedule.  Define the measurable relaxed
control that equals the vertex $e_{\sigma_k}$ on $I_k$.  Its mixed Euler path
is exactly the prescribed pure path.  The continuous trajectory generated by
that vertex-valued control differs from the pure Euler path only by the
continuous-versus-mixed estimate \eqref{eq:app-holder-cont-mixed}, which is no
larger than the constants in \eqref{eq:app-holder-directed-rel-pure}.  This
proves the opposite directed distance.

The estimates hold for the raw endpoint sets.  If $A,B$ are two nonempty
bounded sets whose two directed distances are at most $r$, then the same is
true for $\overline A,\overline B$, because every point in a closure is a
limit of points in the raw set and the norm is continuous.  Thus the same
$r$ bounds the Hausdorff distance between the closed reachable sets.

\paragraph{Step 5: path norm and target distances.}
Linearly interpolate each Euler path.  On a cell, both a continuous relaxed
trajectory and an Euler interpolant move by at most $Bh$ from their left grid
states.  Therefore the grid-point estimate incurs at most an additional
$2Bh=2BT/D$ in the uniform path norm.  The same argument applies in both
Hausdorff directions.  Finally, for nonempty sets $A,B$,
\begin{equation}
 |\dist(x,A)-\dist(x,B)|\le d_{\rm H}(A,B),
 \label{eq:app-holder-distance-lipschitz}
\end{equation}
so the endpoint and path target-error statements follow.  This completes the
proof of Theorem~\ref{thm:holder-time-main}.

\subsection{Prescribed route times over a continuum input set}
\label{app:distributional-route-proof}

\begin{proof}[Proof of Corollary~\ref{cor:distributional-route-main}]
Fix $0\le s<t\le T$ and a state $z$ in the declared $L^p$ tube.  For almost
every input $\xi$, connect the routed field at time $s$ to that at time $t$ by
following the within-mode evolution and inserting one jump whenever an event
time $\tau_r(\xi)$ lies in $(s,t]$.  The pointwise triangle inequality gives
\begin{equation}
 \norm{G(t,z)(\xi)-G(s,z)(\xi)}_2
 \le H_0|t-s|
      +\sum_{r=1}^R\Delta_r
       \one_{\{\tau_r(\xi)\in(s,t]\}}.
 \label{eq:app-route-pointwise}
\end{equation}
The assumed jump bounds are uniform over the tube.  Minkowski's inequality and
\eqref{eq:event-time-concentration-main} imply
\begin{align}
 \norm{G(t,z)-G(s,z)}_{L^p(\nu)}
 &\le H_0|t-s|
   +\sum_{r=1}^R\Delta_r
      \nu\{\tau_r\in(s,t]\}^{1/p}\nonumber\\
 &\le H_0|t-s|
   +\sum_{r=1}^R\Delta_rK_r^{1/p}|t-s|^{\alpha/p}.
 \label{eq:app-route-lp}
\end{align}
Put $\vartheta=\alpha/p\in(0,1]$.  Since
$|t-s|\le T^{1-\vartheta}|t-s|^\vartheta$, the last display is bounded by
\begin{equation}
 \left\{H_0T^{1-\vartheta}
        +\sum_{r=1}^R\Delta_rK_r^{1/p}\right\}|t-s|^\vartheta.
 \label{eq:app-route-holder}
\end{equation}
The state-Lipschitz and tube properties are inherited from the within-mode
assumptions, so Theorem~\ref{thm:holder-time-main} applies in
$\Z=L^p(\nu;\mathbb R^q)$ and gives the claimed
$O(D^{-\alpha/p}+D^{-1})$ rate.  Notice that the proof never perturbs the
event surfaces: the route is prescribed.  State-driven route stability is
therefore correctly left to the hybrid small-gain theorem.
\end{proof}

\subsection{Exact common-lattice error feedback}
\label{app:common-lattice-proof}

\begin{proof}[Proof of Proposition~\ref{prop:bit-exact-error-feedback-main}]
All statements are coordinatewise, so it is enough to prove the scalar case
and then take the maximum over coordinates.  Because $v_k\in\delta\mathbb Z$,
$r_0=0\in\delta\mathbb Z$, and $q_k\in\Delta\mathbb Z\subset\delta\mathbb Z$,
the recursion implies inductively that
\begin{equation}
 r_k\in\delta\mathbb Z,
 \qquad \widehat z_k\in\Delta\mathbb Z
 \quad\text{for every }k.
 \label{eq:app-lattice-invariance}
\end{equation}
Nearest rounding gives
\begin{equation}
 |r_{k+1}|=|v_k+r_k-Q_\Delta(v_k+r_k)|\le\Delta/2.
 \label{eq:app-lattice-residual-bound}
\end{equation}
The fixed tie rule makes the update deterministic; the bound is valid for
either tie direction.

Write all stored values as integers in their native units.  From
\eqref{eq:app-lattice-residual-bound},
\begin{equation}
 |r_k/\delta|\le m/2\le M_r.
 \label{eq:app-lattice-r-int}
\end{equation}
Also
\begin{equation}
 |(v_k+r_k)/\delta|
 \le V/\delta+M_r\le M_a.
 \label{eq:app-lattice-a-int}
\end{equation}
Finally, nearest rounding can move its input by at most $\Delta/2$, and hence
\begin{equation}
 |q_k|\le |v_k+r_k|+\Delta/2\le V+\Delta,
 \qquad
 |q_k/\Delta|\le V/\Delta+1\le M_q.
 \label{eq:app-lattice-q-int}
\end{equation}
Induction on the state update gives
\begin{equation}
 |\widehat z_k/\Delta|
 \le M_{z,0}+kM_q
 \le M_{z,0}+DM_q.
 \label{eq:app-lattice-z-int}
\end{equation}
A signed $b$-bit two's-complement integer represents
$[-2^{b-1},2^{b-1}-1]$.  If
$b\ge1+\lceil\log_2(M+1)\rceil$, then
$2^{b-1}-1\ge M$, while the negative endpoint has at least the same
magnitude.  Applying this observation with $M_r$, $M_a$, and
$M_{z,0}+DM_q$ proves that every declared register is overflow free.

Rearranging the residual update gives
\begin{equation}
 \widehat z_{k+1}+r_{k+1}
 =\widehat z_k+q_k+v_k+r_k-q_k
 =\widehat z_k+r_k+v_k.
 \label{eq:app-lattice-one-step-conservation}
\end{equation}
Telescoping from $r_0=0$ yields the exact conservation law
\begin{equation}
 \widehat z_k+r_k
 =\widehat z_0+\sum_{i=0}^{k-1}v_i.
 \label{eq:app-lattice-telescope}
\end{equation}
Suppose the right-hand side at $k=D$ belongs to $\Delta\mathbb Z^q$.
Because $\widehat z_D\in\Delta\mathbb Z^q$, the conservation law implies
$r_D\in\Delta\mathbb Z^q$.  Together with
$\norm{r_D}_\infty\le\Delta/2$, this forces $r_D=0$: the only multiple of
$\Delta$ in the closed interval $[-\Delta/2,\Delta/2]$ is zero.  Hence
$\widehat z_D$ equals the ideal accumulated endpoint exactly.
\end{proof}

\paragraph{Boundary cases.}
The proof includes $D=1$ and $m=1$.  When $m=1$, the residual is identically
zero away from an impossible half-grid tie, because the fine and coarse
lattices coincide.  The bit-width formulas are sufficient rather than
minimal; their purpose is a portable overflow certificate.  Wrapping
arithmetic is excluded: the theorem certifies that no register reaches the
wrapping boundary.

\subsection{Exact fixed-target neural converses}
\label{app:neural-converse-proofs}

\begin{proof}[Proof of Proposition~\ref{prop:relu-mlp-fixed-target-main}]
For $z=(x,u)$,
\begin{equation}
 W_1z=x,
 \qquad
 W_2\operatorname{ReLU}(W_1z)
 =-\operatorname{ReLU}(x)e_1.
 \label{eq:app-mlp-eval}
\end{equation}
Assume $x_k\in[0,a]$.  Then $\operatorname{ReLU}(x_k)=x_k$ and the Euler step
with label $c_k\in\{0,a\}$ is
\begin{equation}
 x_{k+1}=x_k+\frac1D(c_k-x_k)
 =\left(1-\frac1D\right)x_k+\frac{c_k}{D},
 \qquad u_{k+1}=u_k.
 \label{eq:app-mlp-rec}
\end{equation}
The first expression is a convex combination of $x_k$ and $c_k$, so
$x_{k+1}\in[0,a]$.  Since $x_0=0$, induction proves invariance and reduces the
complete MLP to the scalar recursion of Theorem~\ref{thm:fixed-teacher-main}.
The continuous $c=a$ atom solves $\dot x=a-x$, giving
$x(1)=a(1-e^{-1})$, while all remaining coordinates equal $u_0$.  The exact
optimal error therefore equals the scalar formula.  Any product norm that
dominates the first coordinate preserves the same lower bound, and the
all-$a$ schedule attains it with no error in the remaining coordinates.
\end{proof}

\begin{proof}[Proof of Theorem~\ref{thm:attention-fixed-target-main}]
For $x,u\in\mathbb R$, define the affine plane
\begin{equation}
 X(x,u):=
 \begin{bmatrix}x+u&1\\x-u&-1\end{bmatrix}.
 \label{eq:app-attn-plane}
\end{equation}
Both the attention term and the bias in
\eqref{eq:attention-atoms-main} have zero second column, so every continuous
and Euler path starting from $X_0=X(0,u_0)$ remains on this plane.

On the plane,
\begin{equation}
 Q(X)=K(X)=\begin{bmatrix}1\\-1\end{bmatrix},
 \qquad
 QK^\top=\begin{bmatrix}1&-1\\-1&1\end{bmatrix}.
 \label{eq:app-attn-scores}
\end{equation}
Row-wise softmax therefore gives the constant, nonuniform matrix
\begin{equation}
 P=\begin{bmatrix}p&1-p\\1-p&p\end{bmatrix},
 \qquad p=\frac{e}{e+e^{-1}},
 \qquad 2p-1=\frac{e-e^{-1}}{e+e^{-1}}=\tanh(1)=\mu.
 \label{eq:app-attn-P}
\end{equation}
The value vector is $V=[x+u,x-u]^\top$.  Direct multiplication gives
\begin{equation}
 PV=\begin{bmatrix}x+\mu u\\x-\mu u\end{bmatrix},
 \qquad
 \mathcal A(X(x,u))
 =\begin{bmatrix}-x-\mu u&0\\-x+\mu u&0\end{bmatrix}.
 \label{eq:app-attn-field}
\end{equation}
Thus the atom with label $c\in\{-a,+a\}$ induces the decoupled coordinates
\begin{equation}
 \dot x=c-x,
 \qquad
 \dot u=-\mu u.
 \label{eq:app-attn-cont-coordinates}
\end{equation}
For the continuous $+a$ teacher,
\begin{equation}
 x^\star=a(1-e^{-1}),
 \qquad
 u^\star=u_0e^{-\mu}.
 \label{eq:app-attn-teacher}
\end{equation}
A pure depth-$D$ Euler schedule $c_0,\ldots,c_{D-1}$ obeys
\begin{align}
 x_{k+1}&=r x_k+\frac{c_k}{D},
 &r&:=1-\frac1D,\nonumber\\
 u_{k+1}&=r_\mu u_k,
 &r_\mu&:=1-\frac\mu D.
 \label{eq:app-attn-euler}
\end{align}
The disagreement endpoint is schedule independent:
$u_D=u_0r_\mu^D$.  The largest possible mean endpoint is produced by the
all-$+a$ schedule and equals
\begin{equation}
 x_D^{\max}=a(1-r^D).
 \label{eq:app-attn-xmax}
\end{equation}
Every other schedule has at least one $-a$.  Flipping the earliest, and hence
smallest-weight, symbol from $+a$ to $-a$ decreases the endpoint by
$2ar^{D-1}/D$; consequently every nonmaximal endpoint satisfies
\begin{equation}
 x_D\le x_D^{\max}-\frac{2a}{D}r^{D-1}.
 \label{eq:app-attn-second}
\end{equation}
It remains to prove that $x^\star$ lies in the Voronoi cell of
$x_D^{\max}$.  The elementary inequality
\begin{equation}
 \log(1-y)\ge-\frac{y}{1-y},
 \qquad 0<y<1,
 \label{eq:app-log-ineq}
\end{equation}
follows because $f(y)=\log(1-y)+y/(1-y)$ satisfies $f(0)=0$ and
$f'(y)=y/(1-y)^2\ge0$.  With $y=1/D$, it gives $r^{D-1}\ge e^{-1}$.  Therefore
\begin{equation}
 x_D^{\max}-x^\star
 =a(e^{-1}-r^D)
 \le\frac{a}{D}r^{D-1}.
 \label{eq:app-attn-voronoi}
\end{equation}
By \eqref{eq:app-attn-second}, the target is no farther from the maximal
endpoint than from any other endpoint.  Hence the all-$+a$ schedule is
globally optimal for the mean coordinate.

For two points on the invariant plane,
\begin{equation}
 \norm{X(x,u)-X(\bar x,\bar u)}_F^2
 =|x-\bar x+u-\bar u|^2+|x-\bar x-u+\bar u|^2
 =2|x-\bar x|^2+2|u-\bar u|^2.
 \label{eq:app-attn-frob}
\end{equation}
The optimal mean error is
$a\{e^{-1}-r^D\}=a\delta_1(D)$, and the disagreement error is
$u_0\{e^{-\mu}-r_\mu^D\}=u_0\delta_\mu(D)$.  Substitution into
\eqref{eq:app-attn-frob} proves \eqref{eq:attention-exact-main}.
The continuous teacher itself is an admissible relaxed trajectory, so the
relaxed structural floor is zero.

Finally, for each fixed $c\in(0,1]$,
\begin{align}
 D\log\left(1-\frac cD\right)
 &=-c-\frac{c^2}{2D}+O(D^{-2}),\nonumber\\
 \left(1-\frac cD\right)^D
 &=e^{-c}\left\{1-\frac{c^2}{2D}+O(D^{-2})\right\},
 \label{eq:app-attn-series}
\end{align}
so
\begin{equation}
 \delta_c(D)=\frac{c^2e^{-c}}{2D}+O(D^{-2}).
 \label{eq:app-attn-delta-series}
\end{equation}
Applying this with $c=1$ and $c=\mu$ in the exact Frobenius formula yields
\eqref{eq:attention-asymptotic-main}.
\end{proof}

\paragraph{Why $D\ge2$ is stated.}
For $D=1$, the disagreement Euler factor remains well defined, but the mean
comparison uses $r^{D-1}$ with $r=0$ and the two-point schedule set should be
checked separately.  The theorem is intended as a depth-refinement result and
therefore begins at $D=2$.  The executable verifier nevertheless includes the
relevant boundary arithmetic in its unit tests.

\subsection{Two-sided matching-depth scaling}
\label{app:matching-depth-proof}

\begin{proof}[Proof of Corollary~\ref{cor:matching-depth-two-sided-main}]
For every $D\ge D_0$, the definition of $D_0$ gives
$c_2/D^2\le c_-/(2D)$.  Hence the lower side of
\eqref{eq:two-sided-expansion-main} implies
\begin{equation}
 E_D-E_\infty\ge\frac{c_-}{2D}.
 \label{eq:app-match-lower}
\end{equation}
If a depth on this branch matches tolerance $E_\infty+g$, then
$c_-/(2D)\le g$, or $D\ge c_-/(2g)$.  Conversely, whenever
$D\ge D_\star$ and $D\ge c_+/g$, the upper side of
\eqref{eq:two-sided-expansion-main} gives $E_D\le E_\infty+g$; replacing the
real threshold by its ceiling yields the stated sufficient integer depth.

If $D_0>1$ and $0<g<\Delta_0$, no depth $D<D_0$ can match, by the definition
of $\Delta_0$.  The minimal matching depth must then lie on the asymptotic
branch, where the lower and upper bounds are constant multiples of $g^{-1}$.
If $D_0=1$, there is no finite exceptional set.  Finally, substituting
$g(L)=\Theta(L^{-1})$ gives $D=\Theta(L)$; without a two-sided comparator gap,
only a one-sided sufficient scaling follows.
\end{proof}

\subsection{Coupling-robust compositional HJB certificates}
\label{app:compositional-hjb-proof}

\begin{proof}[Proof of Theorem~\ref{thm:compositional-hjb-main}]
At terminal time, the integral in
\eqref{eq:compositional-value-main} vanishes.  Therefore
\begin{equation}
 v_{\rm comp}(T,z)
 =\sum_i v_i(T,z_i)
 \le\sum_i\phi_i(z_i)
 \le\phi(z).
 \label{eq:app-comp-terminal}
\end{equation}
For an atom $j$, differentiate the candidate on the product tube.  Since
$-\int_t^T\beta(s)\,ds$ has time derivative $+\beta(t)$,
\begin{align}
 &\partial_tv_{\rm comp}(t,z)
 +\sum_i\nabla_{z_i}v_{\rm comp}(t,z)^\top G_j(t,z)_i\nonumber\\
 &=\sum_i\left\{\partial_tv_i(t,z_i)
     +\nabla v_i(t,z_i)^\top g_{i,j}(t,z_i)\right\}
   +\sum_i\nabla v_i(t,z_i)^\top c_{i,j}(t,z)
   +\beta(t)\nonumber\\
 &\ge0+\{-\beta(t)\}+\beta(t)=0.
 \label{eq:app-comp-hjb}
\end{align}
Thus $v_{\rm comp}$ satisfies both hypotheses of
Theorem~\ref{thm:hjb-lower-main}, which gives
\begin{equation}
 E_\infty^\phi\ge v_{\rm comp}(0,z_0)
 =\sum_i v_i(0,z_{i,0})-\int_0^T\beta(s)\,ds.
 \label{eq:app-comp-value}
\end{equation}
For the stated continuous envelopes, if $\norm{\nabla v_i}_*\le a_i(t)$ and
$\norm{c_{i,j}}\le b_i(t)$, duality gives
$\nabla v_i^\top c_{i,j}\ge-a_i(t)b_i(t)$.  Summing over blocks verifies
\eqref{eq:coupling-budget-main} with
$\beta(t)=\sum_i a_i(t)b_i(t)$.
\end{proof}

\subsection{Independent executable checks}
\label{app:new-theory-verification}

The proofs above are analytic.  The release also contains two independent
programs whose role is defensive rather than inferential.  The attention
verifier enumerates all $2^D$ schedules for $2\le D\le12$, confirms the global
all-$+a$ optimum, checks the invariant-plane identities, and compares every
endpoint with \eqref{eq:attention-exact-main}.  The integer verifier checks
all residual, accumulator, state-range, conservation, and terminal-recovery
conditions over $2{,}991{,}904$ exhaustive scalar sequences and $500$
randomized multidimensional trials.  Their inputs, outputs, and hashes are
listed in Appendix~\ref{app:reproducibility} and the release
manifest.  Passing these tests cannot prove a theorem, but disagreement would
falsify either the derivation or the implementation and therefore blocks a
release.

\section{Soft-Threshold Analytic Details and Exact Certificates}
\label{app:soft-details}
\label{app:soft-threshold-proofs}

This appendix supplies the analytic details used in
Section~\ref{sec:soft-threshold-main}.  We restate the specialized results so
that every proof below has explicit hypotheses and quantifiers.  Throughout,
$\mathcal X=B_2^m(Y)$,
$\mathcal Z=C(\mathcal X;\mathbb R^d)$ with the uniform norm, and
\[
 T_j(z)(y)=\soft_{\lambda_j}(W_jy+S_jz(y))+A_jz(y),
 \qquad G_j(z)=T_j(z)-z.
\]
Assume the finite dictionary obeys
$\norm{S_j}_2+\norm{A_j}_2\le\rho<1$ and
$\norm{W_j}_2\le W_\star$.  Put
\begin{equation}
 \mu=1-\rho,
 \qquad R=\frac{W_\star Y}{\mu},
 \qquad L_G=1+\rho,
 \qquad B=2R.
 \label{soft:eq:constants}
\end{equation}

\begin{lemma}[Well-posedness, invariance, and input regularity]
\label[lemma]{soft:lem:wellposed}
Every reference, relaxed, and pure trajectory starting from zero remains in
$\norm z_\infty\le R$.  On this ball,
\[
 \norm{G_j(z)-G_j(\widetilde z)}_\infty
 \le L_G\norm{z-\widetilde z}_\infty,
 \qquad
 \norm{G_j(z)}_\infty\le B.
\]
Every measurable relaxed control induces a unique Carath\'eodory trajectory.
The corresponding terminal maps are Lipschitz in the input with constant at
most $W_\star/\mu$.
\end{lemma}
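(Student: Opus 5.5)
I would organize the argument around the scalar contraction inequality. Since soft thresholding is nonexpansive and fixes zero, for every $y\in\mathcal X$
\[
 \norm{T_j(z)(y)}_2\le\norm{W_jy+S_jz(y)}_2+\norm{A_jz(y)}_2\le W_\star Y+\rho\norm z_\infty .
\]
This gives $\norm{T_j(z)}_\infty\le W_\star Y+\rho\norm z_\infty$, and the same bound holds for any convex combination $\sum_jp_jT_j(z)$. The Lipschitz and boundedness claims follow directly on the ball $\norm z_\infty\le R$:
\[
 \norm{G_j(z)-G_j(\widetilde z)}_\infty\le\norm{T_j(z)-T_j(\widetilde z)}_\infty+\norm{z-\widetilde z}_\infty\le(\rho+1)\norm{z-\widetilde z}_\infty,
\]
using $\norm{S_j}_2+\norm{A_j}_2\le\rho$ and nonexpansiveness. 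Also $\norm{G_j(z)}_\infty\le W_\star Y+\rho R+R\le 2R$, because $W_\star Y=\mu R\le R$.

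For invariance of the discrete trajectories, a pure or mixed Euler step with $0<h\le1$ reads $z_{k+1}=(1-h)z_k+h\sum_jp_{k,j}T_j(z_k)$. If $\norm{z_k}_\infty\le R$, then
\[
 \norm{z_{k+1}}_\infty\le(1-h)R+h(W_\star Y+\rho R)=(1-h)R+hR=R,
\]
so induction from $z_0=0$ keeps every pure and mixed Euler trajectory in the ball.

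For continuous relaxed and reference trajectories, the field $G_j$ is globally Lipschitz on $\Z$ with constant $1+\rho$, since the same estimate holds without restricting to the ball. Measurability of the control then gives a unique global Carath\'eodory solution by standard Banach-space Picard iteration. The field is also Lipschitz in $t$ only through $p$, so this uses only strong measurability. For invariance I would use the mild form
\[
 z(t)=\int_0^te^{-(t-s)}\sum_jp_j(s)T_j(z(s))\,ds ,
\]
which yields $\norm{z(t)}_\infty\le\int_0^te^{-(t-s)}(W_\star Y+\rho\norm{z(s)}_\infty)\,ds$. A comparison or continuity argument then gives $\norm{z(t)}_\infty\le R$, since $R$ is a fixed point of $r\mapsto W_\star Y+\rho r$. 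Alternatively, continuous trajectories are uniform limits of mixed Euler paths, and the closed ball is preserved in the limit.

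Continuity in $y$ is preserved because each $T_j$ maps $C(\mathcal X)$ into itself. The main obstacle I expect is the input-Lipschitz constant $W_\star/\mu$ for terminal maps. Set $\ell(t)=\sup_{y\ne y'}\norm{z(t)(y)-z(t)(y')}_2/\norm{y-y'}_2$. The map $T_j$ contributes Lipschitz constant at most $W_\star+\rho\ell$, and the mild form gives $\ell(t)\le\int_0^te^{-(t-s)}(W_\star+\rho\ell(s))\,ds$. The same fixed-point comparison as before then yields $\ell\le W_\star/(1-\rho)$. In the Euler case the identical convex-combination induction applies. The delicate point is making the comparison rigorous for a merely measurable control, and I would handle it with a Gronwall bound on $(\ell-W_\star/\mu)_+$.
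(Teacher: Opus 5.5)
Your argument follows the paper's route. You use the same pointwise estimate $\norm{T_j(z)}_\infty\le W_\star Y+\rho\norm z_\infty$, the same convex-combination induction for Euler steps with $0<h\le1$, and a scalar comparison for the continuous flow. The only technical differences are in the continuous-time step. You run the comparison through the variation-of-constants form $z(t)=\int_0^te^{-(t-s)}\sum_jp_j(s)T_j(z(s))\,ds$, where the paper uses the upper Dini derivative $D^+\norm{z(t)}_\infty\le W_\star Y-\mu\norm{z(t)}_\infty$. For well-posedness you use global $(1+\rho)$-Lipschitzness of $G_j$ on all of $\Z$, which is slightly cleaner than the paper's appeal to Lipschitz-and-bounded on the ball, because it does not presuppose that the solution stays in the ball. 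The invariance, Lipschitz, bound, and Euler parts are correct.

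One step needs repair: the input-Lipschitz bound for the continuous flow. You define $\ell(t)$ as a supremum of difference quotients of $z(t)\in C(\mathcal X;\mathbb R^d)$. A merely continuous map need not be Lipschitz, so $\ell(t)$ is not known to be finite a priori. The inequality $(\ell-W_\star/\mu)_+\le\rho\int_0^t(\ell-W_\star/\mu)_+$ is then vacuous unless that function is integrable, so Gronwall does not close the argument. The difficulty you flag, measurability of the control, is harmless; finiteness of $\ell$ is the actual issue.

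The simplest fix is the paper's pairwise argument. Fix $y\ne\widetilde y$ and put $\phi(t)=\norm{z(t)(y)-z(t)(\widetilde y)}_2$, which is continuous and bounded by $2R$. Your mild form gives $\phi(t)\le\int_0^te^{-(t-s)}\bigl(W_\star\norm{y-\widetilde y}_2+\rho\phi(s)\bigr)\,ds$. Your fixed-point comparison, using $W_\star+\rho W_\star/\mu=W_\star/\mu$, then yields $\phi(t)\le(W_\star/\mu)\norm{y-\widetilde y}_2$, with no supremum over pairs needed.

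Alternatively, start Picard iteration of the mild map at $0$. Each iterate has input-Lipschitz constant at most $W_\star/\mu$ by the same identity. The mild map is a contraction on $C([0,T];\Z)$ with constant at most $\rho<1$, so the iterates converge uniformly to the solution, and Lipschitz bounds survive uniform limits.
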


\subsection{Proofs of the Basic Analytic Properties}
\label{soft:app:wellposed}

\begin{proof}[Proof of \Cref{soft:lem:wellposed}]
Soft thresholding is nonexpansive, so
\[
  \norm{T_j(z)-T_j(\widetilde z)}_\infty
  \le (\norm{S_j}_2+\norm{A_j}_2)\norm{z-\widetilde z}_\infty
  \le \rho\norm{z-\widetilde z}_\infty.
\]
Thus $G_j=T_j-I$ is $(1+\rho)$-Lipschitz.  Moreover,
\[
  \norm{T_j(z)}_\infty\le W_\star Y+\rho\norm{z}_\infty.
\]
For an absolutely continuous relaxed trajectory, the upper Dini derivative obeys
\[
  D^+\norm{z(t)}_\infty
  \le W_\star Y-(1-\rho)\norm{z(t)}_\infty.
\]
Comparison with $r'=W_\star Y-\mu r$ gives $\norm{z(t)}_\infty\le R$.  The pure Euler update is
\[
  z^+=(1-h)z+hT_j(z),\qquad 0<h\le1,
\]
and maps the ball of radius $R$ into itself because
\[
  \norm{z^+}_\infty\le(1-h)R+h(W_\star Y+\rho R)=R.
\]
On this ball, $\norm{G_j(z)}_\infty\le\norm{T_j(z)}_\infty+\norm{z}_\infty\le2R=B$.  Standard Carath\'eodory theory in the Banach space $\Z$ gives existence and uniqueness for measurable relaxed controls because the vector field is uniformly Lipschitz and bounded.

For two inputs $y,\widetilde y$ under the same relaxed control, the difference $u(t)=z(t,y)-z(t,\widetilde y)$ satisfies
\[
  D^+\norm{u(t)}_2
  \le W_\star\norm{y-\widetilde y}_2-\mu\norm{u(t)}_2.
\]
Therefore
\[
  \norm{u(1)}_2
  \le \frac{W_\star(1-e^{-\mu})}{\mu}\norm{y-\widetilde y}_2
  \le\frac{W_\star}{\mu}\norm{y-\widetilde y}_2.
\]
For a pure or mixed Euler step with $0<h\le1$, the same comparison gives
\[
 \norm{u^+}_2\le(1-\mu h)\norm{u}_2
   +hW_\star\norm{y-\widetilde y}_2.
\]
Induction from $u_0=0$ yields the same $W_\star/\mu$ bound for every pure,
mixed, or reference Euler terminal map.
\end{proof}

\subsection{Signed-ray specialization}
Fix $B_W\in\mathbb R^{d\times m}$,
$B_S,A_0\in\mathbb R^{d\times d}$, $\bar\lambda>0$, and a finite nonnegative
gain alphabet $\mathcal C_b$.  For $c\in\mathcal C_b$, let
$W_c=cB_W$, $S_c=cB_S$, $A_c=A_0$, and
$\lambda_c=c\bar\lambda$.  Positive homogeneity gives
\begin{equation}
 G_c(z)=cT(z)+A_0z-z,
 \qquad
 T(z)(y)=\soft_{\bar\lambda}(B_Wy+B_Sz(y)).
 \label{soft:eq:ray-field}
\end{equation}
Define the scalar prefix-discrepancy radius
\begin{equation}
 \mathfrak d_b=
 \sup_{D\ge1}\sup_{c_0,\ldots,c_{D-1}\in\conv\mathcal C_b}
 \inf_{q_0,\ldots,q_{D-1}\in\mathcal C_b}
 \max_{0\le n\le D}
 \left|\sum_{k<n}(q_k-c_k)\right|.
 \label{soft:eq:scalar-discrepancy}
\end{equation}

\begin{theorem}[Signed-ray upper law and admissible-edge lower law]
\label[theorem]{soft:thm:signed-ray}
Let $c:[0,1]\to\conv\mathcal C_b$ be $K_c$-Lipschitz.  Put
\begin{align*}
 \tau&=\norm{B_S}_2,& a_0&=\norm{A_0}_2,&
 c_{\max}&=\max\mathcal C_b,\\
 R_c&=\frac{c_{\max}\norm{B_W}_2Y}
 {1-a_0-c_{\max}\tau},&
 B_T&=\norm{B_W}_2Y+\tau R_c,\\
 B_f&=2R_c,& L_f&=1+a_0+c_{\max}\tau,
\end{align*}
where $a_0+c_{\max}\tau<1$.  Then the continuous target generated by
$c(\cdot)$ has zero relaxed structural floor and a pure depth-$D$ schedule
satisfies
\begin{equation}
 E_b^{\mathrm{end}}(D;F_\infty^c)
 \le\frac{C_{\mathrm{disc}}+C_{\mathrm{quant}}\mathfrak d_b}{D},
 \label{soft:eq:ray-upper}
\end{equation}
where
\[
 C_{\mathrm{disc}}
 =\frac{(L_fB_f+K_cB_T)(e^{L_f}-1)}{2L_f},
 \qquad
 C_{\mathrm{quant}}=(B_T+\tau B_f)e^{L_f},
\]
with the continuous interpretation at $L_f=0$.

Assume additionally that $0\in\mathcal C_b$ and
$\Delta_0=\min(\mathcal C_b\cap(0,\infty))$ exists.  Suppose one scalar
channel is exposed: the corresponding rows of $B_S$ and $A_0$ vanish and,
for some input $y_0$,
$(B_Wy_0)_i-\bar\lambda=a>0$.  Then, for every $D\ge2$, there is an admissible
constant relaxed gain $c_D\in[0,\Delta_0]$ such that
\begin{equation}
 E_b^{\mathrm{end}}(D;F_\infty^{c_D})
 \ge\frac{a\Delta_0}{2D}\left(1-\frac1D\right)^{D-1}
 \ge\frac{a\Delta_0}{2eD}.
 \label{soft:eq:ray-lower}
\end{equation}
\end{theorem}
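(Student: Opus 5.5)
The proof has two independent halves: an upper bound from a quantized-gain discretization argument, and a lower bound from an exposed channel.

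\textbf{Upper law.} By \eqref{soft:eq:ray-field}, every field is affine in the gain: $G_c(z)=cT(z)+A_0z-z$. The relaxed flow with gain path $c(t)\in\conv\mathcal C_b$ is therefore exactly the reference trajectory $\dot z=c(t)T(z)+A_0z-z$. Its endpoint $F_\infty^c$ lies in the closed relaxed set, so the structural floor is zero.

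First I check the tube constants. The argument of \Cref{soft:lem:wellposed}, with $\rho$ replaced by $a_0+c_{\max}\tau$ and $W_\star$ by $c_{\max}\norm{B_W}_2$, keeps every continuous and Euler state in $\norm z\le R_c$. On that ball $\norm{T}\le B_T$, $\norm{G_c}\le B_f$, and $\operatorname{Lip}(G_c)\le L_f$.

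Next I compare the continuous flow with the Euler scheme using sampled gains $c(t_k)$. The local defect has a state part $L_fB_fh^2/2$ and a gain part at most $K_cB_Th^2/2$. Discrete Gronwall then gives the $C_{\mathrm{disc}}/D$ term.

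Then I replace $c(t_k)$ by scalar symbols $q_k\in\mathcal C_b$ whose prefix sums stay within $\mathfrak d_b$ of those of $c(t_k)$. This is the scalar analogue of Lemma~\ref{lem:online-rounding-main}, applied directly to the gain rather than to simplex weights. The forcing is $f_k=(q_k-c_k)T(u_k)$. Abel summation bounds the prefix forcing by $\mathfrak d_b\bigl(B_T+\sum_k\norm{T(u_{k+1})-T(u_k)}\bigr)$, and since $\norm{T(u_{k+1})-T(u_k)}\le\tau B_fh$ this is at most $\mathfrak d_b(B_T+\tau B_f)$. The transformed-error argument of Theorem~\ref{thm:bv-reachability-v2} then yields $h\,\mathfrak d_b(B_T+\tau B_f)e^{L_f}$. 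The two pieces add to \eqref{soft:eq:ray-upper}.

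\textbf{Lower law.} On the exposed channel $i$ at input $y_0$, the rows of $B_S$ and $A_0$ vanish, so the channel obeys $\dot x=c\,\soft_{\bar\lambda}((B_Wy_0)_i)-x=ca-x$. An Euler step with symbol $q_k$ is $x_{k+1}=r_Dx_k+aq_k/D$, where $r_D=1-D^{-1}$.

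The relevant endpoints are $x_D=(a/D)\sum_k r_D^{D-1-k}q_k$. Restrict to schedules using only $0$ and $\Delta_0$, and consider the two candidates: all symbols $0$, and a single $\Delta_0$ placed at step $0$. These endpoints differ by $g=a\Delta_0r_D^{D-1}/D$. Every other schedule either uses a symbol of at least $\Delta_0$ somewhere, giving an endpoint $\ge g$, or uses all zeros, giving endpoint $0$. So every pure endpoint avoids the open interval $(0,g)$.

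The continuous constant-gain endpoint $a c(1-e^{-1})$ is continuous in $c$, increasing, and ranges over $[0,a\Delta_0(1-e^{-1})]$. This range contains $g/2$, because $r_D^{D-1}/D\le 1/2<1-e^{-1}$ for $D\ge2$. Choose $c_D\in[0,\Delta_0]$ so that the teacher's projection equals $g/2$. Since $\norm{\ell}_*\le1$, projection onto the channel gives error at least $g/2$, and $r_D^{D-1}\ge e^{-1}$ by \eqref{eq:app-log-ineq} gives the final inequality.

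The main obstacle is the gap claim: that no pure endpoint lies in $(0,g)$ when positive symbols smaller than $\Delta_0$ are absent but larger ones are present. Any positive symbol is $\ge\Delta_0$ and every weight is $\ge r_D^{D-1}$, so any nonzero schedule already has endpoint $\ge g$; this is what I would verify carefully.
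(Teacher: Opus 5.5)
Your proposal is correct and follows the paper's proof essentially step for step: the same tube constants, the same sampled-gain Euler comparison giving $C_{\mathrm{disc}}/D$, the same Abel-summation and transformed-error bound for the gain-quantization term, and the same exposed-channel argument (endpoints avoid $(0,g)$, with the target placed at $g/2$). The only differences are cosmetic: the paper writes $c_D=\Delta_0 r_D^{D-1}/(2D(1-e^{-1}))$ explicitly instead of using monotonicity, and the symbols $q_k$ exist directly by the definition of $\mathfrak d_b$ (a minimum over a finite set), not by an analogue of the online rounding lemma.
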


\begin{theorem}[Certified computation for rational PWA soft-threshold systems]
\label[theorem]{thm:computability}
Consider a soft-threshold residual system in fixed state and input dimensions
whose atom fields are supplied by finite rational piecewise-affine descriptions.
Assume the input domain is a rational polytope or rational Euclidean ball for
which certified rational nets can be constructed; the target path is supplied
by a finite rational piecewise-affine description or by a terminating
outward-rounded enclosure routine with a known rational modulus; and rational
bounds are supplied for a common state tube and for state, time, and input
regularity.  For every rational $\delta>0$, a terminating finite algorithm
returns
\[
 \underline E\le E_{\infty,b}^{\mathrm{end}}(F^\star)\le\overline E,
 \qquad
 0\le\overline E-\underline E\le\delta.
\]
The running time may be exponential in dimension, dictionary size, and
$1/\delta$.
\end{theorem}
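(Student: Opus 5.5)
The plan is to prove the statement by reducing the infinite-dimensional floor to a finite computation with explicitly controlled discretization radii, and to certify every step with rational arithmetic. We use three reductions, each with a rational error bound.

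\textbf{Time discretization.} By Theorem~\ref{thm:floor-rate-main} (smooth case, with the supplied rational $B,L_z,L_t$ and tube bounds), $|E_b^{\mathrm{end}}(D;F^\star)-E_{\infty,b}^{\mathrm{end}}(F^\star)|\le C_{\mathrm{syn}}^{\mathrm{end}}/D$. We replace $e^{L_zT}$ by a rational upper bound, so $C_{\mathrm{syn}}^{\mathrm{end}}$ is rationally computable and $D$ can be chosen with $C_{\mathrm{syn}}^{\mathrm{end}}/D\le\delta/4$. The floor is then within $\delta/4$ of the finite minimum over the $J^D$ pure schedules.

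\textbf{Input discretization.} For each schedule the endpoint error is $\sup_{y\in\mathcal X}\norm{F_{D,\sigma}(y)-F^\star(y)}$. Lemma~\ref{soft:lem:wellposed} makes every pure terminal map $(W_\star/\mu)$-Lipschitz in $y$, with rational constants. The target has a known rational modulus. We therefore build a certified rational net of $\mathcal X$ fine enough that the supremum over the net underestimates the full supremum by at most $\delta/4$.

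\textbf{Finite evaluation.} At each net point and schedule, the Euler recursion is a composition of rational PWA maps, so it can be evaluated exactly in rational arithmetic. The target is evaluated exactly if PWA, or by an outward-rounded enclosure of width at most $\delta/4$ otherwise. The Euclidean norm is not rational, so we bracket it by rational upper and lower bounds: squared norms are exact rationals, and square roots are bracketed by bisection to width $\delta/8$. Taking the max over the net and then the min over schedules gives $[\underline E_D,\overline E_D]$. Monotonicity of max and min preserves the brackets.

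\textbf{Assembling the bracket.} We set $\underline E=\max\{0,\underline E_D-C/D\}$ and $\overline E=\overline E_D+C/D+\text{net radius}$. The total width is at most $\delta$ by the budget split above. Termination is immediate because every loop is finite, and the running time is exponential as stated: the schedule count is $J^D$ and the net size grows like $(1/\delta)^m$.

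The main obstacle is making the net step fully sound. The Lipschitz transfer must cover the supremum over all of $\mathcal X$, not just the net, with a rationally certified covering radius for the ball case. One can use an axis-aligned grid on a circumscribed box and check membership conservatively: points near the boundary are projected radially, and the Lipschitz bound absorbs that distance. A second subtlety is that Theorem~\ref{thm:floor-rate-main} is a Hausdorff bound on raw sets while the floor uses closures; this is already handled by the closure step in that theorem, so no further work is needed there.
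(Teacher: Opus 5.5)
Your argument is correct, but it discretizes the relaxed control set differently from the paper. The paper never passes through pure schedules. It averages a measurable relaxed control over $N_t$ time cells, which costs a mixed-Euler defect $C_E/N_t$. It then rounds each cell average to the rational simplex grid $\Delta_{J,Q}$, which costs a Gronwall control-perturbation term $2BJ(e^{L_G}-1)/(L_GQ)$. Finally it enumerates all $\binom{Q+J-1}{J-1}^{N_t}$ grid-controlled mixed-Euler endpoints on the input net. The finite set it searches is therefore a direct Hausdorff approximation of the relaxed reachable set.

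You instead use the pure-to-relaxed synthesis theorem as a black box. A single parameter $D$ then controls the time discretization, and you enumerate only $J^D$ pure schedules. What each route buys:
\begin{itemize}
\item Your route is shorter given the main theorem. As a by-product it brackets the finite-depth pure optimum $E_D$ itself, which is exactly the quantity the depth-$D$ feasible/impossible decision uses.
\item The paper's route is elementary and self-contained: it uses no balanced switching or Abel summation, only averaging and Gronwall. Its constants are linear rather than quadratic in $J$. It also needs Euler admissibility of the tube only for mixed schedules.
\end{itemize}

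Two small repairs are needed.
\begin{itemize}
\item \textbf{Budget.} $C/D$ is subtracted in $\underline E$ and added in $\overline E$, so it enters the width twice. With $\delta/4$ each for $C/D$, the net, and the target enclosure, plus $\delta/8$ for the square roots, the width can exceed $\delta$. Shrink the fractions so they sum to at most $\delta$.
\item \textbf{Rational net.} Radially projecting grid points onto the ball generally gives irrational coordinates, which breaks your exact rational evaluation. Replace the projection by a rational interior point, for example by scaling with a rational factor certified through exact comparison of squared norms, as the paper does. Absorb the extra distance into the covering radius.
\end{itemize}
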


\subsection{High-Resolution Euler Error}
\label{soft:app:euler}

Assume the weighted time regularity
\begin{equation}
  Y\norm{W(t)-W(s)}_2+R\norm{S(t)-S(s)}_2+R\norm{A(t)-A(s)}_2+
  \sqrt d\,|\lambda(t)-\lambda(s)|\le K_f|t-s|.
  \label{soft:eq:time-reg}
\end{equation}
Then $G_{\theta(t)}$ is $K_f$-Lipschitz in time on the invariant ball.  The residual Euler map
\[
  \Phi_{t,h}(z)=(1-h)z+hT_{\theta(t)}(z)
\]
is $(1-\mu h)$-Lipschitz for $0<h\le1$.  A local truncation estimate gives
\[
  \norm{z(t+h)-z(t)-hG_{\theta(t)}(z(t))}_\infty
  \le\frac{L_GB+K_f}{2}h^2.
\]
Therefore
\begin{equation}
  \norm{F_{D,\mathrm H}^\theta-F_\infty^\theta}_\infty
  \le \frac{L_GB+K_f}{2\mu D}.
  \label{soft:eq:euler-upper}
\end{equation}
For the scalar equation $x'=1-x$, Euler gives $x_D=1-(1-1/D)^D$ and the exact solution gives $1-e^{-1}$.  Using
\[
  D\log(1-1/D)\le -1-\frac{1}{2D}
\]
and $1-e^{-u}\ge u/2$ for $0\le u\le1$ yields, for every $D\ge2$,
\begin{equation}
  e^{-1}-\left(1-\frac1D\right)^D
  \ge \frac{1}{4eD}.
  \label{soft:eq:euler-lower}
\end{equation}

\subsection{Proof of the Signed-Ray Theorem}
\label{soft:app:signed-ray}

\begin{proof}[Proof of \cref{soft:thm:signed-ray}]
Because $c(t)\in\conv\mathcal C_b$, choose measurable simplex weights $p_j(t)$ satisfying $c(t)=\sum_jp_j(t)c_j$.  Equation~\eqref{soft:eq:ray-field} gives
\[
  G_{c(t)}(z)=\sum_jp_j(t)G_{c_j}(z),
\]
so the target flow is itself relaxed reachable and the infinite-depth floor is zero.

For the upper bound, compare the target Euler scheme with gains $c_k=c(k/D)$ to a pure scheme with gains $q_k\in\mathcal C_b$ whose prefix discrepancy is bounded by $\mathfrak d_b$.  On the invariant ball, $\norm{T(z)}\le B_T$, $\Lip(T)\le\tau$, and $\norm{G_c(z)}\le B_f$.  The high-resolution discretization term follows from the local truncation argument in \cref{soft:app:euler}, with $L_fB_f+K_cB_T$ in place of $L_GB+K_f$.

For the gain-quantization term, set $d_k=q_k-c_k$ and $R_n=\sum_{k<n}d_k$.  Abel summation gives
\begin{align*}
  \left\|\sum_{k<n}d_kT(z_k)\right\|_\infty
  &\le \mathfrak d_b B_T+
  \mathfrak d_b\sum_{k<n}\norm{T(z_{k+1})-T(z_k)}_\infty\\
  &\le \mathfrak d_b(B_T+\tau B_f).
\end{align*}
The transformed-error recursion from \cref{thm:floor-rate-main}, with Lipschitz constant $L_f$, gives the terminal contribution
\[
  \frac{(B_T+\tau B_f)e^{L_f}\mathfrak d_b}{D},
\]
which proves \eqref{soft:eq:ray-upper}.

For the lower bound, restrict to the exposed coordinate and input $y_0$.  Because the selected rows of $B_S$ and $A_0$ vanish, that coordinate obeys
\begin{equation}
  x_{k+1}=\left(1-\frac1D\right)x_k+\frac{a}{D}q_k,
  \qquad q_k\in\mathcal C_b.
  \label{soft:eq:exposed-recursion}
\end{equation}
Let $r=1-1/D$.  Since the gains are nonnegative, the zero schedule gives endpoint $0$.  The smallest positive pure endpoint is obtained by using the smallest positive gain $\Delta_0$ at the earliest step and zero elsewhere; it equals
\[
  g_D=\frac{a\Delta_0}{D}r^{D-1}.
\]
Choose the constant relaxed gain
\begin{equation}
  c_D=\frac{\Delta_0 r^{D-1}}{2D(1-e^{-1})}.
  \label{soft:eq:valid-hard-gain}
\end{equation}
Because $2D(1-e^{-1})\ge1$, one has $0\le c_D\le\Delta_0$, so this is an admissible target in $\conv\mathcal C_b$.  Its continuous endpoint is $a(1-e^{-1})c_D=g_D/2$.  Every pure endpoint is either zero or at least $g_D$, and therefore its error is at least $g_D/2$.  The inequality $r^{D-1}\ge e^{-1}$ proves \eqref{soft:eq:ray-lower}.
\end{proof}

\section{Certified Computation of the Infinite-Depth Low-Bit Floor}
\label{soft:app:computability}

\begin{proof}[Proof of \cref{thm:computability}]
The theorem is stated for finite rational piecewise-affine soft-threshold atom fields on a rational polytope or rational Euclidean ball with an effective certified-net construction.  The target path is either given by a finite rational piecewise-affine description or by a terminating outward-rounded enclosure routine with a known rational modulus.  A common rational state tube and rational state, time, and input regularity bounds are supplied, and every elementary operation is exactly rational or evaluated by outward-rounded interval arithmetic.

Fix the requested certificate width $\delta>0$.  Let $K_y$ be the maximum of the supplied target input-Lipschitz bound and the reachable-map bound $W_\star/\mu$ from \Cref{soft:lem:wellposed}.

\paragraph{1. Finite input reduction.}
Choose a rational $\eta$-net $Y_\eta=\{y_1,\ldots,y_N\}$ of the computably compact domain, where $\eta>0$ is the input-net radius.  For any two admissible terminal maps,
\begin{equation}
  \max_i\norm{F(y_i)-G(y_i)}_2
  \le \norm{F-G}_\infty
  \le \max_i\norm{F(y_i)-G(y_i)}_2+2K_y\eta.
  \label{soft:eq:net-bracket}
\end{equation}
For a rational Euclidean ball or rational polytope, invoke the certified rational-net construction required in the theorem statement.  Concretely, one may enumerate a sufficiently fine rational lattice in a rational outer box, retain interior points, and add rational interior approximants to boundary cells; exact rational distance bounds certify that every domain point lies within $\eta$ of a retained point.  This avoids assuming that an exact Euclidean projection of a rational point is rational.  The same control drives all $N$ inputs, so we propagate the joint state in $(\R^d)^N$ with the block maximum norm.

\paragraph{2. Finite approximation of relaxed controls.}
Partition $[0,1]$ into $N_t$ intervals and let $h=1/N_t$. Replace a measurable control by its interval averages. Define
\[
  C_E:=\frac{(L_GB+L_{t,G})(e^{L_G}-1)}{2L_G},
\]
with the continuous limit at $L_G=0$, where $L_{t,G}$ is the supplied time-Lipschitz bound for the specialized field family. The mixed Euler endpoint then differs from the relaxed endpoint by at most $C_E/N_t$.

Use the rational simplex grid
\[
  \Delta_{J,Q}=\left\{(n_1/Q,\ldots,n_J/Q):n_j\in\mathbb Z_{\ge0},\ \sum_jn_j=Q\right\}.
\]
Every $p\in\Delta_J$ has $q\in\Delta_{J,Q}$ with $\norm{p-q}_1\le2J/Q$.  If $z_k$ and $\widetilde z_k$ are mixed Euler trajectories driven by these controls, then
\[
  \norm{z_{k+1}-\widetilde z_{k+1}}
  \le(1+hL_G)\norm{z_k-\widetilde z_k}
  +hB\norm{p_k-q_k}_1,
\]
and hence
\begin{equation}
  \norm{z_{N_t}-\widetilde z_{N_t}}
  \le \frac{2BJ(e^{L_G}-1)}{L_GQ}.
  \label{soft:eq:simplex-grid-error}
\end{equation}
Let $E_{Y_\eta}$ denote evaluation of a terminal map on all points of the net $Y_\eta$. Thus the finite set $\mathcal S_{N_t,Q}$ of grid-controlled mixed Euler endpoints satisfies
\begin{equation}
  d_H\bigl(E_{Y_\eta}(\RR_b),\mathcal S_{N_t,Q}\bigr)
  \le \frac{C_E}{N_t}+\frac{2BJ(e^{L_G}-1)}{L_GQ}.
  \label{soft:eq:finite-relaxed-approx}
\end{equation}

\paragraph{3. Certified target enclosure.}
Evaluate the target terminal map jointly on $Y_\eta$ using its finite rational
piecewise-affine description or the supplied terminating outward-rounded
enclosure routine.  Denote the resulting radius by $\varepsilon_T(N_0)$,
where $N_0$ is the evaluator's refinement parameter.  A common special case is
a target generated by a rational time-dependent field with state Lipschitz
constant $L_T$, speed bound $B_T$, and time-Lipschitz constant $L_{t,T}$; then
an $N_0$-step rational Euler enclosure has radius
\[
  \varepsilon_T(N_0)=\frac{(L_TB_T+L_{t,T})(e^{L_T}-1)}{2L_TN_0},
\]
with the continuous limit when $L_T=0$.  In every allowed representation,
the assumed modulus guarantees $\varepsilon_T(N_0)\downarrow0$ effectively.

\paragraph{4. Finite exact distance and error bracket.}
Enumerate
\[
  \left|\Delta_{J,Q}\right|^{N_t}
  =\binom{Q+J-1}{J-1}^{N_t}
\]
grid-control sequences.  In the rational PWA setting their joint mixed Euler endpoints are rational.  Compute the minimum block-$\ell_\infty/\ell_2$ distance from the target enclosure to this finite set and enclose square roots by rational bisection.  If the finite computation returns $[d_-,d_+]$, set
\begin{align}
  \underline E&=\max\{0,d_- -\varepsilon_{\mathrm{tot}}\},\\
  \overline E&=d_+ +\varepsilon_{\mathrm{tot}},
\end{align}
where
\[
  \varepsilon_{\mathrm{tot}}
  =2K_y\eta+\frac{C_E}{N_t}
  +\frac{2BJ(e^{L_G}-1)}{L_GQ}
  +\varepsilon_T(N_0).
\]
Choose $\eta,N_t,Q,N_0$ and the square-root enclosure so that $2\varepsilon_{\mathrm{tot}}+(d_+-d_-)\le\delta$.  Since clipping the lower endpoint at zero can only reduce the interval width, this implies $\overline E-\underline E\le\delta$.  Every search is finite and all arithmetic is rational or outward rounded, so the procedure terminates.  A broader computable-analysis analogue is possible for computably compact domains and computable fields with known moduli, but is not needed for the certified claims in this paper.
\end{proof}

The six certified operations and their individual error contributions are summarized in \cref{soft:tab:certified-floor}.

\begin{table}[t]
\centering
\caption{Certified infinite-depth-floor approximation for the rational soft-threshold/PWA class.  The parameters are increased until the sum of the explicit error contributions is at most the requested interval width $\delta$.}
\label{soft:tab:certified-floor}
\small
\begin{tabular}{Y{0.07\linewidth}Y{0.42\linewidth}Y{0.40\linewidth}}
\toprule
Step & Certified operation & Error contribution\\
\midrule
1 & Construct a rational $\eta$-net of the computably compact input domain and propagate every net point under the same control. & Input extension: $2K_y\eta$.\\
2 & Partition time into $N_t$ intervals and replace each measurable control by its interval average. & Mixed-Euler defect: $C_E/N_t$.\\
3 & Quantize interval controls to the rational simplex grid $\Delta_{J,Q}$. & Control-grid defect: $2BJ(e^{L_G}-1)/(L_GQ)$.\\
4 & Enclose the target with $N_0$ rational or outward-rounded integration steps. & Target radius: $\varepsilon_T(N_0)$.\\
5 & Enumerate the finite grid-control family and compute its minimum joint distance to the target enclosure. & Rational square-root enclosure.\\
6 & Enlarge the finite minimum by the preceding terms and clip the lower endpoint at zero. & Final width bounded by $2\varepsilon_{\mathrm{tot}}+(d_+-d_-)\le\delta$.\\
\bottomrule
\end{tabular}
\end{table}

\section{Rational Certificate Methodology for the Matrix-Valued Example}
\label{soft:app:pwa}

The matrix certificate in \cref{sec:matrix-cert} combines exact schedule enumeration with a piecewise-affine domain bracket.

\paragraph{Witness lower bounds.}
For a fixed depth, all $2^D$ schedules are evaluated on a polar witness set fixed before schedule evaluation of $1281$ points.  The minimum of the resulting maxima is the exact global optimum on that set and therefore a lower bound on the full-disk optimum.

\paragraph{Full-disk upper and lower bounds.}
An exact rational $n$-gon with vertices on the unit circle gives an inscribed domain subset and hence a lower bound on a supremum.  Intersections of the corresponding tangent half-spaces give a rational circumscribed polygon and hence an upper bound.  For the student, high-resolution model, and reference, the algorithm propagates affine maps jointly through every soft-threshold activation split.  On each resulting polygonal cell, the difference is affine and its Euclidean norm is convex, so its maximum occurs at a cell vertex.  The $n=16$ bounds reported in \eqref{eq:D7-fail}--\eqref{eq:D8-win} involve $2432$ inner and $2504$ outer joint cells.

\paragraph{Exact rational data.}
The teacher matrices are
\[
W^{(0)}=\begin{bmatrix}13/20&0\\0&13/20\end{bmatrix},\quad
W^{(1)}=\begin{bmatrix}0&-13/20\\13/20&0\end{bmatrix},
\]
\[
S^{(0)}=\begin{bmatrix}1/10&0\\0&-1/10\end{bmatrix},\quad
S^{(1)}=\begin{bmatrix}0&1/10\\-1/10&0\end{bmatrix},\quad
(\lambda^{(0)},\lambda^{(1)})=(7/20,0).
\]
The deterministic lower-tie 4-bit quantizer produces
\[
\widehat W^{(0)}=\begin{bmatrix}13/20&-13/300\\-13/300&13/20\end{bmatrix},\quad
\widehat W^{(1)}=\begin{bmatrix}-13/300&-13/20\\13/20&-13/300\end{bmatrix},
\]
\[
\widehat S^{(0)}=\begin{bmatrix}1/10&-1/150\\-1/150&-1/10\end{bmatrix},\quad
\widehat S^{(1)}=\begin{bmatrix}-1/150&1/10\\-1/10&-1/150\end{bmatrix},\quad
(\widehat\lambda^{(0)},\widehat\lambda^{(1)})=(49/150,0).
\]
The successful schedule is $00000101$; the best depth-$7$ witness schedule is $0100001$.

\paragraph{Audit trail.}
The exact squared bounds are stored as rational numerators and denominators in \path{certificates/exact_matrix_certificate.json}.  The script \path{code/verify_all_certificates.py} checks every $D\le7$ witness inequality and the decisive depth-$8$ outer-versus-inner inequality; its \path{--recompute-pwa} option regenerates the rational polygonal decomposition from the matrices above.  Dense heatmaps in \cref{fig:matrix-cert} are visualizations only and are not used in the proof.

\section{Additional Experimental and Search Details}
\label{app:experiments}

This appendix records the parameter choices, finite search domains, numerical
references, and evidence boundaries behind Section~\ref{sec:certified-evidence-main}.
Every plotted curve is generated from a fixed input family or a fixed
pretrained checkpoint declared before the depth sweep.  Synthetic first-order
claims are supported by exact formulas, exhaustive search, or plateaus in
$DE_D$ across depth doublings rather than by an unconstrained regression alone.

\subsection{Small attention and MLP systems}
The small attention experiments use deterministic low-dimensional token
systems, one or two complete residual atoms, regularized normalization, and a
small tanh MLP.  Exact matrices, input families, and seeds are recorded in the
released experiment scripts.  The input family combines structured sequences
with seeded random matrices projected to the declared norm ball.  Continuous
references use a substantially finer RK4 grid than any displayed student
depth.  Unless a caption states an analytic invariant-subspace floor, plotted
errors are maxima over the fixed witness family.

\subsection{Codebook resource accounting}
The attention-output codebook study charges every complete atom by its stored
quantized scalars and charges a depth-$D$ pure program by its schedule bits.
This intentionally simple storage model isolates dictionary geometry and
schedule length.  It does not claim a latency or energy model: alignment,
packing, activation traffic, kernel launch cost, and platform-specific
metadata are excluded unless explicitly named.  The resource conclusions are
therefore comparisons under the declared accounting, not universal hardware
rankings.

\subsection{Exhaustive noncommuting search}
For two complete noncommuting attention--MLP atoms, every one of the $2^D$
schedules is evaluated through $D=16$ on a witness set fixed before schedule
evaluation.  The target is a high-resolution integration of the equal relaxed
mixture.  Coordinate descent flips one schedule bit at a time until no
improving flip remains and is restarted from seeded random schedules.  The
balanced schedule is generated by the online rounding rule of
Lemma~\ref{lem:online-rounding-main}.  Global optima, balanced schedules, and
local-search distributions are retained separately.

\subsection{Numerical precision and visual generation}
Synthetic simulations use double-precision NumPy unless the released manifest
states otherwise; the medium-scale attention experiment uses FP32 PyTorch with
a high-resolution reference.  Random seeds are fixed.  Source
tables are released alongside the figure scripts.

\section{Attention, Transformer, and Routed-System Settings}
\label{app:transformer-experiments}

The attention and routing studies are deterministic and generated by the
released scripts.  The normalized state domains, atom matrices, gain
alphabets, router margins, schedule searches, and random seeds are recorded in
the data manifest.  The principal settings are summarized below.

\begin{table}[ht]
\centering
\caption{Transformer and routed-system experiment settings.  All precision
codebooks are finite and fixed before each depth sweep.}
\label{tab:transformer-settings}
\small
\begin{tabular}{p{0.22\linewidth}p{0.24\linewidth}p{0.45\linewidth}}
\toprule
Experiment & State size & Design\\
\midrule
Attention structural split & two tokens, width two & Two state-dependent attention atoms; one compatible target and one missing-output-direction target\\
Gain-alphabet rate & two tokens, width two & Binary, ternary, uniform 2-bit, and uniform 3-bit scalar gain alphabets\\
Stability sweep & two tokens, width two & Normalization regularizer in $\{1/16,1/4,1\}$ and residual-block scale in $\{0.5,1,1.5\}$\\
Codebook phase diagram & two tokens, width two & Binary through 4-bit output codebooks with exact infinite-depth floors\\
Hard routing diagnostic & two tokens, two experts & Fixed-route margin sweep plus the separate transversal-event illustration of Figure~\ref{fig:hybrid-route-window-main}\\
Medium scale & $n=8$, width $16$, two heads & 96 held-out random input sequences\\
Scale sweep & $(4,8,1)$ to $(16,32,4)$ & First-order compatible curves and orthogonal infinite-depth floors\\
Noncommuting blocks & two tokens, width two & Exhaustive schedules through $D=16$ and seeded local-search restarts\\
\bottomrule
\end{tabular}
\end{table}

\begin{table}[ht]
\centering
\caption{Scale-sweep summary.  $E_D$ is the maximum terminal error at depth
$D$; the compatible case is summarized through a representative large-depth
product $DE_D$, and the orthogonal column is the invariant-subspace floor.}
\label{tab:scale-sweep-summary}
\small
\begin{tabular}{cccc}
\toprule
Tokens $n$ & Width $d$ & Representative $DE_D$ & Orthogonal floor\\
\midrule
4 & 8 & $3.45\times10^{-3}$ & $0.1828$\\
8 & 16 & $3.77\times10^{-3}$ & $0.2589$\\
16 & 32 & $6.55\times10^{-3}$ & $0.3389$\\
\bottomrule
\end{tabular}
\end{table}

\section{Pretrained DistilBERT Study}
\label{app:pretrained-experiment}

The pretrained experiment uses the public
\path{distilbert-base-uncased-finetuned-sst-2-english} checkpoint at the
resolved commit recorded in the release manifest
\citep{SanhEtAl2019DistilBERT,HuggingFaceDistilBERTSST2}, through the
Transformers software stack \citep{WolfEtAl2020Transformers}.  The evaluation
task is SST-2 from GLUE \citep{WangEtAl2019GLUE,SocherEtAl2013SST}.  We study
the feed-forward residual branches before output LayerNorm in Transformer
layers 1 and 3.  The direct target is the original one-step branch.  The
coherent target is a depth-32 high-precision Euler refinement of the same
pretrained residual field over horizon $T=1$.

For each layer, student depths are $D\in\{4,8,16\}$.  A learned dictionary
contains two complete 4-bit feed-forward residual atoms.  Quantization-aware
training uses 512 training examples, 128 calibration examples, 350 steps, and
three fixed seeds.  The primary pure program is obtained by hardening the
learned time-dependent relaxed controls and applying two local schedule
refinement passes.  All 872 SST-2 validation examples are used for full-model
evaluation.  Student-$t$ intervals summarize seed variation; the release also
contains a hierarchical bootstrap over seeds and examples.

\begin{table}[ht]
\centering
\caption{Pretrained-model experiment configuration.}
\label{tab:pretrained-settings}
\small
\begin{tabularx}{\linewidth}{@{}Y{.23\linewidth}X@{}}
\toprule
Item & Setting\\
\midrule
Model & Public DistilBERT SST-2 checkpoint; resolved revision recorded in the release\\
Residual branches & Feed-forward branches in layers 1 and 3\\
Reference construction & Horizon $T=1$, high-precision depth 32\\
Student depths & $D\in\{4,8,16\}$\\
Dictionary & Two learned 4-bit feed-forward residual atoms\\
QAT seeds & Three fixed seeds listed in the manifest\\
QAT data and steps & 512 train, 128 calibration, 350 steps\\
Evaluation & All 872 SST-2 validation examples\\
Primary pure schedule & Hardening plus two local-refinement passes\\
Stored-parameter ratio & Approximately $3.97\times$ under the declared accounting\\
\bottomrule
\end{tabularx}
\end{table}

The first-depth direct errors are small but nonzero because hidden states and
model operations are evaluated in finite floating-point arithmetic.  This does
not affect the causal contrast, which spans several orders of magnitude.  The
pure-schedule compilation gaps relative to the optimized relaxed programs are
shown in Figure~\ref{fig:pretrained-compilation}.  At layer 3 they remain below
approximately $0.22\%$ at every tested depth; at layer 1 they fall below $1\%$
by $D=16$.

\begin{figure}[ht]
\centering
\includegraphics[width=.72\linewidth]{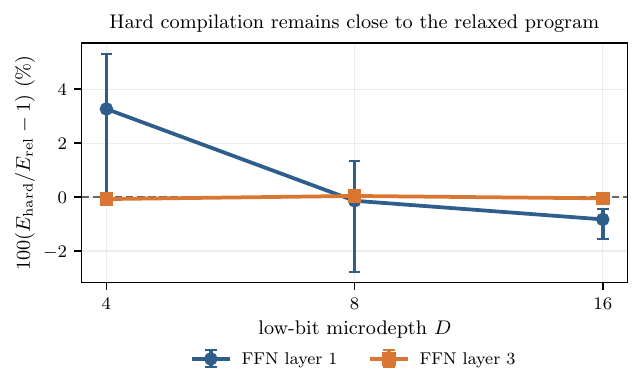}
\caption{Relaxed-to-pure compilation gap in the pretrained experiment.  Bars
show $(E_{\rm hard}/E_{\rm relaxed}-1)\times100\%$ for locally refined pure
schedules.  The small gaps demonstrate that the principal positive result is
realized by pure programs rather than only by convex mixtures.}
\label{fig:pretrained-compilation}
\end{figure}

The compact release contains the complete seed-level trajectories, candidate
metrics, confidence intervals, depth contrasts, per-example arrays, environment
record, and resolved model/data manifest.  Large model caches and resumable QAT
checkpoints are deliberately not duplicated in the compact archive.  The study
quantizes the feed-forward branch dictionary while hidden states, nonlinear
operations, and accumulation use floating point.  It is therefore evidence for
target coherence, dictionary learning, and pure compilation---not a claim of
integer-only latency or energy improvement.

\section{Supplementary Multi-Teacher Matrix Study}
\label{app:multiteacher}

To test whether the exact matrix certificate is an isolated construction, we
also evaluate eight independently specified $2\times2$ recurrent teachers,
five precision models, and nine depths.  The campaign contains 360 globally
optimized schedule problems and 255 whole-input interval-certificate rows.  It
finds exact matches, certified feasible-depth upper bounds, all-depth
floor-impossibility cases, and unresolved cases at the tested certification
cap.  The study is deliberately secondary: its median matching-depth
prediction factor is $4.0$, so it demonstrates breadth of the mechanisms rather
than a universally calibrated predictor.

\begin{figure}[ht]
\centering
\includegraphics[width=\linewidth]{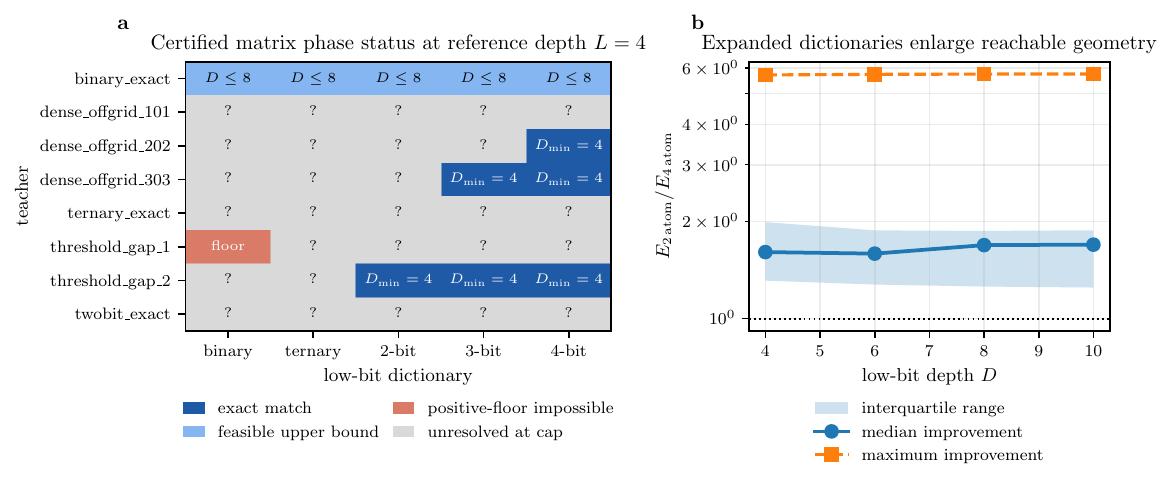}
\caption{Supplementary breadth evidence.  \textnormal{(a)} The eight-teacher
matrix campaign contains exact matches, certified feasible upper bounds,
positive-floor impossibilities, and unresolved cases.  \textnormal{(b)}
Expanding the complete dictionary from two to four atoms improves the globally
optimized error by a median factor of roughly $1.7$ and by more than $5$ in the
strongest cases.  Dictionary geometry, not nominal bit count alone, controls
both relaxed reachability and the finite-depth constant.}
\label{fig:broad-matrix}
\end{figure}

\section{Reproducibility Checklist and Artifact Map}
\label{app:reproducibility}

Every headline theorem is tied to an analytic proof, a rational or symbolic
certificate, or a declared finite-grid experiment.  Table~\ref{tab:repro} gives
the shortest route from each numerical claim to the released artifact.  Paths
are relative to the source-bundle root.

\begin{table}[ht]
\centering
\caption{Mapping from claims to released artifacts and verification entry
points.}
\label{tab:repro}
\small
\begin{tabular}{p{0.28\linewidth}p{0.62\linewidth}}
\toprule
Claim or figure & Released artifact and verification entry point\\
\midrule
Resource arithmetic phase & \path{data/arithmetic_resource_phase_v8.csv} and \path{code/make_resource_figures_v8.py}\\
Fixed-teacher necessity and metadata surface & \path{data/fixed_teacher_exact_v8.csv}, \path{data/metadata_depth_surface_v7.csv}, and the same figure script\\
Hybrid event-window illustration & \path{data/hybrid_route_window_v8.csv} and the same figure script\\
Soft-threshold structural split & \path{data/structural_split.csv} and \path{code/make_figures.py}\\
Signed-ray rate & \path{data/minimax_edge_law.csv}; exact formula checked by the theory sanity suite\\
Matrix $D_{\rm match}=2L$ & \path{certificates/exact_matrix_certificate.json}, \path{certificates/exact_witness_D1_D7.json}, and \path{code/verify_exact_matrix.py}\\
Attention and routing studies & Files beginning \path{data/transformer_}; regenerate with the transformer scripts\\
Medium-scale attention & \path{data/transformer_medium_scale_transformer.csv}\\
Pretrained DistilBERT causal study & \path{data/pretrained/distilbert_depth_precision_robustness_paper_5471683e8f56.zip}, \path{data/pretrained/README_RESULTS.md}, and \path{data/pretrained/model_and_data_manifest.json}\\
Broad matrix stress test & \path{data/matrix_frontier_raw.csv} and \path{data/certified_phase_diagram.csv}\\
Proof and reference audits & \path{audits/When_Can_Depth_Replace_Precision_Proof_Audit_v8.csv}, \path{audits/When_Can_Depth_Replace_Precision_Reference_Audit_v8.csv}, and \path{audits/When_Can_Depth_Replace_Precision_Preflight_v8.md}\\
\bottomrule
\end{tabular}
\end{table}

\section{QReplace: an auditable replacement recommender}
\label{app:qreplace-software}

This appendix specifies the practitioner-facing software companion.  QReplace
is not a learned oracle and does not infer a structural floor from an
architecture name.  It is an auditable decision engine for the resource laws
proved in the paper.  A user supplies the target, operation library, finite
resource grid, execution semantics, routing model, cost model, and the actual
evidence available for each bound.  The software returns a configuration, an
error interval, an evidence level, and the assumptions that prevent or permit
a certified conclusion.

\subsection{Input contract}
\label{app:qreplace-input}

A replacement study is represented by a versioned YAML or JSON document.  The
principal fields are summarized in \cref{tab:qreplace-schema}; the complete
JSON schema is distributed with the package and validated in continuous
integration.

\begin{table}[ht]
\centering
\caption{QReplace input contract.  Model metadata and theorem evidence are
kept separate so that descriptive checkpoint information cannot silently
become a certified constant.}
\label{tab:qreplace-schema}
\small
\begin{tabularx}{\linewidth}{@{}Y{.18\linewidth}Y{.31\linewidth}X@{}}
\toprule
Block & Required content & Trust boundary\\
\midrule
Target & map or block name, architecture, domain, endpoint/path objective,
metric, and tolerance & a checkpoint identifier does not determine a
structural floor or a common tube\\
Dictionary & atom count, bit format, learned/fixed status, family dimension,
metadata storage, and shared-schedule convention & input-dependent schedules
must not be used under a shared-schedule theorem\\
Structural floor & lower/upper bracket and evidence provenance & finite-witness
claims remain scoped to the hash-locked witness lift\\
Synthesis & BV or H\"older regularity and constants & constants must be uniform
on the declared tube and independent of the tested depth\\
Execution & write-back, increment, error-feedback, or custom semantics;
quantizer ranges; carry; saturation/wraparound & wrapping is uncertified
without a no-overflow invariant; saturation requires an implemented tube\\
Routing & no routing, prescribed events, or state-dependent event data & a
finite hybrid radius requires the small-gain and event-isolation premises\\
Costs and search & depth/metadata grid and storage, latency, activation, carry,
and schedule weights & cost proxies are not hardware measurements unless
measured values are supplied\\
Certificates & paths, claim identifiers, and required status & a formal proof
checks the encoded implication, not checkpoint fidelity\\
\bottomrule
\end{tabularx}
\end{table}

Every numerical premise carries an evidence label:
\emph{analytic certified}, \emph{interval certified}, \emph{finite-witness
certified}, \emph{calibrated estimate}, \emph{diagnostic}, or
\emph{unavailable}.  The ordering is intentional: the weakest nonzero premise
used by a candidate determines its evidence level.  A configuration is called
certified only when the structural bracket and every nonzero finite-resource
term are certified and their assumptions are marked verified.

\subsection{Decision algorithm}
\label{app:qreplace-algorithm}

For each candidate depth $D$ and metadata budget $s$, the software evaluates
\begin{equation}
 R_{D,s}=R_{\mathrm{syn}}(D)+R_{\mathrm{meta}}(s)
          +R_{\mathrm{arith}}(D)+R_{\mathrm{route}}(D).
 \label{eq:qreplace-radius}
\end{equation}
The choices implemented in the release are exactly the formulas stated in the
paper:
\begin{align*}
 R_{\mathrm{syn}}(D)
   &= C_{\mathrm{syn}}D^{-1}
      &&\text{(BV)},\\
 R_{\mathrm{syn}}(D)
   &= C_{\vartheta}D^{-\vartheta}+C_1D^{-1}
      &&\text{(H\"older)},\\
 R_{\mathrm{meta}}(s)
   &= C_{\mathrm{meta}}2^{-s/m},\\
 R_{\mathrm{writeback}}(D)
   &=D\rho_zS+\eta_G\Phi_{L_z}(T),\\
 R_{\mathrm{EF}}(D)
   &=\rho_{\mathrm{carry}}e^{L_zT}
     +\eta_G\Phi_{L_z}(T)
     +D\rho_{\mathrm{reg}}e^{L_zT}.
\end{align*}
Prescribed routing uses the declared H\"older event term, while state-dependent
routing uses $b_D/(1-\chi)$ only when $\chi<1$.  If a certified floor bracket
satisfies
\[
 L_\infty\le E_{\Omega,\infty}(F^\star)\le U_\infty,
\]
then QReplace reports
\begin{equation}
 \max\{0,L_\infty-R_{D,s}\}
 \le E_{D,s}^{\mathrm{impl}}(F^\star)
 \le U_\infty+R_{D,s}.
 \label{eq:qreplace-interval}
\end{equation}
The decision logic is consequently proof transparent:
\begin{align*}
 U_\infty+R_{D,s}\le\eps
   &\Longrightarrow \text{certified feasible},\\
 L_\infty-R_{D,s}>\eps
   &\Longrightarrow \text{certified impossible at }D,\\
 L_\infty>\eps
   &\Longrightarrow \text{certified asymptotically impossible}.
\end{align*}
A closing noncertified upper envelope is reported as conditionally feasible;
a near-closing diagnostic envelope is reported as diagnostically promising;
all remaining cases are unresolved.  A certificate reference marked
\emph{required} is a separate trust gate: a missing claim or a status other
than \texttt{kernel\_checked} automatically prevents a certified output even
when all numerical inequalities close.  The software never converts a failed
optimizer into an impossibility result.

Candidates are ranked by decision class and declared scalar cost.  The report
also returns the nondominated cost--upper-error Pareto frontier, so deployment
teams may apply a different operational objective without recomputing the
mathematical bounds.

\subsection{Notebook runtime and installation}
\label{app:qreplace-runtime}

The four notebooks use a CPU runtime with no hardware accelerator.  Each
notebook installs and validates QReplace before importing it and accepts
either the source ZIP or the prebuilt wheel; this avoids reliance on notebook
working-directory conventions.  Typical fresh Colab runtimes are 2--4 minutes
for the quick-start and arithmetic notebooks, 3--6 minutes for guided intake,
and 3--8 minutes for the PyTorch model-card notebook.  Larger user-supplied
checkpoints or wider candidate grids can take longer.  The notebooks display
the imported package version and location before running an analysis, and the
quick-start notebook exports a complete report ZIP.

\subsection{Interfaces and generated evidence}
\label{app:qreplace-interfaces}

The release provides:
\begin{enumerate}[leftmargin=1.75em,itemsep=.25em]
\item a Python API and typed data model;
\item a command-line interface for initialization, validation, search, schema
      export, certificate inspection, and arithmetic phase demonstrations;
\item a browser interface built on the same decision engine;
\item four Colab notebooks covering quick-start replacement, write-back versus
      error feedback, guided certificate intake, and PyTorch model cards;
\item JSON, CSV, Markdown, HTML, and figure outputs for every recommendation;
\item SHA-256 certificate provenance and an interoperable QReplaceLean claim
      envelope.
\end{enumerate}
The packaged model-card helper records parameter counts, module types, data
types, and checkpoint hashes, but deliberately refuses to infer Lipschitz,
tube, reachability, or routing constants from those descriptors.

The arithmetic notebook contains a transparent scalar simulator for ideal,
full-state write-back, independent increment, and error-feedback semantics.  It
is a controlled mechanism check rather than a representation certificate.  The
example reproduces the qualitative phase transition in
\cref{fig:arithmetic-resource}: once an ideal increment falls below the state
half-cell, state write-back freezes, whereas the error-feedback endpoint plus
carry continues to satisfy the conservation identity.

\subsection{Software tests and reproducibility}
\label{app:qreplace-tests}

The release test suite contains 20 regression tests covering schema validation,
all decision classes, structural-impossibility precedence, write-back's
interior optimum, wraparound downgrading, hybrid small-gain failure, Pareto
construction, report generation, portable relative certificate paths,
required-certificate fail-safe downgrading, starter-spec generation, scalar
error-feedback conservation, and the PyTorch model-card hash path.  All four
notebooks are executed in the package
preflight.  A clean example run produces a certified finite-witness Transformer
recommendation, a conditionally feasible routed-MoE recommendation, and the
arithmetic phase data and figure.  These are software regression tests and
demonstrations; their constants are not substituted for checkpoint-specific
claims elsewhere in the paper.

\section{QReplaceLean: staged formal verification}
\label{app:lean-verification}

QReplaceLean is a focused Lean~4 project for seven exact result groups that are
central to the manuscript and directly connected to the QReplace decision
interface.  The accompanying evidence archive marks all twelve mapped
claim-level statements \texttt{kernel\_checked}.  Every theorem module and the
integrated project build succeeded, and every mapped declaration passed its own
\texttt{\#print axioms} audit.  The distributed project contains only modules
that belong to the current theorem crosswalk; unfinished auxiliary
formalizations are not used to support claims.

\subsection{Runtime, environment, and verification protocol}
\label{app:lean-status}

The project pins Lean~4 version \texttt{v4.30.0}, a specific Mathlib commit,
and the complete Lake dependency lock.  The Colab notebook uses a CPU runtime;
a GPU or TPU does not accelerate Lean compilation.  A fresh run typically
takes 10--25 minutes and uses approximately 4--8 GB of local runtime disk,
primarily for the toolchain and Mathlib artifacts.  A repeated run with cached
dependencies typically takes 4--10 minutes.

The notebook verifies the release in eight visible stages:
\begin{enumerate}[leftmargin=1.75em,itemsep=.2em]
\item source-policy and integrity audit;
\item fixed-grid freezing;
\item error-feedback conservation and terminal bounds;
\item common-lattice recovery and signed-register range;
\item feasible/impossible decision implications;
\item metadata--depth packing;
\item fixed-teacher recursion, schedule bound, and optimality closing step;
\item two-token attention modes.
\end{enumerate}
Each theorem stage prints the manuscript result, Lean target, elapsed time,
completed count, and remaining stages, and writes a separate build log and
claim-specific \texttt{\#print axioms} log.  The notebook then performs an
integrated consistency build, generates the verification manifest, and
downloads a ZIP containing the manifest, exact sources, stage logs, axiom
audits, theorem crosswalk, and faithfulness audit.

The source-policy audit rejects \texttt{sorry}, \texttt{admit}, custom theorem
axioms, unsafe declarations, and missing source files.  Claim status is either \texttt{kernel\_checked} or
\texttt{not\_kernel\_checked}.  The artifact-level summary takes one of three
machine-readable values:
\begin{center}
\path{source_audited_not_kernel_checked},\quad
\path{partially_kernel_checked},\quad
\path{kernel_checked_all}.
\end{center}
The archived manifest supplied with this paper has artifact status
\texttt{kernel\_checked\_all}: twelve of twelve mapped claims are marked
\texttt{kernel\_checked}.  The theorem-source digest is
\texttt{cc4358adbc21...f92f521}.
The project-pinned build logs invoke Lean~4 \texttt{v4.30.0} and Mathlib commit
\texttt{c5ea00351c28...b1188f}.  The axiom audits report only
\texttt{propext}, \texttt{Quot.sound}, and, where classical reasoning is used,
\texttt{Classical.choice}.  No custom theorem axioms, \texttt{sorry},
\texttt{admit}, or \texttt{unsafe} declaration occurs in the audited theorem
sources.

\subsection{Theorem crosswalk}
\label{app:lean-crosswalk}

\Cref{tab:lean-crosswalk} summarizes the formalized scope.  The release also
contains a declaration-level crosswalk and a statement-by-statement
faithfulness audit.

\begin{table}[ht]
\centering
\caption{QReplaceLean crosswalk.  The archived
\texttt{verification\_results.json} marks every listed group kernel checked;
the table states what each formal group actually proves.}
\label{tab:lean-crosswalk}
\footnotesize
\begin{tabularx}{\linewidth}{@{}Y{.25\linewidth}Y{.30\linewidth}X@{}}
\toprule
Lean declaration group & Paper claim & Formalized scope\\
\midrule
\texttt{fixedGridFreeze} & fixed activation-grid no-go & half-cell quantizer
property and small-step premise\\
\texttt{errorFeedback.*} & conservation and terminal carry bound & exact
additive telescoping identity and scalar terminal bound\\
\texttt{commonLattice.*}, \texttt{signedBitWidth.*} & exact lattice recovery
and sufficient range & integer-code conservation; sufficient, not minimal,
capacity premise\\
\texttt{finiteRadius.*}, decision rules & feasible/impossible interface &
order-theoretic closing implications used by QReplace\\
\texttt{metadataDepth.*} & metadata and schedule capacity law & finite
injective encoding count\\
\texttt{fixedTeacher.*}, Voronoi closing & exact fixed-teacher result & scalar
recurrence algebra, maximal-schedule bound, and optimality implication under
the manuscript's depth-specific Voronoi premise\\
\texttt{TokenPair.*} & exact two-token attention reduction & symmetric
row-stochastic scalar-channel mean/disagreement algebra and coordinate
reconstruction\\
\bottomrule
\end{tabularx}
\end{table}

The factorization is conservative.  The common-lattice theorem
is stated in integer codes, with multiplication by the physical lattice unit
external.  The fixed-teacher formalization verifies the recurrence, closed
form, endpoint difference, maximal-schedule inequality, and Voronoi closing
lemma; the manuscript proves the analytic inequality establishing the
particular depth-dependent Voronoi premise.  The attention formalization
verifies the exact two-token mode algebra, not LayerNorm, multihead coupling,
or a complete Transformer block.

\subsection{Archived evidence identity and integrity}
\label{app:lean-evidence}

The verification run 
records a successful
integrated build in addition to the claim-specific stages.  The normalized
evidence ZIP contains the exact theorem sources, per-stage build logs,
per-stage axiom logs, theorem crosswalk, faithfulness audit, machine-readable
claim manifest, and an evidence-specific SHA-256 manifest.  The original Colab
global shell reported a different default Lean version, but every Lake build
trace invokes the project-pinned Lean~4 \texttt{v4.30.0} executable; the
evidence manifest records this distinction explicitly.  The source and evidence
archives included with the supplement reproduce the theorem-source digest and
preserve the unmodified kernel logs.

\subsection{Interoperability with QReplace}
\label{app:lean-qreplace-bridge}

The result ZIP contains a machine-readable manifest with the pinned toolchain,
source digest, claim identifiers, Lean declarations, status, and axiom-log
path.  QReplace may attach this file to a study specification and checks its
hash and requested claim.  A required claim that is absent or not marked
\texttt{kernel\_checked} prevents a certified recommendation.  Formal status
cannot upgrade unrelated numerical premises such as checkpoint fidelity,
common-tube validity, overflow safety, or a witness set's coverage of the full
input domain.

\subsection{Trust boundary}
\label{app:lean-trust}

Kernel acceptance establishes that an encoded conclusion follows from its
encoded assumptions and imported foundations.  It does not verify that a
deployed quantizer follows the declared model, that checkpoint-derived
constants hold on the deployment domain, or that a finite witness represents
a full-map claim.  The package does not claim Lean verification of the full
Banach-space relaxed-control synthesis theorem, the state-dependent hybrid
small-gain theorem, HJB verification for measurable relaxed trajectories,
occupation-measure equivalence, or moment/SOS convergence.  Those results
retain the conventional proofs and independent proof audit supplied with the
paper.

\FloatBarrier
\bibliography{references}
\end{document}